%%%%%%%%%%%%%%%%%%%%%%%%%%%%%%%%%%%%%%%%%%%%%%%%%%%%%%%%%%%%%%%%%%%%%%%%%%%%
%% Author template for Management Science (mnsc) for articles with e-companion (EC)
%% Mirko Janc, Ph.D., INFORMS, mirko.janc@informs.org
%% ver. 0.95, December 2010
%%%%%%%%%%%%%%%%%%%%%%%%%%%%%%%%%%%%%%%%%%%%%%%%%%%%%%%%%%%%%%%%%%%%%%%%%%%%
\documentclass[mnsc,nonblindrev]{informs3}

\OneAndAHalfSpacedXI

% Private macros here (check that there is no clash with the style)

\newcounter{examp}[section]

% probability stuff.

% resizing brackets 
\newcommand{\prob}[2][]{\text{\bf Pr}\ifthenelse{\not\equal{}{#1}}{_{#1}}{}\!\left[{\def\givenn{\middle|}#2}\right]}
\newcommand{\expect}[2][]{\mathbb{E}\ifthenelse{\not\equal{}{#1}}{_{#1}}{}\!\left[{\def\givenn{\middle|}#2}\right]}

\makeatletter
\newcommand*{\pmzeroslash}{%
  \nfss@text{%
    \sbox0{0}%
    \sbox2{/}%
    \sbox4{%
      \raise\dimexpr((\ht0-\dp0)-(\ht2-\dp2))/2\relax\copy2 %
    }%
    \ooalign{%
      \hfill\copy4 \hfill\cr
      \hfill0\hfill\cr
    }%
    \vphantom{0\copy4 }
  }%
}
\makeatother

\makeatletter
\newcommand*{\rom}[1]{\expandafter\@slowromancap\romannumeral #1@}
\makeatother

\usepackage{graphics,xcolor}
\usepackage{colortbl}
\definecolor{cornellred}{rgb}{0.7, 0.11, 0.11}
\definecolor{maroon}{rgb}{0.52, 0, 0}
\definecolor{dgreen}{rgb}{0.0, 0.5, 0.0}
\definecolor{ballblue}{rgb}{0.13, 0.67, 0.8}
\definecolor{royalblue(web)}{rgb}{0.25, 0.41, 0.88}
\definecolor{bleudefrance}{rgb}{0.19, 0.55, 0.91}
\definecolor{royalazure}{rgb}{0.0, 0.22, 0.66}

\usepackage[ruled,vlined]{algorithm2e}
\usepackage{multirow}
\usepackage{amsfonts}
\usepackage{amsmath}
\usepackage{bbm}
\usepackage{bm}
\usepackage{enumitem}
\usepackage{mathrsfs}
\usepackage{thm-restate}
\usepackage{pgf}
\usepackage{pgfplots}
\pgfplotsset{compat=1.16}
\usepackage{tikz}
\usepackage{csquotes}

\usepackage[LGR,T1]{fontenc}

\usepackage{hyperref}
\hypersetup{
	colorlinks = true,
	linkcolor=cornellred,
	citecolor=royalazure,
	linkbordercolor = {white}
}
\usepackage{cleveref}
\usepackage{dsfont}

% Not a macro
% \DeclareMathOperator*{\argmax}{argmax}

%------------- Macros for this paper ---------------

% Writing Macros
\newcommand{\comment}[1]{}

% Probability Macros
\newcommand{\E}{\mathbb{E}}
\newcommand{\indicator}[1]{\mathbbm{1}\left[\text{#1}\right]}

% Big Oh Macros
\newcommand{\bigO}[1]{O\left(#1\right)}
\newcommand{\bigOmega}[1]{\Omega\left(#1\right)}

% Linear Algebra Macros
\newcommand{\norm}[2]{\left\lVert#1\right\rVert_{#2}}
\newcommand{\standardnorm}[1]{\norm{#1}{\infty}}  % Standard norm for this paper is ell_infinity
\newcommand{\diameter}[1]{\textrm{D}\left(#1\right)}
\newcommand{\diameterdefinition}[1]{\diameter{#1} \triangleq \max_{\bm{x} \in \algspaceB, \bm{y} \in \advspaceB} \standardnorm{#1(\bm{x}, \bm{y})}}
\newcommand{\distance}[3]{d_{#3}\left(#1, #2\right)}
\newcommand{\standarddistance}[2]{\distance{#1}{#2}{\infty}}  % Standard distance for this paper is ell_infinity

% USM Macros
\newcommand{\coordinatevalues}{\mathcal{R}}
\newcommand{\coordinatevalue}{\rho}
\newcommand{\setofdistributions}{\Delta}

\newcommand{\ubar}[1]{\text{\b{$#1$}}}

\newcommand{\usmlower}{\ubar{\bm{z}}}
\newcommand{\usmupper}{\bar{\bm{z}}}
\newcommand{\usmoptimal}{{\zbf}^*}

\newcommand{\indexintovector}[2]{\left[#1\right]_{#2}}
\newcommand{\diagonal}[1]{\text{diag}\left(#1\right)}

\newcommand{\subpeek}[1]{#1_{\textrm{exp}}}

% MMR Macros

\newcommand{\feasiblereserves}{\mathcal{R}}
\newcommand{\feasiblereserve}{\rho}

% Misc. Macros

\newcommand{\rbf}{\bm{r}}
\newcommand{\vbf}{\bm{v}}
\newcommand{\vibf}{\mathbf{v}}
\newcommand{\rcal}{\mathcal{R}}
\newcommand{\zbf}{\bm{z}}
\newcommand{\pbf}{\bm{p}}
\newcommand{\pay}{\small{\text{\textsc{Payoff}}}}
\newcommand{\xbf}{\bm{x}}
\newcommand{\ybf}{\bm{y}}
\newcommand{\zi}[1]{\bm{z}^{(#1)}}
\newcommand{\yit}[2]{\bm{y}^{(#1)}_{#2}}

\newcommand{\domain}{\mathcal{D}}
\newcommand{\pspace}{{\Theta}}

\newcommand{\update}{\small{\text{\textsc{Local-update}}}}
\newcommand{\algspaceB}{\mathcal{X}}
\newcommand{\advspaceB}{\mathcal{Y}}

\newcommand{\targetsetB}{S}

\newcommand{\advfunB}{\textrm{AdvB}}

\newcommand{\ubf}{\bm{u}}
\newcommand{\constraint}{\mathcal{C}}
\newcommand{\opt}{\textsc{OPT}}

\def \his{\mathcal H}

\newcommand{\BlackwellAlg}{\textrm{AlgB}}
\newcommand{\BanditBlackwellAlg}{\textrm{AlgBB}}

\newcommand{\funcspace}{\mathcal{F}}

\newcommand{\param}{\bm{\theta}}
\newcommand{\paramdimension}{d_\text{param}}
\newcommand{\paramspace}{\Theta}
\newcommand{\payoffdimension}{d_\text{payoff}}

\newcommand{\unbiasedestimator}{\textrm{ExpS}}

\newcommand{\wbf}{\bm{w}}
\newcommand{\qbf}{\bm{q}}
\newcommand{\pibf}{\bm{\pi}}

\newcommand{\blackwellsequentialgame}{(\algspaceB, \advspaceB, \pbf)}
\newcommand{\banditblackwellsequentialgame}{(\algspaceB, \advspaceB, \pbf, \hat{\pbf})}

\usepackage{caption}
\usepackage{subcaption}
\newcommand{\offlinemeta}{\textsc{Offline-IG} (\constraint, \funcspace,\domain, \paramspace)}
\newcommand{\onlinemeta}{\textsc{Online-IG}(\constraint, \funcspace, 
\domain, \paramspace, \BlackwellAlg)}
\newcommand{\banditmeta}{\textsc{Bandit-IG}(\constraint, \funcspace, \domain,  \paramspace, \BanditBlackwellAlg)}
\newcommand{\banditmetaWoInput}{\textsc{Bandit-IG}}

\newcommand{\offlinemetatext}{\textsc{Offline-IG}}
\newcommand{\onlinemetatext}{\textsc{Online-IG}}
\newcommand{\banditmetatext}{\textsc{Bandit-IG}}

\definecolor{blue(munsell)}{rgb}{0.0, 0.5, 0.69}
\newcommand{\schange}[1]{\textcolor{black}{#1}\ignorespaces}

\newcommand{\Pp}{\mathbb{P}}
\newcommand{\tme}{t}
\newcommand{\KL}{\text{KL}}

\newcommand{\myqed}{\tag*{$\blacksquare$}}
\newcommand{\seqsubf}[1]{\textsf{f}_{#1}}

\newcommand{\alg}{\textrm{ALG}}
\newcommand{\adv}{\textrm{ADV}}

\newcommand{\lbf}{\mathbf{l}}
\newcommand{\wibf}{\mathbf{w}}
\newcommand{\xibf}{\mathbf{x}}
\newcommand{\yibf}{\mathbf{y}}
\newcommand{\Dbottom}{\underline{D}}
\newcommand{\Ber}{\text{Ber}}

\newcommand{\Pcal}{\mathcal{P}}

\newcommand{\direction}{{\bm{\rho}}}
\newcommand{\diampolytope}{R_{\Pcal}}
\newcommand{\hatf}{\widehat{f}}

% Natbib setup for author-year style
\usepackage{natbib}
 \bibpunct[, ]{(}{)}{,}{a}{}{,}%

\TheoremsNumberedThrough
\ECRepeatTheorems

\EquationsNumberedThrough

\MANUSCRIPTNO{}

\begin{document}

\RUNAUTHOR{Niazadeh et al.}

\RUNTITLE{Online Learning via Offline Greedy}

\TITLE{Online Learning via Offline Greedy Algorithms:\\ Applications in Market Design and Optimization}

\ARTICLEAUTHORS{%
\AUTHOR{Rad Niazadeh}
\AFF{Chicago Booth School of Business, Operations Management, \EMAIL{rad.niazadeh@chicagobooth.edu}}
\AUTHOR{Negin Golrezaei}
\AFF{MIT Sloan School of Management, Operations Management, \EMAIL{golrezae@mit.edu}}
\AUTHOR{Joshua Wang}
\AFF{Google Research Mountain View, \EMAIL{joshuawang@google.com}}
\AUTHOR{Fransisca Susan}
\AFF{MIT Sloan School of Management, Operations Management, \EMAIL{fsusan@mit.edu}}
\AUTHOR{Ashwinkumar Badanidiyuru}
\AFF{Google Research Mountain View, \EMAIL{ashwinkumarbv@google.com}}
}

\ABSTRACT{
Motivated by online decision-making in time-varying combinatorial environments, we study the problem of transforming offline algorithms to their online counterparts. We focus on offline combinatorial problems that 
are amenable to a constant factor  approximation using a greedy algorithm that is robust to local errors. For such problems, we provide a general framework that efficiently transforms offline robust greedy algorithms to online ones using Blackwell approachability. 
We show that the resulting online algorithms have $O(\sqrt{T})$ (approximate) regret under the full information setting. We further introduce a bandit extension of Blackwell approachability that we call 
Bandit Blackwell approachability. We leverage this notion to transform  greedy robust offline algorithms into a $O(T^{2/3})$ (approximate) regret in the bandit setting. Demonstrating the flexibility of our framework, we apply our offline-to-online transformation to several problems at the intersection of revenue management, market design, and online optimization, including product ranking optimization in online platforms, reserve price optimization in auctions, and  submodular maximization. We also extend our reduction to greedy-like first order methods used in continuous optimization, such as those used for maximizing continuous strong DR monotone submodular functions subject to convex constraints.  We show that our transformation, when applied to these applications, leads to new regret bounds or improves the current known bounds. We complement our theoretical studies by conducting numerical simulations for two of our applications, in both of which we observe that the numerical performance of our transformations outperforms the theoretical guarantees in practical instances.
}

\KEYWORDS{\renewcommand{\thefootnote}{\large{$\dagger$}\normalsize} 
Blackwell approachability,
Offline-to-online,
No-regret,
Submodular maximization, Product ranking, Reserve price optimization.
}

\maketitle

\section{Introduction}
\label{sec:intro}

We study the problem of designing efficient no-regret -- also known as \emph{vanishing regret} -- online learning algorithms in complex real-world environments, where the underlying decision-making process is combinatorial in nature. In such environments, a decision-maker (learner) needs to experiment with exponentially many options whose rewards exhibit non-trivial and non-linear structures. Exploiting such structures to design efficient online learning algorithms is challenging as the underlying offline problems can indeed be NP-hard. Such offline problems can only admit approximation algorithms. Therefore, any efficient online learning algorithm can only hope to obtain vanishing regret with respect to an in-hindsight approximately optimal benchmark. This motivates our key research questions:

\begin{displayquote}
\emph{How can one transform existing approximation algorithms for NP-hard offline problems to vanishing regret learning algorithms for a wide range of combinatorial environments? Can we efficiently exploit the combinatorial reward structure to eliminate the necessity of experimenting with exponentially many arms?}
\end{displayquote}

To answer these questions, we consider an adversarial online learning setting. In every round $t$, the learner takes an action by choosing a (feasible) point $\zbf_t$ among possibly exponentially many choices, and receives a reward of $f_t(\zbf_t)$. The adversarially chosen reward function $f_t\in \funcspace$, which is unknown to the learner at the time of action, can be non-linear in action $\zbf_t$. 
%The reward function is being chosen by adversary 
%We highlight that 
We are interested in settings where the offline problem is NP-hard, and amenable to a $\gamma$-approximation algorithm, where $\gamma \in (0, 1)$.\footnote{Our framework can also be applied to polynomially solvable problems. In this case, the approximation factor is $\gamma=1$.
%\negin{OK with this?}
} 
In the offline problem, the reward function $f\in \funcspace$ is fully known, and the goal is to choose a feasible point $\zbf$ that maximizes the obtained reward $f(\zbf)$. 

We focus on the prevalent class of offline approximation algorithms with a greedy nature. Roughly speaking, such approximation algorithms build up a solution stage by stage, choosing the next stage that offers the most local improvement with respect to a metric. We require the greedy approximation algorithms to be robust to local errors in every stage; for details, see Definition \ref{def:robust-approx}. 
Several combinatorial problems, ranging from classic submodular maximization problems to more recently studied optimization problems related to market design and revenue management, admit such robust greedy approximation algorithms. 
For details, see Section \ref{sec:applications}. 
% Examples include maximizing set and continuous monotone or non-monotone submodular functions in various settings~(\cite{nemhauser1978analysis}, \cite{buchbinder2015tight}, and \cite{niazadeh2018optimal}), optimizing ranking of displayed products in an online platform to maximize users engagement~(\cite{ferreira2019learning} and \cite{asadpour2020ranking}), and optimizing reserve prices in second-price auction to maximize revenue~\citep{roughgarden2019minimizing}. 
 
{Our goal here is to conduct  \emph{offline-to-online transformations}; that is, to design online learning algorithms whose performance (over time) is as good as the performance of their corresponding offline approximation algorithm.} 
The problem of offline-to-online transformation is studied by \cite{kalai2005efficient} and \cite{dudik2017oracle} when the learner can solve the offline problem efficiently. However, the approaches in these works fail when the learner only has access to an approximate solutions to the offline problem. This drawback is alleviated by \cite{kakade2009playing} who study the offline-to-online transformation when (i) the offline problem is NP-hard but amenable to approximation, and (ii) the reward function is linear in the learner's action. %While the proposed approach in 
\cite{kakade2009playing} %provides a general purpose offline-to-online blackbox reduction, it 
crucially uses the linearity of the reward function (see also \cite{garber2021efficient, hazan2018online}), and hence, their approach cannot be applied to our settings with nonlinear reward functions. We highlight that as shown by \cite{hazan2016computational}, for a general offline problem, there may not exist an efficient offline-to-online transformation, justifying our assumption on the type of approximation algorithms. 

We now summarize our main contributions. 
 
\textbf{A framework for offline-to-online transformations.} %We contribute to this line of literature by
We design a unified framework to transform robust greedy approximation algorithms to efficient online learning algorithms when the reward functions are not necessarily linear. We consider two online learning settings: \emph{full information} and \emph{bandit}. In the full information setting, the learner observes function $f_t$ after taking action $\zbf_t$, and in the bandit setting, the learner only observes the obtained reward $f_t(\zbf_t)$.

For both settings, our proposed transformation relies on the celebrated Blackwell approachability theorem due to \cite{blackwell1956analog}. The Blackwell approachability theorem is concerned with a two-player repeated game with a vector payoff, and presents a strategy under which the time-averaged vector payoff approaches some target set $S$ that satisfies certain properties. As it is shown in \cite{abernethy2011blackwell}, there is a strong connection between Blackwell approachability and designing vanishing regret learning algorithms. In fact, for online linear optimization, they show that any strategy/algorithm for Blackwell approachability can be transformed to a vanishing regret learning algorithm and vice versa.

\textbf{Online learning algorithms using Blackwell strategies.} In this work, as one of our main contributions, we show that the transformation of Blackwell strategies to online vanishing regret algorithms is also possible for combinatorial non-linear learning settings whose underlying offline problem is NP-hard and admits a robust greedy $\gamma$-approximation algorithm. Specifically, we show that if the offline problem is \emph{Blackwell reducible} (see Definitions \ref{def:blackwell-reducible} and \ref{def:bandit-blackwell-reducible}), then we can design an online  learning algorithm with vanishing $\gamma$-regret (as in Definition \ref{def:gamma:regret})  by running a Blackwell algorithm for each stage (subproblem) of the offline greedy algorithm.  In every round, these Blackwell algorithms are run sequentially to build up the learner's action stage by stage. This allows the Blackwell algorithms to communicate with each other in a specific pattern dictated by the offline greedy algorithm. Thanks to such communication between Blackwell algorithms and the robustness of the offline greedy algorithm to local errors, the resulting online algorithm has a vanishing $\gamma$-regret. In fact, for the full information setting, we show that this transformation leads to an algorithm with $O(N \sqrt{T})$ $\gamma$-regret, where $N$ is the number of subproblems in the offline algorithm.\footnote{{Our regret bounds also depend on the diameter of vector payoff of the Blackwell games and their dimension; see Theorems \ref{thm:full-info-online-meta} and \ref{thm:bandit-blackwell-thm}. Further, in some applications we can show sub-linear dependency of the regret bound on the number of sub problems, which turns out to be crucial for transforming continuous optimization algorithms to their online variants; see Theorems  \ref{thm:DR-SM} and \ref{thm:bandit-DR-SM} in Appendix~\ref{subsec:DR-SM}.}}

The bandit setting turns out to be much trickier as the Blackwell algorithms cannot all obtain their desired feedback to update their course of actions over time. To circumvent this obstacle, we introduce a novel and customized bandit version of the Blackwell sequential game that we call \emph{bandit Blackwell}. In this version, the player/algorithm does not obtain any feedback on his payoff unless he agrees to pay a certain cost. When the player agrees to pay such a cost, an extra ``exploration'' will be done, and he obtains an unbiased estimator of his payoff. Surprisingly, we show that in the bandit Blackwell sequential games, getting a vanishing regret with respect to a combination of approachability and exploration cost minimization is feasible (Theorem \ref{thm:bandit-blackwell-thm}). We further give a tight lower bound on the rate of convergence for bandit Blackwell sequential games (Theorem \ref{thm:bandit-blackwell-lower-bound}). 

Leveraging our notions of bandit Blackwell sequential games and approachability, we present an offline-to-online transformation in which $N$ bandit Blackwell algorithms communicate with each other to build up a solution. To mimic the extra exploration step of bandit Blackwell games, we show how this communication can be interrupted in a controlled way when one of the bandit algorithms requests acquiring feedback. We also show how the required unbiased estimator of the vector payoff can be constructed. These pieces  give us  an online algorithm with $O(NT^{2/3})$ $\gamma$-regret.  

\textbf{Applications.} Finally, to demonstrate the generality and effectiveness of our framework, we apply our offline-to-online transformation to several problems at the intersection of revenue management, market design, and online optimization that have been proposed and studied in the literature. In particular, we consider problems of (i) optimizing product ranking, (ii) optimizing personalized reserve prices in second price auctions, and (iii) submodular maximization (SM) in discrete and continuous domains (see Table \ref{tab:results}). We show that in most cases, our transformations lead to new or improved regret bounds. %We emphasize that applications presented in this work are only selective samples of applications that can fit to our framework. The fact that we can obtain improved bounds for these well-studied problems highlight the generality of the framework and its potentials to be applied to other problems in the operations research domain, or even other domains of interest.
In the following, we discuss our bounds in detail.
% We emphasize that applications presented in this work
% are only a sampled set of applications for our framework. Nevertheless, as Table \ref{tab:results} shows, our applications are well-defined problems that have been proposed and studied in the literature. The fact that we can obtain improved bounds for these problems highlight the generality of our framework.
% In the following, we discuss our bounds in detail. %Recall that in each of the problems, $T$ denotes the number of rounds.

\vspace{2mm}
\noindent\textbf{\tikz\draw[black,fill=black] (0,0) circle (.5ex);~Product ranking optimization.} Online marketplaces have the opportunity of 
optimizing the ranking of displayed products in order to 
%on online marketplaces has been shown to be an effective lever 
 improve revenue, shape the demand, and reduce users' search cost (see, for example, \cite{athey2011position, kempe2003maximizing, ursu2016power, aouad2015display},  \cite{golrezaei2018two}, and \cite{golrezaei2021learning}).
Inspired by this, we study the product ranking problem in the online adversarial setting.  
{In this problem, the platform needs to identify  a
ranking/permutation of n items across $n$ positions where items placed in top positions (positions with lower indices) get more visibility.  The goal of the platform is 
maximize its user engagement (also
known as market share), which is the probability that a consumer does not leave the platform without
taking a desired action.
To express user engagement as a function of the ranking over the products, we use the model proposed by \cite{asadpour2020ranking}, which is a generalization of the model presented by \cite{ferreira2019learning}.} 
Under this model, the offline ranking problem can be written as maximizing sequential submodular functions; see Section \ref{subsec:ranking} for the definition of these functions. By applying our framework to this problem, we get $O(n\sqrt{T\log n})$  and $O\big(n^{{5/3}}\left(\log n\right)^{1/3}T^{2/3}\big)$ $\frac{1}{2}$-regret in full information and bandit settings, respectively. We note that our work is the first one that studies the product ranking problem under the aforementioned model
in an online adversarial setting.\footnote{The offline PAC learning problem which resembles aspects of the online learning in the stochastic setting, is studied by \cite{ferreira2019learning} for a special case of our model. PAC stands for probably approximately correct.} 
% where they design an online algorithm that obtain $O(nT^{2/3})$ $\frac{1}{2}$-regret in the bandit setting.

\vspace{2mm}
\noindent\textbf{\tikz\draw[black,fill=black] (0,0) circle (.5ex);~Optimizing personalized reserve prices.} Second price auctions with reserve prices are prevalent in many marketplaces including online advertising markets, making them objects of both wide practical relevance and scientific interest (see, for example, \cite{HR09,cesa2014regret,beyhaghi2018improved,roughgarden2019minimizing, golrezaei2021dynamic}). 
We study the online problem of optimizing personalized reserve prices, where buyers' valuations are chosen adversarially in every round. { In the offline version of this problem, a seller wants to sell an item to one of $n$ bidders by running a second price auction with personalized reserve prices. Each
bidder $i$ has a private value for the item. The seller wishes to maximize his revenue by optimizing over bidders' reserve prices.}
%In the online full-information setting, at each round, the seller executes an auction and sets a reserve price for each bidder while each bidder submits their bid simultaneously, then the seller sees all the bids and gets the revenue. 
%Although the offline one-shot setting is polynomially solvable, \cite{roughgarden2019minimizing} prove that the online setting is at least as hard as the offline batch setting, which is APX-hard. In the offline batch setting, a seller aims to find reserves that maximize the total revenue of $m$ auctions where each bidder has $m$ bids (one for each auction). \cite{derakhshan2019lp} and \cite{roughgarden2019minimizing} present efficient approximation algorithms for the offline batch setting. 
%We apply our framework to this problem for the online full-information and bandit setting, in which the seller only gets the revenue at each round and not the winner, nor the bids.
By applying our framework to the offline greedy algorithm of \cite{roughgarden2019minimizing}, we achieve $O(n\sqrt{T\log T})$ $\frac{1}{2}-$regret in the full-information setting and $O(n^{3/5} T^{\schange{4/5}} (\log nT)^{1/3})$ $\frac{1}{2}-$regret in the bandit setting. Our results match the previous bound for the full-information setting by \cite{roughgarden2019minimizing} who apply a slight variant of the Follow-the-Perturbed-Leader algorithm of \cite{kalai2005efficient} every round for each bidder; the bandit setting had not been studied prior to our work.\footnote{In the special case with symmetric buyers and uniform reserve prices (also known as anonymous reserve auction, cf.~\cite{alaei2019optimal}), minimizing regret under stochastic bandit setting is studied in \cite{cesa2014regret}, in which they obtain $\tilde{O}(n\sqrt{T})$ regret bound. Here, the offline problem of finding the uniform optimal reserve can be solved exactly in polynomial time.} 

\vspace{2mm}
\noindent \textbf{\tikz\draw[black,fill=black] (0,0) circle (.5ex);~Submodular maximization problems.} Many 
optimization problems
that arise in the real world, including revenue management problems, can be expressed as maximizing a submodular function. The notion of submodularity is commonly used to describe the diminishing return property in discrete and continuous domains. Examples include the welfare maximization problem (e.g., \cite{dobzinski2006improved} and \cite{vondrak2008optimal}), capital budgeting with risk-averse investors (e.g., \cite{weingartner1967mathematical} and \cite{ahmed2011maximizing}), and the problem of maximizing influence through the network
(e.g., \cite{kempe2003maximizing}).

%Unconstrained
%Consider a discrete submodular function $f:2^{[n]}\xrightarrow{}[0,1]$. Our goal is to find a set $S\subseteq[n]$ that maximizes $f(S).$ In the online full-information setting, at each round, the submodular functions $f_t$ arrive adversarially. Simultaneously, we play the set $S_t$ without knowing $f_t$ and observe the function $f_t$ afterwards. In the online bandit setting, we only observe the value $f_t(S_t)$ at round $t.$

We apply our framework to the adversarial online submodular maximization problem. For the online problem of maximizing monotone set  submodular functions subject to cardinally constraints with size $k$, we transform the offline greedy algorithm by \cite{nemhauser1978analysis}, which is a $(1-1/e)-$approximation, to yield %In the online full-information setting, we have to play the set $S_t$ at each round without knowing the submodular function $f_t$ that comes adversarially. We then observe the function $f_t$ at the end of the round. In the bandit setting, we only observe the function value on the set we submit at each round. 
%Applying our framework gives 
$O(k\sqrt{T\log n})$ $(1-1/e)-$regret in the online full-information setting, matching the bound by \cite{streeter2009online} who use a variation of the EXP3 algorithm. Furthermore, our framework gives $O(kn(\log n)^{1/3}T^{2/3})$ $(1-1/e)-$regret in the bandit setting, improving the previous bound of $O(k^2(n\log n)^{1/3}T^{2/3}(\log T)^2)$ $(1-1/e)-$regret by \cite{streeter2009online,streeter2007online} in the opaque feedback model, which is the limited feedback model that is analog to our bandit feedback model under exploration. See \Cref{sec:bandit-info} for more details.

For the online problem of maximizing non-monotone set submodular functions without any constraints, we transform a variation of the bi-greedy offline algorithm by \cite{buchbinder2018deterministic} using our framework and obtain $O(nT^{1/2})$ $\frac{1}{2}-$regret in the full-information setting, matching the previous bound by \cite{roughgarden2018optimal} who also take advantages of the bi-greedy offline algorithm of \cite{buchbinder2018deterministic}. Here, $n$ is the number of coordinates. For the bandit setting, our transformation yields 
 \schange{$O(nT^{2/3})$}~$\frac{1}{2}-$regret.
To the best of our knowledge, this is the first regret bound for the bandit setting of this challenging problem. 

{Switching to continuous submodular maximization settings, }for the online problem of maximizing non-monotone continuous submodular functions without any constraints, we transform a variation of the continuous bi-greedy algorithm by \cite{niazadeh2018optimal} and obtain $O(n \sqrt{T\log T})$ $\frac{1}{2}-$regret in the online full-information setting. For the bandit setting, we obtain $O(n\schange{T^{4/5}} (\log T)^{1/3})$ $\frac{1}{2}-$regret when the continuous submodular functions is weak-DR.\footnote{We omit the dependence on the Lipschitz constant here.} Our results for weak-DR submodular functions trivially yield results for strong-DR submodular functions. We highlight that 
the notion of weak-DR submodularity is equivalent to continuous submodularity and is easier to satisfy than strong-DR submodularity, which additionally requires coordinate-wise concavity; see the definition of weak-DR and strong-DR submodular functions in \Cref{subsec:usm} in the appendix.
Our work is the first one that designs online algorithms for weak-DR submodular functions. Furthermore, our bounds improve the previous bounds for strong-DR submodular functions by \cite{thang2019online}, which are $O(T^{5/6})$ $\frac{1}{4}-$regret and $O(T^{11/12})$ $\frac{1}{4}-$regret for the full-information and bandit settings, respectively. %The aforementioned regret bounds are obtained using a variation of the Frank-Wolfe algorithm.

%\item \textbf{Submodular optimization under cardinality constraints:} 
%Consider a discrete monotone submodular function $f:2^{[n]}\xrightarrow{}[0,1]$. We want to find a set $S$ where $|S|\leq k$ such that $f(S)$ is maximized. 

{ For the problem of maximizing monotone continuous strong-DR submodular functions over a downward closed  bounded convex set, by applying our framework to a variant of the Frank-Wolfe algorithm in \cite{bian2016guaranteed}, for the full information setting, we design an online algorithm  with $\bigO{\sqrt{Tn\log{n}}}$ ($1-1/e$)-regret. In terms of dependency on $T$, our regret bound matches the best regret bound in the literature by  \cite{chen2018online}, which is also obtained by an online learning algorithm based on the Frank-Wolfe idea.\footnote{{\cite{chen2018online}, however, considers maximizing monotone  continuous strong-DR submodular functions over a convex set which may not be downward closed. See also  \cite{thang2019online} for a work that builds on the Frank-Wolfe algorithm in \cite{chen2018online} for the non-monotone strong-DR submodular maximization in a downward closed convex set. They obtain $O(T^{3/4})$ $1/e$-regret for the full-information setting.}} For the bandit setting, we design an algorithm with $\bigO{n(\log n)^{1/6}T^{5/6}}$ $(1-1/e)$-regret, improving the previous bound in \cite{zhang2019online}, which is $\bigO{nT^{8/9}}$ for the same approximation factor. %In our offline-to-online transformation for the bandit setting,  we follow the approach of \cite{zhang2019one}  to get the unbiased estimator for the gradient. 
See Theorems \ref{thm:DR-SM} and \ref{thm:bandit-DR-SM} in Appendix \ref{subsec:DR-SM}.}

{\textbf{Experiments.} To demonstrate the practicality and ease-of-use of our framework, we evaluate our online learning algorithms for the product ranking and maximizing multiple reserves applications numerically. For both applications, our frameworks do better than the benchmark for both full-information and bandit settings on average. Furthermore, as expected, the full-information algorithm has smaller cumulative regret compared to the bandit algorithm. More details on the experiment is in \Cref{sec:numerics} in the appendix.}
%\vspace{2mm}
%\noindent \textbf{\tikz\draw[black,fill=black] (0,0) circle (.5ex);~\textbf{More applications to explore}.} While we have investigated a selective set of applications of our framework, we believe our framework is general and can capture several other algorithmic problems in revenue management and market design. Examples are online learning in assortment optimization (which is also closely related to submodular optimization and greedy algorithms proved to be useful there as well; cf.~\cite{desir2015capacity,agrawal2019mnl}) and budgeted allocation/Adwords problem in sponsored search auctions (where again greedy algorithms proved to be helpful; cf~\cite{mehta2013online}). We leave this investigation and fleshing out details of other applications of our framework to future work on this topic.
%\cite{zhang2019online}.\footnote{We note that the bounds in \cite{zhang2019online} are also valid under any matroid constraints.} %\negin{Again, these bounds are not discussed in the paper. we should at least have a corollary in the paper for this problem.}

%See \Cref{tab:results} for a summary. 
% As the table shows, our transformations results in improved regret bounds for many problems including maximizing submodular functions. It further leads to efficient learning algorithms for many practical problems such as reserve price maximization in second price auctions and product ranking. 
%Most of these applications involve an additional insight where we modify the base greedy algorithm to better express its particular notion of progress in terms of bandit function evaluations.

\begin{table}[htb]
\smallskip
\caption{Our results for selective applications of our framework, compared to previously known results.}
\label{tab:results}
\smallskip
\smallskip
\renewcommand\arraystretch{1.3}
\renewcommand{\thefootnote}{\fnsymbol{footnote}}
\footnotesize
\begin{center}
\makebox[\textwidth][c]{ % Center the table even though it's too big.
\scalebox{0.93}{
\begin{tabular}{ |c| c || c| c|| c|c|}
 \cline{3-6} \multicolumn{2}{c|}{} & \multicolumn{2}{c||}{Online Full-Information Setting} & \multicolumn{2}{c|}{Online Bandit Setting} \\ \hline 
 \multirow{2}{*}{Application}   & Approx   &\cellcolor{royalazure!17}Our $\gamma$-Regret & The Best  & Our \cellcolor{royalazure!17}$\gamma$-Regret & The Best \\
                  & Factor ($\gamma$) & \cellcolor{royalazure!17}Bound        & Prior Bound & \cellcolor{royalazure!17}Bound        & Prior Bound \\ \hline
 % Product Ranking Problem - \Cref{thm:ranking}
 \multirow{2}{*}{Product Ranking Problem} & \multirow{2}{*}{$1/2$} & \cellcolor{royalazure!7}& \multirow{2}{*}{-} &\cellcolor{royalazure!7} & \multirow{2}{*}{-} \\
 & &\cellcolor{royalazure!7}\multirow{-2}{*}{$O(n\sqrt{T\log n})$} & &\cellcolor{royalazure!7}\multirow{-2}{*}{$O\left(n^{\schange{5/3}}\left(\log n\right)^{1/3}T^{2/3}\right)$} & \\ \hline
 % Reserve Price Optimization - \Cref{cor:mmr}
 \multirow{2}{*}{Reserve Price Optimization} & \multirow{2}{*}{$1/2$} &\cellcolor{royalazure!7}& \multirow{2}{*}{$O(n \sqrt{T \log T})$~\footnotemark[1]} &\cellcolor{royalazure!7}& \multirow{2}{*}{-} \\
 & &\cellcolor{royalazure!7}\multirow{-2}{*}{$O(n\sqrt{T \log T})$} & &\cellcolor{royalazure!7}\multirow{-2}{*}{$O\left(\schange{n^{3/5}T^{4/5}}\left(\log nT\right)^{1/3}\right)$} & \\ \hline
 % Monotone Discrete SM with Cardinality Constraints - Running Example
 Monotone Set SM & \multirow{2}{*}{$1-1/e$} &\cellcolor{royalazure!7} & \multirow{2}{*}{$O\left(k\sqrt{T\log n}\right)$~\footnotemark[2]} &\cellcolor{royalazure!7}& \multirow{2}{*}{$O\left(k^{2}(n\log n)^{1/3}T^{2/3}(\log T)^{2}\right)$~\footnotemark[2]}\\
 with Cardinality Constraints &&\cellcolor{royalazure!7}\multirow{-2}{*}{$O(k\sqrt{T\log n})$} & &\cellcolor{royalazure!7}\multirow{-2}{*}{$O\left(kn\schange{^{2/3}}(\log n)^{1/3}T^{2/3}\right)$} & \\ \hline
 
%  Monotone Set SM & \multirow{2}{*}{$1-1/e$} &\cellcolor{royalazure!7} & \multirow{2}{*}{$O\left(k\sqrt{T\log n}\right)$~\footnotemark[2]} &\cellcolor{royalazure!7}& \multirow{2}{*}{$O(k^2n^2(\log n)^{1/2}T^{2/3}(\log T)^2)$~\footnotemark[2]}\\
%  with Cardinality Constraints &&\cellcolor{royalazure!7}\multirow{-2}{*}{$O(k\sqrt{T\log n})$} & &\cellcolor{royalazure!7}\multirow{-2}{*}{$O\left(kn(\log n)^{1/3}T^{2/3}\right)$} & \\ \hline
 % Non-Monotone Discrete SM Functions - \Cref{cor:usm-discrete}
 Non-Monotone Set & \multirow{2}{*}{$1/2$} & \cellcolor{royalazure!7}& \multirow{2}{*}{$O(n\sqrt{T})$~\footnotemark[3]}&\cellcolor{royalazure!7} & \multirow{2}{*}{-} \\
 SM Functions & &\cellcolor{royalazure!7}\multirow{-2}{*}{$\schange{O(n\sqrt{T})}$} & &\cellcolor{royalazure!7}\multirow{-2}{*}{$\schange{O\left(nT^{2/3}\right)}$} &\\ \hline
 % Non-Monotone Continuous SM (Strong-DR) Functions - No Theorem
 Non-monotone Continuous & \multirow{2}{*}{$1/2$} &\cellcolor{royalazure!7}& \multirow{2}{*}{$\gamma = 1/4$, $O(T^{5/6})$~\footnotemark[4]} &\cellcolor{royalazure!7} & \multirow{2}{*}{$\gamma = 1/4$, $O(T^{11/12})$~\footnotemark[4]}\\
 SM (Strong-DR) Functions & &\cellcolor{royalazure!7}\multirow{-2}{*}{$O(n \sqrt{T \log T})$} & &\cellcolor{royalazure!7}\multirow{-2}{*}{$O(n T^{\schange{4/5}} (\log T)^{{1/3}})$} & \\ \hline
 % Non-Monotone Continuous SM (Weak DR) Functions - \Cref{cor:usm-continuous}
 Non-monotone Continuous & \multirow{2}{*}{$1/2$} & \cellcolor{royalazure!7} & \multirow{2}{*}{-} &\cellcolor{royalazure!7}& \multirow{2}{*}{-} \\
 SM (Weak-DR) Functions & &\cellcolor{royalazure!7}\multirow{-2}{*}{$O(n \sqrt{T \log T})$}
 & &\cellcolor{royalazure!7}\multirow{-2}{*}{$O(n T^{\schange{4/5}} (\log T)^{{1/3}})$} &\\ \hline
{Monotone Cont. SM (Strong-DR)}  & \multirow{2}{*}{{$1-1/e$}} & \cellcolor{royalazure!7} & \multirow{2}{*}{{$\bigO{\sqrt{T}}$}~\footnotemark[5]} &\cellcolor{royalazure!7}& \multirow{2}{*}{{$O(nT^{8/9})$}~\footnotemark[6]} \\
 {in Downward Closed Convex Set}  & &\cellcolor{royalazure!7}\multirow{-2}{*}{{$O(\sqrt{Tn\log{n}})$}}
 & &\cellcolor{royalazure!7}\multirow{-2}{*}{{$O(n(\log n)^{1/6}T^{5/6})$}} &\\\hline
 %Maximizing Non-monotone Continuous, Weak DR, SM Functions & 6 \\ \hline
 %Maximizing Monotone SM Functions & 8 & 9 \\
 \end{tabular}}
 } % End center the table even though it's too big.

	\smallskip
 	
\noindent\par
 	\begin{minipage}{0.8\textwidth}
 		{
 			\center
 			\footnotesize
 			\footnotemark[1]~\cite{roughgarden2019minimizing}
 			\footnotemark[2]~\cite{streeter2009online};\quad 
 			% \footnotemark[2]~{\cite{zhang2019online}};\quad
 			 \footnotemark[3]~\cite{roughgarden2018optimal};\quad
 			 \footnotemark[4]~\cite{thang2019online};\quad
 			 \footnotemark[5]~\cite{chen2018online};
 			 
 			 \footnotemark[6]~\cite{zhang2019one};
 			
 		}
 	\end{minipage}
\end{center}
\end{table}

%\negin{Where in the paper, we show the result for the strong DR SM functions?} \josh{Discussed it with Rad, we hadn't actually proved it and it was more of a back-of-the-envelope calculation. Upon further discussion, we're not actually sure how to translate the binary search structure in this framework, so I've copied over the weak DR results for now and added some appropriate discussion about it.}
\subsection{Further Related Work}
\label{app:further}
% While the closely related work has already been discussed, in this section, broader related work will be discussed.
% \RN{this section does not need to be very long. We don't need to say exactly what everyone does. Yet, we should double check to not miss any related work. Karbasi et. al., Krause et al. and Seb Bubeck need a special care. Check also COLT PC!}\\
% \RN{Btw, we can have a brief related work here, with a pointer to the appendix. There, we have a long related work with categorical paragraphs.}\\
% \RN{I remember Bobby Kleinberg had this old paper that used a similar idea. There is also a paper by Goluvin (@Josh) for monotone SM that is very related.} 

\emph{Combinatorial learning.} 
Our work is related to the literature on online combinatorial learning. While in our work we study the design of efficient online learning algorithms for combinatorial problems whose loss function is not necessarily linear in the chosen action, the work on combinatorial learning mostly focuses on linear loss functions; see, for example, \cite{abernethy2008competing, uchiya2010algorithms, cesa2012combinatorial, audibert2014regret, chen2013combinatorial, combes2015combinatorial}, and \cite{zimmert2019beating}. 
% Here, the learner's loss is an inner product of a $d$-dimensional action $z_t$ and loss vector $a_t$. % by choosing an action $z_t\in \{0,1\}^d$, the learner incurs a loss that is the inner product of his $d$-dimensional action $z_t$ and loss vector $a_t.$
This line of work examines both the full-information and bandit settings. 
% In the full-information setting, the learner observes the loss vector $a_t$, while in the bandit setting, only the loss $a_t^Tz_t$ is observable. 
The standard exponentially weighted average forecaster obtains a tight $\bigO{m\sqrt{T\log{\frac{d}{m}}}}$ regret in the full-information setting, where $m$ is the maximum $\ell_1$-norm of action vectors (\cite{audibert2014regret}). The state-of-the-art regret bound for the bandit setting is $\bigO{\sqrt{dm^3T\log \frac{d}{m}}}$, as reported in several papers (\cite{bubeck2012towards}, \cite{cesa2012combinatorial}, \cite{hazan2016volumetric}). 
%The last one gives efficient algorithm based on volumetric spanners exploration that achieves this regret. 
Our framework achieves matching regret with respect to $T$ in the full-information setting without requiring the loss function to be linear. We get a worse regret (proportional to $T^{2/3}$) for the bandit setting to account for the non-linearity in loss functions.%, but we present a matching lower bound with respect to $T$.

% This line of work focuses on linear loss functions, where it is assumed that a learner's loss is the inner product of his 
% $d$-dimensional action $a_t$ and loss vector $z_t$. This problem is mainly inspired by combinatorial decision-making problems (e.g., the shortest path problem) in which the learner chooses a binary vector (e.g., a path consists of set of edges) and incurs a loss that can be linearly decoupled in coordinates (e.g., edges) of the chosen binary vector.  
%  Such a problem is studied under different feedback structures: full-information, semi-bandit, and bandit structures; see, for example, \cite{abernethy2008competing, uchiya2010algorithms, cesa2012combinatorial, audibert2014regret, chen2013combinatorial, combes2015combinatorial, zimmert2019beating}. In the full-information setting, the learner observes the loss vector $z_t$; in the semi-bandit setting, the learner observes all the coordinates of $a_t\odot z_t$; and in the bandit setting, only the loss $a_t^Tz_t$ is observable. While it is straightforward to extend the full-information and bandit feedback structures to our general setting, as we do in this work, it is less clear how semi-bandit feedback structure can be defined when the loss function is not linear. Apart from this, the main difference between our work and this line of work is that we consider problems whose loss function is not necessarily linear in the chosen action. 

\emph{Online adversarial submodular optimization. }%\cite{streeter2009online} study the problem of maximizing monotone (discrete) submodular functions with a cardinality constraint (with size $k$) in the adversarial setting. For the full information setting, their algorithm, which is a variation of the EXP 3 algorithm, obtains $O(\sqrt{kT\ln n})$ $(1-1/e)$-regret. Here, $n$ is the number of elements in the submodular functions. Our transformation improves this bound to $O(\sqrt{kT})$. %For the bandit setting of the same problem, \cite{streeter2009online}
In the previous section, we briefly mentioned some of the work that are closely related to our results on maximizing submodular functions in an online adversarial setting. Here, we provide more details. \cite{chen2018projection,chen2019black} use method based on Frank-Wolfe to design vanishing-regret learning algorithms for maximizing monotone continuous strong-DR submodular functions with convex constraints. \cite{chen2018projection} (respectively \cite{chen2019black}) assume that the algorithm can access to $T^{1/2}$ exact (respectively $T^{2/3}$ stochastic) gradient evaluations in every round and design an algorithm whose $(1-1/e)$-regret is $O(\sqrt{T})$.\footnote{The dependency on the number of elements $n$ is not well specified in this work.} 
%Unlike \cite{chen2018projection,chen2019black}, we do not impose matroid constraints when studying the problem of maximizing monotone continuous strong-DR submodular functions. Nevertheless, since \cite{chen2018projection,chen2019black} do not present separate bounds when matroid constraints are not imposed, their results can be compared with our result under the full information setting for which we also obtain $O(\sqrt{T})$ $(1-1/e)$-regret. \negin{OK with the previous two sentences assuming that we'll add our Strong-DR monotone result?}
The results of \cite{chen2018projection,chen2019black} were later improved by \cite{zhang2019online} who design another Frank-Wolfe inspired learning algorithm that has access to one stochastic gradient in each round and obtains $O(T^{4/5})$ $(1-1/e)$-regret. \cite{zhang2019online} further present a learning algorithm in the bandit setting for the problem of maximizing monotone continuous strong-DR submodular functions subject to matroid constraints. Their algorithm obtain $O(T^{8/9})$ ($1-1/e$)-regret. To see how our framework partially improve these results, refer \Cref{tab:results} for a detailed comparison with our results related to monotone continuous submodular maximization subject to downward closed convex sets, and also non-monotone continuous submodular maximization with box constraints (also known as unconstrained). 
% \footnote{All these results (\cite{chen2018projection}, \cite{chen2019black}, \cite{zhang2019online}) can be extended for monotone set submodular functions using rounding method and multi-linear extension as a bridge.}
% In contrast, we do not impose the monotonicity condition when considering continuous submodular functions and hence instead of having the approximation factor of $1-1/e$, we have an approximation factor of $1/2$. Note that $1/2$ is a tight approximation ratio (unless RP=NP). Our $\bigO{n\sqrt{T\log T}}$ $\frac12$-regret in the full-information setting and $\bigO{nT^{\schange{4/5}}(\log T)^{1/3}}$ $\frac12$-regret in the bandit setting for non-monotone weak-DR submodular maximization imply the same bounds for non-monotone strong-DR submodular functions. 
%Our approximation factor is lower because we do not impose monotonicity, and $1/2$ is a tight approximation ratio (unless RP=NP).
%Recall the our algorithm for the problem of maximizing monotone continuous strong submodular functions (without matroid constraints) obtains $O(T^{2/3})$ ($1-1/e$)-regret.\negin{What regret do we get for this setting?}
%Recently \cite{thang2019online} also study the online learning of non-monotone continuous  DR-submodular functions in both full information and bandit settings. For the full information setting, they get $O(T^{5/6})$ (1/4)-regret and for the bandit setting, they get $O(T^{11/12})$ (1/4)-regret. We improve both bounds: for the full information setting, we get $O(T^{1/2})$ (1/2)-regret and for the bandit setting, we get $O(T^{2/3})$ (1/2)-regret.

\emph{Online stochastic submodular optimization.} Designing learning algorithms for maximizing stochastic monotone continuous strong-DR submodular functions has been studied in \cite{hassani2017gradient, mokhtari2018stochastic, hassani2019stochastic}, and \cite{zhang2019one}. The best result for this setting is by \cite{zhang2019one} who obtain $O(\sqrt{T})$ $(1-1/e)$-regret using a stochastic variant of the Frank-Wolfe method. Their algorithm also implies the same regret bound for monotone set submodular maximization, which matches our regret bound for maximizing monotone set submodular function in the adversarial setting.

\paragraph{Blackwell approachability.} Several aspects of Blackwell sequential game, including the design of efficient algorithms for Blackwell game with various information feedback structures, and the alternative conditions for approachability, 
have been studied in the literature. In terms of feedback structures, the original Blackwell game develops efficient projection algorithm for games that return the adversary's moves on each round. \cite{mannor2011robust} develop simple and efficient algorithms for a variant of Blackwell game where on each round, the player only obtains a random signal whose distribution depends on the action of the player and the adversary (as opposed to the action of the adversary). This variant is called Blackwell approachability with partial monitoring, and is further studied in \cite{mannor2014set} and \cite{kwon2017online}. In terms of equivalent conditions for approachability, aside from the original halfspace-satisfiability condition for approachability in \cite{blackwell1956analog}, alternative conditions for approachability, including the response-satisfiability criteria that we use in this paper, can be found in \cite{lehrer2003approachability}, \cite{vieille1992weak}, \cite{spinat2002necessary}, and \cite{milman2006approachable}.

Blackwell approachability has also been proven to be a quintessential tool in various applications, as shown in \cite{even2009online} and \cite{mannor2006online}. However, most applications do not involve NP-hard combinatorial problems, and use the best-fixed action in hindsight (no approximation factor) as the benchmark for regret. Furthermore, they only create one Blackwell instance on each round. In contrast, we create multiple Blackwell instances on each round because the problems we consider have combinatorial nature and can only be solved efficiently in multiple stages. Furthermore, since we are solving NP-hard combinatorial problems with an intractable offline problem, we use a $\gamma$-approximation benchmark in our regret.

{\textbf{Organization.} In Section \ref{sec:prelim}, we present the offline optimization problem, adversarial online learning framework, and Blackwell sequential games. Section \ref{sec:offline_alg} presents the offline greedy approximation algorithm. 
%\negin{In the intro, I only said ``greedy", not ``iterative greedy". Maybe we should fix this. Another suggestion is to call it ``greedy" not ``iterative greedy".}\FS{Sometimes it sounds awkward to change greedy to iterative greedy. How about using iterative greedy as the name of the algorithm in Section 3, and greedy when we are referring to the algorithm in general? Or is that inconsistent?} 
In Sections \ref{sec:full-info} and \ref{sec:bandit-info}, we present our offline-to-online transformation in the full information and bandit settings, respectively. Section \ref{sec:applications} provides our regret bounds for the product ranking problem and optimizing reserve prices. Our regret bounds for maximizing unconstrained non-monotone submodular functions and maximizing monotone submodular functions over a  downward closed  bounded convex set are respectively presented in Sections \ref{subsec:usm} and \ref{subsec:DR-SM} in the appendix.  In \Cref{sec:numerics} in the appendix, we present our numerical studies.}% and we conclude in Section \ref{sec:conclude}. 

\section{Preliminaries and Notations}
\label{sec:prelim}
In this section, we  formulate our adversarial online learning framework for approximation algorithms. We then give an overview of Blackwell approachability~\citep{blackwell1956analog}, an important technical tool that we use in this paper. %We also provide a brief recap of a few definitions and results concerning maximization of submodular functions, a canonical application for demonstrating our techniques. 

\subsection{Offline Optimization and Approximations}
% We consider a dynamic setting with $T$ rounds where in every round $t$, a learner, denoted by $\alg$, needs to take an action that involves solving an NP-hard problem. Before providing further details into our learning framework, we  discuss the NP-hard problems, which we refer to as the offline optimization. 

Let $\funcspace$ be a space of functions defined over a (discrete or continuous) domain $\domain$.  Assume that $\funcspace$ is closed under addition, i.e., for any two functions $f_1, f_2 \in \funcspace$, we have  $f_1+f_2 \in \funcspace$. In the \emph{offline optimization problem}, the problem of interest is finding a point $\zbf^*\in\domain$ such that
% \begin{equation}
% \label{eq:offline-optimization}
%   \opt \triangleq \max_{\zbf \in \constraint} f(\zbf)~~~~\left(\textrm{or}~~~\opt \triangleq \min_{\zbf \in \constraint} f(\zbf)~~\textrm{for minimization}\right)
% \end{equation}
\begin{equation}
\label{eq:offline-optimization}
  \zbf^*\in \argmax_{\zbf \in \constraint} f(\zbf)\,,
\end{equation}
where $f: \domain \to [0,1]$, which  belongs to $ \funcspace$, is the objective function, and $\constraint\subseteq \domain$ is the feasible region.\footnote{For maximization problems, which are the focus of this paper, we only need our functions to be upper bounded by a constant. However, for simplicity, we assume that our functions are upper bounded by one.}  We further denote the optimal objective value of problem~(\ref{eq:offline-optimization}) by $\opt$; that is, $\opt=\max_{\zbf \in \constraint} f(\zbf)$. We focus on maximization problems in this paper, but our techniques and results can easily be extended to minimization problems as well. 

We consider offline problems that are NP-hard to solve exactly, and at the same time are amenable to a $\gamma$-approximation algorithm for some constant $\gamma\in(0,1)$. 

\begin{definition}[$\gamma$-approximation offline algorithm]\label{def:offline} An offline algorithm $\mathcal{A}$ for problem~\eqref{eq:offline-optimization} is a  polynomial time $\gamma$-approximation algorithm if for every $f \in \funcspace$ returns a feasible (possibly randomized) point $\hat{\zbf} \in \constraint$ in polynomial time in the size of the algorithm's input such that
% \begin{equation*}
%     \expect{f(\hat{\zbf})}\geq \gamma\cdot  \opt ~~~~\left(\textrm{or}~~~  \expect{f(\hat{\zbf})}\leq \tfrac{1}{\gamma}\cdot\opt~~\textrm{for minimization}\right)
% \end{equation*}
\begin{equation*}
    \expect{f(\hat{\zbf})}\geq \gamma\cdot  \opt \,.
\end{equation*}
Here, the expectation is with respect to the randomness in algorithm $\mathcal A$. The constant $\gamma\in(0,1)$ is referred to as the \emph{approximation factor} of algorithm $\mathcal A$.
\end{definition}

%The information-theoretic lower bound for both problems is due to \citealp{feige2011maximizing}.  %where the information theoretic lower bound comes from the equivalence to the maximum coverage problem.

% \paragraph{Framework and regret.}

%  We consider adversarial an online learning version of Problem \eqref{eq:offline-optimization} where a learner, denoted by $\alg$, plays $T$ rounds of a sequential game against an adversary $\adv$. In each round $t\in [T]$,  $\adv$ picks a function $f_t\in\funcspace$ and simultaneously $\alg$ takes an action by picking a feasible point $\zbf_t\in \constraint$. Then, $\alg$ obtains a reward equal to $f_t(\zbf_t)$ and receives a \emph{feedback} about this round. We consider both full-information and bandit feedback. See Section~\ref{app:more-prelim} for mor detials.

\subsection{Adversarial Online Learning and Approximations}

\paragraph{Framework.} In the adversarial online learning version of problem~\eqref{eq:offline-optimization}, there is a learner, denoted by $\alg$, who plays $T$ rounds of a sequential game against an adversary, denoted by $\adv$. In each round $t\in [T]$,  $\adv$ picks a function $f_t\in\funcspace$ and simultaneously $\alg$ takes an action by picking a feasible point $\zbf_t\in \constraint$. Then, $\alg$ obtains a reward equal to $f_t(\zbf_t)$ and receives a \emph{feedback} concerning this round. We highlight that unlike the offline optimization Problem~\eqref{eq:offline-optimization}, the unknown function $f_t$ is not observable to $\alg$ when it chooses action $\zbf_t$, and he only knows that $f_t$ belongs to $\funcspace$. Furthermore, $\alg$ picks his action at time $t$ only given the feedback of previous rounds $1,2,\ldots,t-1$, and in that sense, $\alg$ is an online learner. $\alg$'s goal is to pick points $\{\zbf_t\}_{t=1}^T$ given the feedback of each round to  maximize the accumulated reward $\sum_{t=1}^T f_t(\zbf_t)$ against a worst-case adversary $\adv$. In this paper, for the sake of brevity and simplicity, we limit our focus to worst-case oblivious adversaries, i.e.,  adversaries that pick the sequence $f_1,f_2,\ldots,f_T$ upfront. 

\paragraph{Feedback structures.} We consider two feedback structures: (i)~\emph{full information feedback}, where $\alg$ observes the entire function $f_t$ after choosing $\zbf_t$, and (ii)~\emph{bandit feedback}, where 
  $\alg$ only observes the quantity $f_t(\zbf_t)$ after choosing $\zbf_t$. Let $\phi_t$ be the feedback that $\alg$ receives after picking $\zbf_t$. Then,  $\alg$'s next action $\zbf_{t+1}$ is a function of the history $\his_{t}$, where $\his_{t} \triangleq\{(\zbf_1,\phi_1), \ldots, (\zbf_t, \phi_{t})\}$. More formally, any learning algorithm $\alg$ can be described as mappings $\{\pi^{(t)}_{\alg}\}_{t=1}^{T}$, where each $\pi^{(t)}_{\alg}$  maps the history $\his_{t-1}$ to action $\zbf_t$ for any $t\in [T]$. The mapping $\pi^{(t)}_{\alg}$ can be either deterministic or randomized.
   
  %Under both feedback structures, $\alg$'s goal is to pick points $\zbf_t$'s so that it  maximizes the accumulated reward $\sum_{t=1}^T f_t(\zbf_t)$ against a worst-case adversary as $\adv$.

\paragraph{Benchmarks and regret.}

We would like to design  polynomial-time online learning algorithms for offline problems that are NP-hard to solve exactly. Thus,  we use the adapted notion of  \emph{approximate regret} to quantify the performance of an online algorithm. This notion is  the  regret with respect to $\gamma$ fraction of the objective value at the best in-hindsight point. The  notion of $\gamma$-regret, which is formally defined below, is common in the literature, see, for example, \citealp{kakade2009playing}, \citealp{dudik2017oracle}, and \citealp{roughgarden2018optimal}. At a high level,  our goal is  to take a $\gamma$-approximation offline algorithm, and transform it to an online algorithm $\alg$ with a sublinear $\gamma\textrm{-regret}$.

% When the best polynomial-time approximation for an offline problem is a $\gamma$-approximation, there is no hope of simultaneously achieving no-regret and computational efficiency\footnote{\JW{Online algorithms obviously solve one-shot offline problems in expectation, but it's trickier to prove that they solve batch offline problems like the reserve price one where you can't average multiple rounds together to get a single problem (latter proven independently by a paper of Daskalakis / Syrgkanis and \cite{roughgarden2018optimal}} \RN{@Josh: I actually think our notion of extended approximate robustness is related to this}}. 

\begin{definition}[$\gamma$-regret] \label{def:gamma:regret}Let $\sigma=\{(\zbf_t,f_t)\}_{t=1}^T$ be a sequence of strategies realized by  online learner $\alg$ and adversary $\adv$. Then, for any such $\sigma$ and  $\gamma\in(0,1)$, $\gamma\textrm{-regret}(\sigma)$ is defined as
$$
\gamma\textrm{-regret}\left(\sigma\right)\triangleq\gamma\cdot \underset{\zbf\in\constraint}{\max}~\sum_{t=1}^T f_t(\zbf)-\sum_{t=1}^Tf_t(\zbf_t)~.
$$
With a  slight abuse of the notation, we denote the worst-case expected approximate regret of $\alg$ against any (oblivious) adversary $\adv$ as follows:
% ~\footnote{We  limit our focus to \emph{oblivious} adversaries, i.e., an adversary that picks the worst-case sequence $f_1,f_2,\ldots,f_T$ upfront to maximize the approximate regret. Most of our results can be directly extended for adaptive adversaries.} 
$$
\gamma\textrm{-regret}\left(\alg \right)\triangleq \max_{\{f_t\}_{t=1}^T}\Big\{ \E[\gamma\textrm{-regret}\left(\sigma\right)]: \sigma=\{(\zbf_t,f_t)\}_{t=1}^T, \zbf_t\in\constraint=\textrm{$\alg$'s strategy at time $t\in[T]$}\Big\}\,,
$$
where the expectation is with respect to any randomness in $\alg$. 
\end{definition}

% \begin{definition}[$\gamma$-Regret] \label{def:gamma:regret}Let $\sigma=\{(\zbf_t,f_t)\}_{t=1}^T$ be  any sequence  of strategies of $\alg$ and $\adv$, where $\zbf_t$, $t\in [T]$, is $\his_{t-1}$ measurable. Then,  for any $\sigma$ and   $\gamma\in[0,1]$, $\gamma\textrm{-Regret}$ is defined as
% $$
% \gamma\textrm{-Regret}\left(\sigma\right)\triangleq\gamma\cdot \underset{\zbf\in\constraint}{\max}~\sum_{t=1}^T f_t(\zbf)-\sum_{t=1}^Tf_t(\zbf_t)~.
% $$
% Moreover, consider an online algorithm $\alg$ with mapping $\pi_{\alg}$  that maps the history set $\his_{t-1}$ to action $\zbf_t= \pi_{\alg}(\his_{t-1})$ for any $t\in [T]$. Then, with a slight abuse of notation,  we use $\gamma\textrm{-Regret}(\alg)$ to denote the worst-case $\gamma\textrm{-Regret}\left(\sigma\right)$ over adversary's strategies $\{f_t\}_{t=1}^T$. That is, 
% $$
% \gamma\textrm{-Regret}\left(\alg \right)= \max_{\{f_t\}_{t=1}^T}\left\{ \E[\gamma\textrm{-Regret}\left(\sigma\right)]: \sigma=\{(\zbf_t,f_t)\}_{t=1}^T, \zbf_t=\pi_{\alg}(\his_{t-1}), t\in [T] \right\}\,,
% $$
% where the expectation is with respect to any randomness in mapping $\pi_{\alg}$. 
% \end{definition}

\subsection{Blackwell Sequential Games and Approachability}
\label{sec:blackwell}

%\JW{Editor's note: I removed the B subscript on the history. Also changed ``history set'' -> ``history'' since identical moves do not get de-duplicated.}

To transform offline approximation algorithms to efficient online learning algorithms, we take advantage of \emph{Blackwell sequential games}. A Blackwell sequential game is a repeated two-player game characterized by a tuple $\blackwellsequentialgame$. In this repeated game, $\algspaceB$ and $\advspaceB$ are both compact convex sets representing the players' action spaces, and $\pbf: \algspaceB \times \advspaceB \to \mathbb{R}^d$ is a biaffine vector payoff function.\footnote{Function $\pbf(\cdot, \cdot)$ is biaffine if for any $\bm{x} \in \algspaceB$, $\pbf(\bm{x}, \cdot)$ is affine and for any $\bm{y} \in \advspaceB$, $\pbf(\cdot, \bm{y})$ is affine.} Moreover, parameter $d\in\mathbb{N}$ is known as the dimension of the Blackwell sequential game. The vector payoff function $\pbf$ is assumed to be known by both players. The game is played in $T$ rounds. Each round involves player~1 choosing an action $\bm{x}_t \in \algspaceB$ and player~2 choosing an action $\bm{y}_t \in \advspaceB$ simultaneously. Both actions may depend on the observed history $\left( (\bm{x}_1,\bm{y}_1), \cdots, (\bm{x}_{t-1}, \bm{y}_{t-1}) \right)$. This pair of actions produces the vector payoff $\pbf(\bm{x}_t, \bm{y}_t)$. The objective of player 1 is to ensure that the time-averaged payoff approaches a closed and convex target set $S \subseteq \mathbb{R}^d$,  and the objective of player 2 is to prevent this from happening.

\begin{definition}[Blackwell approachabilty]
\label{def:blackwell-approach}
In the Blackwell sequential game $\blackwellsequentialgame$, a target set $S$ is $g(T)$-approachable if there exists a player 1 strategy such that for every player 2's strategy, the resulting sequence of actions satisfies
  \begin{align*}
    d_{\infty} \left( \frac1T \sum_{t=1}^T \pbf(\bm{x}_t, \bm{y}_t), S \right) &\le g(T)\,,
  \end{align*}
  where for any vector ${\bf{w}}\in\mathbb{R}^d$ and set $S\subseteq\mathbb{R}^d$, $d_{\infty}({\bf{w}}, S) \triangleq \inf_{v \in S} \left\lVert\bf{w} - \bf{v} \right\rVert_{\infty}$ is the $\ell_{\infty}$-distance of vector $\bf{w}$ from set $S$.
\end{definition}
\medskip

In this paper, we focus on the $\ell_\infty$ norm rather than the usual $\ell_2$ norm since it is more suitable for our applications. Our bounds on the approachability term $g(T)$ will depend on the scale of the problem, and more formally on the diameter $\diameter{\pbf}$ of the payoff function $\pbf$, defined as
\begin{equation}\label{eq:diameter}
  \diameterdefinition{\pbf}\,.
\end{equation}

Ideally, player 1 aims to develop a strategy so that the term $g(T)$ in \Cref{def:blackwell-approach} converges to $0$ as $T$ converges to $+\infty$, and hence would be able to approach the target set $S$ asymptotically. However, not every closed and convex target set $S$ is approachable. To help with characterizing which sets are approachable, we additionally define the concept of \emph{response-satisfiablity}. 

\begin{definition}[Response-Satisfiable] \label{def:satisfy}
  A closed and convex target set $S$ is response-satisfiable in the Blackwell sequential game $\blackwellsequentialgame$ if for every player 2's action $\bm{y} \in \advspaceB$, there exists a player 1's action $\bm{x} \in \algspaceB$ such that the vector payoff falls into the target set, that is $\pbf(\bm{x}, \bm{y}) \in S$.
\end{definition}

Blackwell's landmark result~\citep{blackwell1956analog} is an equivalence of (asymptotic) approachability and response-satisfiablity.\footnote{There are other equivalent structural criteria for approachability similar to response-satisfiability; see \Cref{sec:apx-blackwellequivalent} in the appendix for a list of these conditions.} We extend this result in the following theorem.

\begin{restatable}{theorem}{blackwellthm}%(\citealp{blackwell1956analog})
\label{def:blackwell-thm}
  A  closed and convex target set $S$ is $O(\diameter{\pbf}\log(d)^{1/2}T^{-1/2})$-approachable in the Blackwell sequential game $\blackwellsequentialgame$ if and only if the set $S$ is response-satisfiable, where $\diameter{\pbf}$, defined in Equation  \eqref{eq:diameter}, is the $\ell_\infty$ diameter of the payoff function $\pbf$, and $d$ is the dimension of the game.
\end{restatable}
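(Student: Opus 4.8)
The plan is to prove both directions by reducing $\ell_\infty$-Blackwell approachability to online convex optimization over the \emph{dual} ($\ell_1$) ball, exploiting the dual characterization of distance. For a closed convex $S$ with support function $h_S(\wbf)\triangleq\sup_{\bm{s}\in S}\langle\wbf,\bm{s}\rangle$, the $\ell_\infty$ distance of the average payoff $\bar{\pbf}\triangleq\frac1T\sum_{t=1}^T\pbf(\xbf_t,\ybf_t)$ to $S$ admits the variational form $\standarddistance{\bar{\pbf}}{S}=\sup_{\norm{\wbf}{1}\le1}\left(\langle\wbf,\bar{\pbf}\rangle-h_S(\wbf)\right)$, since the dual of the $\ell_\infty$ norm is the $\ell_1$ norm. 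The whole argument then revolves around player~1 maintaining a direction $\wbf_t$ in the $\ell_1$ ball via a regret minimizer and using response-satisfiability to force, round by round, the payoff into the halfspace selected by $\wbf_t$.

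For the forward direction (response-satisfiable $\Rightarrow$ approachable), I would first upgrade response-satisfiability to a \emph{halfspace oracle}: for any direction $\wbf$ there is an $\xbf\in\algspaceB$ with $\langle\wbf,\pbf(\xbf,\ybf)\rangle\le h_S(\wbf)$ for every $\ybf\in\advspaceB$. This follows from Sion's minimax theorem applied to the biaffine map $\langle\wbf,\pbf(\cdot,\cdot)\rangle$ on the compact convex sets $\algspaceB,\advspaceB$: for each fixed $\ybf$, response-satisfiability yields an $\xbf$ with $\pbf(\xbf,\ybf)\in S$, hence value $\le h_S(\wbf)$, so $\min_{\xbf}\max_{\ybf}\langle\wbf,\pbf(\xbf,\ybf)\rangle\le h_S(\wbf)$. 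Player~1 then runs online mirror descent with the negative-entropy regularizer on the $\ell_1$ ball, fed the concave gains $r_t(\wbf)=\langle\wbf,\pbf(\xbf_t,\ybf_t)\rangle-h_S(\wbf)$, producing iterates $\wbf_t$, and responds to each $\wbf_t$ with the halfspace oracle. By the oracle, $r_t(\wbf_t)\le0$ for every $t$, so $\sum_t r_t(\wbf_t)\le0$; by the no-regret guarantee, $\max_{\norm{\wbf}{1}\le1}\sum_t r_t(\wbf)-\sum_t r_t(\wbf_t)$ is at most the regret. Combining, $T\cdot\standarddistance{\bar{\pbf}}{S}=\max_{\norm{\wbf}{1}\le1}\sum_t r_t(\wbf)$ is bounded by the regret, which for the entropic regularizer over the $\ell_1$ ball scales as $O\!\left(G_\infty\sqrt{T\log d}\right)$ with $G_\infty$ the $\ell_\infty$ bound on the supergradients of $r_t$. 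Dividing by $T$ gives exactly the claimed $O(\diameter{\pbf}\sqrt{\log d}\,T^{-1/2})$ rate.

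The converse (approachable $\Rightarrow$ response-satisfiable) is a short separation step: if $S$ were not response-satisfiable, some $\ybf_0$ would make the compact convex set $K\triangleq\{\pbf(\xbf,\ybf_0):\xbf\in\algspaceB\}$ disjoint from the closed convex $S$, so $\delta\triangleq\standarddistance{K}{S}>0$; an adversary who plays $\ybf_0$ in every round keeps every average payoff inside $K$, hence at distance $\ge\delta$ from $S$, contradicting any vanishing approachability rate. The main obstacle I anticipate is not the logical skeleton but obtaining the $\sqrt{\log d}$ (rather than $\sqrt d$) dependence while keeping the constant proportional to $\diameter{\pbf}$: this forces the $\ell_1$-ball/entropic-regularizer route instead of Euclidean projection, and it requires controlling the supergradients $\partial r_t$, i.e.\ the support-function subgradients $\partial h_S$, in $\ell_\infty$. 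Since subgradients of $h_S$ are points of $S$, I would first clip $S$ to $\{\bm{v}:\standardnorm{\bm{v}}\le3\diameter{\pbf}\}$ --- valid because response-satisfiability forces $S$ to meet the radius-$\diameter{\pbf}$ ball, so distances from points with $\standardnorm{\bar{\pbf}}\le\diameter{\pbf}$ are unchanged --- which bounds both $\standardnorm{\pbf(\xbf_t,\ybf_t)}$ and $\standardnorm{\partial h_S}$ by $O(\diameter{\pbf})$ and yields $G_\infty=O(\diameter{\pbf})$.
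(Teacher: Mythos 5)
Your proof is correct, and your converse direction is identical to the paper's (the adversary fixes $\ybf_0$, the affine image $K=\{\pbf(\xbf,\ybf_0):\xbf\in\algspaceB\}$ is compact convex and at positive distance from $S$, and the average payoff never leaves $K$). For the forward direction, both you and the paper ultimately run the Abernethy--Bartlett--Hazan reduction of approachability to online linear/convex optimization with a regularizer adapted to $\ell_1$ geometry, but your instantiation differs in several substantive ways. The paper first lifts $S$ to a cone, takes the OLO decision set to be $S^{\circ}\cap B_2(1)$ with purely linear losses $-\pbf(\xbf_t,\ybf_t)$, runs Follow-the-Regularized-Leader with the $\tfrac12\lVert\cdot\rVert_q^2$ regularizer for $q=\log(d)/(\log(d)-1)$ to extract the $\sqrt{\log d}$ factor, and then passes from $d_2$ to $d_\infty$ via $d_\infty\le d_2$; it also simply cites the known equivalence of response-satisfiability with halfspace-satisfiability. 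You instead work natively with the exact dual formula $\standarddistance{\bar{\pbf}}{S}=\sup_{\lVert\wbf\rVert_1\le1}(\langle\wbf,\bar{\pbf}\rangle-h_S(\wbf))$ over the full $\ell_1$ ball, absorb the set $S$ into the concave gain via its support function rather than restricting the decision set to a polar cone, derive the halfspace oracle from response-satisfiability explicitly through Sion's minimax, and use entropic mirror descent for the $\sqrt{\log d}$ rate. Your route buys a proof that handles a general closed convex $S$ without the cone-lifting step and loses no constant in passing between norms, at the price of having to control the supergradients of $h_S$ --- which your clipping of $S$ to an $\ell_\infty$ ball of radius $3\diameter{\pbf}$ handles correctly, since response-satisfiability guarantees $S$ meets the radius-$\diameter{\pbf}$ ball and the average payoff also lies in that ball, so the clipping changes neither the relevant distances nor response-satisfiability. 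The paper's route is closer to a black-box citation of existing machinery; yours is more self-contained. Both yield the stated $O(\diameter{\pbf}\sqrt{\log d}\,T^{-1/2})$ bound.
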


We present a detailed proof of \Cref{def:blackwell-thm} in \Cref{sec:apx-blackwell-proof} in the appendix, which is an adaptation of the original result of \cite{blackwell1956analog}. The main difference between the 
Blackwell's original result and Theorem \ref{def:blackwell-thm} is how the  distance between the average payoff and set $S$ is computed. While Blackwell uses norm 2, we apply norm infinity. To account for this difference, we use the equivalence between Blackwell approachability and online linear optimization~\citep{abernethy2011blackwell}. This equivalence allows us to 
 apply regret bounds for the latter problem that uses an arbitrary norm to find new bounds for the approachability problem. The regret bounds (on online linear optimization) can then be  obtained via using  Follow-the-Regularized-Leader~\citep{shalev2012online} or Online Mirror Descent~\citep{bubeck2015convex} algorithms.

 % the literature on Blackwell approachability mostly focuses on standard norms, and provides algorithms with approachability upper-bounds on the $\ell_2$-distance of the average payoff and $S$. These bounds are also in terms of the $\ell_2$-diameter of the space of possible vector payoffs $\pbf$ and $T$. 
 %However, because of an equivalence between Blackwell approachability and online linear optimization~\citep{abernethy2011blackwell}, one can apply regret bounds for the latter problem that are using an arbitrary norm to find new bounds for the approachability problem. These regret bounds (on online linear optimization) are usually obtained by using Follow-the-Regularized-Leader~\citep{shalev2012online} or Online Mirror Descent~\citep{bubeck2015convex}. %We present a detailed proof of \Cref{def:blackwell-thm} in \Cref{sec:apx-blackwell-proof} in the appendix, which is an adaptation of the original result of \cite{blackwell1956analog}.
 
 We finish this subsection by a few remarks regarding our treatment of the Blackwell approachability.

\medskip
\begin{remark}
\label{rem:polytime-blackwell}
As our goal is designing polynomial-time online learning algorithms, we further use algorithmic results in \citealp{even2009online}, and \cite{abernethy2011blackwell} due to the equivalence between Blackwell approachability and full information adversarial online linear optimization. These results provide a polynomial-time approachable online algorithm satisfying the bound in \Cref{def:blackwell-thm}, given access to a separation oracle for the closed and convex set $S$.\footnote{Given the separation oracle for convex set $S$, the running-time should be polynomial in $d$, $T$, and the number of bits required to encode $\algspaceB$. We are also considering a computational model where either the realized vector payoff is given as feedback at the end of each round, or the vector payoff function $\pbf$ can be evaluated efficiently at any given pair of actions $(\xbf,\ybf)$.} From this point on, when set $S$ is response-satisfiable, we assume access to such an online algorithm that uses a separation oracle for the convex set $S$ in a blackbox fashion.%\negin{which exact algorithm we use here is not important?}
\end{remark}
\medskip

\begin{remark}
Another upshot of the above line of research on the equivalence between Blackwell approachability and full information online linear optimization is that an algorithm for player $1$ to approach set $S$ might only have access to the realized vector payoffs $\left(\pbf(\xbf_1,\ybf_1),\ldots,\pbf(\xbf_{t-1},\ybf_{t-1})\right)$ in round $t$, rather than the entire history $\left((\xbf_1,\ybf_1),\ldots,(\xbf_{t-1},\ybf_{t-1})\right)$, and this is indeed without loss of generality for obtaining the optimal bound of \Cref{def:blackwell-thm}~\citep{abernethy2011blackwell}. We relax this assumption in our ``bandit Blackwell sequential game'', where we assume player 1 can only sometimes have access to an unbiased estimator of the realized vector payoff; see \Cref{sec:bandit-blackwell} for the definition and more details.
\end{remark}

\section{Approximation Algorithms for the Offline Problem: Iterative Greedy}\label{sec:offline_alg}

As stated earlier, we are interested in transforming  a $\gamma$-approximation algorithm for the offline problem~\eqref{eq:offline-optimization} to an online learning algorithm, so that the worst-case $\gamma\textrm{-regret}$ is sublinear in the number of rounds $T$. We consider a general class of algorithms for obtaining such an approximation guarantee, named \emph{Iterative Greedy (IG)} algorithms. In an algorithm in this class, roughly speaking, a sequence of locally optimal decisions with respect to a specific metric (which we elaborate on more later) leads to picking the final point. This point then provably provides an approximation guarantee with respect to the global optimal solution of problem~\eqref{eq:offline-optimization}.

  Formally, consider the following abstract skeleton. Suppose that  we have $N$ \emph{subproblems} indexed by $i\in [N]$. The algorithm starts from an initial feasible point $\zi{0}\in \constraint$. It then goes over the subproblems in the increasing order of their indices. The goal of each subproblem $i$ is to return a new feasible point $\zi{i}\in\constraint$ given the output of the previous subproblem, i.e., $\zi{i-1}$. The algorithm finishes by returning the point $\zi{N}$. Now, each subproblem $i$ performs two steps: 
  
\begin{enumerate}
    \item \underline{Local optimization}:  We associate a space of \emph{update parameters} $\paramspace\subseteq \mathbb{R}^{\paramdimension}$ to each subproblem. Given the previous point $\zi{i-1}$ and the objective function $f$, the goal of this step is to find a \emph{locally optimal} update parameter $\param^{(i)}\in\pspace$ that satisfies:
     $$\pay(\param^{(i)}, \zbf^{(i-1)}, f) \geq \mathbf{0}\,,$$
     where $\pay:\paramspace\times \domain\times\funcspace\rightarrow \mathbb{R}^{\payoffdimension}$ denotes the \emph{parameter vector payoff function}. %The condition $\pay(\param^{(i)}, \zbf^{(i-1)}, f) \geq \mathbf{0}$ implies that subproblem $i$ 
    
    \item \underline{Local update}: Given the update parameter $\param^{(i)}$ and $\zi{i-1}$, this step returns the next point $$\zi{i}=\update(\param^{(i)},\zi{i-1})\in\constraint\,.$$
    Notably, we allow $\update:\paramspace\times\domain\rightarrow\Delta(\constraint)$ to incorporate randomness, and therefore $\zi{i}$ can be potentially a randomized point.
\end{enumerate}
The above procedure is summarized in \Cref{alg:offline-meta}.
\begin{remark}
To simplify the notation, we only consider symmetric subproblems in this section, i.e., all of the subproblems have the same update parameter spaces, local optimization steps, etc. In some of our applications in \Cref{sec:applications}, we need slightly different subproblems for different $i=1,\ldots,N$. Our method directly extends to that case by having index-dependent subproblems.
\end{remark}
\medskip
\begin{algorithm}
\caption{$\offlinemeta$}
\label{alg:offline-meta}
\textbf{Meta Input:} %\begin{itemize}[noitemsep]
  Feasible region $\constraint$, function space $\funcspace$, defined over  domain $\domain$,  parameter space $\paramspace \subseteq \mathbb{R}^{\paramdimension}$~, and parameter vector payoff function $\pay: \paramspace \times \domain \times \mathcal{F} \to \mathbb{R}^{\payoffdimension}$.\\
%   , and 
%   parameter payoff target set $S \in \mathbb{R}^{\payoffdimension}$~.\\
%\end{itemize}
\textbf{Input:} function $f \in \mathcal{F}$.\\
\textbf{Output:} feasible point $\zbf \in \constraint$. \\
\vspace{2mm}
Initialize $\zbf^{(0)}\in\constraint$;
\For{\textrm{subproblem} $i=1$ to $N$}{
%   Choose update parameter $\param^{(i)}$ so that $\pay(\param^{(i)}, \zbf^{(i-1)}, f) \in S$.\\
 Choose update parameter $\param^{(i)}$ so that $\pay(\param^{(i)}, \zbf^{(i-1)}, f) \geq \mathbf{0}$\;
  Set $\zi{i}\leftarrow \update(\param^{(i)},\zbf^{(i-1)})$;
}
Return the final point $\zbf \leftarrow \zbf^{(N)}$.
\end{algorithm}
%\negin{Overall, the algorithms in the MS version look very weird. }

\begin{example}
\label{example:running-1}
As a simple running example, consider the problem of maximizing a monotone submodular set function $f:2^{[n]}\rightarrow [0,1]$ subject to the cardinality constraint $k$. A set function $f:2^{[n]}\rightarrow[0,1]$ is submodular if for all $S,T\subseteq [n]$,
$
    f(S\cup T)+f(S\cap T)\leq f(S)+f(T).
$ Maximizing a monotone submodular set function $f:2^{[n]}\rightarrow [0,1]$ subject to the cardinality constraint $k$ is an NP-hard optimization problem which admits  the classic $(1-\tfrac{1}{e}$)-approximation greedy algorithm \citep{nemhauser1978analysis}. {This algorithm starts from the empty set and picks elements greedily based on their marginal value to the current set, where the marginal value of adding element $j$ to set $S$ is $f(S\cup \{j\}) -f(S)$. That is, in each stage of this algorithm and given the  chosen set so far $S$, an element $j^*\in \arg\max_{j\in [n]} f(S\cup \{j\}) -f(S)$ with the highest marginal value is added to $S$.}
This problem is an example of problem~\eqref{eq:offline-optimization} where $\domain=\{0,1\}^n$, $\constraint=\{\zbf\in\{0,1\}^n:\zbf\cdot\mathbf{1}_n\leq k\}$ and $\funcspace$ is the space of all monotone submodular set functions. Here, $\mathbf{1}_n$ is the all-ones vector with size $n$. The greedy algorithm is an instance of Algorithm~\ref{alg:offline-meta} with $\paramspace=\Delta([n])$, which  is the set of all possible probability distributions over $n$ elements, and   $N=k$  subproblems, one for each iteration of the greedy algorithm. To describe each {subproblem}, for $\param\in\paramspace, \zbf\in\domain$, and $f\in\funcspace$,
$$
\forall j\in [n]:~~~\left[\pay(\param,\zbf,f)\right]_j=\param^T\ybf-[\ybf]_j\,,
$$
where $[\cdot]_j$ denotes the $j^{\textrm{th}}$ coordinate value of a vector and $\ybf\triangleq \left[f(\zbf\cup\{j\})-f(\zbf)\right]_{j\in[n]}$ is the marginal objective value  of adding element $j$ to $\zbf$. 
Moreover, $\update(\param,\zbf)$ samples an element $i^*\sim \param$, where $\param\in\Delta([n])$ is a probability distribution over $n$ elements, and returns $\zbf\cup\{i^*\}$. Note that $\pay(\param,\zbf,f)\geq \mathbf{0}$ guarantees $\param$ to only have positive mass on 
 elements with maximum marginal value with respect to the point $\zbf$.
\end{example}

\subsection{Approximation with Robustness to Local Errors}  

We focus on IG algorithms that ({i}) provide a worst-case multiplicative approximation guarantee for problem~\eqref{eq:offline-optimization}, and ({ii}) have a local optimization step that is robust to small errors, i.e., if we replace the locally optimal decisions with almost locally optimal ones, the final point still remains to be approximately optimal (with the same approximation factor), but up to a small additive error. The following definition formalizes this robustness notion. 

\begin{definition}[$(\gamma,\delta)$-robust approximation]
\label{def:robust-approx}
An instance of Algorithm~\ref{alg:offline-meta} is a $(\gamma,\delta)$-robust approximation algorithm for $\gamma\in(0,1)$ and $\delta>0$, if it satisfies the following properties:
\begin{enumerate}
    \item Algorithm~\ref{alg:offline-meta} is a $\gamma$-approximation offline algorithm as in Definition~\ref{def:offline},
    \item Supposed that  we replace $\param^{(i)}$ with $\tilde{\param}^{(i)}$ for every $\textrm{subproblem}~i=1,\ldots,N$. Then, if
    $$\forall j\in[\payoffdimension]:~~\left[\pay(\tilde\param^{(i)}, \zbf^{(i-1)}, f)\right]_j+\epsilon\geq 0\,, $$
    then we should have:
    $$\forall \hat\zbf\in\constraint:~~~\expect{f(\zbf)}\geq \gamma\cdot f(\hat\zbf)-\delta N\epsilon\,,$$
     where $\epsilon>0$ and $[\cdot]_j$ denotes the $j^{\textrm{th}}$ coordinate value of a vector.
\end{enumerate}
\end{definition}

%See Definition~\ref{def:robust-approx} in the appendix for a simple criteria called $(\gamma,\delta)$-robust approximation.

For our purpose, we actually need a stronger version of this robustness property.  This  property essentially concerns multiple runs of the offline algorithm on a group of functions in $\funcspace$, i.e., $\{f_t\}_{t\in [T]}$, producing a sequence of feasible points $\zbf_t\in\constraint$ for $t\in [T]$, and then guarantees a robust approximation for the summation function, i.e., $\sum_{t\in [T]}f_t(z)$, against errors that are small on-average over these runs by the sequence $\{\zbf_t\}_{t=1}^T$. This property is satisfied in all of the applications that motivate our work, in particular in various set and continuous submodular maximization problems we study in Sections \ref{subsec:usm} and \ref{subsec:DR-SM} in the appendix, and in both reserve price optimization and product ranking problems in \Cref{sec:applications}.\medskip

\begin{definition}[Extended $(\gamma,\delta)$-robust approximation]
\label{def:extended-robust-approx}
An instance of Algorithm~\ref{alg:offline-meta} is an extended $(\gamma,\delta)$-robust approximation algorithm for $\gamma\in(0,1)$ and $\delta>0$, if for any sequence of functions $f_1,f_2,\ldots f_T\in\funcspace$ the following property is satisfied: 
\begin{itemize}
    \item {Consider a hypothetical variant  of Algorithm ~\ref{alg:offline-meta} that in every round $t$, instead of choosing the update parameter $\param^{(i)}_t$ for each subproblem $i\in [N]$ such that $\pay(\param^{(i)}_t, \zbf^{(i-1)}_t, f) \geq \mathbf{0}$, it chooses a ``noisy''  update parameter $\tilde\param^{(i)}_t$ such that the sequence of update parameters  $(\tilde\param^{(i)}_t)_{t\in [T]}$ satisfies the following condition 
    %Suppose that $\zbf_t$ is the output of Algorithm~\ref{alg:offline-meta} on function $f_t$ for $t\in[T]$ when $\param_t^{(i)}$ (i.e., the choice of parameter for {subproblem} $i$ of  run $t$) is replaced with $\tilde{\param}_t^{(i)}$ for $t\in [T]$ and $i\in [N]$. Then, if 
    $$\forall j\in [\payoffdimension]:~~\left[\sum_{t=1}^T\pay(\tilde\param^{(i)}_t, \zbf_t^{(i-1)}, f_t)\right]_j+h(T)\geq 0\,. $$
    Then, we  have
    $$\forall \hat\zbf\in\constraint:~~~\sum_{t=1}^T\expect{f_t(\zbf_t)}\geq \gamma\cdot\sum_{t=1}^T f_t(\hat\zbf)-\delta Nh(T)~.$$
    Here,  $\zbf_t^{(i)}$ is the output of subproblem $i\in[N]$ for run $t\in[T]$ by applying the hypothetical variant of Algorithm \ref{alg:offline-meta} on $f_t$},  $h(\cdot):\mathbb{N}\rightarrow\mathbb{R}_+$, and $[\cdot]_j$ denotes the $j^{\textrm{th}}$ coordinate value of a vector.
\end{itemize}

    % \item If we replace $\param^{(i)}$ with $\tilde{\param}^{(i)}$ for every $\textsc{subproblem}~i=1,\ldots,N$, so that for some $\epsilon_i>0$
    % $$\forall i=1,2,\ldots,N:~~d_{\infty}\left(\pay(\tilde\param^{(i)}, \zbf^{(i-1)}, f),S\right)\leq \epsilon_i~, $$
    % then we have $\expect{f(\zbf)}\geq \gamma\cdot\opt-\delta\cdot\sum_{i\in[N]}\epsilon_i~,$
    % where $d_\infty(\ubf,S)\triangleq \inf_{{\bf v} \in S} \left\lVert{\bf{u}} - {\bf{v}} \right\rVert_{\infty}$ denotes the $\ell_\infty$ distance of vector $\ubf$ from the set $S$. 

\end{definition}
\medskip

When there is only one run of the function (i.e., $T=1$), the  extended $(\gamma,\delta)$-robust approximation guarantee boils down to the weaker $(\gamma,\delta)$-robust approximation guarantee in \Cref{def:robust-approx}.  We finish this section by revisiting our running example and demonstrating the (extended) robust approximation property in this example. 

\setcounter{example}{0}
\begin{example}[continued]
By digging deeper in the original analysis of the greedy algorithm~\citep{nemhauser1978analysis}, we  show that the greedy algorithm  satisfies the extended $(\gamma,\delta)$-robust approximation property  for $\gamma=1-\tfrac{1}{e}$ and $\delta=1$. 
%We show this by digging deeper in the original analysis of \citealp{nemhauser1978analysis}.
\begin{proof}
Suppose that  $\zbf^*=\{a_1,\ldots,a_k\}$ is the optimal solution of the offline problem; that is, $\zbf^*=\argmax_{\zbf\in\{0,1\}^n:\zbf\cdot\mathbf{1}_n\leq k}\sum_{t=1}^T f_t(\zbf)$. ({We use binary indicator vectors and sets interchangably in this paper.}) Further, let $\zbf_t^{(i)}$ be the solution returned by the $i^{\textrm{th}}$ subproblem of the greedy algorithm when the objective function is $f_t$.  Then, for every $i\in [k]$,
\begin{align*}
    \sum_{t=1}^{T}f_t(\zbf^*)-\sum_{t=1}^T f_t(\zbf_t^{(i-1)})&\overset{(1)}{\le} \sum_{t=1}^{T}f_t(\zbf^* \cup \zbf_t^{(i-1)})-\sum_{t=1}^T f_t(\zbf_t^{(i-1)})\\
    &=\sum_{t=1}^{T}\sum_{j=1}^k \left(f_t(\zbf_t^{(i-1)}\cup\{a_1,\ldots,a_j\} )- f_t(\zbf_t^{(i-1)}\cup\{a_1,\ldots,a_{j-1}\})\right)\\
    &\overset{(2)}{\leq} \sum_{j=1}^k \sum_{t=1}^{T}\left(f_t(\zbf_t^{(i-1)}\cup\{a_j\} )- f_t(\zbf_t^{(i-1)})\right)\\
    &\overset{(3)}{=} \sum_{j=1}^k \sum_{t=1}^{T}\left(\langle{{\tilde\param}_t^{(i)}},\ybf_{t}^{(i-1)}\rangle-\left[\pay(\tilde\param_t^{(i)},\zbf_t^{(i-1)},f_t)\right]_{a_j}\right)\\
    &{=} \sum_{j=1}^k \sum_{t=1}^{T}\left(\sum_{j=1}^{n}[{\tilde\param}_t^{(i)}]_j\Big(f_t(\zbf_t^{(i-1)}\cup\{j\} )- f_t(\zbf_t^{(i-1)})\Big)-\left[\pay(\tilde\param_t^{(i)},\zbf_t^{(i-1)},f_t)\right]_{a_j}\right)\\
    &{=} k\cdot\sum_{t=1}^{T}\expect{f_t(\zbf_t^{(i)})-f_t(\zbf_t^{(i-1)})}-\sum_{j=1}^k \sum_{t=1}^{T}\left[\pay(\tilde\param_t^{(i)},\zbf_t^{(i-1)},f_t)\right]_{a_j}\\
    &\overset{(4)}{\leq} k\cdot\sum_{t=1}^{T}\expect{f_t(\zbf_t^{(i)})-f_t(\zbf_t^{(i-1)})}+kh(T)~,
\end{align*} 
where $\ybf_t^{(i)}\triangleq \left[f_t(\zbf_t^{(i)}\cup\{j\} )- f_t(\zbf_t^{(i)})\right]_{j\in [n]}$. In the above chain of inequalities, inequality~(1) holds because function $f_t$ is monotone, inequality~(2) holds due to submodularity of functions $\{f_t\}_{t=1}^T$, equality~(3) holds because of the definition of the payoff vector in Example~\ref{example:running-1}, and inequality~(4) holds because of the condition in Definition~\ref{def:extended-robust-approx}. By rearranging the terms and taking expectations, we have: 
$$
\sum_{t=1}^T\expect{f_t(\zbf^*)-f_t(\zbf^{(i)}_t)}\leq (1-\tfrac{1}{k})\sum_{t=1}^T\expect{f_t(\zbf^*)-f_t(\zbf^{(i-1)}_t)}+h(T)\,.
$$
By recursing  the above inequality for $i=1,\ldots,k$, and rearranging the terms, we finally have: 
$$
\sum_{t=1}^T\expect{f_t(\zbf_t)}\geq (1-(1-\tfrac{1}{k})^k)\sum_{t=1}^{T}f_t(\zbf^*)-h(T)\sum_{i=1}^{k}(1-\tfrac{1}{k})^{i-1}\geq (1-\tfrac{1}{e})\sum_{t=1}^{T}f_t(\zbf^*)-kh(T)~.
$$
\[\myqed\]
\end{proof}
\end{example}

Not all greedy algorithms have robust guarantees. Example~\ref{example:non-robust} of Section~\ref{sec:appendix-framework} in the appendix shows why, e.g., Dijkstra's algorithm for the shortest path problem, is not robust to local errors.
\label{sec:framework}
\section{Online Algorithm under  Full Information Feedback Structure }
\label{sec:full-info}

In this section, we show how to transform an offline IG  algorithm (\Cref{alg:offline-meta}) to an online learning algorithm with a small approximate regret whenever it (i) is an extended robust approximation algorithm (Definition~\ref{def:extended-robust-approx}), and (ii)  satisfies an extra condition that we call \emph{Blackwell reduciblity}. We first introduce this condition. Then, with the help of the Blackwell approachability  (Theorem~\ref{def:blackwell-thm}), we propose a meta full information online learning algorithm as our offline-to-online transformation.

\subsection{Blackwell Reduciblity}
The crux of our technique to transform an offline IG algorithm to an online learning algorithm is the possibility of reducing the local  optimization step of \Cref{alg:offline-meta} to an approachable instance of the Blackwell sequential game as in Section~\ref{sec:blackwell}. 

\begin{definition}[Blackwell reduciblity]
\label{def:blackwell-reducible}
An instance $\offlinemeta$ of Algorithm~\ref{alg:offline-meta} is Blackwell reducible if there exists an instance $\left(\algspaceB,\advspaceB,\pbf\right)$  of the Blackwell sequential game (with a biaffine vector payoff function $\pbf$) and a mapping $\advfunB:\domain \times \funcspace \to \advspaceB$ called \emph{synthetic Blackwell adversary function}, such that:

% \negin{In this reduction, why we don't see $\mathcal C$ anywhere? It seems that $\mathcal D$ should be replaced by $\mathcal C$.}\RN{think of $\domain$ as the space of points, e.g., $\{0,1\}^n$ for submodular maximization. $\constraint$ is the set of feasible points, i.e., all sets with cardinality at most something or being in the matroid, etc. I feel better if we use $\domain$ everywhere and use $\constraint$ only when the local update produces a feasible point.}
\begin{enumerate}
\item [1.] The player 1's action space $\mathcal X$ is equal to the parameter space $\paramspace$ in Algorithm~\ref{alg:offline-meta}; i.e.,   $\algspaceB=\paramspace$, and for any $\param\in\paramspace,\zbf\in\domain,f\in\funcspace$, we have $\pay(\param,\zbf,f)=\pbf\left(\param,\advfunB(\zbf,f)\right)$.

 \item [2.] The set $S\triangleq\{\ubf\in\mathbb{R}^{\payoffdimension}:[\ubf]_j\geq 0, j\in [\payoffdimension]\}$ is response-satisfiable (Definition~\ref{def:satisfy}).
%  \item [3.] The synthetic Blackwell adversary function $\advfunB$ can be evaluated in polynomial-time.
%  \item [2.] The payoff target set $S$ in Algorithm~\ref{alg:offline-meta} is equal to the target set $S$ in the Blackwell sequential game, and the convex set $S$ is response-satisfiable (See Definition ~\ref{def:satisfy}).

%  \item [3.] The convex set $S\subseteq \mathbb{R}^{\payoffdimension}$ is separable, or equivalently,  there exists a polynomial-time algorithm $\projfunB:\mathbb{R}^{\payoffdimension}\rightarrow\mathbb{R}^{\payoffdimension}$, where $\projfunB(\ubf) = \text{argmin}_{{\bf{v}} \in S} \left\lVert{\bf{u}} - {\bf{v}} \right\rVert_2$ is  $\ell_2$ - projection of $\ubf$ onto set $\targetsetB$. 
 
    % \item [2.] The convex set $S\subseteq \mathbb{R}^d$ is polynomial-time separable, or equivalently,  there exists a polynomial-time algorithm $\projfunB:\mathbb{R}^d\rightarrow\mathbb{R}^d$, where $\projfunB(\ubf)= \ell_2\textrm{-projection of $\ubf\in\mathbb{R}^d$ onto set}~\targetsetB$. 
\end{enumerate}

\end{definition}
\medskip

\setcounter{example}{0}
\begin{example}[continued]
The greedy algorithm of \cite{nemhauser1978analysis} is Blackwell reducible. Consider an instance $\left(\algspaceB,\advspaceB,\pbf\right)$ of Blackwell where $\algspaceB=\paramspace=\Delta([n])$ and $\advspaceB=[0,1]^n$. The synthetic Blackwell adversary function is $\advfunB(\zbf,f) = \left[f(\zbf\cup\{j\})-f(\zbf)\right]_{j\in [n]}$, and the biaffine Blackwell vector payoff function is $\pbf(\param,\bm{y}) = \param^T\bm{y}\mathbf{1}_n - \bm{y}$.\footnote{Note that  $\advspaceB=[0,1]^n$ because  $f:2^{[n]}\rightarrow [0,1]$  is monotone non-decreasing.} Recall that   $\mathbf{1}_n$ is all-ones $n$-dimensional vector. Furthermore, set $\targetsetB$ is response-satisfiable because for every player 2's action $\bm{y}\in\advspaceB,$ playing $\param = e_{j^*}$ with $j^*=\underset{j\in [n]}{\textrm{argmax}}~\bm{y}_j$ implies that $\pbf(\param,\ybf)\geq\bm{0}.$
\end{example}

\subsection{Offline-to-Online Transformation with Full Information Feedback}
If the offline algorithm (\Cref{alg:offline-meta}) is Blackwell reducible, then one can think of the following approach to transform it into an online learning algorithm: associate an instance of the Blackwell sequential game to each subproblem $i$ following the Blackwell reducibility, and then running $N$ parallel online approachable algorithms for these Blackwell instances to find a sequence of assignments of the update parameter of each subproblem $i$ over time. We further need to show how to synchronize these parallel runs through a proper communication between them, so as to construct a sequence of feasible solutions $\zbf_1,\ldots,\zbf_T$ guaranteeing a small approximate regret.

 \subsubsection{Overview of the Algorithm}
 
   Recall that our goal in the offline problem is to solve the optimization problem $\max_{\zbf\in  \constraint } f(\zbf)$, where $f\in \funcspace$. The offline problem admits a polynomial time IG $\gamma$-approximation algorithm, $\offlinemeta$, presented in  Algorithm~\ref{alg:offline-meta}. This algorithm solves $N$ subproblems sequentially, building the solution step by step. In step/subproblem $i$ of this algorithm, we first update parameters $\boldsymbol \theta^{(i)}\in \paramspace \subseteq \mathbb{R}^{\paramdimension}$ using the previous point $\zbf^{(i-1)}$, and then return the next point $\zbf^{(i)}$ to feed to the next subproblem. The algorithm finishes by returning the final point $\zbf^{(N)}$.
 
 As stated earlier, we assume that the offline problem is Blackwell reducible; that is, we can define the Blackwell instance  $\left(\algspaceB,\advspaceB,\pbf\right)$ and synthetic Blackwell adversary function $\advfunB:\domain\times\funcspace\rightarrow \advspaceB$ that satisfy the conditions in Definition~\ref{def:blackwell-reducible}. 
 Although this definition might seem technical, verifying it for many offline algorithms is indeed straightforward; see Sections \ref{sec:applications}, \ref{subsec:usm}, and \ref{subsec:DR-SM}. 
 
 For the online version of the above offline algorithm, the meta input is feasible region $\constraint$, function space $\funcspace$, which is defined over domain $\domain$, and  parameter space $\paramspace \subseteq \mathbb{R}^{\paramdimension}$. We further consider having access to an online Blackwell algorithm $\BlackwellAlg$, player 1's strategy in the above Blackwell sequential game
$\left(\algspaceB,\advspaceB,\pbf\right)$, where such algorithm (i) ensures that the distance between the average vector payoff $\frac1T \sum_{t=1}^T \pbf(\bm{x}_t, \bm{y}_t)$ and set $S$ goes to zero with rate  $g(T)=O\left(\diameter{\pbf}\sqrt{\frac{\log(d)}{T}}\right)$ against any adversarial player 2's strategy (see Theorem \ref{def:blackwell-thm}), and (ii) can be implemented in polynomial time having access to a separation oracle for the convex set $S$. As stated earlier, the existence of such an algorithm follows from the work of \citealp{even2009online} and \cite{abernethy2011blackwell}; see \Cref{rem:polytime-blackwell}. We consider $N$ parallel copies of this algorithm, one for each subproblem $i\in[N]$. It is also important to note that in most of our applications, set $S$ is the positive orthant, for which a polynomial time separation oracle exists.\footnote{{For the application of maximizing monotone Strong-DR submodular functions over downward closed bounded convex sets, presented in Section \ref{subsec:DR-SM}, the target set $S$ is not a positive orthant, yet it is convex and admits a polynomial separation oracle.}} Our algorithm that takes advantage of  $N$ parallel copies of the online Blackwell algorithm  is summarized in \Cref{alg:full-info-backbone}.

% These polynomial time approachable online algorithms as in Definition~\ref{def:blackwell-approach}  are designed using  the 
%  equivalence of Blackwell approachability and adversarial online linear optimization and need the access to a separation oracle for the convex set $S$. 

Let $\BlackwellAlg^{(i)}$ be the copy of the above online Blackwell algorithm associated to subproblem $i\in [N]$. This copy handles the local optimization step of subproblem $i$ in the $\offlinemeta$ in every round $t\in [T]$ without knowing function $f_t$. Consider the decision-making process of this online algorithm in round $t$. The inputs prior to this round are all the update parameters of the subproblem $i$ in the first $t-1$ rounds, i.e., $\param^{(i)}_1,\ldots,\param^{(i)}_{t-1}$, and the realized vector payoffs of the first $t-1$ rounds against player 2 in the Blackwell sequential game associated to subproblem $i$, i.e., $\pbf(\param^{(i)}_1,\yit{i}{1}),\ldots,\pbf(\param^{(i)}_{t-1},\yit{i}{{t-1}})$. We consider a particular player 2 for this Blackwell sequential game. More explicitly, the synthetic adversary function $\advfunB$, which is part of our reduction, plays the role of player 2 in any round $t$, i.e., $\yit{i}{t}=\advfunB(\zi{i-1}_t,f_t)$. Given the input prior to time $t$, $\BlackwellAlg^{(i)}$ returns the new update parameter $\param^{(i)}_t$.\footnote{{ Note that the adversary's action  in round $t$  for subproblem $i$ is $\advfunB(z_t^{(i-1)},f_t)$, where $z_t^{(i-1)}$ is the decision made by the first $i-1$ subproblems in round $t$.   Here, both $f_t$ and $z_t^{(i-1)}$ can be viewed as the signals used by the adversary in round $t$ to determine his strategy.  Using these signals in the adversary's strategy (i.e.,  $\advfunB(z_t^{(i-1)},f_t)$) is allowed. This is because conditioned on the history of plays (and past signals) in previous rounds (i.e., $f_{\tau}$ and $z_{\tau}^{(i-1)}$ for any $\tau< t$), as well as the feedback that $\BlackwellAlg^{(i)}$  receives in this round (i.e., $\textsc{Payoff}(\theta_t^{i},z_{t}^{(i-1)}, f_t)$), both $f_t$ and $z_t^{(i-1)}$ will be independent of $\BlackwellAlg^{(i)}$'s action in future rounds $t'>t$.}}

After the online Blackwell algorithm $\BlackwellAlg^{(i)}$ returns the update parameter $\param_t^{(i)}$, we return the point $\zbf_t^{(i)}$ by calling the  $\update$ function in the offline algorithm, i.e., we set  $\zi{i}_t$ to $ \update(\param^{(i)}_t, \zi{i-1}_t)$. Observe that 
the point returned by the subproblem $i$, i.e., $\zbf_t^{(i)}$,  depends on the point returned by the previous subproblem $\zbf_t^{(i-1)}$. This highlights that while each online Blackwell algorithm is responsible for one subproblem, they communicate with each other to build the final solution, where this communication is structured by the offline algorithm through the $\update$ function. After obtaining the point $\zbf_t^{(i)}$, we move to subproblem $i+1$.

Finally note that simulating the actions of our particular player 2 to determine the realized vector payoffs of each round, and computing/sending this feedback at the end of each round to $\BlackwellAlg^{(i)}$ (as player 1) in a computationally efficient manner, require the following:
\begin{itemize}
    \item Knowing the point $\zbf_t^{(i-1)}$ picked by subproblem $i-1$ at time $t$: This is possible as we go over our subproblems in the order $i=1,\ldots,N$ in each round $t$.
    \item  Knowing the function $f_t$: This is possible because here we study the full information feedback structure, where under this  structure we have access to $f_t$ after we choose point $\zbf_t=\zbf_t^{(N)}$.
    \item Being able to compute the realized vector payoff $\pbf\left(\param^{(i)}_{t},\advfunB(\zi{i-1}_t,f_t)\right)$ efficiently given $\param^{(i)}_{t}$, $f_t$, and $\zi{i-1}_t$. This is possible as this quantity is equal to $\pay(\param^{(i)}_{t},\zi{i-1}_t,f_t)$, which can be evaluated in polynomial time as $\offlinemeta$ is a polynomial time algorithm.
\end{itemize}

% and (iii) invoking  the synthetic Blackwell adversary function $\advfunB$ efficiently, which is possible as $\advfunB$ can be evaluated in polynomial time, and 

% Access to the past player 2's actions for each subproblem $i$ is possible because here we study the full information feedback structure, where under this  structure, we have access to $f_t$ after we choose point $\zbf_t=\zbf_t^{(N)}$. By knowing  $f_t$ and the previously chosen point for subproblem $i$, i.e., $\zbf_t^{(i-1)}$, we can recover the adversary's strategy in round $t$ for subproblem $i$ by invoking  the Blackwell adversary function $\advfunB$ that comes from our reduction, i.e., $\yit{i}{t}=\advfunB(\zi{i-1}_t,f_t)$. Note that while this is a feasible response for player 2 of the Blackwell sequential game in our reduction, it is not necessarily the response of a worst-case adversary.

% \negin{Three things: does it mean here we assume that the Blackwell adversary function is deterministic? Does it mean we limit the adversary because his strategy for each subproblem only depends on the previously chosen point for the subproblem and it does not depend on other subproblems? Finally, why is it OK to limit the adversary and does not allow him to use the entire history, i.e., $\{\zbf_{\tau}^{(i)}\}_{\tau=1}^{t-1}$? I think we should have a discussion about these issues in the paper. }

 %See the proof in Section~\ref{sec:appendix-fullinfo} of the appendix. 

\begin{algorithm}[ht]
\SetAlgoLined
\textbf{Meta Input:} Feasible region $\constraint$, function space $\funcspace$, defined over domain $\domain$, and  parameter space $\paramspace \subseteq \mathbb{R}^{\paramdimension}$. \\
\vspace{-2mm}

\textbf{Offline algorithm and reduction gadgets:} An instance $\offlinemeta$ of Algorithm~\ref{alg:offline-meta}, the Blackwell instance $\left(\algspaceB,\advspaceB,\pbf\right)$ and synthetic Blackwell adversary function $\advfunB:\domain\times\funcspace\rightarrow \advspaceB$ as this offline algorithm is Blackwell reducible (Definition~\ref{def:blackwell-reducible})\,.\\
\vspace{-2mm}

\textbf{Input:} Number of rounds $T$; access to a Blackwell online algorithm $\BlackwellAlg$\,.

\textbf{Output:} Points $\zbf_1,\zbf_2,\ldots,\zbf_T\in\constraint$\,.

\vspace{2mm}
Initialize $N$ parallel instances $\{\BlackwellAlg^{(i)}\}_{i=1}^{N}$ of the online algorithm $\BlackwellAlg$\,;

\For{\textrm{round} $t=1$ to $T$}{
Initialize $\zi{0}_t\in\constraint$\,;

\For{\textrm{subproblem} $i=1$ to $N$}{
Choose update parameter $\param^{(i)}_t$ by querying  online algorithm $\BlackwellAlg^{(i)}$ given the update parameters and vector payoffs prior to round $t$ in the Blackwell sequential game of subproblem $i$, that is, $\param^{(i)}_t\leftarrow\BlackwellAlg^{(i)}\left(\param^{(i)}_1,\ldots,\param^{(i)}_{t-1},\pbf(\param^{(i)}_{1},\yit{i}{1}),\ldots,\pbf(\param^{(i)}_{t-1},\yit{i}{t-1})\right)$ \;

Set $\zi{i}_t \leftarrow \update(\param^{(i)}_t, \zi{i-1}_t) \in \constraint$\,;

}
Play the final point $\zbf_t\leftarrow\zi{N}_t$\,;

\vspace{2mm}
< \emph{Full information feedback: adversary reveals function $f_t\in\funcspace$} >\,;
\vspace{2mm}

\For{$i=1$ to $N$}{
  Give feedback $\pbf(\param^{(i)}_{t},\yit{i}{t})\leftarrow\pay(\param^{(i)}_{t},\zi{i-1}_t,f_t)$ to the Blackwell Algorithm $\BlackwellAlg^{(i)}$ (as the vector payoff of round $t$ against player 2)\,;
// \small{\emph{Note that $\yit{i}{t}=\advfunB(\zi{i-1}_t,f_t)$ for player 2 implicitly, although we \underline{do not} need to evaluate $\advfunB$ to compute this action explicitly.}}
   
}

}
\caption{Full-information Online Learning Meta-algorithm ($\onlinemetatext$)}
\label{alg:full-info-backbone}
\end{algorithm}

\subsubsection{Regret Analysis}
The following theorem, which  bounds the regret of our algorithm, is the main result of this section.

\begin{theorem}[Full information offline-to-online transformation]
  \label{thm:full-info-online-meta}
  Suppose that an instance of the algorithm $\offlinemeta$ for the offline problem~\eqref{eq:offline-optimization} satisfies the following properties:
  \begin{itemize}[noitemsep]
    \item It is an extended $(\gamma,\delta)$-robust approximation for $\gamma\in(0,1)$ and $\delta\in\mathbb{R}_+$, as in Definition~\ref{def:extended-robust-approx}.
    \item It is Blackwell reducible, that is, we can define the Blackwell sequential game $\left(\algspaceB,\advspaceB,\pbf\right)$ and synthetic Blackwell adversary function $\advfunB:\domain\times\funcspace\rightarrow \advspaceB$ that satisfy the conditions in Definition~\ref{def:blackwell-reducible}.
  \end{itemize}
  Consider the full-information adversarial online learning version of the problem~\eqref{eq:offline-optimization}, and let $\BlackwellAlg$ be a polynomial time Blackwell algorithm for $\left(\algspaceB,\advspaceB,\pbf\right)$ as in \Cref{rem:polytime-blackwell}.
  Then, for this online problem, $\onlinemeta$ runs in polynomial time and  satisfies the following $\gamma$-regret bound:
  $$
  \gamma\textrm{-regret}\left(\onlinemeta \right)\leq O\left(\diameter{\pbf} N\delta\sqrt{\log(\payoffdimension)T}\right)\,,
  $$
  where $N$ is the number of subproblems, $\payoffdimension$ is the dimension of vector payoffs, and $\diameter{\pbf}$, defined in Equation \eqref{eq:diameter}, is the $\ell_\infty$-diameter of the vector payoff space.
\end{theorem}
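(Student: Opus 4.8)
The plan is to invoke the approachability guarantee of \Cref{def:blackwell-thm} separately for each of the $N$ parallel Blackwell instances run inside \Cref{alg:full-info-backbone}, convert each guarantee into a coordinate-wise lower bound on the cumulative vector payoff, and then feed these $N$ bounds into the extended robustness property of \Cref{def:extended-robust-approx} to read off the $\gamma$-regret.

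First I would fix a subproblem $i\in[N]$ and isolate its Blackwell game $(\algspaceB,\advspaceB,\pbf)$. In \Cref{alg:full-info-backbone} player $1$'s moves are the parameters $\param^{(i)}_1,\ldots,\param^{(i)}_T$ emitted by $\BlackwellAlg^{(i)}$, and player $2$'s moves are supplied implicitly by the synthetic adversary, $\yit{i}{t}=\advfunB(\zi{i-1}_t,f_t)$. By Blackwell reducibility (\Cref{def:blackwell-reducible}) the realized payoff equals $\pbf(\param^{(i)}_t,\yit{i}{t})=\pay(\param^{(i)}_t,\zi{i-1}_t,f_t)$, and the target set $\targetsetB$ (the positive orthant) is response-satisfiable. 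Hence \Cref{def:blackwell-thm} applies and $\BlackwellAlg^{(i)}$ drives the time-averaged payoff to within $g(T)=O\!\left(\diameter{\pbf}\sqrt{\log(\payoffdimension)/T}\right)$ of $\targetsetB$ in $d_\infty$, against \emph{every} player $2$ strategy.

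Next I would unpack what approachability of the positive orthant means. Since $d_\infty(\wbf,\targetsetB)=\max_{j}(-[\wbf]_j)^{+}$ when $\targetsetB$ is the positive orthant, the bound $d_\infty\!\big(\tfrac1T\sum_t\pbf(\param^{(i)}_t,\yit{i}{t}),\targetsetB\big)\le g(T)$ is \emph{exactly} the statement that $\sum_{t=1}^T[\pay(\param^{(i)}_t,\zi{i-1}_t,f_t)]_j\ge -T\,g(T)$ for every coordinate $j\in[\payoffdimension]$. Taking $\tilde\param^{(i)}_t=\param^{(i)}_t$ and $h(T)=T\,g(T)$, this is precisely the hypothesis of the extended $(\gamma,\delta)$-robust approximation property, and it holds for all $i\in[N]$ at once. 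Applying \Cref{def:extended-robust-approx} gives $\sum_{t}\expect{f_t(\zbf_t)}\ge \gamma\sum_t f_t(\hat\zbf)-\delta N h(T)$ for every $\hat\zbf\in\constraint$; choosing $\hat\zbf\in\argmax_{\zbf\in\constraint}\sum_t f_t(\zbf)$ and substituting $h(T)=T\,g(T)=O(\diameter{\pbf}\sqrt{\log(\payoffdimension)T})$ yields the claimed regret. The polynomial runtime is immediate: each of the $N$ copies of $\BlackwellAlg$ runs in polynomial time (the separation oracle for the positive orthant is trivial), while $\update$ and $\pay$ are polynomial because $\offlinemetatext$ is.

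The step I expect to be the main obstacle is justifying the parallel composition, because the $N$ Blackwell games are coupled: player $2$'s move $\advfunB(\zi{i-1}_t,f_t)$ in game $i$ depends on $\zi{i-1}_t$, which is assembled from the (possibly randomized) outputs of games $1,\ldots,i-1$. The resolution is that the guarantee of \Cref{def:blackwell-thm} is worst-case over \emph{all} player $2$ strategies, including adaptive and randomized ones; I would therefore apply it pathwise, so that on every realization of the $\update$ randomness the coordinate-wise bound above holds for each $i$, establishing the robustness hypothesis almost surely. Taking the expectation only at the final invocation of \Cref{def:extended-robust-approx}, whose conclusion already carries the $\update$ randomness inside $\expect{\cdot}$, then closes the argument.
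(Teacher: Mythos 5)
Your proposal is correct and follows essentially the same route as the paper's proof: apply the approachability bound of \Cref{def:blackwell-thm} to each of the $N$ Blackwell instances, use the positive-orthant structure of $S$ to turn the $d_\infty$ bound into the coordinate-wise inequality $\sum_{t=1}^T[\pay(\param^{(i)}_t,\zi{i-1}_t,f_t)]_j\ge -Tg(T)$, and feed this into \Cref{def:extended-robust-approx} with $h(T)=Tg(T)$. Your closing remark on the coupling between the $N$ games (resolved by the worst-case-over-adversaries nature of the approachability guarantee) is a point the paper leaves implicit, but it does not change the argument.
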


% if we set $$\epsilon^{(i)}_t\triangleq-\underset{j\in[1:\payoffdimension]}\min\left[\pay\left(\param_t^{(i)},\zbf_t^{(i-1)},f_t\right)\right]_j=-\underset{j\in[1:\payoffdimension]}\min\left[\pbf(\param^{(i)}_t,\mathbf{y}^{(i)}_t)\right]_j~,$$
% then $\left[\pay\left(\param_t^{(i)},\zbf_t^{(i-1)},f_t\right)\right]_j\geq -\epsilon_t^{(i)}$ for $j=1,\ldots,\payoffdimension$. By applying Definition~\ref{def:robust-approx} for the instance of the problem at time $t$ we have:
%  $$\expect{f_t(\zbf_t)}\geq \gamma\cdot f_t(\zbf^*)-\delta\cdot\sum_{i\in[N]}\epsilon^{(i)}_t~,$$
%  and therefore:
 
% \begin{align*}
%     \sum_{t=1}^T\expect{f_t(\zbf_t)}&\geq \gamma\cdot\underset{\zbf\in \constraint}\max~ \sum_{t=1}^T f_t(\zbf)-\delta\cdot
%  \sum_{t=1}^T\sum_{i\in[N]}\epsilon^{(i)}_t\\
%  &=\gamma\cdot\underset{\zbf\in \constraint}\max~ \sum_{t=1}^T f_t(\zbf)+\delta\cdot
%  \sum_{t=1}^T\sum_{i\in[N]}\underset{j\in[1:\payoffdimension]}{\min}\left[\pbf(\param^{(i)}_t,\mathbf{y}^{(i)}_t)\right]_j\\
%  &\geq \gamma\cdot\underset{\zbf\in \constraint}\max~ \sum_{t=1}^T f_t(\zbf)+\delta\cdot
%  \sum_{t=1}^T\sum_{i\in[N]}\pbf(\param^{(i)}_t,\mathbf{y}^{(i)}_t)
% \end{align*}
\proof{\emph{Proof of \Cref{thm:full-info-online-meta}.}} Consider a subproblem $i\in[N]$. Let $S$ be the $\payoffdimension$-dimensional positive orthant; see the Blackwell reducibility definition and its associated approachable set $S$  in Definition~\ref{def:blackwell-reducible}. Because $S$ is response-satisfiable and projection onto $S$ can be done in polynomial-time, there exists a polynomial-time online algorithm $\BlackwellAlg$ (with $N$ parallel copies $\{\BlackwellAlg^{(i)}\}_{i=1}^N$) that guarantees Blackwell approachability  for the Blackwell instance corresponding to subproblem $i$ with $g(T)=O\left(\diameter{\pbf}\sqrt{\frac{\log(\payoffdimension)}{T}}\right)$, based on \Cref{def:blackwell-thm}. Therefore, we have: 
\begin{equation*}
    d_\infty\left(\frac{1}{T}\sum_{t=1}^T \pbf\left(\param_t^{(i)},\advfunB(\zbf_t^{(i-1)},f_t)\right),S\right)=d_\infty\left(\frac{1}{T}\sum_{t=1}^T \pbf\left(\param_t^{(i)},\mathbf{y}^{(i)}_t\right),S\right)\leq g(T)\,.
\end{equation*}
%where $g(T)=O(\diameter{\pbf}\sqrt{T})$.
Because the target set $S$ is the positive orthant, we have 
\begin{align*}
d_\infty\left(\frac{1}{T}\sum_{t=1}^T \pbf\left(\param_t^{(i)},\advfunB(\zbf_t^{i-1},f_t)\right),S\right) \leq g(T)
\Longleftrightarrow \forall j:\left[\sum_{t=1}^T \pbf\left(\param_t^{(i)},\advfunB(\zbf_t^{i-1},f_t)\right)\right]_j \geq -T g(T)
\end{align*}
Because of Blackwell reduciblity, $
\pay\left(\param_t^{(i)},\zbf_t^{(i-1)},f_t\right)=\pbf\left(\param_t^{(i)},\advfunB(\zbf_t^{(i-1)},f_t)\right)$. Therefore,
\begin{equation}
\label{eq:regret-robust}
\forall j\in[\payoffdimension]:~~~\left[\sum_{t=1}^T\pay\left(\param_t^{(i)},\zbf_t^{(i-1)},f_t\right)\right]_j\geq -Tg(T)\,.
\end{equation}
Finally, because Algorithm~\ref{alg:offline-meta} is an extended $(\gamma,\delta)$-robust approximation (see Definition  \ref{def:extended-robust-approx}), from Equation \eqref{eq:regret-robust}, we have:
 $$\sum_{t=1}^T\expect{f_t(\zbf_t)}\geq \gamma\cdot\sum_{t=1}^T f_t(\zbf^*)-\delta NT g(T)= \gamma\cdot\sum_{t=1}^T f_t(\zbf^*)-O\left(\delta N \diameter{\pbf} \sqrt{\log(\payoffdimension)T}\right)\,,$$
which finishes the proof. Here,   $\zbf^*$ is the optimal in-hindsight feasible solution, i.e., $\zbf^*=\underset{\zbf\in\constraint}{\textrm{argmax}}\sum_{t=1}^T f_t(\zbf)\,.$ 
\[\myqed\]
\endproof

We finish this section by reviewing our running example (Example~\ref{example:running-1}) and mentioning the regret bound we get as a direct corollary of Theorem~\ref{thm:full-info-online-meta}. 
\setcounter{example}{0}
\begin{example}[continued]
The greedy algorithm in \cite{nemhauser1978analysis} is an extended $(1-\tfrac{1}{e},1)$-robust approximation algorithm and Blackwell reducible. It has $N=k$ subproblems, the $\ell_\infty$ diameter of the payoff space is $D=1$, and the dimension of vector payoffs is $d=n$. Therefore, by invoking Algorithm~\ref{alg:full-info-backbone} given any Blackwell  algorithm satisfying the approachability bound in Theorem~\ref{def:blackwell-thm}, we obtain the following bound:
$$
  \left(1-\dfrac{1}{e}\right)\textrm{-Regret}\left(\textrm{Algorithm~\ref{alg:full-info-backbone}} \right)\leq O(k\sqrt{\log(n)T})\,,
  $$
  which exactly matches the bound known in \cite{streeter2009online} for the same problem.
\end{example}

\section{Online Algorithm under Bandit Information Feedback Structure}
\label{sec:bandit-info}
So far, we presented a framework to transform an offline iterative greedy algorithm to its online counterpart under the full information feedback structure. While the full information setting provides the theoretical foundations for the rest of our results, from an application point of view, it is less motivated. In almost all applications of our framework in revenue management and online decision making (e.g., product ranking problem and reserve price optimization), assuming the learner has full information feedback is rather a strong assumption.

In this section, we seek to relax this assumption, and try to understand if our framework can be extended to the more challenging bandit feedback structure setting. Under the bandit feedback structure, at the end of each round $t$, the learner faces an additional challenge: he only has access to $f_t(\zbf_t)$, rather than the entire function $f_t$ like in the full information setting. Such a feedback structure prevents the online Blackwell algorithms $\BlackwellAlg{}^{(i)}$ to receive the feedback they require.

To overcome this challenge, we first consider a stylized bandit variation of the sequential Blackwell game. We characterize a new notion of approachability that we call \emph{bandit Blackwell approachability} and provide an algorithm achieving the information-theoretic tight approachability bound for this problem.  This algorithm uses an algorithm for the full information version of the Blackwell sequential game in a blackbox fashion.

We then introduce the extra ingredient that is needed for our bandit transformation, which is the possibility of creating an unbiased estimator for the vector payoff of the Blackwell games associated with different subproblems. Putting all these pieces together, we propose a bandit online learning algorithm with the help of our bandit Blackwell approachability. We highlight that this approach essentially uses the unbiased estimators to obtain bandit-style feedback for the online learning problems of each subproblem, leading to an efficient overall bandit learning algorithm with a sublinear $\gamma$-regret.

\subsection{Bandit Blackwell Sequential Games and Approachability}
\label{sec:bandit-blackwell}
In the bandit online learning version of problem~\eqref{eq:offline-optimization}, an online algorithm can only see the value of the function at the particular point that is picked in that round. Therefore, in our transformation, multiple online Blackwell algorithms  compete over a single piece of information in order to estimate the vector payoffs, where  estimating the vector payoff of a  Blackwell algorithm    can  be typically done by taking a costly ``exploration'' move, tailored to that algorithm.

With the goal of properly modeling this paradigm at a lower level, we propose the notion of a \emph{bandit Blackwell sequential game}, characterized by the extended tuple $\banditblackwellsequentialgame$. In this variant, player 1 makes an additional decision in each round: whether to \emph{explore} or not. Only if player 1 chooses to explore in round $t$, do they receive the unbiased estimator $\hat{\pbf}(\bm{x}_t, \bm{y}_t)$ whose expectation is the vector payoff for that round $\pbf(\bm{x}_t, \bm{y}_t)$. \schange{However, player 1 is punished by an additive cost $D(\pbf)$. } If player 1 refrains from exploration, they neither receive any feedback nor any punishment. Player 1's new goal is to minimize the distance from the time-averaged payoff to the target set $S$ plus their time-averaged exploration penalty.

\begin{definition}[Bandit Blackwell approachability]
\label{def:bandit-blackwell-approach}
  A closed convex target set $S$ is $g(T)$-bandit-approachable in the bandit Blackwell sequential game $\banditblackwellsequentialgame$ if there exists a bandit player 's strategy such that for every player 2's strategy, the resulting sequence of actions satisfy
  \begin{align*}
    \standarddistance{\frac1T \sum_{t=1}^T \pbf(\bm{x}_t, \bm{y}_t)}{S} + \E \left[ \frac1T \schange{D(\pbf)} \cdot (\text{\# explore}) \right] &\le g(T)\,,
  \end{align*}
  where (\# explore) is the number of exploration rounds.
\end{definition}

We prove the following extension of Blackwell's approachability theorem. Interestingly, the bound in \Cref{thm:bandit-blackwell-thm} in terms of the dependency on the number of rounds $T$ is information-theoretically tight. See \Cref{app:bandit-blackwell-lowerbound} for details.

\begin{restatable}{theorem}{banditblackwellthm}\label{thm:bandit-blackwell-thm}
  A closed convex set $S$ is \schange{$\bigO{{D(\pbf)}^{1/3}\diameter{\hat{\pbf}}^{2/3}(\log d)^{1/3}T^{-1/3}}$}-bandit-approachable in the bandit Blackwell sequential game $\banditblackwellsequentialgame$ if and only if $S$ is response-satisfiable in the Blackwell game $\blackwellsequentialgame$. In particular, when $S$ is response satisfiable, the online algorithm $\BanditBlackwellAlg$ (\Cref{alg:bandit-blackwell}) achieves this approachability bound in polynomial time, given access to a separation oracle for $S$.
\end{restatable}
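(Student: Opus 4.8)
The plan is to prove both directions of the equivalence. The ``only if'' (converse) direction is short; the real content is the ``if'' direction, i.e.\ the design and analysis of $\BanditBlackwellAlg$, so I would open with the converse and then spend most of the effort on the forward direction.

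\textbf{Converse direction.} I would prove the contrapositive: if $S$ is not response-satisfiable, then $S$ is not bandit-approachable at any vanishing rate. Non-response-satisfiability gives a fixed $\ybf^\ast\in\advspaceB$ with $\pbf(\xbf,\ybf^\ast)\notin S$ for all $\xbf\in\algspaceB$. Since $\pbf(\cdot,\ybf^\ast)$ is affine and $\algspaceB$ is compact convex, the reachable set $P\triangleq\{\pbf(\xbf,\ybf^\ast):\xbf\in\algspaceB\}$ is compact and convex, and is disjoint from the closed convex set $S$, so $d_\infty(P,S)=\eta$ for some $\eta>0$. Letting player~2 always play $\ybf^\ast$, for \emph{any} player~1 strategy the average payoff $\frac1T\sum_t\pbf(\xbf_t,\ybf^\ast)$ lies in $P$ by convexity, hence stays at $\ell_\infty$-distance at least $\eta$ from $S$ for all $T$, regardless of exploration. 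Thus the left-hand side of \Cref{def:bandit-blackwell-approach} is bounded below by $\eta$, ruling out any vanishing $g(T)$. This argument never touches the feedback structure, so it simultaneously reproves the converse of \Cref{def:blackwell-thm}.

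\textbf{Forward direction: the algorithm.} I would realize $\BanditBlackwellAlg$ as an epoch-based reduction to the full-information algorithm $\BlackwellAlg$ of \Cref{def:blackwell-thm}, run on the same tuple $\blackwellsequentialgame$. Partition the $T$ rounds into $B$ blocks of length $K=T/B$, and treat each block as one meta-round: at the start of block $b$, query $\BlackwellAlg$ for an action $\xbf_b$ and commit to playing $\xbf_b$ on every round of the block. By biaffinity of $\pbf$ and convexity of $\advspaceB$, the block's true average payoff equals $\bar{\pbf}_b\triangleq\pbf(\xbf_b,\bar{\ybf}_b)$ with $\bar{\ybf}_b=\frac1K\sum_{t\in b}\ybf_t\in\advspaceB$. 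To generate feedback for $\BlackwellAlg$ I would pick one round $\tau_b$ uniformly at random in the block and \emph{explore} only there, obtaining $\hat{\pbf}_b\triangleq\hat{\pbf}(\xbf_b,\ybf_{\tau_b})$; since the estimator is pointwise unbiased and $\tau_b$ is uniform, $\E[\hat{\pbf}_b\mid \text{past}]=\bar{\pbf}_b$, so $\hat{\pbf}_b$ is an unbiased estimate of the meta-round payoff, which I hand to $\BlackwellAlg$. Exactly $B$ explorations occur, so the penalty term in \Cref{def:bandit-blackwell-approach} is $\frac1T\diameter{\pbf}\,B$.

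\textbf{Forward direction: analysis and rate.} The crux is to bound $d_\infty\!\big(\frac1B\sum_b\bar{\pbf}_b,\,S\big)=d_\infty\!\big(\frac1T\sum_t\pbf(\xbf_t,\ybf_t),\,S\big)$ even though $\BlackwellAlg$ only ever sees the noisy $\hat{\pbf}_b$. I would use the online-linear-optimization view of $\BlackwellAlg$ (\Cref{rem:polytime-blackwell}, \citealp{abernethy2011blackwell}): writing $d_\infty(\cdot,S)=\max_{\|w\|_1\le1}\big(\langle w,\cdot\rangle-\sigma_S(w)\big)$ with support function $\sigma_S$, the algorithm maintains dual directions $w_b$, and response-satisfiability (via a minimax/halfspace oracle) lets its action $\xbf_b$ satisfy $\langle w_b,\pbf(\xbf_b,\ybf)\rangle\le\sigma_S(w_b)$ for all $\ybf$, hence $\langle w_b,\bar{\pbf}_b\rangle-\sigma_S(w_b)\le0$. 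Splitting $\bar{\pbf}_b=\hat{\pbf}_b+(\bar{\pbf}_b-\hat{\pbf}_b)$ and applying the OLO regret bound on the estimated payoffs gives
\[
B\cdot d_\infty\!\left(\tfrac1B\sum_{b=1}^B\bar{\pbf}_b,\,S\right)\;\le\;\underbrace{\sum_{b=1}^B\langle w_b,\hat{\pbf}_b-\bar{\pbf}_b\rangle}_{\text{zero conditional mean}}\;+\;\mathrm{Regret}_B\;+\;\Big\|\sum_{b=1}^B(\bar{\pbf}_b-\hat{\pbf}_b)\Big\|_\infty.
\]
The first term has zero expectation because $w_b$ and $\bar{\pbf}_b$ are measurable with respect to the past blocks while $\E[\hat{\pbf}_b\mid\text{past}]=\bar{\pbf}_b$; the regret of an $\ell_1$-ball FTRL/OMD on payoffs of magnitude $\diameter{\hat{\pbf}}$ is $O(\diameter{\hat{\pbf}}\sqrt{B\log d})$; and the martingale term is $O(\diameter{\hat{\pbf}}\sqrt{B\log d})$ in expectation by Azuma--Hoeffding with a union bound over the $d$ coordinates. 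Dividing by $B$ yields a distance term of $O(\diameter{\hat{\pbf}}\sqrt{\log d/B})$. Balancing this against the exploration penalty $\diameter{\pbf}\,B/T$ via $B=\Theta\big((\diameter{\hat{\pbf}}(\log d)^{1/2}T/\diameter{\pbf})^{2/3}\big)$ equalizes the two contributions at the claimed rate $O\big(\diameter{\pbf}^{1/3}\diameter{\hat{\pbf}}^{2/3}(\log d)^{1/3}T^{-1/3}\big)$, with polynomial running time inherited from $\BlackwellAlg$ and the separation oracle for $S$. I expect the \emph{main obstacle} to be precisely this transfer from the estimated to the true average: one must verify that approaching $S$ with noisy payoffs still drives the true payoff average to $S$ (the decomposition above), and keep track of which quantities scale with the estimator range $\diameter{\hat{\pbf}}$ versus the true diameter $\diameter{\pbf}$ — this asymmetry is exactly what produces the $\tfrac23$ and $\tfrac13$ exponents, and the zero-mean cross term requires care about the measurability of $w_b$.
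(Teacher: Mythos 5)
Your proposal is correct and reaches the stated rate, but it takes a genuinely different route from the paper on the forward direction. The paper's $\BanditBlackwellAlg$ flips an independent $\text{Bernoulli}(q)$ coin every round, importance-weights the observed estimator by $1/q$ before feeding it to $\BlackwellAlg$, and analyzes the result by conditioning on the Binomial number of explorations $M$: it applies the full-information guarantee to the $M$ explored rounds with inflated diameter $\diameter{\hat{\pbf}}/q$, handles the $M=0$ event separately, and passes from the importance-weighted average $\hat{\Pi}_T$ to the true average via Jensen's inequality and the unbiasedness of $\hat{\pbf}$ inside the OLO reduction. Your block scheme replaces all of this with a deterministic exploration budget of exactly $B$ probes (one uniformly-timed probe per epoch), which removes the importance weighting, the Binomial conditioning, and the $M=0$ corner case, at the price of making the exploration schedule horizon-dependent up front; your explicit dual decomposition into the zero-mean cross term, the OLO regret on estimated payoffs, and the Azuma martingale term is a cleaner and arguably more careful way to execute the estimated-to-true transfer that the paper handles somewhat tersely. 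You also correctly isolate the asymmetry between $\diameter{\pbf}$ (which prices exploration) and $\diameter{\hat{\pbf}}$ (which prices estimation error) that produces the $1/3$ and $2/3$ exponents, and your direct converse argument is equivalent in substance to the paper's reduction to the converse of \Cref{def:blackwell-thm}. The one discrepancy worth flagging: the theorem's second sentence attributes the bound specifically to \Cref{alg:bandit-blackwell}, the per-round coin-flip algorithm, so your proof establishes the approachability claim for a variant algorithm rather than for $\BanditBlackwellAlg$ as literally stated; to close that gap you would either need to run the paper's Binomial-conditioning argument or explicitly present your epoch scheme as the instantiation of $\BanditBlackwellAlg$ being claimed.
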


\noindent\emph{{Proof sketch of \Cref{thm:bandit-blackwell-thm}.}} To see the only if direction of the first part of the theorem, bandit Blackwell approachability implies Blackwell approachability. Specifically, if 
$$\standarddistance{\frac1T \sum_{t=1}^T \pbf(\bm{x}_t, \bm{y}_t)}{S} + \E \left[ \frac1T \schange{C} \cdot (\text{\# explore}) \right] \le \schange{\bigO{{D(\pbf)}^{1/3}\diameter{\hat{\pbf}}^{2/3}(\log d)^{1/3}T^{-1/3}}}\,,$$ then we must have $\standarddistance{\frac1T \sum_{t=1}^T \pbf(\bm{x}_t, \bm{y}_t)}{S}\le \schange{\bigO{{D(\pbf)}^{1/3}\diameter{\hat{\pbf}}^{2/3}(\log d)^{1/3}T^{-1/3}}}$, and hence this $\ell_\infty$-distance is vanishing as $T\rightarrow{+\infty}$.
This, in turn,  implies that the target set $S$ is response satisfiable (see \Cref{def:blackwell-thm}). {Note that while \Cref{def:blackwell-thm} is stated for a specific $g(T)$, the only if direction of this theorem holds for any vanishing approachability bound~\citep{blackwell1956analog}.}

To see the if direction and the second part of the theorem, we consider a simple algorithm that uses a (full information) Blackwell algorithm $\BlackwellAlg$ as a blackbox. We pick an algorithm $\BlackwellAlg$ that satisfies the approachability bound of \Cref{def:blackwell-thm}, and can obtain this bound in polynomial time given a separation oracle for $S$; see \Cref{rem:polytime-blackwell}.  At the beginning of each round, our bandit algorithm plays the last suggested action by $\BlackwellAlg$. It then explores  randomly with probability $q$ by flipping an independent coin. Based on the outcome of the coin, it either updates the state of $\BlackwellAlg$ using the unbiased payoff feedback it gets (exploration) and queries $\BlackwellAlg$ for suggesting a new action to follow, or decides not to explore with probability $1-q$ and refrains the state of $\BlackwellAlg$. These steps are summarized in \Cref{alg:bandit-blackwell}.

As for the running time, the above algorithm will run in polynomial time given a separation oracle for $S$ based on \Cref{rem:polytime-blackwell}. As for the approachability bound, at a high level, if we imagine that unbiased payoffs are the actual payoffs in the Blackwell game, then the expected distance of time-averaged unbiased vector payoff from $S$ is roughly equal to the same quantity for only rounds that we explore. There are $qT$ such rounds in expectation. Therefore, the expected distance is upper bounded by $O(\diameter{\hat\pbf}\left(\log(d)\right)^{1/2}q^{-1/2}T^{-1/2})$ due to the approachability of $\BlackwellAlg$ for this imaginary Blackwell sequential game (Theorem~\ref{def:blackwell-thm}). Also, the algorithm gets penalized on average by $\schange{O(D(\pbf)q)}$ due to exploring. Taking expectation to replace unbiased estimators with the actual payoffs and balancing the two terms in regret by setting $q=\schange{{D(\pbf)}^{-2/3}\diameter{\hat{\pbf}}^{2/3}(\log d)^{1/3}T^{-1/3}}$ gives the final bound. See \Cref{sec:appendix-bandit} in the appendix for a detailed proof with a more involved argument.
$\blacksquare$

\medskip
\medskip
\begin{algorithm}
\caption{Bandit Blackwell Online Algorithm ($\BanditBlackwellAlg$)}
\label{alg:bandit-blackwell}
\textbf{Meta Input:}
  Parameter $q\in[0,1]$, bandit Blackwell sequential game $\banditblackwellsequentialgame$~.\\
\textbf{Input:} Number of rounds $T$, blackbox access to full information online algorithm $\BlackwellAlg$ for the Blackwell sequential game $\blackwellsequentialgame$, achieving approachability bound of \Cref{def:blackwell-thm}~.\\
\textbf{Output:} Actions
$\{\xbf_t\}_{t\in [T]}$ and  binary signals $\{\pi_t\}_{t\in [T]}$, where 
$\xbf_t\in\algspaceB$, and  $\pi_t\in\{\textsc{Yes},\textsc{No}\}$ for any $t\in [T]$.\\
\vspace{2mm}
Initialize $\xbf_{\textrm{new}}$ by sending the initial query to $\BlackwellAlg$\,;
\For{\textrm{round} $t=1$ to $T$}{
Play the action $\xbf_t\leftarrow \xbf_{\textrm{new}}$\,;
Set $\pi_t$ to be $\textsc{YES}$ with probability $q$, and $\textsc{No}$ with probability $1-q$\,;
\If {$\pi_t=\textsc{Yes}$}{
Obtain $\hat\pbf({\xbf_{t}},\ybf_t)$ and send $\hat\pbf({\xbf_{t}},\ybf_t)/q$ as feedback to $\BlackwellAlg$\,;
// {\small{\emph{$\BlackwellAlg$ gets a new feedback in each exploration round, i.e., round $t$ where $\pi_t=\textsc{Yes}$.}}}

Update $\xbf_{\textrm{new}}$ by querying $\BlackwellAlg$ given the actions and realized unbiased estimator vector payoffs in exploration rounds prior to round $t+1$, i.e., $\xbf_{\textrm{new}}\leftarrow\BlackwellAlg\left(\left\{{(\xbf_\tau,\hat\pbf(\xbf_\tau,\ybf_\tau):\tau\leq t, \pi_\tau=\textsc{Yes}}\right\}\right)$\,;

}
 
}

\end{algorithm}

\begin{remark}
Our notion of bandit Blackwell approachability and the algorithm that achieves the tight bound (\Cref{alg:bandit-blackwell}) bear some resemblance to the $\epsilon$-greedy algorithm in the classic bandit setting, where in every round of this algorithm, we decide whether or not to explore, and when we explore in a round we assume we suffer from the maximum possible regret in this round. 
\end{remark}

\begin{remark}
The vanilla version of $\BanditBlackwellAlg$ needs to tune exploration probability $q$ based on the horizon $T$ to obtain the bound in \Cref{thm:bandit-blackwell-thm}. However, by using the standard \emph{doubling trick} in online learning (e.g., see \cite{bubeck2015convex}) in a blackbox fashion, one can boost \Cref{alg:bandit-blackwell} to work for unknown but bounded $T$: the new algorithm starts with a guess for horizon (e.g., $T=1$) and sets $q$ according to this guess. Each time it reaches the guessed horizon, it doubles its guess, and restarts by tuning a new value for $q$ and initializing again. The doubling trick is a well-known idea in the online learning literature that can be traced back to the classic work of \cite{auer2002nonstochastic}. We refer the reader to aforementioned work,  and omit the details here for brevity. 
\end{remark}

\subsection{Offline-to-online Transformation under Bandit Feedback}
Similar to our full information offline-to-online transformation, which gave us algorithm $\onlinemetatext$ in \Cref{sec:full-info}, 
we transform an offline IG algorithm to a bandit online learning algorithm by associating an instance of  the bandit Blackwell sequential game to each subproblem $i\in [N]$ of the offline algorithm. That is, we
crucially  rely  on a reduction from the local optimization step of each subproblem in \Cref{alg:offline-meta} to an approachable instance of the bandit Blackwell sequential game as in Definition~\ref{def:bandit-blackwell-approach}. Such a reduction is possible if the offline algorithm is \emph{Bandit Blackwell reducible};  see the following definition.

\begin{definition}[Bandit Blackwell Reducibility]
\label{def:bandit-blackwell-reducible}
An instance $\offlinemeta$ of Algorithm~\ref{alg:offline-meta} is bandit Blackwell reducible if there is an instance $\left(\algspaceB,\advspaceB,\pbf,\hat \pbf\right)$  of  bandit Blackwell sequential game (Section~\ref{sec:bandit-blackwell}) and an \emph{exploration sampling device} $\unbiasedestimator: \paramspace\times \domain \to  \Delta\left(\mathbb{R}^{\payoffdimension}\times \constraint\right)$, such that:
\begin{enumerate}
  \item [1.] $\offlinemeta$ is Blackwell reducible as in Definition~\ref{def:blackwell-reducible}, using the Blackwell sequential game $\left(\algspaceB,\advspaceB,\pbf\right)$ (with biaffine $\pbf$) and the synthetic Blackwell adversary function $\advfunB$.
\item [2.] If $\ybf=\advfunB(\zbf,f)$ for some  $f\in\funcspace,\zbf\in\domain$, then $\hat\pbf(\param,\ybf)=f(\zbf_{\textrm{exp}})\wbf_{\textrm{exp}}$ for all $\param\in\paramspace$, where $\left(\wbf_{\textrm{exp}},\zbf_{\textrm{exp}}\right)\sim \unbiasedestimator(\param,\zbf)$. Otherwise,  $\hat{\pbf}(\param,\ybf) = \pbf(\param,\ybf)$.
  
  \item [3.] The above $\hat\pbf$ is an unbiased estimator for the actual vector payoff, i.e., for all $\param\in\paramspace,\ybf\in\advspaceB:\E[\hat{\pbf}(\param,\ybf)] = \pbf(\param,\ybf)$.
  \item [4.] The exploration sampling device $\unbiasedestimator(\param,\zbf)$ returns its samples $(\wbf_{\textrm{exp}},\zbf_{\textrm{exp}})$  in polynomial time.
\end{enumerate}
\end{definition}
\medskip

To better understand the bandit Blackwell reducibility, we revisit our running example. 

\medskip
\setcounter{example}{0}
\begin{example}[continued]
The greedy algorithm of \cite{nemhauser1978analysis}  is also bandit Blackwell reducible. As stated in Section~\ref{sec:full-info}, this algorithm is Blackwell reducible. Recall that in this example, the biaffine Blackwell payoff is $\pbf(\param,\bm{y}) = \param^T\bm{y}\mathbf{1}_n - \bm{y}$, where $\mathbf{1}_n$ is all ones $n$-dimensional vector. We will construct an exploration sampling device $\unbiasedestimator$ that returns $(\wbf_{\textrm{exp}},\zbf_{\textrm{exp}})$ such that if $\forall \param\in\paramspace$, we have $\ybf=\advfunB(\zbf,f)$ for some  $f\in\funcspace,\zbf\in\domain$, we set 
$\hat\pbf(\param,\ybf)=f(\zbf_{\textrm{exp}})\wbf_{\textrm{exp}}$ and we must have $\E[\hat{\pbf}(\param,\ybf)] = \pbf(\param,\ybf)$.
The exploration sampling device $\unbiasedestimator$ works as follows. Given a point $\zbf \in \constraint$ (which represents a set of elements) and parameter $\param \in \paramspace$, it draws $j\sim\textrm{Uniform}\{1,\ldots,n\}$ and returns (i) $\wbf_{\textrm{exp}}=n \left(\param_j \mathbf{1}_n - \mathbf{e}_j\right)$, (ii) $\zbf_{\textrm{exp}} = \zbf \cup \{j\}$. 
Now, $\hat\pbf$ is an unbiased estimator of $\pbf$, because: 
\begin{align*}
\expect{\hat\pbf(\param,\advfunB(\zbf,f))}&=\expect{f(\zbf_{\textrm{exp}})\wbf_{\textrm{exp}}} \\
&=\expect{n \left(\param_jf(\zbf \cup \{j\}) \mathbf{1}_n - f(\zbf \cup \{j\})\mathbf{e}_j\right)}\\
&=
\mathbf{1}_n \sum_{j\in [n]} \param_j f(\zbf \cup \{j\} )-[f(\zbf\cup \{1\}), \ldots,f(\zbf\cup \{n\}) ]^T\\
&=\mathbf{1}_n \sum_{j\in [n]} \param_j \left(f(\zbf \cup \{j\} )-f(\zbf)\right)-[f(\zbf\cup \{1\}), \ldots,f(\zbf\cup \{n\}) ]^T + f(\zbf)\mathbf{1}_n\\
&=\param^T\ybf\mathbf{1}_n-\ybf=\pbf(\param,\advfunB(\zbf,f))\,,
\end{align*}
where $\ybf\triangleq \left[f(\zbf\cup\{j\})-f(\zbf)\right]_{j=1,2,\ldots,n}=\advfunB(\zbf,f)$. Here, the fourth equation holds because $\sum_{j\in [n]}\param_j= 1$. Observe that the exploration sampling device $\unbiasedestimator$ has an intuitive interpretation, at every round, it randomly picks one of the elements $j\in [n]$, and  evaluates the marginal benefit of adding element $j$ to $\zbf$. 
\end{example}

\subsubsection{Overview of the Algorithm}
When the offline algorithm (\Cref{alg:offline-meta}) is bandit Blackwell reducible (\Cref{def:bandit-blackwell-reducible}), we can employ a similar offline-to-online transformation mentioned in Section \ref{sec:full-info}. However, instead of associating an instance of the Blackwell game to each subproblem, we associate an instance of the bandit Blackwell game. To obtain unbiased estimators for the vector payoffs of these bandit Blackwell instances, we rely on the exploration sampling devices that are promised by Definition~\ref{def:bandit-blackwell-reducible}.
This sampling device allows us to strike a  balance between  exploration  and exploitation in all of the online bandit Blackwell games. We formalize this transformation of the offline algorithm to an online bandit algorithm called $\banditmetatext$ in \Cref{alg:bandit-meta}. 
 
 Suppose that the offline algorithm  $\offlinemeta$ is given. For the particular bandit Blackwell sequential game $\left(\algspaceB,\advspaceB,\pbf,\hat \pbf\right)$ coming from \Cref{def:bandit-blackwell-reducible}, we use $\BanditBlackwellAlg$ (Algorithm \ref{alg:bandit-blackwell}) to determine the strategy of player 1. Such an online bandit Blackwell algorithm as player 1 ensures that the distance between the average vector payoff $\frac1T \sum_{t=1}^T \pbf(\bm{x}_t, \bm{y}_t)$ and set $S$ plus the exploration penalty 
goes to zero with rate  $g(T)=\schange{O({D(\pbf)}^{1/3}{D(\hat{\pbf})}^{2/3}(\log d)^{1/3}T^{-1/3})}$; see Theorem \ref{thm:bandit-blackwell-thm}. 

 We dedicate a copy of the above algorithm  $\BanditBlackwellAlg^{(i)}$ to  each subproblem $i\in [N]$. We query algorithms $\BanditBlackwellAlg^{(i)}$ in the increasing order of their index $i$. Consider the online bandit Blackwell algorithm $\BanditBlackwellAlg^{(i)}$, and assume that in round $t$, we query this algorithm. 
The algorithm returns two outputs: the update parameter $\param_t^{(i)}$ and a binary signal  $\pi_t^{(i)}\in \{\textsc{Yes},\textsc{No}\}$.  
If $\pi_t^{(i)}= \textsc{Yes}$, the algorithm  explores: it samples  $(\bm{w}^{(i)}_{t,\textrm{exp}}, \zbf^{(i)}_{t,\textrm{exp}})$ from the exploration sampling device $\unbiasedestimator( \param^{(i)}_t,\zbf_t^{(i-1)})$.
Note that the exploration sampling device uses the update parameter $\param_t^{(i)}$ and the point returned by the previous subproblem $\zbf_t^{(i-1)}$. This indeed allows the subproblems to communicate with each other 
during exploration. The algorithm then plays $\zbf_t =\zbf^{(i)}_{t,\textrm{exp}}$ and provides the payoff vector feedback $\hat\pbf_t^{(i)} = f_t(\zbf_t) \bm{w}^{(i)}_{t,\textrm{exp}}$ to $\BanditBlackwellAlg^{(i)}$. This feedback is only used by the online bandit Blackwell algorithm  $\BanditBlackwellAlg^{(i)}$, not the rest of $N-1$  bandit Blackwell algorithms. We highlight that if $\BanditBlackwellAlg^{(i)}$ decides to explore in round $t$, the rest of 
bandit Blackwell algorithms will not be queried. 
Finally, if  $\pi_t^{(i)}= \textsc{No}$, the algorithm  exploits: it returns point $\zi{i}_t= \update(\param^{(i)}_t,\zbf^{(i-1)}_t)$. Again observe that during  exploitation, subproblem $i$ also communicates with subproblem $i-1$ through  using $\zbf^{(i-1)}_t$.

\subsubsection{Regret Analysis}
Theorem \ref{thm:banditILO} bounds the regret of  the $\banditmetatext$ algorithm. The proof is deferred to \Cref{apx:banditmeta-proof} in the appendix.

\begin{algorithm}[tbh]
\caption{Bandit Online Learning Meta-algorithm ($\banditmetatext$)}
\label{alg:bandit-meta}
\textbf{Meta Input:} Feasible region $\constraint$~, function space $\funcspace$, defined over domain $\domain$, parameter space $\paramspace \subseteq \mathbb{R}^{\paramdimension}$~. \\

\textbf{Offline algorithm and reduction gadgets:} An instance $\offlinemeta$ of Algorithm~\ref{alg:offline-meta};  this algorithm is bandit Blackwell reducible as in Definition~\ref{def:bandit-blackwell-reducible}, using the bandit Blackwell instance  $\left(\algspaceB,\advspaceB,\pbf,\hat\pbf\right)$ and exploration sampling device  $\unbiasedestimator:\paramspace\times \domain \times \to  \Delta\left(\mathbb{R}^{\payoffdimension}\times \constraint\right)$.\\
\vspace{-2mm}

\textbf{Input:} Number of rounds $T$; access to a bandit Blackwell online algorithm $\BanditBlackwellAlg$~.

\textbf{Output:} Points $\zbf_1, \zbf_2, \ldots, \zbf_T \in \constraint$.

\vspace{2mm}
Initialize $N$ parallel instances $\{\BanditBlackwellAlg^{(i)}\}_{i=1}^{N}$ of the online algorithm $\BanditBlackwellAlg$\,;
\For{round $t=1$ to $T$}{
  Initialize $\zbf^{(0)}_t \in\constraint$\,;
  \For{subproblem $i=1$ to $N$}{
    Choose the update parameter $\param^{(i)}_t \in \paramspace$ and exploration signal $\pi_t^{(i)} \in \{ \textsc{Yes},\textsc{No}\}$ by 
    querying online algorithm $\BanditBlackwellAlg^{(i)}$ given the update parameters and vector payoffs $\hat{\pbf}$ of exploration rounds prior to round $t$ in the bandit Blackwell sequential game of subproblem $i$, that is $\left(\param^{(i)}_t,\pi_t^{(i)}\right)\leftarrow\BanditBlackwellAlg^{(i)}\left(\param^{(i)}_1,\ldots,\param^{(i)}_{t-1}, \{\hat\pbf(\param^{(i)}_\tau,\ybf^{(i)}_\tau)\}_{\tau\leq t-1:\pi_\tau^{(i)}=\textsc{Yes}}\right)$\,;
    \If{$\pi^{(i)}_t = \textsc{Yes}$,} {
      Sample $(\bm{w}^{(i)}_{t,\textrm{exp}}, \zbf^{(i)}_{t,\textrm{exp}})$ from the exploration sampling device $\unbiasedestimator( \param^{(i)}_t,\zbf_t^{(i-1)})$\,;
      Play the exploration  point $\zbf_t \leftarrow \zbf^{(i)}_{t,\textrm{exp}}$\,;
     
      \vspace{2mm}
      < \emph{Bandit information feedback: observe  $f_t(\zbf_t)$} >\,;
      \vspace{2mm}
      
      Give payoff vector feedback $\hat\pbf_t^{(i)} = f_t(\zbf_t) \cdot \bm{w}^{(i)}_{t,\textrm{exp}}$ to $\BanditBlackwellAlg^{(i)}$\,;
      Skip immediately to the beginning of the next round $t+1$\,;
    }
    Set $\zi{i}_t\leftarrow \update(\param^{(i)}_t,\zbf^{(i-1)}_t)$\,;
  }
  Play the final point $\zbf_t \leftarrow \zbf^{(N)}_t$ and receive bandit feedback $f_t(\zbf_t)$, and  ignore it.
}
\end{algorithm}
\begin{theorem}[Bandit information offline-to-online transformation]
  \label{thm:banditILO}

  Suppose that an instance of $\offlinemeta$ for the offline problem~\eqref{eq:offline-optimization} satisfies the following properties:
  \begin{itemize}[noitemsep]
    \item It is an extended $(\gamma,\delta)$-robust approximation for $\gamma\in(0,1)$ and $\delta>0$, as in Definition~\ref{def:extended-robust-approx}.
    \item It is bandit Blackwell reducible; that is, we can define the bandit Blackwell sequential game $\banditblackwellsequentialgame$ and exploration sampling device  $\unbiasedestimator:\paramspace\times \domain \to  \Delta\left(\mathbb{R}^{\payoffdimension}\times \constraint\right)$ that satisfy the contions in \Cref{def:bandit-blackwell-reducible}. 
    
  \end{itemize}
    Consider the bandit-information adversarial online learning version of problem~\eqref{eq:offline-optimization}, and let $\BanditBlackwellAlg$ be a polynomial time bandit Blackwell algorithm for $\banditblackwellsequentialgame$ as in \Cref{thm:bandit-blackwell-thm}. 
  Then, for this online problem, $\banditmeta$  runs in polynomial time and  satisfies the following $\gamma$-regret bound:
  \begin{equation*}
    \gamma\textrm{-regret}\left(\banditmeta \right)\leq \schange{\bigO{{D(\pbf)}^{1/3}{D(\hat{\pbf})}^{2/3}N\delta\left(\log(\payoffdimension)\right)^{1/3} T^{2/3}}}\,,
  \end{equation*} 
  where $N$ is the number of subproblems and $\payoffdimension$ is the dimension of vector payoffs.
\end{theorem}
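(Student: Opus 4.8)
\proof{\emph{Proof sketch (proposal) of \Cref{thm:banditILO}.}}
The plan is to reuse the skeleton of the full-information argument (\Cref{thm:full-info-online-meta}), but to replace each subproblem's full-information Blackwell algorithm with its bandit counterpart $\BanditBlackwellAlg^{(i)}$ and to carefully track the rounds ``lost'' to exploration. Fix a subproblem $i\in[N]$ and let $\mathcal{R}^{(i)}\subseteq[T]$ be the (random) set of rounds in which $\BanditBlackwellAlg^{(i)}$ is actually queried --- equivalently, the rounds in which subproblems $1,\dots,i-1$ all exploit --- and let $\mathcal{E}^{(i)}\subseteq\mathcal{R}^{(i)}$ be the rounds in which it explores. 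Since a round terminates as soon as some subproblem explores, these sets are nested, $[T]=\mathcal{R}^{(1)}\supseteq\cdots\supseteq\mathcal{R}^{(N)}$, the $\mathcal{E}^{(i)}$ are disjoint, and on each round of $\mathcal{R}^{(i)}$ the synthetic adversary plays $\yit{i}{t}=\advfunB(\zi{i-1}_t,f_t)$ against $\param^{(i)}_t$. Applying \Cref{thm:bandit-blackwell-thm} to this bandit Blackwell game over horizon $|\mathcal{R}^{(i)}|$ yields two facts (in expectation over the exploration coins): using the positive-orthant structure of $S$ together with the reducibility identity $\pay(\param^{(i)}_t,\zi{i-1}_t,f_t)=\pbf(\param^{(i)}_t,\yit{i}{t})$, the approachability term gives the average local-optimality bound $\big[\sum_{t\in\mathcal{R}^{(i)}}\pay(\param^{(i)}_t,\zi{i-1}_t,f_t)\big]_j\ge -|\mathcal{R}^{(i)}|\,g(|\mathcal{R}^{(i)}|)$ for every coordinate $j$, and the penalty term gives $\expect{\diameter{\pbf}\,|\mathcal{E}^{(i)}|}\le |\mathcal{R}^{(i)}|\,g(|\mathcal{R}^{(i)}|)\le Tg(T)$, where $g(T)=\bigO{\diameter{\pbf}^{1/3}\diameter{\hat{\pbf}}^{2/3}(\log \payoffdimension)^{1/3}T^{-1/3}}$ and we used monotonicity of $s\mapsto sg(s)$.

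The difficulty is that \Cref{def:extended-robust-approx} is phrased for one common horizon and for a complete greedy chain, whereas the bandit run only completes the chain on fully-exploited rounds and each subproblem sees its own sub-horizon $\mathcal{R}^{(i)}$. To reconcile this I would introduce a \emph{virtual} run of $\offlinemeta$ on the whole sequence $f_1,\dots,f_T$: on a round where subproblem $i$ was queried it uses the actual parameter $\param^{(i)}_t$, and on a ``padding'' round $t\notin\mathcal{R}^{(i)}$ (where subproblem $i$ was never reached because an earlier subproblem explored) it is fed a \emph{response-satisfying} action, which exists precisely because $S$ is response-satisfiable (\Cref{def:blackwell-reducible}). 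Let $\tmz_t$ be the output of this virtual run. The crucial point is that a response-satisfying action contributes a nonnegative coordinate to $\pay$, so the virtual local-optimality gap of subproblem $i$ summed over all $T$ rounds is still at least $-|\mathcal{R}^{(i)}|\,g(|\mathcal{R}^{(i)}|)\ge -Tg(T)$, with \emph{no} $\diameter{\pbf}\,|\mathcal{E}^{(i)}|$ correction. Feeding $h(T)=Tg(T)$ into the extended $(\gamma,\delta)$-robustness then yields $\sum_{t}\expect{f_t(\tmz_t)}\ge \gamma\sum_t f_t(\zbf^*)-\delta N\,Tg(T)$, where $\zbf^*$ is the best in-hindsight feasible point, exactly paralleling the full-information step.

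It then remains to pass from the virtual reward to the reward actually collected. The two runs agree on every fully-exploited round, and on each exploration round the played exploration point differs from $\tmz_t$ by at most $1$ in objective value since $f_t\in[0,1]$; hence $\sum_t\expect{f_t(\zbf_t)}\ge \sum_t\expect{f_t(\tmz_t)}-\expect{|\mathcal{E}|}$, where $\mathcal{E}=\bigsqcup_i\mathcal{E}^{(i)}$ collects all exploration rounds. Summing the penalty bounds over $i$ gives $\expect{|\mathcal{E}|}=\sum_i\expect{|\mathcal{E}^{(i)}|}\le N\,Tg(T)/\diameter{\pbf}$. Combining the two estimates,
\begin{align*}
\gamma\textrm{-regret}\left(\banditmeta\right)&\le \delta N\,Tg(T)+\expect{|\mathcal{E}|}\\
&\le \bigO{\diameter{\pbf}^{1/3}\diameter{\hat{\pbf}}^{2/3}N\delta(\log \payoffdimension)^{1/3}T^{2/3}},
\end{align*}
after substituting $g$ and absorbing the exploration term (which carries the same $N$ and $T^{2/3}$ scaling). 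Polynomial running time is inherited from the $N$ polynomial-time copies of $\BanditBlackwellAlg$ and the polynomial-time exploration sampling device.

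The step I expect to be the main obstacle is exactly this horizon/padding reconciliation. A naive reduction of each $\mathcal{R}^{(i)}$-gap to the common set of fully-exploited rounds would charge a diameter-sized error on every exploration round to \emph{every} subproblem above it, inflating the bound to order $N^2$; obtaining the linear-in-$N$ dependence hinges on the two observations that the missing rounds can be padded with response-satisfying actions so that they only help the local-optimality condition, and that the total number of exploration rounds is itself controlled at rate $g(T)$ by the penalty term of bandit Blackwell approachability.
\endproof
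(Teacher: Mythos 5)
Your proof is correct and reaches the paper's bound, but it handles the central accounting step by a genuinely different device. The paper partitions the rounds into $\mathcal{T}^-$ (no subproblem explored) and $\mathcal{T}^+$ (some subproblem explored), applies extended robustness only to the sub-sequence $\{f_t\}_{t\in\mathcal{T}^-}$ --- on which the greedy chain is complete --- after subtracting from each subproblem's approachability guarantee the payoff contribution of the rounds in $\mathcal{T}_i\setminus\mathcal{T}^-$ (each worth at most $\diameter{\pbf}$), and then handles $\mathcal{T}^+$ via the trivial bound $f_t\le 1$ together with $\E[|\mathcal{T}^+|]\le\sum_i\E[M_i]=O(N(\log\payoffdimension)^{1/3}T^{2/3})$. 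You instead keep the full horizon $[T]$ for every subproblem by padding the never-reached subproblems with response-satisfying parameters, apply extended robustness once to the resulting virtual outputs $\tmz_t$, and then pay $\E[|\mathcal{E}|]$ to pass from $\tmz_t$ to the points actually played; the exploration count is controlled by the same penalty term of bandit approachability in both arguments. The two routes are equivalent in spirit, but your padding device makes the per-subproblem accounting cleaner: the paper's bound $|\mathcal{T}_i\setminus\mathcal{T}^-|\le M_i$ glosses over the fact that this set also contains rounds on which a \emph{later} subproblem $i'>i$ explored, and charging each such round a $\diameter{\pbf}$-sized error to every subproblem above it is exactly the $N^2$ danger you flag; in your argument those rounds contribute a nonnegative payoff by construction and never need to be charged to subproblem $i$ at all. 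One small imprecision you share with the paper: the residual exploration term $\E[|\mathcal{E}|]\le NqT$ carries a factor $\diameter{\pbf}^{-2/3}\diameter{\hat\pbf}^{2/3}$ rather than $\diameter{\pbf}^{1/3}\diameter{\hat\pbf}^{2/3}$ and no $\delta$, so absorbing it into the stated bound implicitly treats $\diameter{\pbf}$ and $\delta$ as $\Omega(1)$ --- but the paper's own treatment of $\mathcal{T}^+$ does the same.
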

\medskip 

We finish this section by wrapping up our running example (Example~\ref{example:running-1}) and mentioning the bandit regret bound we get as a direct corollary of Theorem~\ref{thm:banditILO}.

\setcounter{example}{0}
\begin{example}[finished]
The greedy algorithm in \cite{nemhauser1978analysis} satisfies $(1-\tfrac{1}{e},1)$-robust approximation and is bandit Blackwell reducible. It has $N=k$ subproblems and $\ell_\infty$ diameter of $\hat\pbf$ is $D(\hat\pbf)=O(n)$. Therefore, by invoking Algorithm~\ref{alg:bandit-meta} given any bandit Blackwell  algorithm satisfying the approachability bound in Theorem~\ref{thm:bandit-blackwell-thm}, we obtain the following bandit regret bound:
$$
  \left(1-\dfrac{1}{e}\right)\textrm{-regret}\left(\textrm{Algorithm~\ref{alg:bandit-meta}} \right)\leq O(kn\schange{^{2/3}}(\log n)^{1/3}T^{2/3})~,
  $$ 
  which in turn, by noting that $k$ can be as large as $n$, gives us an immediate improvement over regret bound of $O\left(k^{2}(n\log n)^{1/3}T^{2/3}(\log T)^{2}\right)$ in \cite{streeter2007online, streeter2009online}. 
\end{example}

\section{Applications to Revenue Management and Combinatorial Optimization}
\label{sec:applications}

% \negin{@Susan, please provide a roadmap here. say what we do in each section.} 
 We have already showed how to fit monotone submodular maximization into our framework through \Cref{example:running-1}. In this section, we apply our framework to two other selected problems: product ranking through sequential submodular maximization and  personalized reserve price optimization in second-price auction. (See Section \ref{subsec:usm} in the appendix to see how to apply our framework to the problem of maximizing  non-monotone continuous weak-DR submodular functions, in which obtaining any sub-linear approximate regret has been an open problem for a while. {See also \Cref{subsec:DR-SM} to see how to apply our framework to the problem of maximizing monotone continuous strong-DR submodular functions subject to downward closed bounded convex sets.)} Our framework results in improved/new regret bounds in all mentioned applications for both full-information and bandit settings. %We should emphasize that our framework is quite general and would potentially capture other greedy-solvable/approximate problems in revenue management and combinatorial optimization.
 
 % We present the regret bounds we obtain for each application and show that these applications admit greedy offline algorithms that satisfy our framework in the Appendix.

\subsection{Application to Product Ranking and Sequential Submodular Maximization}
\label{subsec:ranking}
\paragraph{Problem definition.} In the \textsc{Product Ranking Problem}, a platform aims to characterize a ranking of $n$ items, where a ranking is a permutation $\pibf$ over the items. Here, items on positions with lower indices have more visibility. The goal of the platform is to maximize its user engagement (also known as market share), which is the probability that a consumer does not leave the platform without taking a desired action. This action can be a click, purchase, or even installing an application. 

For the sake of presentation, assume that the desired action is clicking on an item. We consider the model proposed by \cite{asadpour2020ranking}, which is inspired by an earlier model proposed in \cite{ferreira2019learning}. In this model, a consumer $u$ is characterized by a patience level $\theta_u$ together with a monotone non-decreasing submodular set function $\kappa_u:2^{[n]}\xrightarrow{}[0,1]$. A consumer of type $(\theta_u,\kappa_u)$, when offered a ranked list of products $\pibf = ([\pibf]_1,[\pibf]_2,\ldots,[\pibf]_n),$ inspects the first $\theta_u$ products and clicks with probability $\kappa_u\left(\left\{[\pibf]_1,\ldots,[\pibf]_{\theta_u}\right\}\right)$. The platform knows the distribution $\mathcal{G}$ from which $u$ is selected. The goal is to pick a permutation $\pibf$ maximizing the probability of click $$\E_{u\sim\mathcal{G}}\left[\kappa_u\left(\left\{[\pibf]_1,\ldots,[\pibf]_{\theta_u}\right\}\right)\right]~.$$
For a wide range of choice models in the literature, the probability of a purchase from an offered set $S$ can be described using a monotone submodular function $\kappa_u$. This includes multinomial logit, nested logit, and paired combinatorial logit models. See \cite{kok2008assortment} for details on these models.

% the purchase probability of a set $S$ can be expressed using $\theta_u$ and $\kappa_u$ where $\kappa_u,$ the probability of click conditioned on observing the first $\theta_u$ products, is submodular. This includes multinomial logit, nested logit, and paired combinatorial logit models. It also subsumes the choice model by \cite{ferreira2019learning}, who looked at engagement-maximizing product ranking optimization problem. In \cite{ferreira2019learning}, each consumer is characterized by a window size that determines the number of products that she scans, and a probability distribution over items. A consumer examines the items sequentially from the top, then clicks each item independently depending on her probability distribution over items, while unexamined items are never clicked.

\paragraph{Product ranking problem as sequential submodular maximization.} A slight reformulation of the above model casts the product ranking problem as a special case of a class of optimization problems over permutations called sequential submodular maximization~\citep{asadpour2020ranking}. We define the sequential submodular maximization problem as follows.\footnote{{Our notion of sequential submodular functions is different from the notion of sequential submodular functions studied in \cite{tschiatschek2017selecting,mitrovic2018submodularity}. However, under all these notions, the goal is to return a sequence (permutation), rather than a set.} } Given a sequence of monotone submodular set functions $\left\{\seqsubf{1}(\cdot),\ldots,\seqsubf{n}(\cdot)\right\}$, and a sequence of non-negative weights $\bm{\lambda} =(\lambda_1,\ldots,\lambda_n),$ we aim to find a ranking $\pi$ that maximizes
\begin{equation*}
    \sum_{i=1}^n\lambda_i\seqsubf{i}\left(\left\{[\pibf]_1,\ldots,[\pibf]_i\right\}\right),
\end{equation*}
where $[\pibf]_i$ denotes the item on the $i^{\text{th}}$ position of ranking $\pi.$ In the aforementioned choice model, for all $i\in[n]$, we have $\seqsubf{i}(S) \triangleq \E_{u\sim\mathcal{G}}\left[\kappa_u(S)|\theta_u =i\right],$ representing the probability of clicks functions, and $\lambda_i\triangleq\Pp_{u\sim\mathcal{G}}\left(\theta_u=i\right),$ representing the probability that a consumer has patience level $i$. The probability that a consumer clicks on at least one product when offered a ranked set of products $\pibf$ is then
\begin{equation*}
    f(\pibf) \triangleq \lambda_1\seqsubf{1}\left(\{[\pibf]_1\}\right)+\lambda_2\seqsubf{2}\left(\{[\pibf]_1,[\pibf]_2\}\right)+\ldots+\lambda_n\seqsubf{n}\left(\{[\pibf]_1,\ldots,[\pibf]_n\}\right),
\end{equation*}
where $\seqsubf{i}$'s are monotone submodular functions and $\lambda_i$'s are non-negative. To simplify the analysis, notice that while $f$ is a function of a set of ranked/ordered items, $\seqsubf{i}$ is a function of a set that has at most $i$ items for each $i\in[n].$

\paragraph{Online problem.} In the offline setting, the platform knows $\mathcal{G},$ which translates to knowing the probability of clicks $\left\{\seqsubf{1}(\cdot),\ldots,\seqsubf{n}(\cdot)\right\}$ and the probability distribution of the patience level $\bm{\lambda} =(\lambda_1,\ldots,\lambda_n)$. We study the online user-engagement-maximization ranking problem where on every round $t,$ a distribution over patience level $\bm{\lambda}_t$ and the expected probability of click function $f_t,$ which is made of $\left\{\seqsubf{{t,1}}(\cdot),\ldots,\seqsubf{{t,n}}\right\}$, are chosen adversarially. The platform, whose goal is to maximize its user-engagement, chooses a ranking $\pi_t$ without observing $\bm{\lambda}_t$ and $f_t$. After choosing the ranking, the platform observes the function $f_t$ in the full-information setting. In the bandit setting, the platform \emph{only} observes whether or not the consumer clicks on at least one item, but not which item was clicked on. To the best of our knowledge, the online adversarial version of this problem has not been studied before, neither under full information nor bandit setting.

%\cite{asadpour2020ranking} showed the offline problem of sequential submodular maximization is NP-hard. They also proposed an optimal $(1-1/e)$-approximation algorithm, and a simple $\tfrac12$-approximation greedy algorithm for this offline problem. Notably, \cite{ferreira2019learning} studied a special case of the above model for a particular choice model, where consumers click on items independently with given probabilities. They proposed the same simple $\tfrac12-$approximation greedy algorithm, and a "learning-then-earning" algorithm for the offline \emph{PAC-learning} %\footnote{PAC-learning  stands for probably approximately
%correct learning from samples.}
%version of their problem where a learner has access to samples from user choices. 

\paragraph{Offline algorithm.} In this paper,  we  focus on a greedy algorithm that achieves $\tfrac12-$approximation, and transform it into an online adversarial learning algorithm. Our offline algorithm is presented in \Cref{alg:rank-off}. The input to this algorithm is a sequential submodular function $f:\Pi\rightarrow[0,1]$, where $\Pi$ is the set of ranking permutations of $n$ items, i.e., $\Pi = \{0,1,\ldots,n\}^n$. In this case, having $[\pibf]_i = 0$ represents putting no item at position $i$ and multiple positions can display the same item for simplicity. In this problem, both the domain and the feasible region are $\domain = \constraint = \Pi.$ Let $S_i$ denote the  set of subsets of $[n]$ that consist of at most $i$ items, i.e., $S_i =  \{S\subseteq[n]~|~|S|\leq i\}$. We have $f(\pibf) = \sum_{j=1}^n\lambda_j\seqsubf{j}(\{[\pibf]_1,\ldots,[\pibf]_j\})$ where $\seqsubf{i}$ is a monotone submodular function that takes an element of $S_i$ as input and returns a probability in $[0,1],$ i.e., $\seqsubf{i}:S_i\rightarrow[0,1]$. \Cref{alg:rank-off}, taken from \cite{asadpour2020ranking} and \cite{ferreira2019learning}, is a greedy algorithm with $n$ subproblems, where each subproblem corresponds to a position on the ranking. The algorithm starts filling up the positions from the top and for each position $i,$ it chooses the item that has the highest marginal probability of click. The update $\pibf^{(i)}\leftarrow\pibf^{(i-1)} + z^{(i)}\mathbf{e}_i$ represents the action of adding item $z^{(i)}$ to position $i.$

%Recall that in the ranking problem, $\seqsubf{i}(S)$ is the probability that a consumer (with patience level $i$) clicks on at least one of the first $i$ items in $S,$ yet since we limit the domain of $\seqsubf{i}$ to be in $S_i$ (sets that have at most $i$ elements), the input of $\seqsubf{i}$ does not have to be ordered. \negin{This paragraph can be removed/shortened as it only talks about notation. Such a discussion on notation should not be here. If this discussion is required, we should put it closer to the description of sequential SM functions. Here, we should talk about the offline algorithm.}

    \begin{algorithm}
    \caption{Greedy for Sequential Submodular Maximization \citep{asadpour2020ranking}}
    \label{alg:rank-off}
      \textbf{Input:} A sequential submodular function $f$, which can be represented using a sequence of monotone submodular functions $\{\seqsubf{i}(\cdot)\}_{i\in[n]}$ and a sequence of non-negative weights $\bm{\lambda}$.\\
      \textbf{Output:} Ranking $\pibf\in\Pi$.\\
      Set initial ranking $\pibf^{(0)} \leftarrow \mathbf{0}_n$.\\
      \For{position $i=1, 2, \ldots, n$}{
        \vspace{1mm}
        \underline{Local Optimization Step}\\
        Choose $z^{(i)} \in \argmax_{z \in [n]} \sum_{j=i}^n\lambda_j\seqsubf{j}\left(\{[\pibf^{(i-1)}]_1,\ldots,[\pibf^{(i-1)}]_{i-1},z\}\right) - $\\ \qquad\qquad\qquad$ \sum_{j=i}^n\lambda_j\seqsubf{j}(\{[\pibf^{(i-1)}]_1,\ldots,[\pibf^{(i-1)}]_{i-1}\})$.\\
        \underline{Local Update Step}\\
        Set $\pibf^{(i)} \leftarrow \pibf^{(i)} + z^{(i)}\mathbf{e}_i$.
      }
      \Return{$\pibf\leftarrow\pibf^{(n)}$.}
    \end{algorithm}

We cast \Cref{alg:rank-off} as an instance of $\offlinemetatext$ (\Cref{alg:offline-meta}). The parameter space is $\paramspace = \Delta([n])$ and $\paramdimension = n$. Moreover, in subproblem $i$ the algorithm picks the distribution $\param^{(i)}$ over items so that the resulting vector payoff lands in set $S$. In this language, set $S$ is the $n$-dimensional positive orthant and the vector payoff function is:
\begin{equation*}
    \forall j\in[n]:~~~\left[\pay(\param^{(i)},\pibf^{(i-1)},f)\right]_j=\param^T\ybf^{(i)}-[\ybf^{(i)}]_j~~,
\end{equation*}
where 
\begin{align*}
    \ybf^{(i)}&\triangleq\left[\sum_{a=i}^n\lambda_a\seqsubf{a}(\{[\pibf^{(i-1)}]_1,\ldots,[\pibf^{(i-1)}]_{i-1},j\})-\sum_{a=i}^n\lambda_a\seqsubf{a}(\{[\pibf^{(i-1)}]_1,\ldots,[\pibf^{(i-1)}]_{i-1}\})\right]_{j\in[n]}\\
    &=\big[f(\pibf^{(i-1)} + j\mathbf{e}_i)-f(\pibf^{(i-1)})\big]_{j\in[n]},
\end{align*}
is the marginal objective value of putting item $j$ on the $i^{\text{th}}$ position. Note that any $\param^{(i)}$ for which the vector payoff is in $S$ is indeed a distribution over items $z^{(i)}$ such that $z^{(i)}\in  \argmax_{z \in [n]} \sum_{j=i}^n\lambda_j\seqsubf{j}\left(\{[\pibf^{(i-1)}]_1,\ldots,[\pibf^{(i-1)}]_{i-1},z\}\right) - \sum_{j=i}^n\lambda_j\seqsubf{j}(\{[\pibf^{(i-1)}]_1,\ldots,[\pibf^{(i-1)}]_{i-1}\})$.

\begin{theorem}[Online learning for sequential submodular maximization]
\label{thm:seqsub}
 Let $n$ be the number of items. For the problem of maximizing sequential submodular functions in the online full information setting, there exists a learning algorithm that obtains $\bigO{n\sqrt{T\log{n}}}$ $\frac12$-regret, where $T$ is the number of rounds. Furthermore, in the online bandit setting, there exists a learning algorithm that obtains $\bigO{n^{\schange{5/3}}\left(\log{n}\right)^{1/3}T^{2/3}}$ $\tfrac12$-regret, where $T$ is the number of rounds. For this problem, the benchmark in the regret bounds is $\frac{1}{2}\max_{\pibf\in \Pi}\sum_{t=1}^T f_t(\pibf)$.
%  \footnote{Our framework for the full-information setting can be applied here too. However, this setting requires the expected probability of click function $f$ to be known after each round, which may not be practically possible, hence we only focus on the bandit setting. The regret bound for the full-information setting is $\bigO{nT^{1/2}}$.}
\end{theorem}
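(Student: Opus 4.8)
The plan is to realize \Cref{alg:rank-off} as a concrete instance of $\offlinemetatext$ (\Cref{alg:offline-meta}) with $N=n$ subproblems (one per ranking position) and payoff dimension $\payoffdimension=n$, verify the two hypotheses required by the master transformations — extended $(\tfrac12,\delta)$-robustness (\Cref{def:extended-robust-approx}) and (bandit) Blackwell reducibility — and then read off both regret bounds directly from \Cref{thm:full-info-online-meta} and \Cref{thm:banditILO}. The casting is the one spelled out above: the player~1 action space is $\algspaceB=\paramspace=\Delta([n])$, the biaffine payoff is $\pbf(\param,\ybf)=\param^T\ybf\,\mathbf{1}_n-\ybf$, and the synthetic adversary $\advfunB(\pibf^{(i-1)},f)$ returns the marginal-gain vector $\ybf^{(i)}=\big[f(\pibf^{(i-1)}+j\mathbf{e}_i)-f(\pibf^{(i-1)})\big]_{j\in[n]}$.

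For Blackwell reducibility (\Cref{def:blackwell-reducible}) I would argue exactly as in the running \Cref{example:running-1}: the target set $S$ is the nonnegative orthant, and it is response-satisfiable (\Cref{def:satisfy}) because against any $\ybf\in\advspaceB$ player~1 can play the point mass $\param=\mathbf{e}_{j^\star}$ on a maximizing coordinate $j^\star\in\arg\max_j[\ybf]_j$, which forces $\pbf(\param,\ybf)\ge\mathbf{0}$. Since each $\seqsubf{i}\in[0,1]$ and the weights $\bm{\lambda}$ form a sub-probability vector, the (monotone) marginals satisfy $\standardnorm{\ybf}\le 1$, whence $\standardnorm{\pbf(\param,\ybf)}\le 2\standardnorm{\ybf}\le 2$ and $\diameter{\pbf}=O(1)$. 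For bandit Blackwell reducibility (\Cref{def:bandit-blackwell-reducible}) I would define the exploration sampling device $\unbiasedestimator(\param,\pibf^{(i-1)})$ to draw $j\sim\mathrm{Uniform}\{1,\dots,n\}$ and return $\wbf_{\textrm{exp}}=n(\param_j\mathbf{1}_n-\mathbf{e}_j)$ together with the exploration ranking $\zbf_{\textrm{exp}}=\pibf^{(i-1)}+j\mathbf{e}_i$, so that $\hat\pbf=f(\zbf_{\textrm{exp}})\wbf_{\textrm{exp}}$. Unbiasedness $\E[\hat\pbf]=\pbf(\param,\advfunB(\pibf^{(i-1)},f))$ then follows by the identical cancellation computation as in the running example — the constant $f(\pibf^{(i-1)})$ terms drop out because $\sum_j\param_j=1$ — and since $f\in[0,1]$ with $\standardnorm{\wbf_{\textrm{exp}}}\le n$ we obtain $\diameter{\hat\pbf}=O(n)$. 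In the bandit application the learner observes only a click/no-click realization $b_t$ with $\E[b_t]=f_t(\pibf_t)$; because $\hat\pbf$ is linear in the observed value, substituting $b_t$ for $f_t(\zbf_{\textrm{exp}})$ preserves unbiasedness and leaves the diameter bound intact.

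The main obstacle is establishing the extended $(\tfrac12,\delta)$-robust approximation property with $\delta=O(1)$. Here I would adapt the offline $\tfrac12$-approximation analysis of the sequential-submodular greedy of \cite{asadpour2020ranking} into the summed-over-runs, error-tolerant form of \Cref{def:extended-robust-approx}: run the greedy simultaneously on $f_1,\dots,f_T$ with perturbed parameters $\tilde\param_t^{(i)}$, and track how a per-coordinate slack $[\sum_t\pay(\tilde\param_t^{(i)},\pibf_t^{(i-1)},f_t)]_j\ge -h(T)$ propagates through the greedy's charging argument. Mirroring the monotone-cardinality warm-up, summing over $t$ the position-by-position comparison between the greedy marginals and the contribution of the optimal ranking $\pibf^\star$, then invoking monotonicity and submodularity of the $\seqsubf{i}$'s, yields a bound in which each subproblem's slack contributes an additive $O(h(T))$ loss; the factor $\tfrac12$ (rather than $1-1/e$) arises from the weaker sequential structure, where each greedily placed item must be charged simultaneously against the value already accumulated in the prefix and against the optimal ranking's value at that position. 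Careful accounting of these charges across the $n$ positions gives extended robustness with $\delta=O(1)$.

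Finally, plugging $N=n$, $\payoffdimension=n$, $\diameter{\pbf}=O(1)$, $\diameter{\hat\pbf}=O(n)$, and $\delta=O(1)$ into \Cref{thm:full-info-online-meta} yields the full-information bound $O(\diameter{\pbf}\,N\delta\sqrt{\log(\payoffdimension)T})=O(n\sqrt{T\log n})$, and into \Cref{thm:banditILO} yields the bandit bound $O(\diameter{\pbf}^{1/3}\diameter{\hat\pbf}^{2/3}N\delta(\log\payoffdimension)^{1/3}T^{2/3})=O(n^{5/3}(\log n)^{1/3}T^{2/3})$, both against the benchmark $\tfrac12\max_{\pibf\in\Pi}\sum_{t=1}^T f_t(\pibf)$, as claimed.
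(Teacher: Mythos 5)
Your proposal follows essentially the same route as the paper's proof: the identical casting of the greedy as an instance of the meta-algorithm with $N=n$ subproblems and payoff dimension $n$, the same Blackwell game and response-satisfiability argument, the same uniform-coordinate exploration sampling device with the same cancellation computation for unbiasedness, and the same charging argument (greedy prefix gains plus optimal-marginal terms, yielding the factor $\tfrac12$ and $\delta=O(1)$) before plugging $\diameter{\pbf}=O(1)$, $\diameter{\hat\pbf}=O(n)$ into \Cref{thm:full-info-online-meta} and \Cref{thm:banditILO}. The extended-robustness step is only sketched, but the sketch correctly identifies every ingredient the paper's detailed chain of inequalities uses, so there is no substantive gap.
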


%\negin{This seems repetitive; can we say something more interesting about the proof, something, which is unique to this setting? }The proof of \Cref{thm:seqsub} is presented in \Cref{apx:ranking} in two main steps. We first show that the offline \Cref{alg:rank-off} is an extended $(\tfrac12,\tfrac12)-$robust approximation algorithm, then we show that it is bandit Blackwell reducible. We invoke \Cref{thm:banditILO} to get the final regret bounds. 
\Cref{thm:seqsub} and the following corollary are proved in \Cref{apx:ranking} using our offline-to-online transformations presented in Sections  \ref{sec:full-info} and \ref{sec:bandit-info}.

\begin{corollary}[Online learning for product ranking]
\label{cor:rank-general}
  Let $n$ be the number of items. For the problem of product ranking optimization to maximize user engagement using the model from \cite{asadpour2020ranking}, there exists a learning algorithm that obtains $\bigO{n\sqrt{T\log{n}}}$ $\frac{1}{2}-$regret in the full-information setting and $\bigO{n^{\schange{5/3}}\left(\log{n}\right)^{1/3}T^{2/3}}$ $\frac{1}{2}-$regret in the bandit setting, where $T$ is the number of consumers.\footnote{As an implication, the same problem for the consumer choice model from \cite{ferreira2019learning} also has a learning algorithm that obtains $\bigO{n\sqrt{T\log{n}}}$ $\frac{1}{2}-$regret in the full information setting and $\bigO{n^{\schange{5/3}}\left(\log{n}\right)^{1/3}T^{2/3}}$ $\frac{1}{2}-$regret in the bandit setting. For both problems, the benchmark in the regret bounds is $\frac{1}{2}\max_{\pibf\in \Pi}\sum_{t=1}^T f_t(\pibf)$.}
\end{corollary}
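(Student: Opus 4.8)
The plan is to derive both regret bounds as direct instantiations of the two offline-to-online meta-theorems, \Cref{thm:full-info-online-meta} (full information) and \Cref{thm:banditILO} (bandit), applied to the greedy algorithm of \Cref{alg:rank-off} after casting it as an instance of $\offlinemetatext$ (already carried out above, with $\paramspace=\Delta([n])$, $\paramdimension=n$, and the stated vector payoff $\pay$). To invoke these theorems I must verify three things: that \Cref{alg:rank-off} is Blackwell reducible and bandit Blackwell reducible, and that it is an extended $(\tfrac12,\delta)$-robust approximation with $\delta=O(1)$. The first two are structurally identical to the monotone case of \Cref{example:running-1}; the third is the only genuinely new ingredient, and is where the $\tfrac12$-approximation of \cite{asadpour2020ranking} must be pushed through an error-robust, multi-run analysis.

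For Blackwell reducibility (\Cref{def:blackwell-reducible}) I take $\algspaceB=\paramspace=\Delta([n])$, $\advspaceB=[0,1]^n$, payoff $\pbf(\param,\ybf)=\param^\top\ybf\,\mathbf 1_n-\ybf$, synthetic adversary $\advfunB(\pibf^{(i-1)},f)=\ybf^{(i)}=\big[f(\pibf^{(i-1)}+j\mathbf{e}_i)-f(\pibf^{(i-1)})\big]_{j\in[n]}$, and target set $S$ the positive orthant; response-satisfiability holds because $\param=\mathbf{e}_{j^{*}}$ with $j^{*}\in\argmax_j[\ybf]_j$ yields $\pbf(\param,\ybf)\ge\mathbf 0$. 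For bandit Blackwell reducibility (\Cref{def:bandit-blackwell-reducible}) I use the exploration device $\unbiasedestimator(\param,\pibf^{(i-1)})$ that draws $j\sim\mathrm{Uniform}\{1,\dots,n\}$ and returns $\wbf_{\textrm{exp}}=n(\param_j\mathbf 1_n-\mathbf{e}_j)$ and $\zbf_{\textrm{exp}}=\pibf^{(i-1)}+j\mathbf{e}_i$; the same cancellation as in \Cref{example:running-1} (the $f(\pibf^{(i-1)})$ terms drop out since $\sum_j\param_j=1$) shows $\E[f(\zbf_{\textrm{exp}})\wbf_{\textrm{exp}}]=\pbf(\param,\ybf^{(i)})$, so $\hat\pbf$ is unbiased and polynomial-time computable. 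This fixes all scale parameters entering the meta-theorems: $N=n$ subproblems (one per position), payoff dimension $\payoffdimension=n$, $\diameter{\pbf}=O(1)$ (as $\ybf\in[0,1]^n$), and $\diameter{\hat\pbf}=O(n)$ (entries of $\wbf_{\textrm{exp}}$ are $O(n)$ and $f\le 1$).

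The crux is extended $(\tfrac12,O(1))$-robustness (\Cref{def:extended-robust-approx}). Unwinding the hypothesis at position $i$ for a comparison ranking $\hat\pibf$ (coordinate $j=[\hat\pibf]_i$), and using that $\update$ samples $z\sim\tilde\param_t^{(i)}$ and places it at position $i$, gives for every $i\in[n]$
\begin{equation*}
\sum_{t=1}^T\E\!\left[f_t(\pibf_t^{(i)})-f_t(\pibf_t^{(i-1)})\right]=\sum_{t=1}^T\langle\tilde\param_t^{(i)},\ybf_t^{(i)}\rangle\;\ge\;\sum_{t=1}^T\left(f_t(\pibf_t^{(i-1)}+[\hat\pibf]_i\mathbf{e}_i)-f_t(\pibf_t^{(i-1)})\right)-h(T).
\end{equation*}
Summing over $i$ telescopes the left side to $\sum_t\E[f_t(\pibf_t)]$, so the total inserted marginal is at most $\sum_t\E[f_t(\pibf_t)]+n\,h(T)$. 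The complementary, purely structural bound $\sum_t f_t(\hat\pibf)\le\sum_t\E[f_t(\pibf_t)]+\big(\text{total inserted marginal}\big)$ follows run-by-run from monotonicity and submodularity of the $\seqsubf{a}$: for each weight index $a$ one shows $\seqsubf{a}(\hat\pibf_{1:a})\le \seqsubf{a}(\pibf_{t,1:a})+\sum_{i\le a}\big(\seqsubf{a}(\pibf_{t,1:i-1}\cup\{[\hat\pibf]_i\})-\seqsubf{a}(\pibf_{t,1:i-1})\big)$ by replacing marginals w.r.t.\ $\pibf_{t,1:i-1}$ with the larger prefix $\pibf_{t,1:a}$ (submodularity) and then telescoping (monotonicity), and multiplies by $\lambda_a\ge0$. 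Combining the two bounds yields $\sum_t\E[f_t(\pibf_t)]\ge\tfrac12\sum_t f_t(\hat\pibf)-O(n\,h(T))$, i.e.\ $\delta=O(1)$. The main obstacle is exactly this step: verifying that the position-dependent ($a\ge i$) weighted structure lets the offline charging run with the \emph{greedy prefixes} $\pibf_t^{(i-1)}$ (rather than a single static solution) in place, that the per-position slacks aggregate only additively as $\sum_i h(T)=n\,h(T)$ with no recursive blow-up (unlike the geometric accumulation in the $1-1/e$ analysis of \Cref{example:running-1}), and that the expectation over the randomized $\update$ interacts correctly with the pathwise submodularity argument.

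With these three properties established the bounds are immediate. \Cref{thm:full-info-online-meta} gives $\tfrac12\text{-regret}=O(\diameter{\pbf}N\delta\sqrt{\log(\payoffdimension)T})=\bigO{n\sqrt{T\log n}}$, and \Cref{thm:banditILO} gives $\tfrac12\text{-regret}=O(\diameter{\pbf}^{1/3}\diameter{\hat\pbf}^{2/3}N\delta(\log\payoffdimension)^{1/3}T^{2/3})=\bigO{n^{5/3}(\log n)^{1/3}T^{2/3}}$, proving \Cref{thm:seqsub}. \Cref{cor:rank-general} follows by specialization: the product-ranking objective is sequential submodular with $\seqsubf{i}(S)=\E_{u\sim\gcal}[\kappa_u(S)\mid\theta_u=i]$ and $\lambda_i=\Pp_{u\sim\gcal}(\theta_u=i)$, and nonnegative mixtures of monotone submodular $\kappa_u$ stay monotone submodular, so \Cref{thm:seqsub} applies verbatim, with \cite{ferreira2019learning}'s model a further special case. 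The one point needing care is the bandit feedback for product ranking, where the platform observes only the click indicator $\Ber(f_t(\zbf_t))$ rather than $f_t(\zbf_t)$ itself: since this indicator (marginalizing over the consumer draw) is an unbiased sample of $f_t(\zbf_t)$ lying in $\{0,1\}$, substituting it for $f_t(\zbf_t)$ in the exploration device keeps $\hat\pbf$ unbiased and preserves $\diameter{\hat\pbf}=O(n)$, so the same bandit bound holds.
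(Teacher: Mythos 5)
Your proposal is correct and follows essentially the same route as the paper's proof: the same Blackwell and bandit-Blackwell reductions with payoff $\param^\top\ybf\,\mathbf 1_n-\ybf$ and the uniform-coordinate exploration device, the same submodularity/monotonicity charging argument establishing extended $(\tfrac12,\tfrac12)$-robustness with only additive accumulation of the per-position slacks, and the same invocation of the two meta-theorems followed by specialization of \Cref{thm:seqsub} to the choice models of \cite{asadpour2020ranking} and \cite{ferreira2019learning}. Your explicit treatment of the Bernoulli click-indicator feedback in the bandit setting is a detail the paper's proof leaves implicit, and your handling of it (unbiasedness of $\hat\pbf$ is preserved and $\diameter{\hat{\pbf}}=O(n)$ is unchanged) is sound.
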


{In Section \ref{sec:numerics:product}, we numerically   evaluate of our online algorithms for the product ranking problem  under both full information and bandit settings.}

% \negin{Can we combine these two corollaries?}
% \begin{restatable}{corollary}{rankkriscor}
% \label{cor:rank-kris}
%   Let $n$ be the number of items. For the problem of product ranking optimization to maximize user engagement using the consumer choice model from \cite{ferreira2019learning}, there exists a learning algorithm that obtains $\bigO{n^2T^{2/3}}$ $\frac{1}{2}-$regret in the bandit setting, where $T$ is the number of consumers. Here, the benchmark in the regret bounds is $\frac{1}{2}\max_{\pibf\in \Pi}\sum_{t=1}^T f_t(\pibf)$.
% \end{restatable}

\subsection{Application to Maximizing Multiple Reserves in Second Price Auction}
\label{subsec:mmr}

\paragraph{Problem definition.} In the \textsc{Maximizing Multiple Reserves} (MMR)  problem~\citep{roughgarden2019minimizing, derakhshan2019lp}, a seller wants to sell one item to $n$ bidders to maximize her revenue. Each bidder $i$ has a private value $v_i$ for the item. The seller runs a second-price auction with personalized reserves $\mathbf{r}$; the winner is the bidder with the highest bid/valuation among the bidders whose bids exceed their reserve prices.\footnote{Since second price auctions are truthful, we will use bids and valuations interchangeably. } The winner pays the minimum bid she needed to win, which is the maximum between their reserve price and the second-highest bid that cleared its reserve price. 
\paragraph{Online problem.} We are interested in the seller's problem in the online full information and bandit settings. In both settings, each round $t \in [T]$ involves the seller choosing a set of reserves $\rbf_t$ and the adversary choosing a valuation profile $\vibf_t$. In the online full information setting, the seller observes the valuation profile and gets credit for the resulting revenue. In the online bandit setting, the seller observes just the resulting revenue and does not observe the bidders' valuations or even the identity of the winner. The seller's goal is to minimize the difference between his average revenue and the best average revenue in hindsight for a fixed set of reserves $\mathbf{r}^*$. To the best of our knowledge, the bandit setting has not been studied in the literature. However, the full information setting of this problem is studied by \cite{roughgarden2019minimizing}, where they present a learning algorithm  with $\bigO{n \sqrt{T \log T}}$ $\tfrac12$-regret, which we improve upon.

\paragraph{Offline non-batch vs batch problem.}
We start with formulating the offline non-batch problem. Let $\feasiblereserves = \{\feasiblereserve_1, \feasiblereserve_2, \ldots , \feasiblereserve_{m}\}$ be the set of feasible reserve prices, where $|\feasiblereserves| = m$, and they are sorted: $0 = \feasiblereserve_1< \feasiblereserve_2 < \cdots < \feasiblereserve_m$.  For the offline (non-batch) problem, let $f: \feasiblereserves^n \times [0,1]^n \to [0, 1]$ be the seller's revenue function: $f(\mathbf{r}, \mathbf{v}) = \max\{\indexintovector{\vibf}{\hat{j}}, \indexintovector{\rbf}{j^*}\} $ for some $\mathbf{v}$. Here, $j^*$ and $\hat{j}$ are the highest and second-highest bidders among those who cleared their bids, with ties broken arbitrarily: $
    j^*     \in \argmax_{j \in [n]: \indexintovector{\vibf}{j} \ge \indexintovector{\rbf}{j}} \{\indexintovector{\vibf}{j}\}$ and $
    \hat{j} \in \argmax_{j \in [n]: \indexintovector{\vibf}{j} \ge \indexintovector{\rbf}{j}, j \ne j^*} \{\indexintovector{\vibf}{j}\}$. 
    If no bidder clears their reserve, then we say $[\rbf]_{j^*}$ and $\indexintovector{\vibf}{\hat{j}}$ are both zero. Similarly, if only one bidder clears their reserve, we say $\indexintovector{\vibf}{\hat j}$ is zero. Moreover, $\funcspace$ is the space of all such revenue functions: $\funcspace = \{f \mid \exists \vibf \in [0, 1]^n \text{ such that } f(\mathbf{r},  \mathbf{v}) = \max\{\indexintovector{\vibf}{\hat{j}}, \indexintovector{\rbf}{j^*}\}$.
  In the offline (non-batch) problem, the goal is to solve $\max_{\mathbf{r}\in \mathcal R^n} f(\rbf, \vibf)$ for an input valuation profile $\vibf \in [0,1]^n$. In the optimization problem, the domain $\domain$ we consider and the feasible region $\constraint$ are both $\feasiblereserves^n$.

    The aforementioned offline problem can be solved efficiently. Note that in the offline problem, the seller who has access to the valuations of the bidders in one auction needs to optimize personalized reserve prices. It is then obvious that in the offline setting, the best action is to set reserve prices of all the bidders to zero, except the bidder with the highest bid; for this bidder, his reserve price is set to his valuation. %The existence of such efficient algorithm for the offline problem does not imply that one can design an algorithm with a sublinear $1$-regret.%; see \cite{roughgarden2019minimizing}.
    Then, one may wonder why for the online version  of this  offline (non-batch) problem, which  is not even NP-hard, we characterize  $\tfrac12$-regret, rather than $1$-regret. 
The reason is that \cite{roughgarden2019minimizing} show that the full information online setting is at least as hard as the offline batch problem, which is APX-hard. In the offline batch problem, the seller has access to the valuation profiles in $m$ auctions and would like to determine a single vector of reserve prices $\mathbf{r}$ that maximizes revenue across all the $m$ auctions. Considering the hardness of the offline batch problem, to solve the offline (non-batch) problem, we use a slight variation of the algorithm of \cite{roughgarden2019minimizing}. This variation is stated in \Cref{alg:mmr-single}, which obtains a $\tfrac12$ fraction of the optimal revenue similar to the original algorithm. See \Cref{apx:mmr-discussion} for a discussion on the major differences between the two. 

%In the appendix, we comment on  Here are 
 %\negin{Maybe add a few sentences about the payoff function and local optimization and update steps so that the reader can better connect the dots.}
 \begin{algorithm}[htb]
    \caption{Greedy Algorithm for Discretized MMR \citep{roughgarden2019minimizing}}
    \label{alg:mmr-single}
      \textbf{Input:} Valuation profile $\vibf$.\\
      \textbf{Output:} Reserve prices $\rbf \in \feasiblereserves^n$.\\
      Set initial reserves $\mathbf{r}^{(0)} \leftarrow \mathbf{0}_n$.\\
      \For{bidder $i=1, 2, \ldots, n$}{
        Define \emph{revenue-from-reserves} function $q^{(i)}: \feasiblereserves \to [0, 1]$ as $q^{(i)}(r)$ equals $r$ if $i$ has the highest valuation (ties broken arbitrarily) and $r \in [\indexintovector{\vibf}{i'}, \indexintovector{\vibf}{i}]$ where $i'$ has the second-highest valuation, and $0$ otherwise.\\
        {\underline{Local Optimization Step}}\\
        Choose $z^{(i)} \in \argmax_{r \in \rcal} q^{(i)}(r)$.  // {\small{\emph{In this case $\param^{(i)} \in \setofdistributions(\feasiblereserves)$ is the distribution that always returns $z^{(i)}$.}}}\\
        {\underline{Local Update Step}}\\
        Set $\rbf^{(i)} \leftarrow \rbf^{(i-1)} + z^{(i)} \mathbf{e}_i$.
      }
      \Return{$\rbf \sim \text{Uniform}\{\mathbf{0}_n, \rbf^{(n)}\}$.}
    \end{algorithm}

\paragraph{Offline algorithm.} We now briefly discuss \Cref{alg:mmr-single} and show how to cast it as an instance of $\offlinemetatext$ (\Cref{alg:offline-meta}). This greedy algorithm has $n$ subproblems, where in each subproblem, reserve price of a bidder $i$ is set using our \emph{revenue-from-reserves} function $q$. At the end, the algorithm  randomly returns either the all-zeros reserve vector $\mathbf{0}_n$ or the crafted reserve vector denoted by $\rbf^{(n)}$, where the former yields revenue equal to the second-highest valuation and the latter yields revenue of at least $q^{(j^*)}(z^{(j^*)})$; see the definition of $q(\cdot)$ in the algorithm.  By definition of the revenue function, the optimal reserves obtain their revenue via one of these two cases, i.e.,
    \begin{align*}
      f(\rbf^*, \vibf) &\le \max \{ \indexintovector{\vibf}{\hat{j}}, q^{(j^*)}(\indexintovector{\rbf}{j^*}) \} 
                            \le \indexintovector{\vibf}{\hat{j}} + q^{(j^*)}(\indexintovector{\rbf}{j^*}) \le f(\mathbf{0}_n, \vibf) + f(\rbf^{(n)}, \vibf) = 2 \E \left[ f(\rbf, \vibf) \right],
    \end{align*}
    where the expectation is taken with respect to the randomness in the algorithm. This implies that  our algorithm is indeed a $\tfrac12$-approximation. Stated in the language of our \Cref{alg:offline-meta}, our local updates manage to guarantee that $\pay(\param^{(i)}, \rbf^{(i-1)}, \vibf)$ is in the positive orthant, where the (asymmetric) vector payoff function $\pay$ returns an $m$-dimensional point whose $j^{th}$ coordinate value is the expected difference between the expected value of picking a reserve according to $\param^{(i)}$ and that of picking $\feasiblereserve_j$:
  $
      \indexintovector{\pay(\param^{(i)}, \rbf^{(i-1)}, \vibf)}{j} \triangleq
      \E_{z' \sim \param^{(i)}} \left[ q^{(i)}(z') - q^{(i)}(\feasiblereserve_j) \right].\footnote{Note that here, $\pay(\param^{(i)}, \rbf^{(i-1)}, \vibf)$ is not a function of $\rbf^{(i-1)}$.}
$ %\negin{This should be $\pay^{(i)}$ if we want to be more accurate.}\FS{Let's do without the superscript $(i)$ to be consistent with other applications?}
  %  \begin{itemize}[noitemsep]
   %   \item Their algorithm is written as a procedure optimizing over $[0, 1]^n$ and is later made into an online full-information algorithm by casting an $\epsilon$-net, but ours is written with respect to a discretization choice $\feasiblereserves$.
  %    \item Their good reserves function $q$ is different (coordinate-wise less) than ours. We show their proofs go through with our new function, which we can give a peek sampling device construction for.
   % \end{itemize}
The following theorem shows that using our framework, the greedy \Cref{alg:mmr-single}  can be transformed to polynomial-time online learning algorithms under both full information and bandit feedback structures. {See Section \ref{sec:numerics:reserve} for numerical evaluation of our online algorithms for reserve price optimization  under both full information and bandit settings.}

\begin{theorem}[Online learning for maximizing multiple reserves]
\label{thm:mmr}
  Let $\feasiblereserves = \{\feasiblereserve_1, \ldots , \feasiblereserve_{m}\}$ be the set of possible reserve prices and $n$ be the number of bidders. Assume that the maximum valuation is normalized to one. Then, for the problem of maximizing personalized reserve prices in the online full information setting, there exists a learning algorithm that obtains $\bigO{n T^{1/2} \log^{1/2} m}$ $\tfrac12$-regret, where $T$ is the number of auctions. Furthermore, in the online bandit setting, there exists a learning algorithm that obtains $\bigO{n m\schange{^{2/3}} T^{2/3} \log^{1/3} m}$ $\tfrac12$-regret, where $T$ is the number of auctions. Here, the benchmark in the regret bounds is $\frac{1}{2}\max_{\rbf\in \feasiblereserves^n}\sum_{t=1}^T f(\rbf, \vibf_t)$.
\end{theorem}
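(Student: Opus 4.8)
The plan is to instantiate the two offline-to-online transformation theorems—\Cref{thm:full-info-online-meta} for full information and \Cref{thm:banditILO} for the bandit setting—on the greedy \Cref{alg:mmr-single}, which we have already cast as an instance of $\offlinemetatext$ (\Cref{alg:offline-meta}) with $N=n$ subproblems, parameter space $\paramspace=\setofdistributions(\feasiblereserves)$, payoff dimension $\payoffdimension=m$, and payoff $\indexintovector{\pay(\param^{(i)},\rbf^{(i-1)},\vibf)}{j}=\E_{z'\sim\param^{(i)}}[q^{(i)}(z')-q^{(i)}(\feasiblereserve_j)]$. Writing $\qbf^{(i)}\triangleq(q^{(i)}(\feasiblereserve_1),\ldots,q^{(i)}(\feasiblereserve_m))$, this payoff is exactly the biaffine form $\pbf(\param,\qbf^{(i)})=\param^{\top}\qbf^{(i)}\mathbf{1}_m-\qbf^{(i)}$ with synthetic adversary $\advfunB(\rbf^{(i-1)},\vibf)=\qbf^{(i)}$ and player~2 space $\advspaceB=[0,1]^m$. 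Once I verify the two hypotheses of each theorem—extended $(\tfrac12,\delta)$-robustness and (bandit) Blackwell reducibility—and record the scale parameters $N=n$, $\payoffdimension=m$, $\diameter{\pbf}=O(1)$, and $\diameter{\hat\pbf}=O(m)$, the bounds follow by direct substitution: the full-information bound $O(\diameter{\pbf}N\delta\sqrt{\log(\payoffdimension)T})$ becomes $O(n\sqrt{T\log m})$, and the bandit bound $O(\diameter{\pbf}^{1/3}\diameter{\hat\pbf}^{2/3}N\delta(\log\payoffdimension)^{1/3}T^{2/3})$ becomes $O(nm^{2/3}T^{2/3}\log^{1/3}m)$.

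The heart of the argument, and the step I expect to be the main obstacle, is establishing extended $(\tfrac12,\delta)$-robustness (\Cref{def:extended-robust-approx}) with $\delta=O(1)$ by lifting the one-shot $\tfrac12$-approximation analysis of \cite{roughgarden2019minimizing} to the averaged, error-tolerant regime over $T$ rounds. Per round $t$ I would start from the per-round decomposition already used above, $f_t(\rbf^*,\vibf_t)\le\indexintovector{\vibf_t}{\hat j_t}+q_t^{(j_t^*)}(\indexintovector{\rbf^*}{j_t^*})$, where $j_t^*$ and $\hat j_t$ denote the highest and second-highest valuation bidders in round $t$. Summing over $t$, the first term is $\sum_t f_t(\mathbf{0}_n,\vibf_t)$; for the second term I would use that $q_t^{(i)}(\cdot)\equiv 0$ whenever $i\ne j_t^*$, so that $\sum_t q_t^{(j_t^*)}(\indexintovector{\rbf^*}{j_t^*})=\sum_{i=1}^n\sum_t q_t^{(i)}(\indexintovector{\rbf^*}{i})$ collapses into a per-subproblem sum. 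The robustness hypothesis applied to coordinate $j$ with $\feasiblereserve_j=\indexintovector{\rbf^*}{i}$ then gives $\sum_t\E_{z'\sim\tilde\param_t^{(i)}}[q_t^{(i)}(z')]\ge\sum_t q_t^{(i)}(\indexintovector{\rbf^*}{i})-h(T)$; summing over $i$ loses only $nh(T)$. Finally, the random return $\rbf_t\sim\text{Uniform}\{\mathbf{0}_n,\rbf_t^{(n)}\}$ yields $\E[f_t(\rbf_t)]\ge\tfrac12 f_t(\mathbf{0}_n,\vibf_t)+\tfrac12\E[q_t^{(j_t^*)}(z_t^{(j_t^*)})]$, and combining the pieces produces $\sum_t\E[f_t(\rbf_t)]\ge\tfrac12\sum_t f_t(\rbf^*,\vibf_t)-\tfrac{n}{2}h(T)$, i.e.\ $\delta=\tfrac12$. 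The delicate point to check carefully is that setting bidder $i$'s reserve to $z_t^{(i)}$ does not reduce the revenue of $\rbf_t^{(n)}$ below $q_t^{(j_t^*)}(z_t^{(j_t^*)})$ even though the other coordinates of $\rbf_t^{(n)}$ are nonzero, which holds because the highest-valuation winner still clears and the realized price is at least its own reserve.

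Blackwell reducibility (\Cref{def:blackwell-reducible}) is then immediate: the target set $S$ is the positive orthant in $\mathbb{R}^m$, and it is response-satisfiable (\Cref{def:satisfy}) because against any $\qbf^{(i)}\in\advspaceB$ the point mass $\param=\mathbf{e}_{k^*}$ on $k^*\in\argmax_k q^{(i)}(\feasiblereserve_k)$ makes every coordinate of $\pbf(\param,\qbf^{(i)})=\param^{\top}\qbf^{(i)}\mathbf{1}_m-\qbf^{(i)}$ nonnegative; the diameter is $\diameter{\pbf}=O(1)$ since all $q^{(i)}$ values lie in $[0,1]$. For the bandit setting I must additionally supply an exploration sampling device $\unbiasedestimator$ (\Cref{def:bandit-blackwell-reducible}) that turns a single observed revenue into an unbiased estimate of $\pbf$. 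Mirroring the construction for the running example, I would draw $k\sim\text{Uniform}\{1,\ldots,m\}$, play a reserve vector $\zbf_{\textrm{exp}}$ engineered so that the realized revenue $f_t(\zbf_{\textrm{exp}})$ equals $q_t^{(i)}(\feasiblereserve_k)$, and return the weight $\wbf_{\textrm{exp}}=m(\param_k\mathbf{1}_m-\mathbf{e}_k)$, so that $\E[f_t(\zbf_{\textrm{exp}})\wbf_{\textrm{exp}}]=\param^{\top}\qbf^{(i)}\mathbf{1}_m-\qbf^{(i)}=\pbf(\param,\qbf^{(i)})$ and $\diameter{\hat\pbf}=O(m)$. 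The subtle obstacle here is realizing $q_t^{(i)}(\feasiblereserve_k)$ exactly as an observable revenue—the reserve vector must simultaneously isolate bidder $i$ as the winner at price $\feasiblereserve_k$ and return $0$ in precisely the cases where $q_t^{(i)}$ vanishes—which requires a careful choice of the remaining bidders' reserves and justifies the $O(m)$ blow-up in $\diameter{\hat\pbf}$. With these gadgets verified, \Cref{thm:full-info-online-meta} and \Cref{thm:banditILO} deliver the two claimed $\tfrac12$-regret bounds, and both meta-algorithms run in polynomial time because projection onto the positive orthant and evaluation of $\pay$ and $\unbiasedestimator$ are polynomial.
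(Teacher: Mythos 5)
Your full-information half and your robustness argument track the paper's proof essentially exactly: the same decomposition $f_t(\rbf^*,\vibf_t)\le\indexintovector{\vibf_t}{\hat j_t}+q_t^{(j_t^*)}(\indexintovector{\rbf^*}{j_t^*})$, the same use of the random return $\rbf_t\sim\text{Uniform}\{\mathbf{0}_n,\rbf_t^{(n)}\}$, the same coordinate-wise application of the payoff condition at the index of $\indexintovector{\rbf^*}{i}$, yielding $\delta=\tfrac12$, and the same response-satisfiability and diameter bookkeeping before substituting into Theorems~\ref{thm:full-info-online-meta} and~\ref{thm:banditILO}. That part is correct.

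The gap is in the bandit exploration sampling device, precisely at the step you flag as ``the subtle obstacle'' but do not resolve. There is no single reserve vector $\zbf_{\textrm{exp}}$ (chosen before $\vibf_t$ is revealed, as it must be) whose realized revenue equals $q_t^{(i)}(\feasiblereserve_k)$ for all valuation profiles. When bidder $i$ is not the highest bidder, $q_t^{(i)}(\feasiblereserve_k)=0$; but for any fixed $\rbf$, an adversary can give two bidders $j,j'\ne i$ valuations equal to $1$, so both clear their reserves and the revenue is at least $1>0$. So the quantity $q^{(i)}(\feasiblereserve_k)$ is simply not observable from one auction outcome, and your proposed device with weights $m(\param_k\mathbf{1}_m-\mathbf{e}_k)$ cannot be implemented. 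The paper's construction circumvents this by proving the identity $q^{(i)}(r)=f(r\mathbf{1}_m,\vibf)-f(r(\mathbf{1}_m-\mathbf{e}_i),\vibf)$ — a \emph{difference} of two revenues at two genuinely playable reserve vectors (uniform reserve $r$ for everyone, versus the same with bidder $i$'s reserve zeroed) — and then builds $\unbiasedestimator^{(i)}$ by sampling one of $2m$ (weight, point) pairs, returning $(2m(\param_j\mathbf{1}_m-\mathbf{e}_j),\feasiblereserve_j\mathbf{1}_m)$ or $(-2m(\param_j\mathbf{1}_m-\mathbf{e}_j),\feasiblereserve_j(\mathbf{1}_m-\mathbf{e}_i))$ each with probability $\tfrac{1}{2m}$, so the signed combination is unbiased for $\pbf(\param,\ybf)$ while still conforming to the required form $f(\zbf_{\textrm{exp}})\wbf_{\textrm{exp}}$ of \Cref{def:bandit-blackwell-reducible}. (This identity is also the reason the paper redefines $q^{(i)}$ relative to \cite{roughgarden2019minimizing}.) Without this difference representation, or some substitute for it, your bandit bound does not go through; with it, the rest of your argument is fine and the diameter $\diameter{\hat\pbf}=O(m)$ is unchanged.
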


The proof of Theorem \ref{thm:mmr} is presented in \Cref{apx:mmr-main-proof}. %At a high level, to prove this theorem, we first show that  Algorithm~\ref{alg:mmr-single} is an extended $(\tfrac12, \tfrac12)$-robust approximation algorithm. We then confirm that this algorithm is bandit Blackwell reducible. To do so, we construct an unbiased estimator of the {revenue-from-reserves} function $q^{(i)}$. This unbiased estimator allows us to build an explore sampling device per Definition \ref{def:bandit-blackwell-reducible}. Having verified these two main properties, we then invoke Theorems \ref{thm:full-info-online-meta} and \ref{thm:banditILO} to get the final regret bounds. 
%So far, we assumed that the set of potential reserve prices is finite and belongs to set $\feasiblereserves = \{\feasiblereserve_1, \feasiblereserve_2, \ldots , \feasiblereserve_{m}\}$, rather than the interval $[0, 1]$, and we design learning algorithms with a sublinear regret bounds where the regrets are computed w.r.t.  $\frac{1}{2}\max_{\rbf\in \feasiblereserves}\sum_{t=1}^T f(\rbf, \vibf_t)$. Now, one may wonder if one can design learning algorithms against the following benchmark that allows the reserve prices to be any number in $[0,1]^n$, i.e., $\frac{1}{2}\max_{\rbf\in [0,1]^n}\sum_{t=1}^T f(\rbf, \vibf_t)$.
%The following corollary answers this question.
The following corollary considers a stronger benchmark than the one we considered earlier. This benchmark allows the reserve prices to be any number in $[0,1]^n$, i.e., the regret is computed against  $\frac{1}{2}\max_{\rbf\in [0,1]^n}\sum_{t=1}^T f(\rbf, \vibf_t)$, rather than $\frac{1}{2}\max_{\rbf\in \feasiblereserves^n}\sum_{t=1}^T f(\rbf, \vibf_t)$. %The corollary then 
%confirms the existence of learning algorithms with sublinear regret bounds against this stronger benchmark.
See \Cref{apx:mmr-cor} in the appendix for proof. %Recall that in the previous results, the benchmark was $\frac{1}{2}\max_{\rbf\in \coordinatevalue^n}\sum_{t=1}^T f(\rbf, \vibf_t)$.

% \begin{restatable}{corollary}{mmrcor}
\begin{corollary}
\label{cor:mmr}
  Let $n$ be the number of bidders. Assume that the maximum valuation is normalized to one. Then, for the problem of maximizing personalized reserve prices in the online full information setting, there exists a learning algorithm that obtains $\bigO{n T^{1/2} \log^{1/2} T}$ $\tfrac12$-regret, where $T$ is the number of auctions. Furthermore, in the online bandit setting, there exists a learning algorithm that obtains $\schange{\bigO{n^{3/5}T^{4/5}\log^{1/3}(nT)}}$ $\tfrac12$-regret. Here, the benchmark in the regret bounds is $\frac{1}{2}\max_{\rbf\in [0,1]^n}\sum_{t=1}^T f(\rbf, \vibf_t)$.
\end{corollary}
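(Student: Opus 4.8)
The plan is a standard \emph{discretization-and-balance} argument that bootstraps \Cref{thm:mmr}: the only gap between that theorem and this corollary is that the benchmark is now the best \emph{continuous} reserve vector in $[0,1]^n$ rather than the best vector on a prescribed finite grid $\feasiblereserves^n$. First I would instantiate \Cref{alg:mmr-single} and the online transformations behind \Cref{thm:mmr} on the uniform grid $\feasiblereserves = \{0, \eta, 2\eta, \ldots, 1\}$ of spacing $\eta$, so that $m = |\feasiblereserves| = \bigO{1/\eta}$; this is legal since the setup only requires $0 = \feasiblereserve_1 < \cdots < \feasiblereserve_m$. The entire machinery (extended robustness, bandit Blackwell reducibility, the explore sampling device for $q^{(i)}$) goes through verbatim, so \Cref{thm:mmr} gives $\frac12$-regret $\bigO{n\sqrt{T\log m}}$ in full information and $\bigO{n m^{2/3} T^{2/3}\log^{1/3} m}$ in the bandit setting, both measured against the discretized benchmark $\tfrac12 \max_{\rbf\in\feasiblereserves^n}\sum_t f(\rbf,\vibf_t)$.

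The key quantitative step is a discretization lemma: for every valuation profile $\vibf$ and every continuous reserve vector $\rbf^*\in[0,1]^n$, rounding each coordinate \emph{down} to its nearest grid point $\tilde\rbf$ loses at most $\eta$ in revenue, i.e.\ $f(\tilde\rbf,\vibf)\ge f(\rbf^*,\vibf)-\eta$. I would prove this by a short case analysis on the source of the revenue. Since reserves only decrease, the set of bidders clearing their reserve can only grow, so the winner $j^*$ under $\rbf^*$ (and the runner-up $\hat j$, if any) still clears under $\tilde\rbf$. If the revenue under $\rbf^*$ came from the second-highest clearing bid $\indexintovector{\vibf}{\hat j}$, then both top clearing bids survive and the second-highest clearing bid under $\tilde\rbf$ is still at least $\indexintovector{\vibf}{\hat j}$, so there is no loss. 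If it came from the reserve $\indexintovector{\rbf^*}{j^*}$, then either $j^*$ remains the winner and is charged at least $\indexintovector{\tilde\rbf}{j^*}\ge \indexintovector{\rbf^*}{j^*}-\eta$, or some higher-value bidder now wins and $j^*$ becomes a clearing non-winner, forcing the payment to be at least $\indexintovector{\vibf}{j^*}\ge\indexintovector{\rbf^*}{j^*}$; either way the loss is at most $\eta$. Crucially the bound is $\eta$ \emph{per auction}, not $n\eta$, since only the winner's reserve enters the revenue. Applying this to the continuous optimizer and summing over $t$ yields $\tfrac12\max_{\rbf\in[0,1]^n}\sum_t f(\rbf,\vibf_t)\le \tfrac12\max_{\rbf\in\feasiblereserves^n}\sum_t f(\rbf,\vibf_t)+\tfrac12\eta T$, so the continuous-benchmark $\frac12$-regret exceeds the discretized one by at most $\tfrac12\eta T$.

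It then remains to balance $\eta$. Adding the discretization penalty $\tfrac12\eta T$ to the bounds of \Cref{thm:mmr} and optimizing the grid resolution, in the full-information setting I would take $\eta=\Theta(T^{-1/2})$, giving $m=\bigO{\sqrt T}$, $\log m = \bigO{\log T}$, and total $\frac12$-regret $\bigO{n\sqrt{T\log T}}$. In the bandit setting the penalty is traded against $n m^{2/3}T^{2/3}\log^{1/3}m \approx n\eta^{-2/3}T^{2/3}\log^{1/3}(1/\eta)$; setting $\eta^{5/3}=\Theta(n T^{-1/3})$, i.e.\ $\eta=\Theta(n^{3/5}T^{-1/5})$, equalizes the two terms and gives $\bigO{n^{3/5}T^{4/5}\log^{1/3}(nT)}$, matching the claim. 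The main obstacle is the discretization lemma rather than the balancing: one must argue the per-auction loss is uniformly $\eta$ even when rounding simultaneously enlarges the set of clearing bidders and changes the identity of the winner, and confirm it does not scale with $n$, since a naive per-coordinate bound of $n\eta$ would degrade every exponent.
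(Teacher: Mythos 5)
Your proposal is correct and follows essentially the same route as the paper's proof: discretize to a uniform grid, prove that rounding every reserve down costs at most one grid-spacing of revenue per auction (not $n$ times that, since only the winner's reserve enters the payment), and then balance the discretization penalty $\eta T$ against the bounds of Theorem~\ref{thm:mmr}, arriving at the same choices of grid resolution. Your case analysis of the rounding lemma is organized by the source of the revenue rather than by the identity of the post-rounding winner as in the paper, but the content and conclusion are identical.
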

% \end{restatable}

% \input{tex/conclusion.tex}

% Acknowledgments---Will not appear in anonymized version
\ACKNOWLEDGMENT{N.G. was supported in part by the Young Investigator Program (YIP) Award from the Office of Naval
Research (ONR) N00014-21-1-2776 and the MIT Research Support Award. We thank Tim Roughgarden, Dimitris Bertsimas, and Amin Karbasi for their insightful comments during
this work.}

\bibliographystyle{plainnat}
\bibliography{refs}

\newpage
\renewcommand{\theHsection}{A\arabic{section}}
\begin{APPENDICES}
{
\section{Numerical Studies}\label{sec:numerics}

In this section, we evaluate our two algorithms $\onlinemetatext$ (Algorithm \ref{alg:full-info-backbone}) and  $\banditmetaWoInput$ (Algorithm \ref{alg:bandit-meta}) for two of the applications  we presented in Section \ref{sec:applications}: (i) product ranking problem and (ii) maximizing multiple reserves in second price auction. Recall that $\onlinemetatext$  is defined/designed for the full information setting while $\banditmetaWoInput$ is defined/designed for the bandit setting.
%As a benchmark algorithm for the full information setting, we also present the results for a greedy algorithm that every $1000$ rounds, it find 

\subsection{Numerical Studies for Product Ranking Problem} \label{sec:numerics:product}
\textbf{Simulation setting.} In our setting, we have $n = 10$ items where $\mathcal{N} = \{1,2,\ldots,10\}$ is the universe of items, $T = 20,000$ customers (rounds), and $k=2$ types appearing equally likely. We consider multinomial logit (MNL)  choice models for customers. Given this choice modeling, a type $u$ is characterized by a patience level $\theta_u$ and a set of weights over items $\mathbf{w}_u\in\mathbb{R}^n$ such that given a ranked list of products $\bm{\pi} = ([\bm{\pi}]_1,[\bm{\pi}]_2,\ldots,[\bm{\pi}]_{n})$, they inspect the first $\theta_u$ products and clicks with probability $\kappa_u(\{[\bm{\pi}]_1,[\bm{\pi}]_2,\ldots,[\bm{\pi}]_{\theta_u}\}) = \dfrac{w_{u,[\bm{\pi}]_1}+ w_{u,[\bm{\pi}]_2}+\ldots+w_{u,[\bm{\pi}]_{\theta_u}}}{1+w_{u,[\bm{\pi}]_1}+ w_{u,[\bm{\pi}]_2}+\ldots+w_{u,
[\bm{\pi}]_{\theta_u}}}$, following the MNL model. Let $f(\bm{\pi})$ denote the expected market share when ranking $\bm{\pi}$ is offered.\footnote{Here, we have $f(\bm{\pi}) = \frac{1}{2}\sum_{u=1}^2\kappa_u(\{[\bm{\pi}]_1,[\bm{\pi}]_2,\ldots,[\bm{\pi}]_{\theta_u}\})$ as both types are equally likely.}

We change the parameters $(\theta_{u,t},\mathbf{w}_{u,t})_{u=1,2}$ every 500 rounds (one episode), and keep them constant in the same episode. At the end of an even episode, we set  patience levels $\theta_{1,t}=\theta_{2,t}=10$, while at the end of an odd episode, we set both of them to be $5$. Furthermore, at the end of an even episode, we choose $3$ items uniformly at random from $\{1,2,\ldots,5\}$ for both types, and put equal weight of $1/3$,  on each of the chosen items, and zero weights on the remaining items. On the other hand, at the end of an odd episode, we choose $3$ items uniformly at random from $\{6,\ldots,10\}$, then put equal weights on the chosen items, and $0$ for the remaining items. This setting allows us to model  the adversarial nature of the environment. 

\textbf{Implementation of $\onlinemetatext$ algorithm.} We replace {\BlackwellAlg } with the Multiplicative Weights or Hedge algorithm with the learning rate of $\epsilon_t = \sqrt{\frac{1}{t}}$ with $n= 10$ arms (representing items). The Hedge algorithm is a no-regret adversarial learning algorithm that keeps a weight over the different arms at each time step, and updates those according to the observed feedback. Recall that in the product ranking problem, subproblem $i$ corresponds to determining the item for the $i^{\text{th}}$ position in the ranking. Then, the rewards we give to the Hedge algorithm that corresponds to subproblem $i$ is the marginal market share of adding item $j$ to the top $(i-1)$ items, for all $j\in \mathcal{N}$. Specifically, suppose that in round $t$, the top $(i-1)$ items are $([\bm{\pi}]_1,[\bm{\pi}]_2,\ldots,[\bm{\pi}]_{i-1})$. Then, the set of rewards we feed to hedge algorithm in our setting is 
$
    f\left(\left\{[\bm{\pi}]_1,[\bm{\pi}]_2,\ldots,[\bm{\pi}]_{i-1},j\right\}\right) - f\left(\left\{[\bm{\pi}]_1,[\bm{\pi}]_2,\ldots,[\bm{\pi}]_{i-1}\right\}\right)
$
for $j=1,2,\ldots,10$.

\textbf{Implementation of  $\banditmetaWoInput$ algorithm.} For $\banditmetaWoInput$, we modify the Hedge algorithm to implement $\BanditBlackwellAlg$, and we call this the Hedge-bandit algorithm. The Hedge-bandit algorithm has exploration and exploitation rounds and only updates its weights in the exploration rounds. Following $\BanditBlackwellAlg$, the Hedge-bandit algorithm explores with probability $q = \left(\frac{n}{T}\right)^{1/3}$; see Algorithm \ref{alg:bandit-blackwell} for details. In each round $t\in [T]$ and  for each subproblem $i\in [10]$, the algorithm decides to explore with probability $q$, and continues to the next subproblem with probability $(1-q)$. Suppose that in round $t$, the first $(i-1)$ subproblems do not enter the exploration round, and the ranking so far is $\bm{\pi}^{(i-1)} = ([\bm{\pi}]_1,[\bm{\pi}]_2,\ldots,[\bm{\pi}]_{i-1})$. For position $i$, in the exploration round, we choose an item $j\in\mathcal{N}$ uniformly at random, then with probability $0.5$, the algorithm outputs the ranking $\bm{\pi}^{(i-1)}+ j\mathbf{e}_i = ([\bm{\pi}]_1,[\bm{\pi}]_2,\ldots,[\bm{\pi}]_{i-1},j)$, observes its market share $f\left(\bm{\pi}^{(i-1)}+ j\mathbf{e}_i\right)$, and feeds $2n/qf\left(\bm{\pi}^{(i-1)}+ j\mathbf{e}_i\right)$ back to the Hedge-bandit algorithm. Furthermore, with probability $0.5$, it outputs the ranking $\bm{\pi}^{(i-1)}= ([\bm{\pi}]_1,[\bm{\pi}]_2,\ldots,[\bm{\pi}]_{i-1})$, observes its market share $f\left(\bm{\pi}^{(i-1)}\right)$, and feeds $-2n/qf\left(\bm{\pi}^{(i-1)}\right)$ to the Hedge-bandit algorithm. If we consider an $n$-dimensional vector with the resulting randomized feedback in its $j^{\textrm{th}}$ coordinate and zero elsewhere, this vector is an unbiased estimate of the marginal market share vector
$
   [ f\left(\bm{\pi}^{(i-1)}+ j\mathbf{e}_i\right) - f\left(\bm{\pi}^{(i-1)}\right)]_{j\in[n]}.
$
This is based on the feedback stated in Equation \eqref{eq:sampling_device_product_ranking}, $n(\theta_j\mathbf{1}_n-\mathbf{e}_j)f(\bm{\pi}^{(i-1)}+j\mathbf{e}_i)$, which also incorporates $\bm{\theta}$, the current weight over possible items for subproblem $i$.
% Given that the current weight over possible items for subproblem $i$ is $\bm{\theta}$, we feed $n(\theta_j\mathbf{1}_n-\mathbf{e}_j)f(\bm{\pi}^{(i-1)}+j\mathbf{e}_i)$ to the Hedge-bandit algorithm, as stated in Equation \eqref{eq:sampling_device_product_ranking}.

% The Hedge-bandit algorithm then estimates the marginal market share of adding an item $j$ in subproblem $i$ by scaling each market share with its corresponding importance scaling factor. Specifically, it scales the first market share with the factor $n/(0.5q)$, and the latter with the factor $-n/(0.5q)$ and obtains {\color{red} This part is confusing.}
% \begin{equation*}
%     \sum_{u=1}^2 0.5\cdot \left({\kappa}_u\left(\left\{[\bm{\pi}]_1,[\bm{\pi}]_2,\ldots,[\bm{\pi}]_{i-1},j\right\}\right) - {\kappa}_u\left(\left\{[\bm{\pi}]_1,[\bm{\pi}]_2,\ldots,[\bm{\pi}]_{i-1}\right\}\right)\right).
% \end{equation*}

% This is how the sampling device, which we present in \Cref{apx:ranking}, works.

%\textbf{Benchmark.} 

\begin{figure}
     \centering
     \begin{subfigure}[b]{\textwidth}
         \centering
         \includegraphics[width=\textwidth]{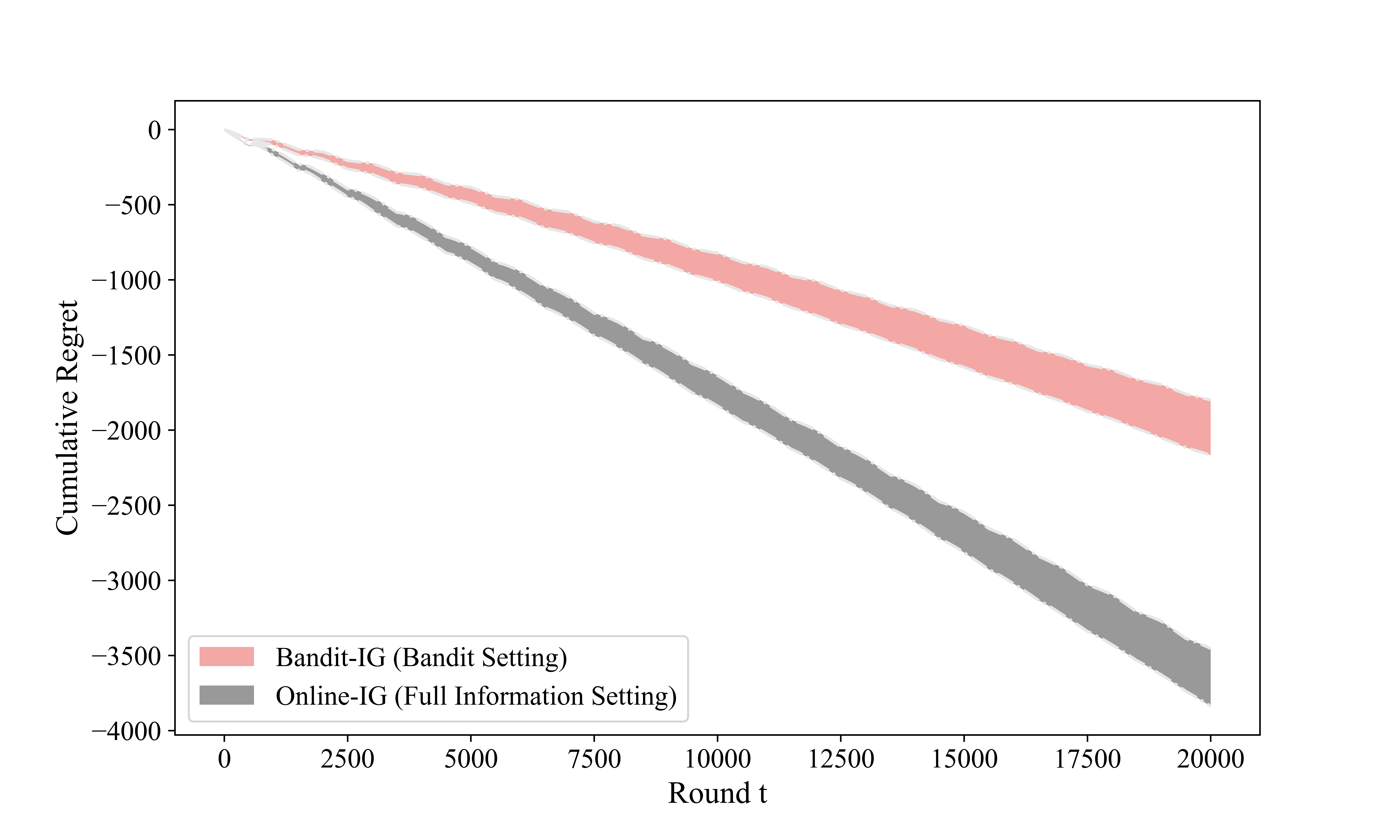}
         \caption{Product Ranking \label{fig:regret_BW_ranking}}
     \end{subfigure}
     \hfill
     \begin{subfigure}[b]{\textwidth}
         \centering
         \includegraphics[width=\textwidth]{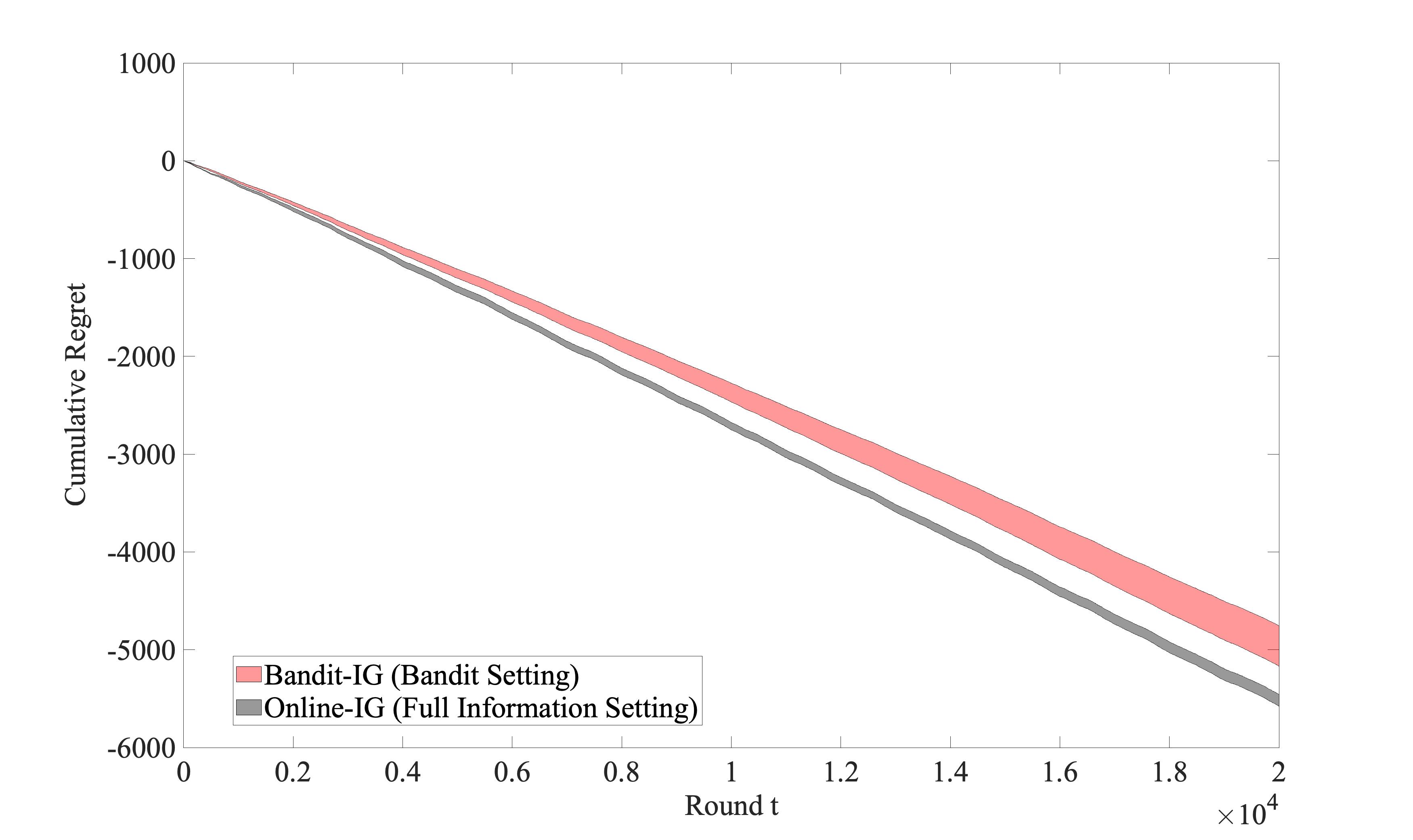}
         \caption{Reserve Price Optimization \label{fig:regret_bw_reserve}}
     \end{subfigure}
     \caption{Average cumulative  regret of \onlinemetatext{} and  $\banditmetaWoInput$ over time in the product ranking and reserve price optimization  problems. The width of the curves is equal to two times the standard error of the regret bounds across $50$ problem instances.}
     \end{figure}
\textbf{Regret.} 
%\begin{figure}
 %   \centering
  %  \includegraphics[width=12cm]{figures/regret_BW_ranking.png}
   % \caption{Cumulative regret of \onlinemetatext and  $\banditmetaWoInput$ over time in the product ranking problem. The width of the curves is equal to two times the standard error of the regret bounds across $50$ problem instances.}
 %   \label{fig:regret_BW_ranking}
%\end{figure}
Figure \ref{fig:regret_BW_ranking} shows the cumulative regret of $\onlinemetatext$ and $\banditmetaWoInput$ algorithms over time. In computing the regret of our algorithms, we use  $1/2$ of the optimal market share as the regret benchmark.  Note that the optimal market share  is the highest possible market share among all rankings assuming that we know all the parameters (weights and patience windows for all types) beforehand.\footnote{The optimal ranking is obtained by enumerating all possible rankings.} We choose the factor $1/2$ because the greedy algorithm that we based our online algorithm on is a $1/2$-approximation of the optimal value.  In Figure \ref{fig:regret_BW_ranking}, we consider $50$ instances where each problem has a unique set of parameters (i.e. $(\theta_{u,t},\mathbf{w}_{u,t})_{u\in[2],t\in[T]})$. For each instance, we take the average performance of both $\onlinemetatext$ and $\banditmetaWoInput$ over $10$ runs. Then, we calculate the regret of these algorithms using $1/2$ of the optimal values as the benchmark. As we observe from Figure \ref{fig:regret_BW_ranking}, $\banditmetaWoInput$ has a bigger regret than that of $\onlinemetatext$. In addition, the regret of both algorithms are negative, implying that they do better than half of the optimal market share.  

\subsection{Numerical Studies for Maximizing Multiple Reserves in Second Price Auctions} \label{sec:numerics:reserve}

\textbf{Simulation setting.} We consider a setting with $n= 3$ buyers and $T= 20,000$ auctions. In each round $t$, to generate 
the value of each buyer $i\in [3]$, denoted by $v_{i,t}$, we first generate an independent random  variable $\hat v_{i,t}$ form  a log-normal distribution with parameters $(\mu_{i,t}, \sigma_{i,t})$, where $\mu_{i,t} \in [0,0.5]$ and $\sigma_{i,t}\in [0, 0.2]$. We then set $v_{i,t} = \hat v_{i,t}\cdot x_{i,t} $, where $x_{i,t}\in \{0,1\}$. Here, $x_{i,t}$, which  is independently  drawn from a Bernoulli distribution with success probability of $p_{i,t}$,  models the fact that not all the buyers participate in all the auctions. In our setting,  parameters $(\mu_{i,t}, \sigma_{i,t}, p_{i,t})_{i\in [3]}$ remain fixed during an {episode} with the length of $500$ rounds, but keep changing across episodes, modeling the adversarial nature of the environment. In addition, within odd episodes, we set  $\mu_{i,t}$ and $\sigma_{i,t}$ to $\mu_{i,1}$ and $\sigma_{i,1}$, and similarly, within even episodes, we set  $\mu_{i,t}$ and $\sigma_{i,t}$ to $\mu_{i,2}$ and $\sigma_{i,2}$. Here,  we independently draw $\mu_{i,j}$ and $\sigma_{i,j}$, $i\in [n]$ and $j\in [2]$,   from a uniform distribution in the range of $[0,0.5]$  and $[0, 0.2]$, respectively.  Similarly, at the end of each episode, to set $p_{i,t}$, $i\in [n]$, we  draw a uniform random variable in the range of $[0,1]$. (Note that $p_{i,t}$ does not follow the alternating episodic pattern that $\mu_{i,t}$ and $\sigma_{i,t}$ follow.) Finally, for reserve prices, we assume that reserve prices belongs to set $\mathcal R = \{\rho_1, \ldots, \rho_{20}\}$, where $\rho_{1} = 0.1$, $\rho_{20}$  is the maximum buyers' value across all the auctions, and $\frac{\rho_i}{\rho_{i+1}}$, $i\in [19]$,  is equal to some constant $k$.

\textbf{Implementation of $\onlinemetatext$ and $\banditmetaWoInput$ algorithms.} To implement $\onlinemetatext$, we again use  the Hedge algorithm with the learning rate of $\epsilon_t = \sqrt{\frac{10}{t}}$ and the number of arms of $20$. (Recall the number of  feasible reserve prices is equal to $20$.)  At the end of each round, for every subproblem $i\in [3]$, we pass $q^{(i)} (r)$, $r\in \feasiblereserves$ to the Hedge algorithm as the reward of reserve price $r$, where $\feasiblereserves$ is the set of feasible reserve prices. (See the definition of $q^{(i)} (r)$ is Algorithm \ref{alg:mmr-single}.) %The rewards $q^{(i)} (r)$, $r\in \feasiblereserves$ are then used to update the weights of the Hedge algorithm.  
For $\banditmetaWoInput$, we again rely on the Hedge algorithm to implement $\BanditBlackwellAlg$. %Let us call this algorithm Hedge-bandit. 
%While the Hedge algorithm updates its weights, due to the restriction of the bandit setting,  the Hedge-bandit algorithm updates its weight during its exploration rounds. Recall that  $\BanditBlackwellAlg$ and hence 
The Hedge-bandit algorithm explores with probability  $q = (2\frac{n}{T})^{1/3}$. We now explain how the Hedge-bandit algorithm performs during an exploration round. Suppose that in some round $t$, we would like to decide about the reserve price of some buyer $i$ and the first $i-1$ buyers (subproblems) did not enter an exploration round. If buyer/subproblem $i$ enters an exploration round,  given the sampling device presented in  \Cref{apx:mmr-main-proof} (see Equation \eqref{eq:sample_device_reserve}), we first choose one of the reserve prices in $\feasiblereserves$ uniformly at random. Let us denote this reserve price by $\hat \rho$. Then, with probability $1/2$, we set the reserve price of all the buyers (including buyer $i$) equal to $\hat \rho$. Otherwise, we set the reserve price of all the buyers, except buyer $i$, equal to $\hat \rho$. The reserve price of buyer $i$ is then set to $0$.  In the former case, we pass $2m f(\hat \rho \mathbf{1}_n, \vibf_t)$ to the Hedge-bandit algorithm as the reward of arm $\hat \rho$. The reward of other arms is set to zero. Here,  $m=20$ is the number of feasible reserve prices and  $f(\hat \rho \mathbf{1}_n, \vibf_t)$ is the revenue of the second price auction with reserve prices $\hat \rho \mathbf{1}_n$ when buyers' values are $\vibf_t$. In the latter case, we pass $-2m f(\hat \rho (\mathbf{1}_n-\mathbf{e}_i), \vibf_t)$ to the Hedge-bandit algorithm as the reward of arm $\hat \rho$. The reward of other arms is set to zero.\footnote{Considering the fact that all the subproblems  need the value of $f(r\mathbf{1}_n, \vibf_t)$, for some $r\in \mathcal R$,  during their exploration rounds, in our implementation, we allow different subproblems to learn from each other. In particular, when subproblem  $i$ explores, with probability $1/(n+1)$, we choose reserve price of $\hat \rho\mathbf{1}_n$ and with probability $n/(n+1)$, we choose reserve price of $\hat \rho (\mathbf{1}_n-\mathbf{e}_i)$, where  $\hat \rho$ is randomly chosen reserve price by subproblem $i$. Then, when $\hat \rho\mathbf{1}_n$ is chosen, we update the weights of all the Hedge-bandit algorithms, not only those of subproblem $i$. To do so, we pass the reward of $(n+1)\cdot m\cdot  f(\hat \rho \mathbf{1}_n, \vibf_t)$ to the Hedge-bandit algorithms. When reserve price of  $\hat \rho (\mathbf{1}_n-\mathbf{e}_i)$ is chosen, we only update the Hedge-bandit algorithm of subproblem $i$ by passing the reward of $-\frac{n+1}{n}\cdot m \cdot f(\hat \rho (\mathbf{1}_n-\mathbf{e}_i), \vibf_t)$.   }

      %q^{(i)}(r) = f(r\mathbf{1}_m, \vibf) - f(r(\mathbf{1}_m-\mathbf{e}_i), \vibf).

%\textbf{Benchmark algorithm.} 
%We compare our algorithms with a \emph{greedy algorithm} that updates its decision every $1000$ rounds. To do so, the greedy algorithm solves the offline greedy algorithm of  \cite{roughgarden2019minimizing} on all past auction data. Observe that this algorithm can be only used for the full information setting and cannot be applied for the bandit setting. {\color{red} Say similar alg is presented and analyzed in XX. } 

\textbf{Regret.} %\rad{The figure here is a bit weird. The font is different and larger.}
Figure \ref{fig:regret_bw_reserve} shows the cumulative regret of $\onlinemetatext$ and $\banditmetaWoInput$ over time.  We use $1/2$ of the optimal revenue as our benchmark in our regret calculation, where we compute the optimal revenue by enumerating over all possible vector of reserve prices. (Recall that our online  algorithms are transformation of the 1/2-approximate  greedy algorithm of  \cite{roughgarden2019minimizing}.) In Figure \ref{fig:regret_bw_reserve}, we consider $50$ problem instances where each problem instance has a unique set of parameters (i.e., $(\mu_{i,t}, \sigma_{i,t}, p_{i,t})_{i\in [n], t\in [T]}$)), as well as, auction bids/values generated from these parameters. For each problem instance, we run $\onlinemetatext$ and $\banditmetaWoInput$ $10$ times and compute their average performance. %For the greedy algorithm, such averaging procedure is not required as this algorithm, unlike $\onlinemetatext$ and $\banditmetaWoInput$ is deterministic. 
%To calculate the regret of these algorithms, we use the solution of the offline greedy algorithm of \cite{roughgarden2019minimizing}
%as the benchmark.
We observe that $\onlinemetatext$ outperforms $\banditmetaWoInput$, as expected. Further, similar to our results for the product ranking problem, the regret of both algorithms is negative.} % is higher than that of  $\onlinemetatext$ and greedy algorithms that have access to more information in every round. 
%{\color{red} anything else?}

\section{Proofs and Remarks of \texorpdfstring{\Cref{sec:blackwell}}{}}
\subsection{Equivalent criteria for approachability}
\label{sec:apx-blackwellequivalent}
Interestingly, there are other structural conditions that are equivalent to approachability. For example, the original proof of the Blackwell approachability theorem~\citep{blackwell1956analog} uses a condition called ``halfspace-satisfiability''. The following proposition summarizes all the known equivalences. 
\begin{proposition}[Satisfiable/Halfspace-Satisfiable/Response-Satisfiable~\citep{abernethy2011blackwell}] \label{def:satisfy-app}
The following conditions are all equivalent to the approachability condition (Definition~\ref{def:blackwell-approach}):
\begin{enumerate}
    \item A target set $S$ is satisfiable in the Blackwell sequential game $\blackwellsequentialgame$ if there exists a player 1's action $\bm{x} \in \algspaceB$ such that for every player 2's action $\bm{y} \in \advspaceB$, the vector payoff falls into the target set, that is $\pbf(\bm{x}, \bm{y}) \in S$.
    \item  A target set $S$ is halfspace-satisfiable in the Blackwell sequential game $\blackwellsequentialgame$ if for every halfspace $H \supseteq S$, $H$ is satisfiable.
    \item  A target set $S$ is response-satisfiable in the Blackwell sequential game $\blackwellsequentialgame$ if for every player 2's action $\bm{y} \in \advspaceB$, there exists a player 1's action $\bm{x} \in \algspaceB$ such that the vector payoff falls into the target set, that is $\pbf(\bm{x}, \bm{y}) \in S$.
\end{enumerate}
\end{proposition}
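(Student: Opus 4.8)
The plan is to prove the proposition as a cycle of implications, and to offload the link between approachability and the structural conditions onto \Cref{def:blackwell-thm}, which already establishes that $S$ is approachable if and only if it is response-satisfiable. With that theorem in hand, the remaining content is the mutual equivalence of the three structural notions, so I would organize the argument as Satisfiable $\Rightarrow$ Halfspace-satisfiable $\Rightarrow$ Response-satisfiable $\Rightarrow$ Satisfiable, splicing approachability in through \Cref{def:blackwell-thm} and \Cref{def:blackwell-approach}. Two of these implications are immediate: satisfiability produces a single action $\bm x$ with $\pbf(\bm x, \bm y) \in S$ for every $\bm y$, which in particular witnesses response-satisfiability for each $\bm y$ and, for any halfspace $H \supseteq S$, witnesses satisfiability of $H$; hence Satisfiable $\Rightarrow$ Halfspace-satisfiable $\Rightarrow$ Response-satisfiable follows by unwinding definitions.

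The technical heart is the reverse passage from a per-adversary guarantee to an adversary-independent one, and this is where I expect the main obstacle to lie. Fix a closed halfspace $H = \{\bm w : \langle \bm a, \bm w\rangle \le b\}$ with $S \subseteq H$ and consider the scalar game with payoff $\langle \bm a, \pbf(\bm x, \bm y)\rangle$, which is biaffine because $\pbf$ is. Response-satisfiability guarantees that for each $\bm y \in \advspaceB$ there is an $\bm x$ with $\pbf(\bm x, \bm y) \in S \subseteq H$, so $\min_{\bm x} \langle \bm a, \pbf(\bm x, \bm y)\rangle \le b$ for every $\bm y$, i.e. $\max_{\bm y} \min_{\bm x} \langle \bm a, \pbf(\bm x, \bm y)\rangle \le b$. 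Since $\algspaceB$ and $\advspaceB$ are compact and convex and the payoff is affine in each argument, the von Neumann--Sion minimax theorem lets me exchange the order of optimization, yielding $\min_{\bm x} \max_{\bm y} \langle \bm a, \pbf(\bm x, \bm y)\rangle \le b$ and hence a single action $\bm x_H$, independent of $\bm y$, with $\pbf(\bm x_H, \bm y) \in H$ for all $\bm y$. This already delivers Response-satisfiable $\Rightarrow$ Halfspace-satisfiable. To push on to full satisfiability of $S$ I would then attempt to use that a closed convex set is the intersection of the halfspaces containing it, $S = \bigcap_{H \supseteq S} H$, and to assemble the per-halfspace witnesses $\bm x_H$ into one action. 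This assembly is the delicate point: it is exactly here that convexity of $S$, compactness and convexity of the action spaces, and biaffinity of $\pbf$ must all be used at once, and where a careless argument that directly merges the adversary-dependent witnesses of response-satisfiability would break.

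Finally I would close the loop with approachability. Satisfiability trivially implies approachability with $g(T) = 0$: playing the witnessing $\bm x$ each round keeps every per-round payoff in $S$, and convexity of $S$ keeps the time-average in $S$, so $d_\infty$ of the average to $S$ is zero. Conversely, approachability implies response-satisfiability by contraposition, which is the classical direction underlying \Cref{def:blackwell-thm}: if $S$ is not response-satisfiable there is a fixed $\bm y_0$ with $\pbf(\bm x, \bm y_0) \notin S$ for every $\bm x$, and by compactness $\inf_{\bm x} d_\infty(\pbf(\bm x, \bm y_0), S) > 0$, so an adversary playing $\bm y_0$ in every round forces the time-averaged payoff to remain a constant distance from $S$, contradicting approachability. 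Combining these with \Cref{def:blackwell-thm} pins approachability to all three structural conditions. The single step I expect to demand the most care is the minimax exchange together with the subsequent assembly into a uniform, adversary-independent action, since this is the one place the proof genuinely exploits the full set of hypotheses simultaneously.
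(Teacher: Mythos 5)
First, note that the paper itself supplies no proof of this proposition: it is stated as a citation to \cite{abernethy2011blackwell}, so your argument has to stand on its own. The easy pieces are fine (a single satisfying $\bm{x}$ trivially witnesses response-satisfiability and satisfiability of every halfspace $H\supseteq S$, and a satisfiable convex $S$ is $0$-approachable by playing that $\bm{x}$ forever), and your minimax step is exactly the right tool: for $H=\{\bm{w}:\langle\bm{a},\bm{w}\rangle\le b\}\supseteq S$, response-satisfiability gives $\max_{\bm{y}}\min_{\bm{x}}\langle\bm{a},\pbf(\bm{x},\bm{y})\rangle\le b$, and Sion's theorem (biaffine payoff, compact convex action sets) produces an adversary-independent $\bm{x}_H$. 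That correctly proves response-satisfiable $\Rightarrow$ halfspace-satisfiable.

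The fatal problem is the step you defer as ``the delicate assembly'': the implication response-satisfiable $\Rightarrow$ satisfiable is \emph{false}, so no amount of care will merge the per-halfspace witnesses $\bm{x}_H$ into a single action for $S=\bigcap_{H\supseteq S}H$. Take $\algspaceB=\advspaceB=[0,1]$, $\pbf(x,y)=(x,y)$, and $S=\{(a,a):a\in[0,1]\}$: for every $y$ the response $x=y$ lands in $S$, so $S$ is response-satisfiable (hence approachable), yet no single $x$ has $(x,y)\in S$ for all $y$. The actual content of the Abernethy et al.\ result --- and the only part this paper ever uses --- is the three-way equivalence of approachability, response-satisfiability, and halfspace-satisfiability; plain satisfiability is a strictly stronger sufficient condition, and the proposition's phrase ``all equivalent'' is itself imprecise on this point, something your write-up should flag rather than try to prove. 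Two smaller gaps: your chain ``Satisfiable $\Rightarrow$ Halfspace-satisfiable $\Rightarrow$ Response-satisfiable'' does not follow ``by unwinding definitions,'' because the second arrow needs an argument (e.g., if $S$ is not response-satisfiable, separate the compact convex set $\{\pbf(\bm{x},\bm{y}_0):\bm{x}\in\algspaceB\}$ from $S$ by a hyperplane to exhibit an unsatisfiable halfspace containing $S$); absent that, your cycle never connects halfspace-satisfiability back to the conditions that \Cref{def:blackwell-thm} ties to approachability.
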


\subsection{Proof of \texorpdfstring{\Cref{def:blackwell-thm}}{}}
\label{sec:apx-blackwell-proof}
%\blackwellthm*
\proof{\emph{Proof.}} The proof for the only if direction relies on the fact that the $\ell_\infty$-distance between the average payoff and $S$ is vanishing as $T\rightarrow{}+\infty$ since $S$ is $o(1)$-approachable. Suppose that $S$ is not response satisfiable, then there exists player 2's action $\ybf_0\in\advspaceB$ such that for every player 1's action $\xbf\in\algspaceB,$ the payoff $\pbf(\xbf,\ybf_0)$ is not in $S$. Consider the set $U := \{\pbf(\xbf,\ybf_0):\xbf\in\algspaceB\}$. Because the payoff $\pbf$ is biaffine and $\algspaceB$ is convex and compact, so is $U,$ hence $\underset{\mathbf{u}\in U}{\inf} d_\infty(\mathbf{u},S) = d_\infty(\pbf(\underline{\xbf},\ybf_0),S)$ for some $\underline{\xbf}\in\algspaceB$. As $\pbf(\underline{\xbf},\ybf_0)\notin S,$ $\beta = d_\infty(\pbf(\underline{\xbf},\ybf_0),S)>0.$ When player 2 always plays $\ybf_0,$ we know that the $\ell_\infty$ distance between the average payoff and $S$ should converge to zero as $S$ is $o(1)$-approachable. At the same time, it is at least $\beta$, a contradiction.

To prove the if direction, we first show a reduction from Blackwell approachability to Online Linear Optimization (OLO) by showing that we can upper bound the $\ell_\infty$ distance between the average payoff and the target set in a Blackwell approachability problem with the regret of the corresponding OLO instance. Then, we bound the regret of the OLO problem from above in terms of the $\ell_\infty$ norm of the payoff $D(\pbf)$ (because of our desired bound), the number of rounds $T,$ and the dimension of the payoff function $d$. We assume that $S$ is a cone throughout the proof, which is not an issue because we can always lift a convex set to a cone in one dimension higher while not perturbing the distances by more than a factor of $2.$

\paragraph{Blackwell approachability reduces to OLO.} In an OLO problem, a player is given a compact convex decision set $\mathcal{K}\subset\mathbb{R}^d,$ and have to decide on a sequence of actions $\wibf_1,\wibf_2,\ldots,\wibf_T\in\mathcal{K}$. In round $t,$ after the player decides on an action $\wibf_t,$ Nature reveals a loss vector $\lbf_t$ and the player pays $\langle\lbf_t,\wibf_t\rangle.$ The player observes the loss vector $\lbf_t$ in each round (full-information setting) and aims to minimize his cost. We want to construct a learning algorithm $\mathcal{L}$, such that, for any sequence of loss vectors $\lbf_1,\lbf_2,\ldots,\lbf_T\in\mathbb{R}^d$, the algorithm outputs $\wibf_1,\wibf_2,\ldots,\wibf_T\in\mathcal{K}$ that attains a small regret, i.e. $\sum_{t=1}^T\langle\lbf_t,\wibf_t\rangle - \underset{\wibf\in\mathcal{K}}{\min}\sum_{t=1}^T\langle\lbf_t,\wibf\rangle \leq o(T).$ \cite{abernethy2011blackwell} show that we can efficiently obtain an algorithm for a Blackwell approachability problem from an algorithm for its corresponding OLO problem. Specifically, we have the following lemma.
\begin{lemma}(\cite{abernethy2011blackwell})
\label{lemma:BW-OLO}
Given a Blackwell instance $\blackwellsequentialgame$, and a cone $S$ such that $S$ is response-satisfiable, we can construct an OLO problem with $\mathcal{K} = S^\circ\cap B_2(1)$\footnote{$S^\circ$ is the polar cone of $S$, i.e. $S^\circ :=\{\mathbf{s}\in\mathbb{R}^d:\langle\mathbf{s},\mathbf{x}\rangle\leq 0\text{ for all }\mathbf{x}\in S\}$, and $B_2(1)$ is a Euclidian ball with radius $1$, i.e. $B_2(1) = \{\wibf\in\mathbb{R}^d:\lVert\wibf\rVert_2\leq1\}$.} and $\lbf_t = -\pbf(\xbf_t,\ybf_t)$ for all $t,$ such that, if the OLO learning algorithm returns $\wibf_t$ in round $t,$ we can convert it into $\xbf_t\in\mathcal{X}$ where
\begin{equation*}
    d_2\left(\frac1T\sum_{t=1}^T\pbf(\xbf_t,\ybf_t),S\right) \leq \frac1T\left(\sum_{t=1}^T\langle\lbf_t,\mathbf{w}_t\rangle - \underset{\wibf\in\mathcal{K}}{\min}\sum_{t=1}^T\langle\lbf_t,\wibf\rangle\right).
\end{equation*}
\end{lemma}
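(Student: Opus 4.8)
The plan is to reduce the claimed bound to two ingredients: a standard convex-analytic identity that rewrites the Euclidean distance to a cone as a support function over its polar, and a conversion of each online-linear-optimization iterate $\wibf_t$ into a Blackwell action $\xbf_t$ whose payoff is guaranteed to sit on the correct side of the hyperplane determined by $\wibf_t$. The whole inequality then falls out by matching the comparator term of the OLO regret with the distance, and showing the ``played'' term of the regret is nonnegative.

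First I would record the identity $d_2(\mathbf{z}, S) = \max_{\wibf \in S^\circ \cap B_2(1)} \langle \wibf, \mathbf{z}\rangle$, valid for every $\mathbf{z}\in\mathbb{R}^d$ because $S$ is a closed convex cone. This is Moreau's decomposition: writing $\mathbf{z} = \Pi_S(\mathbf{z}) + \Pi_{S^\circ}(\mathbf{z})$ with orthogonal summands, one has $d_2(\mathbf{z},S) = \lVert \Pi_{S^\circ}(\mathbf{z})\rVert_2$; for any $\wibf \in S^\circ \cap B_2(1)$ the split $\langle \wibf,\mathbf{z}\rangle = \langle \wibf, \Pi_S(\mathbf{z})\rangle + \langle \wibf, \Pi_{S^\circ}(\mathbf{z})\rangle$ has a nonpositive first term and a second term bounded by $\lVert \Pi_{S^\circ}(\mathbf{z})\rVert_2$, with equality attained at $\wibf = \Pi_{S^\circ}(\mathbf{z})/\lVert \Pi_{S^\circ}(\mathbf{z})\rVert_2 \in \mathcal{K}$. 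Applying this with $\mathbf{z} = \frac1T\sum_{t=1}^T \pbf(\xbf_t,\ybf_t)$ and substituting $\lbf_t = -\pbf(\xbf_t,\ybf_t)$ turns the left-hand side of the claim into $-\frac1T \min_{\wibf \in \mathcal{K}} \sum_{t=1}^T \langle \lbf_t, \wibf\rangle$, i.e.\ exactly (the negative of) the comparator term appearing in the OLO regret, divided by $T$.

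The crux is the conversion step. Given the iterate $\wibf_t \in S^\circ \cap B_2(1)$ produced by the OLO learner, I would select $\xbf_t \in \algspaceB$ so that $\langle \wibf_t, \pbf(\xbf_t, \ybf)\rangle \le 0$ for \emph{every} $\ybf \in \advspaceB$; this is precisely the satisfiability of the halfspace $\{\mathbf{u} : \langle \wibf_t, \mathbf{u}\rangle \le 0\}$, which contains $S$ since $\wibf_t \in S^\circ$. Existence of such an $\xbf_t$ is where response-satisfiability enters: because $\pbf$ is biaffine and $\algspaceB,\advspaceB$ are compact convex, Sion's minimax theorem gives $\min_{\xbf}\max_{\ybf}\langle \wibf_t, \pbf(\xbf,\ybf)\rangle = \max_{\ybf}\min_{\xbf}\langle \wibf_t, \pbf(\xbf,\ybf)\rangle$, and the right-hand side is $\le 0$ because for each fixed $\ybf$ response-satisfiability yields some $\xbf$ with $\pbf(\xbf,\ybf)\in S$, whence $\langle \wibf_t, \pbf(\xbf,\ybf)\rangle \le 0$ as $\wibf_t \in S^\circ$. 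Taking the minimizing $\xbf_t$ then guarantees $\max_{\ybf}\langle \wibf_t, \pbf(\xbf_t,\ybf)\rangle \le 0$, so that $\langle \lbf_t, \wibf_t\rangle = -\langle \wibf_t, \pbf(\xbf_t,\ybf_t)\rangle \ge 0$ no matter how player~2 acts (equivalently, one may just invoke the response/halfspace-satisfiability equivalence of Proposition~\ref{def:satisfy-app}).

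Finally I would assemble the pieces: the OLO regret equals $\frac1T\sum_{t=1}^T\langle \lbf_t,\wibf_t\rangle + d_2\!\left(\frac1T\sum_{t=1}^T \pbf(\xbf_t,\ybf_t), S\right)$, and since the first summand is nonnegative by the conversion step, the regret is at least $d_2(\cdot,S)$, which is the asserted inequality. I expect the conversion step to be the main obstacle, as it is the only place requiring the structural hypothesis on $S$ and a minimax argument; the rest is the polar-cone identity and bookkeeping with the sign convention $\lbf_t = -\pbf(\xbf_t,\ybf_t)$.
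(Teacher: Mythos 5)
Your proof is correct and follows essentially the same route as the paper: the polar-cone support-function identity for $d_2(\cdot,S)$, the conversion of each $\wibf_t$ into an $\xbf_t$ forcing the payoff into the halfspace $\{\mathbf{u}:\langle\wibf_t,\mathbf{u}\rangle\le 0\}\supseteq S$ so that the played term $\sum_t\langle\lbf_t,\wibf_t\rangle$ is nonnegative, and the identification of the comparator term with the distance. The only difference is that you supply an explicit Sion-minimax derivation of halfspace-satisfiability from response-satisfiability, a step the paper simply cites as Proposition~\ref{def:satisfy-app}.
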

\proof{\emph{Proof of \Cref{lemma:BW-OLO}.}}
This lemma was proved in \cite{abernethy2011blackwell}, but we include the proof here for completion. Notice that, for any $\xibf\in\mathbb{R}^d$ and convex cone $S\subset\mathbb{R}^d,$ the distance from $\xibf$ to $S$ can be written as
\begin{equation}
\label{eq:distance}
    d_\infty(\xibf,S) = \underset{\wibf\in S^\circ,\lVert\wibf\rVert\leq 1}{\max}~ \langle\wibf,\xibf\rangle
\end{equation}
because
\begin{equation*}
    d_\infty(\xibf,S)= \lVert\xibf - \pi_S(\xibf)\rVert_2\geq \lVert\wibf\rVert\lVert\xibf-\pi_S(\xibf)\rVert\geq\langle\wibf,\xibf-\pi_S(\xibf)\rangle\geq\langle\wibf,\xibf\rangle,
\end{equation*}
where $\pi_S(\xibf)$ denotes the projection of $\xibf$ onto $S$, and when $\wibf = \frac{\xibf - \pi_S(\xibf)}{\lVert\xibf - \pi_S(\xibf)\rVert_2},$ we have equality, i.e. $\langle\wibf,\xibf\rangle = d_2(\xibf,S).$ To construct a mapping from the output of the OLO algorithm $\wibf_t$ to $\xbf_t$ for the Blackwell game, we utilize the halfspace oracle for the Blackwell problem (see \Cref{def:satisfy-app}). Specifically, we pick $\xbf_t$ such that $\pbf(\xbf_t,\ybf_t)\in H_{\wibf_t}$ for all $\ybf\in\advspaceB$, where $H_{\wibf_t} = \{\xbf:\langle\wibf_t,\xbf\rangle\leq 0\}$ is a halfspace that contains $S$ ($H_{\wibf_t}$ contains $S$ because its normal, $\wibf_t$, is in $S^\circ$). This gives us the following guarantee
\begin{align}
d_2\left(\frac1T\sum_{t=1}^T\pbf(\xbf_t,\ybf_t),S\right) 
&\overset{(1)}{=} \underset{\wibf\in\mathcal{K}}{\max}\left\langle\frac1T\sum_{t=1}^T\pbf(\xbf_t,\ybf_t),\wibf\right\rangle = \frac1T\underset{\wibf\in\mathcal{K}}{\max}\left(-\sum_{t=1}^T\langle\lbf_t,\wibf\rangle\right)\label{eq:unbiased}\\
&\overset{(2)}{\leq}\frac1T\left(\sum_{t=1}^T\langle -\pbf(\xbf_t,\ybf_t),\wibf_t\rangle - \underset{\wibf\in\mathcal{K}}{\min}\sum_{t=1}^T\langle\lbf_t,\wibf\rangle\right)\nonumber\\
&\overset{(3)}{=} \frac1T\left(\sum_{t=1}^T\langle\lbf_t,\wibf_t\rangle - \underset{\wibf\in\mathcal{K}}{\min}\sum_{t=1}^T\langle\lbf_t,\wibf\rangle\right).\nonumber
\end{align}

Here, Equality~$(1)$ follows from \Cref{eq:distance}, Inequality~$(2)$ holds because $\pbf(\xbf_t,\ybf_t)\in H_{\wibf_t},$ and Equality~$(3)$ holds from our definition of $\lbf_t.$

\[\myqed\]
\endproof{}
As a corollary, since for any $\xibf\in\mathbb{R}^d$ and $S\subseteq\mathbb{R}^d,$ the $\ell_\infty$ distance is always less than equal to the $\ell_2$ distance, i.e., $d_\infty(\xibf,S)\leq d_2(\xibf,S),$ we obtain
\begin{equation*}
    d_\infty\left(\frac1T\sum_{t=1}^T\pbf(\xbf_t,\ybf_t),S\right) \leq d_2\left(\frac1T\sum_{t=1}^T\pbf(\xbf_t,\ybf_t),S\right) \leq \frac1T\left(\sum_{t=1}^T\langle\lbf_t,\mathbf{w}_t\rangle - \underset{\wibf\in\mathcal{K}}{\min}\sum_{t=1}^T\langle\lbf_t,\wibf\rangle\right).
\end{equation*}

\paragraph{OLO regret upper-bound with Follow-the-Regularized-Leader algorithm.} To obtain the upper-bound on the regret of an OLO problems in terms of the $\ell_\infty$ norm of its losses, we apply the Follow-the-Regularized-Leader algorithm with a $\mu$-strongly convex\footnote{A $\mu$-strongly convex function $f$ with respect to the $\ell_q$ norm is a differentiable function that satisfies $f(\yibf)\geq f(\xibf)+ \nabla\left(f(\xibf)^T\right)(\yibf-\xibf) + \tfrac\mu2\lVert\yibf-\xibf\rVert_q^2$ for some $\mu>0.$ } regularizer with respect to the $\ell_1$ norm. We use a regularizer with respect to the $\ell_1$ norm, the dual of the $\ell_\infty$ norm, because of the bound structure of the algorithm as stated in \Cref{lemma:OLO}. We elaborate further in the following lemmas.

\begin{lemma}(\cite{shalev2012online})
\label{lemma:OLO}
Consider an OLO problem on a convex and compact decision space $\mathcal{K}\subseteq\mathbb{R}^d.$ Applying Follow-the-Regularized-Leader algorithm with a regularizer $R,$ where $R:\mathbb{R}^d\rightarrow\mathbb{R}$ is a $\mu$-strongly convex function with respect to some norm $\lVert\cdot\rVert$ for $\mu>0$, implies
\begin{equation*}
    \frac1T\left(\sum_{t=1}^T\langle\lbf_t,\mathbf{w}_t\rangle - \underset{\wibf\in\mathcal{K}}{\min}\sum_{t=1}^T\langle\lbf_t,\wibf\rangle\right) \leq\bigO{CB^{1/2}\mu^{-1/2}T^{-1/2}},
\end{equation*}
where $B>0$ upper bounds the function $R,$ $C>0$ upper bounds the dual norm of the loss vector $\lVert\lbf\rVert_*$\footnote{If $\lVert\cdot\rVert$ is a norm in $\mathbb{R}^d,$ the dual norm $\lVert\cdot\rVert_*$ of $\lVert\cdot\rVert$ is defined as $\lVert\wibf\rVert_* = \sup\{\wibf^T\xibf|~\lVert\xibf\rVert\leq 1\}$.}, and $T$ is the number of rounds.
\end{lemma}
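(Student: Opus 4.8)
The plan is to instantiate the Follow-the-Regularized-Leader (FTRL) algorithm with a learning rate $\eta>0$ and the given $\mu$-strongly convex regularizer $R$, and then invoke the standard two-part FTRL regret decomposition, optimizing $\eta$ only at the very end. Concretely, at round $t$ the player plays the regularized leader $\wibf_t = \arg\min_{\wibf\in\mathcal{K}}\big\{\eta\sum_{s=1}^{t-1}\langle\lbf_s,\wibf\rangle + R(\wibf)\big\}$. Writing $F_t(\wibf) \triangleq \eta\sum_{s=1}^{t-1}\langle\lbf_s,\wibf\rangle + R(\wibf)$, the goal is to bound the cumulative regret against an arbitrary comparator $\wibf^*\in\mathcal{K}$ and then divide by $T$ to recover the stated average-regret bound.

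First I would establish the \emph{Be-the-Leader} inequality by treating $R(\cdot)/\eta$ as a fictitious round-$0$ loss and applying the standard leader-telescoping argument, which yields $\sum_{t=1}^T\langle\lbf_t,\wibf_t-\wibf^*\rangle \le \tfrac{1}{\eta}\big(R(\wibf^*)-R(\wibf_1)\big) + \sum_{t=1}^T\langle\lbf_t,\wibf_t-\wibf_{t+1}\rangle$. This isolates a leading ``regularization'' term, which I would bound by $\bigO{B/\eta}$ using the uniform upper bound $B$ on $R$, and a residual ``stability'' term to be handled separately.

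The key step, and the main obstacle, is controlling the stability term $\langle\lbf_t,\wibf_t-\wibf_{t+1}\rangle$. Here I would use that both $F_t$ and $F_{t+1}$ are $\mu$-strongly convex with respect to $\norm{\cdot}{}$ (the added linear losses do not change the modulus of strong convexity), together with the first-order optimality of their constrained minimizers $\wibf_t,\wibf_{t+1}$ over the convex set $\mathcal{K}$. Adding the two strong-convexity inequalities $F_t(\wibf_{t+1})\ge F_t(\wibf_t)+\tfrac{\mu}{2}\norm{\wibf_{t+1}-\wibf_t}{}^2$ and $F_{t+1}(\wibf_t)\ge F_{t+1}(\wibf_{t+1})+\tfrac{\mu}{2}\norm{\wibf_t-\wibf_{t+1}}{}^2$, and substituting $F_{t+1}-F_t=\eta\langle\lbf_t,\cdot\rangle$, collapses the telescope to $\eta\langle\lbf_t,\wibf_t-\wibf_{t+1}\rangle\ge\mu\norm{\wibf_{t+1}-\wibf_t}{}^2$. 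Combined with H\"older's inequality this gives the one-step stability bound $\norm{\wibf_{t+1}-\wibf_t}{}\le\tfrac{\eta}{\mu}\norm{\lbf_t}{*}$, and hence $\langle\lbf_t,\wibf_t-\wibf_{t+1}\rangle\le\tfrac{\eta}{\mu}\norm{\lbf_t}{*}^2\le \tfrac{\eta C^2}{\mu}$ using the bound $C$ on the dual norm of the losses.

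Finally I would combine the two estimates into $\sum_{t=1}^T\langle\lbf_t,\wibf_t-\wibf^*\rangle\le \tfrac{B}{\eta}+\tfrac{\eta C^2 T}{\mu}$, choose $\eta=\sqrt{B\mu/(C^2T)}$ to balance the two terms, and obtain total regret $\bigO{C\sqrt{BT/\mu}}$. Dividing by $T$ yields the claimed $\bigO{CB^{1/2}\mu^{-1/2}T^{-1/2}}$ average-regret bound. I expect the only subtle points to be handling the \emph{constrained} minimization correctly in the strong-convexity step (which is why I use the minimizer inequalities rather than gradient identities, relying on $\langle\nabla F_t(\wibf_t),\wibf-\wibf_t\rangle\ge 0$ on $\mathcal{K}$), and confirming that the dual norm $\norm{\cdot}{*}$ paired with the norm in which $R$ is $\mu$-strongly convex is exactly the one bounded by $C$, as the lemma stipulates.
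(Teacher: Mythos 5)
Your proof is correct and is precisely the standard FTRL analysis from the cited reference \citep{shalev2012online}: the FTL--BTL (be-the-leader) decomposition, the strong-convexity stability bound $\lVert\wibf_t-\wibf_{t+1}\rVert\le\eta\lVert\lbf_t\rVert_*/\mu$ obtained by adding the two minimizer inequalities, and tuning $\eta$ to balance $B/\eta$ against $\eta C^2 T/\mu$. The paper states this lemma without proof, deferring entirely to that reference, so there is nothing to contrast; the one implicit point worth flagging is that the stated bound requires the regularizer to be scaled by the tuned $1/\eta$ (equivalently, introducing a learning rate as you do) rather than being applied with fixed unit weight.
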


\begin{lemma}(\cite{shalev2007online})
\label{lemma:funR}
For $q\in (1,2),$ the function $f:\mathbb{R}^d\rightarrow\mathbb{R}$ defined as $f(\xibf) = \frac1{2(q-1)}\lVert\xibf\rVert_q^2$ is strongly convex with respect to the $\ell_q$ norm over $\mathbb{R}^d$. Recall that the $\ell_q$ norm is defined as $\lVert\xibf\rVert_q = \left(x_1^q+x_w^q+\ldots+x_d^q\right)^{1/q}$ for $\xibf\in\mathbb{R}^d.$
\end{lemma}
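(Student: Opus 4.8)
The plan is to verify the defining second-order condition for strong convexity. Since $f$ is convex and, away from the coordinate hyperplanes, twice continuously differentiable, it suffices to show that the Hessian satisfies $\mathbf{h}^\top \nabla^2 f(\xibf)\,\mathbf{h} \ge \lVert \mathbf{h}\rVert_q^2$ for every direction $\mathbf{h}$ at every point $\xibf$ with all coordinates nonzero; the inequality on all of $\mathbb{R}^d$ then follows by integrating the second-order bound along line segments, using that the exceptional set where some coordinate vanishes meets a generic segment only on a measure-zero subset and that $f$ is $C^1$ everywhere. Equivalently, it suffices to prove that $\Phi(\xibf) = \tfrac12\lVert\xibf\rVert_q^2$ is $(q-1)$-strongly convex, since $f = \tfrac1{q-1}\Phi$.

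First I would compute the Hessian of $\Phi$ explicitly. Writing $N = \lVert\xibf\rVert_q$, one finds $\partial_j \Phi = N^{2-q}\,|x_j|^{q-1}\mathrm{sgn}(x_j)$, and a second differentiation yields
\begin{equation*}
\mathbf{h}^\top \nabla^2 \Phi(\xibf)\,\mathbf{h} = (q-1)N^{2-q}\sum_{j=1}^d |x_j|^{q-2} h_j^2 \;+\; (2-q)N^{2-2q}\Big(\sum_{j=1}^d |x_j|^{q-1}\mathrm{sgn}(x_j)\,h_j\Big)^2.
\end{equation*}
The crucial observation is that $q\in(1,2)$ forces $2-q>0$, so the second (rank-one) term is nonnegative and can be dropped, leaving $\mathbf{h}^\top \nabla^2 \Phi(\xibf)\,\mathbf{h} \ge (q-1)N^{2-q}\sum_j |x_j|^{q-2}h_j^2$.

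The heart of the argument is then a single application of H\"older's inequality to close the gap, namely $N^{2-q}\sum_j |x_j|^{q-2}h_j^2 \ge \lVert\mathbf{h}\rVert_q^2$. I would write $|h_j|^q = \big(|x_j|^{q-2}h_j^2\big)^{q/2}\cdot |x_j|^{(2-q)q/2}$ and apply H\"older with the conjugate exponents $\tfrac2q$ and $\tfrac2{2-q}$ to obtain $\sum_j |h_j|^q \le \big(\sum_j |x_j|^{q-2}h_j^2\big)^{q/2}\big(\sum_j |x_j|^q\big)^{(2-q)/2}$; raising to the power $2/q$ gives exactly the claimed inequality, since $\big(\sum_j |x_j|^q\big)^{(2-q)/q} = N^{2-q}$. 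Combining this with the Hessian bound yields $\mathbf{h}^\top \nabla^2\Phi(\xibf)\,\mathbf{h}\ge (q-1)\lVert\mathbf{h}\rVert_q^2$, i.e.\ $\Phi$ is $(q-1)$-strongly convex and $f$ is $1$-strongly convex, as required.

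The main obstacle is not the core computation but the differentiability subtlety: because $q<2$, the factor $|x_j|^{q-2}$ blows up as $x_j\to 0$, so $\nabla^2 f$ fails to exist on the coordinate hyperplanes. I would handle this by first establishing the strong-convexity inequality for pairs $\xibf,\yibf$ whose connecting segment avoids these hyperplanes (where the fundamental theorem of calculus applies to $\nabla f$), and then extending to all pairs using the continuity of $\nabla f$ together with a standard approximation argument. An alternative that sidesteps the Hessian entirely is to invoke the duality between strong convexity and smoothness, deducing $(q-1)$-strong convexity of $\tfrac12\lVert\cdot\rVert_q^2$ from the $(p-1)$-smoothness of its Fenchel dual $\tfrac12\lVert\cdot\rVert_p^2$ with $\tfrac1p+\tfrac1q=1$; I would mention this as a cleaner but less self-contained route.
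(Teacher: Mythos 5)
Your proof is correct. Note, however, that the paper does not actually prove this lemma: it is stated as a quoted result with a citation to \cite{shalev2007online}, and no argument is given in the text or the appendix. So there is nothing in the paper to compare against line by line; the relevant comparison is with the standard literature proof, and your argument is essentially that one. The computation of
$\mathbf{h}^\top \nabla^2\bigl(\tfrac12\lVert\xibf\rVert_q^2\bigr)\mathbf{h} = (q-1)N^{2-q}\sum_j |x_j|^{q-2}h_j^2 + (2-q)N^{2-2q}\bigl(\sum_j |x_j|^{q-1}\mathrm{sgn}(x_j)h_j\bigr)^2$ is right, the observation that $q<2$ lets you discard the rank-one term is the key step, and the H\"older application with exponents $2/q$ and $2/(2-q)$ correctly closes the gap to $(q-1)\lVert\mathbf{h}\rVert_q^2$. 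Your handling of the differentiability issue on the coordinate hyperplanes is also sound: for $q\in(1,2)$ the gradient $N^{2-q}|x_j|^{q-1}\mathrm{sgn}(x_j)$ is indeed continuous everywhere (including the origin, since $N^{2-q}|x_j|^{q-1}\le N$), and a generic perturbation of the endpoints makes the segment meet the union of hyperplanes in at most $d$ points, after which piecewise integration of the second-order bound plus continuity of $\nabla f$ gives the inequality for all pairs. The one thing I would tighten is the phrase ``integrating the second-order bound'': since $|x_j|^{q-2}$ blows up near the hyperplanes, it is cleaner to argue that $t\mapsto \langle\nabla f(\xibf+t\mathbf{h}),\mathbf{h}\rangle - (q-1)\lVert\mathbf{h}\rVert_q^2\, t$ is continuous and nondecreasing on each subinterval between crossing points, rather than invoking integrability of the Hessian. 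Your alternative route via the duality between $(q-1)$-strong convexity of $\tfrac12\lVert\cdot\rVert_q^2$ and $(p-1)$-smoothness of $\tfrac12\lVert\cdot\rVert_p^2$ is also a legitimate shortcut.
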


To get a bound from \Cref{lemma:OLO} that depends on the upper-bound of the $\ell_\infty$ norm of the loss vectors, we want a regularizer $R$ that is $\mu$-strongly convex w.r.t the $\ell_1$ norm for some $\mu >0$ (to be determined later). However, the function from \Cref{lemma:funR} does not work for $q = 1.$ To solve this, we set $q$ to be greater than $1$, $q = \frac{\log(d)}{\log(d)-1}$ in particular, then bound the $\ell_q$ norm from above using the $\ell_1$ norm. Specifically, setting $R(\xibf) = \frac{1}{2}\lVert\xibf\rVert_q^2$ and $q =\frac{\log(d)}{\log(d)-1}$ we have
\begin{equation*}
\begin{aligned}
R(\xibf) &= (q-1)\cdot \frac{1}{2(q-1)}\lVert\xibf\rVert_q^2 \\
&\overset{(1)}{\geq} (q-1)\frac{1}{2(q-1)}\lVert\xibf\rVert_q^2 + (q-1)\nabla\left( \frac{1}{2(q-1)}\lVert\xibf\rVert_q^2\right)^T(\yibf-\xibf) +(q-1)\cdot\frac{1}{2}\cdot\lVert\yibf-\xibf\rVert_q^2\\
&\overset{(2)}{\geq} (q-1)\frac{1}{2(q-1)}\lVert\xibf\rVert_q^2 + (q-1)\nabla\left( \frac{1}{2(q-1)}\lVert\xibf\rVert_q^2\right)^T(\yibf-\xibf) +(q-1)\cdot\frac{1}{2}\cdot\frac{\lVert\yibf-\xibf\rVert_1^2}{3}\\
&= R(\xibf) + R(\xibf)^T(\yibf-\xibf) + \frac{(q-1)/3}{2}\lVert\yibf-\xibf\rVert_1^2,
\end{aligned}    
\end{equation*}
where $\mu = \frac{q-1}{3} =\frac{\frac{\log(d)}{\log(d)-1}-1}{3} = \frac{1}{3\log(d)}.$ So, the function $R$ is $\frac{1}{3\log(d)}$-strongly convex with respect to the $\ell_1$ norm. Furthermore, Inequality~$(1)$ follows from \Cref{lemma:funR} and Inequality~$(2)$ holds because $\lVert\wibf\rVert_1/3\leq\lVert\wibf\rVert_q$ for any $\wibf\in\mathbb{R}^d.$

Consequently, by constructing an OLO problem with $\mathcal{K} = S^\circ\cap B_{2}(1)$ and $\lbf_t = -p(\xbf_t,\ybf_t)$ on each round, applying the Follow-the-Regularized-Leader algorithm with regularizer $R(\wibf) = \frac12\lVert\wibf\rVert_q^2$ to the OLO problem, and converting $\wibf_t$ to $\xibf_t$ in each round, we obtain
\begin{equation*}
\begin{aligned}
d_\infty\left(\frac1T\sum_{t=1}^T\pbf(\xbf_t,\ybf_t),S\right) \leq \frac1T\left(\sum_{t=1}^T\langle\lbf_t,\mathbf{w}_t\rangle - \underset{\wibf\in\mathcal{K}}{\min}\sum_{t=1}^T\langle\lbf_t,\wibf\rangle\right)\leq\bigO{D(\pbf)\log(d)^{1/2}T^{-1/2}}.
\end{aligned}
\end{equation*}
The last inequality follows from applying \Cref{lemma:OLO} to the OLO problem corresponding to the Blackwell game with $B=1$ and $C = D(\pbf)$. Notice that for any $\wibf\in\mathcal{K},$ $\lVert\wibf\rVert_2\leq 1 = B$ because we set $\mathcal{K} = S^{\circ}\cap B_2(1)$ in \Cref{lemma:BW-OLO}. Furthermore, since we set $\lbf_t = \pbf(\xbf_t,\ybf_t),$ we have $\lVert\lbf_t\rVert_\infty = \lVert\pbf(\xbf_t,\ybf_t)\rVert_\infty\leq D(\pbf)$ by definition.
\[\myqed\]
\endproof{}

\section{Proofs and Remarks of Section~\ref{sec:framework}}
\label{sec:appendix-framework}

\begin{example}[Non-Robust Greedy Algorithm]
\label{example:non-robust}
  In the shortest path tree problem, we are given an undirected graph $G = (V, E)$ along with a root node $u$ and edge weights $\{w_{uv}\}_{(u, v) \in E}$. We want to compute a spanning tree of $G$ such that for all vertices $v \in V$, the distance to the root in the tree, $\text{dist}_T(u, v)$, equals the distance to the root in the original graph, $\text{dist}_G(u, v)$. This problem can be solved by a greedy algorithm which runs Dijkstra's algorithm from $u$ and then for each node $v \not = u$ chooses a parent $p \in \text{neighborhood}(v)$ with the smallest $w_{vp} + \text{dist}_G(p, u)$. Suppose that we want to solve the online problem where the $G$ and $u$ are fixed over all rounds but the edge weights are chosen by an adversary.
  
  This can be translated into the language of our meta-algorithm as follows. The feasible region is to choose a parent for every non-root vertex ($\constraint = \prod_{v \in V \setminus \{u\}} \text{neighborhood}(v)$). The adversary's function space is to choose (bounded) weights ($\funcspace \cong (0, 1]^E$), and the cost of a chosen set of edges that we aim to minimize is the average distance from a random vertex to $u.$ For each of our $|V|$ subroutines, the parameter space is to choose a distribution for the parent vertex ($\paramspace = \Delta(\text{neighborhood}(v))$)\footnote{Our framework is amenable to the parameter space depending on the iteration $i$.}. The (one-dimensional) payoff vector is the shortest path from $v$ to $u$ through that parent $p$ ($\pay(\param, \zbf, \{w_{uv}\}) = \E_{p\sim\param}\left[w_{vp} + \text{dist}_G(p, u)\right]$), where $\param$ is the probability of choosing a vertex among $v$'s neighbors as parent.
  
  Managing to perfectly minimize the one-dimensional payoff vector at each iteration results in a shortest path tree and therefore the best possible objective value. However, if the local choices deviate from their optimal values, then we can create cycles which result in infinite objective value.
  
  For example, consider the clique on $V = \{1,2,3\}$, where we want a shortest path tree to the root node $u = 1$. When the weights are $w_{12} = 0.25, w_{13} = 1.0, w_{23} = 0.5$, then it would be best for node three to first take edge $(2, 3)$. If we swap the role of nodes two and three, $w_{12} = 1.0, w_{13} = 0.25, w_{23}$, then it would be best for node two to first take edge $(2, 3)$. When our subroutines for nodes two and three make simultaneous decisions without actually seeing the input, they could easily both choose edge $(2, 3)$, yielding an invalid shortest path tree and making it impossible to get from either node to the root. This global issue can't be expressed as local utilities, so the algorithm is not robust in the sense that is needed to apply our framework.
\end{example}

%\subsection{Proofs and Remarks of Section~\ref{sec:full-info}}
%\label{sec:appendix-fullinfo}
%\fullthmilo*

\section{Proofs and Remarks of Section~\ref{sec:bandit-info}}

\subsection{Proof of \texorpdfstring{\Cref{thm:bandit-blackwell-thm}}{}}
\label{sec:appendix-bandit}

In this section, we complete the proof of \Cref{thm:bandit-blackwell-thm}, which is restated below for convenience.

\banditblackwellthm*

\proof{\emph{Proof.}}
  The only if direction is proved in the sketch. To prove the if direction and second part of Theorem~\ref{thm:bandit-blackwell-thm}, we propose an algorithm $\BanditBlackwellAlg{}$ that is parameterized by an exploration  probability $q \in (0, 1]$ (\Cref{alg:bandit-blackwell}). We later choose $q$ to be $\schange{{D(\pbf)}^{-2/3}\diameter{\hat{\pbf}}^{2/3}(\log d)^{1/3}}T^{-1/3}$ to balance terms in our regret upper-bound. In each round $t$, this algorithm outputs a move $\bm{x}_t \in \advspaceB$ as well as whether to explore  $\pi_t \in \{\textsc{Yes},\textsc{No}\}$, and then receives an unbiased estimate of the resulting payoff $\hat{\pbf}(\bm{x}_t, \bm{y}_t)$ based on both players' actions if it picks to explore. It also maintains a (full-information) Blackwell algorithm $\BlackwellAlg{}$. In each round $t = 1, 2, ..., T$, our algorithm follows the last suggested action by $\BlackwellAlg{}$ to generate a move $\xbf_t$. Note that this move will be exactly the same as the previous round, if the algorithm chose to not explore in the previous round. Our algorithm then decides to either explore with probability $q$ or not explore with probability $1 - q$. If it explores, then it receives an unbiased estimator, $\hat{\pbf}$ of the current vector payoff $\pbf(\xbf_t,\ybf_t)$, and passes a scaled version $\hat{\pbf} / q$ on to $\BlackwellAlg{}$. If it does not explore, then it rewinds the state of $\BlackwellAlg{}$ to the beginning of the current round. Our goal here is to show that under  algorithm $\BanditBlackwellAlg{}$,  $\standarddistance{\frac1T \sum_{t=1}^T {\pbf}(\bm{x}_t, \bm{y}_t)}{S}$ plus  exploring penalty term $\E \left[ \tfrac1T \schange{D(\pbf)} \cdot  \text{(\# explore}) \right]$ is $\bigO{\schange{{D(\pbf)}^{1/3}\diameter{\hat{\pbf}}^{2/3}(\log d)^{1/3}}T^{-1/3}}$. 
  
  We start with bounding the first term $\standarddistance{\frac1T \sum_{t=1}^T {\pbf}(\bm{x}_t, \bm{y}_t)}{S}$ as a function of $\hat{\Pi}_T$, which is  the time-averaged rescaled estimated payoffs from the rounds that we explore in $1, 2, \ldots, T$: 
  \begin{align*}
    \hat{\Pi}_T &\triangleq \frac1T \sum_{t=1}^T \frac1q \hat{\pbf}(\bm{x}_t, \bm{y}_t) \indicator{explore in round $t$}\,.
  \end{align*}
  Specifically, we have 
    \begin{align*}
    \standarddistance{\frac1T \sum_{t=1}^T {\pbf}(\bm{x}_t, \bm{y}_t)}{S}
      &= \standarddistance{\frac1T \sum_{t=1}^T \E \left[ \frac1q \hat{\pbf}(\bm{x}_t, \bm{y}_t) \indicator{explore in round $t$} \right]}{S} \\
      &\le \E \left[ \standarddistance{\hat{\Pi}_T}{S} \right]\,,
  \end{align*}
  where the equality follows 
because $\E\left[ \frac1q \hat{\pbf}(\bm{x}_t, \bm{y}_t) \indicator{explore in round $t$}\right]=\frac{q}{q}\E\left[ \hat{\pbf}(\bm{x}_t, \bm{y}_t)\right]=\pbf(\xbf_t,\ybf_t)$ as $\hat \pbf$ is an unbiased estimator for $\pbf$, and the inequality is obtained  by applying Jensen's inequality to the convex $\ell_\infty$ distance function. 
We next show that if we explore with probability  $q,$ $\E \left[ \standarddistance{\hat{\Pi}_T}{S} \right]\le \bigO{\diameter{\hat{\pbf}}\log(d)^{1/2} (qT)^{-1/2}}$. 
Then, observe that the exploring penalty term $\E \left[ \tfrac1T \schange{D(\pbf)} \cdot  \text{(\# explore}) \right]$ equals $\schange{D(\pbf)} q$. Our choice of exploring  probability $q = \schange{{D(\pbf)}^{-2/3}\diameter{\hat{\pbf}}^{2/3}}\log(d)^{1/3}T^{-1/3}$ makes the two terms equal to $O(\schange{{D(\pbf)}^{1/3}\diameter{\hat{\pbf}}^{2/3}(\log d)^{1/3}} T^{-1/3})$, and gives us the desired bound.

   To see why  $\E \left[ \standarddistance{\hat{\Pi}_T}{S} \right]\le \bigO{\diameter{\hat{\pbf}}\log(d)^{1/2} (qT)^{-1/2}}$ when we explore with probability $q$, let $M$ be a random variable equal to the number of rounds we explore and $(\tau_1, \tau_2, \cdots, \tau_M)$ be the rounds that we explore. Note that $M \sim \text{Binomial}(T, q)$. By applying the law of total expectation, we have:
   $$
    \E \left[ \standarddistance{\hat{\Pi}_T}{S} \right]=\sum_{m=0}^T\E \left[ \standarddistance{\hat{\Pi}_T}{S} \;\middle|\; M=m\right]\Pr\left[M=m\right]~.
   $$
   
   We provide an upper bound on each term in the above summation separately. First, we handle the $M=m =0 $ case by noting $\hat{\Pi}_T=\mathbf{0}$, hence the distance from $S$ is bounded by $\diameter{\hat{\pbf}}$ in this case. Moreover, this event occurs with probability $(1 - q)^T$. \schange{See that
   \begin{align*}
       (1-q)^T &= (1-q)^{q^{-1}\cdot q\cdot T}\leq (1/e)^{qT}\le \bigO{(qT)^{-1/2}},
   \end{align*}
  and therefore $\E \left[ \standarddistance{\hat{\Pi}_T}{S} \;\middle|\; M=0\right]\Pr\left[M=0\right]\leq \bigO{\diameter{\hat{\pbf}}\log(d)^{1/2}{(qT)}^{-1/2}}$}.
   
  Now fix some $M=m \not = 0$. Assuming that $S$ is a cone (we can always lift the convex set $S$ to a cone in one dimension higher as shown in \cite{abernethy2011blackwell}), our full-information Blackwell algorithm $\BlackwellAlg{}$, who receives ``fake payoffs'' $\{\hat\pbf(\xbf_{\tau_i},\ybf_{\tau_i})\}_{i=1}^{m}$ with a diameter of $\frac{1}{q} \diameter{\hat\pbf}$,  guarantees that:
  \begin{align}
    \E \left[ \standarddistance{\hat{\Pi}_T}{S} \;\middle|\; M = m \right]
      &= \E \left[ \frac{M}{T} \cdot \standarddistance{\frac{T}{M} \hat{\Pi}_T}{S} \;\middle|\; M = m \right] \nonumber\\
      &= \frac{m}{T} \E \left[ \standarddistance{\frac1M \sum_{i=1}^M \frac1q \hat{\pbf}(\bm{x}_{\tau_i}, \bm{y}_{\tau_i})}{S} \;\middle|\; M = m \right]\nonumber \\
      &\le \frac{m}{T} \cdot \bigO{\frac{1}{q} \diameter{\hat{\pbf}}\log(d)^{1/2} m^{-1/2}}\nonumber \\
      &= \bigO{\frac{1}{qT} \diameter{\hat{\pbf}}\log(d)^{1/2} m^{1/2}}\label{eq:final-bound-q}\,,
  \end{align} where the expectation is taken w.r.t.  the randomness  in $\hat \pbf$ and the inequality holds because $S$ is response-satisfiable in the Blackwell game $\blackwellsequentialgame$ and $\hat \pbf$ is an unbiased estimator of $\pbf$. To be more clear why the above inequality holds, note that set $S$ is response-satisfiable in the Blackwell game $\blackwellsequentialgame$, and is not necessarily response-satisfiable if we replace $\pbf$ with $\hat\pbf$. However, by (i) following exactly the same steps as in proof of \Cref{def:blackwell-thm} (\Cref{sec:apx-blackwell-proof} in the appendix) to reduce Blackwell approachability to online linear optimization for rounds $\{\tau_i\}_{i=1}^M$, (ii) plugging $\hat\pbf$ as the vector payoff of each round and using $\lbf_i=-\hat\pbf(\xbf_{\tau_i},\ybf_{\tau_i})$ as the loss function in the online linear optimization, and then (iii) using the fact that $\hat \pbf$ is an unbiased estimator for $\pbf$ and $S$ is response-satisfiable w.r.t. payoffs $\pbf$, we can obtain exactly the same approachability bound in expectation as if $S$ was response-satisfiable w.r.t. payoffs $\hat\pbf$. To see this, consider the chain of inequalities~\eqref{eq:unbiased} in the proof of \Cref{def:blackwell-thm} in \Cref{sec:apx-blackwell-proof}, tailored to our problem, and take an expectation w.r.t. the randomness in $\hat\pbf$. We have:
\begin{align*}
 \E \left[ \standarddistance{\frac1M \sum_{i=1}^M  \hat{\pbf}(\bm{x}_{\tau_i}, \bm{y}_{\tau_i})}{S} \;\middle|\; \{\tau_i\}_{i=1}^M \right]&\leq \E\left[\underset{\wibf\in\mathcal{K}}{\max}\left\langle\frac1M\sum_{i=1}^M\hat\pbf(\xbf_{\tau_i},\ybf_{\tau_i}),\wibf\right\rangle\bigg|\{\tau_i\}_{i=1}^M\right]\\
 &= \E\left[\frac1M\underset{\wibf\in\mathcal{K}}{\max}\left(-\sum_{i=1}^M\langle\lbf_i,\wibf\rangle\right)\bigg|\{\tau_i\}_{i=1}^M\right]\\
&\overset{(2)}{\leq}\frac1M\left(\sum_{i=1}^M\langle -\pbf(\xbf_{\tau_i},\ybf_{\tau_i}),\wibf_i\rangle -\E\left[ \underset{\wibf\in\mathcal{K}}{\min}\sum_{i=1}^M\langle\lbf_i,\wibf\rangle\bigg|\{\tau_i\}_{i=1}^M\right]\right)\nonumber\\
&\overset{(3)}{=} \frac1M\E\left[\sum_{i=1}^M\langle\lbf_i,\wibf_i\rangle - \underset{\wibf\in\mathcal{K}}{\min}\sum_{i=1}^M\langle\lbf_i,\wibf\rangle\bigg| \{\tau_i\}_{i=1}^M\right].\nonumber
\end{align*}
This time, Inequality~(2) holds as before, because $S$ is response-satisfiabe w.r.t. payoffs $\pbf$ (and hence half-space satisfiable when using $\wibf_t$ as the normal of the half-space), but Equality~(3) holds because: $$-\langle\pbf(\xbf_{\tau_i},\ybf_{\tau_i}),\wibf_i\rangle=-\E\left[\langle\hat\pbf(\xbf_{\tau_i},\ybf_{\tau_i}),\wibf_i\rangle|\{\tau_i\}_{i=1}^M\right]=-\E\left[\langle\lbf_i,\wibf_i\rangle|\{\tau_i\}_{i=1}^M\right]$$
Note that expectation is conditioned on $\{\tau_i\}_{i=1}^M$, but only we use a universal upper-bound on the last term (regret of online linear optimization) that is a function of $M$, so we can change the conditioning on only $M$.

%   However, by (i) following exactly the same steps as in the  proof of Blackwell approachability (cf. , \cite{abernethy2011blackwell}, proof of Theorem~17) after plugging $\hat\pbf$ as the payoff of each round, and then (ii) using the fact that $\hat \pbf$ is an unbiased estimator for $\pbf$ and $S$ is response-satisfiable w.r.t. payoffs $\pbf$, we can obtain exactly the same approachability bound in expectation as if $S$ was response-satisfiable w.r.t. payoffs $\hat\pbf$. Note that expectation is taken w.r.t.  the randomness  in $\hat \pbf$. We omit the details for brevity and refer the reader to the proof of Theorem~17 in \cite{abernethy2011blackwell}.  

%   set $S$ is response-satisfiable in the Blackwell game $\blackwellsequentialgame$, we have $\standarddistance{\frac1M \sum_{i=1}^M  {\pbf}(\bm{x}_{\tau_i}, \bm{y}_{\tau_i})}{S}\le \diameter{{\pbf}} m^{-1/2}$. Then, by following the  proof of Blackwell approachability \citep{blackwell1956analog} and using the fact that $\hat \pbf$ is an unbiased estimator of $\pbf$, it is easy to show that $\E \left[ \standarddistance{\frac1M \sum_{i=1}^M  {\hat \pbf}(\bm{x}_{\tau_i}, \bm{y}_{\tau_i})}{S}\right]\le \diameter{{\hat \pbf}} m^{-1/2}$, where the expectation is taken w.r.t.  the randomness  in $\hat \pbf$. 

  We now use the bound in \eqref{eq:final-bound-q} for $q=\schange{{D(\pbf)}^{-2/3}\diameter{\hat{\pbf}}^{2/3}}\log(d)^{1/3}T^{-1/3}$, and then use Jensen's inequality applied to the (concave) square-root function.
  \begin{align*}
    \E \left[ \standarddistance{\hat{\Pi}_T}{S} + \frac1T \schange{D(\pbf)}\cdot (\text{\# explore}) \right]
      &= \E_{m \sim \text{Binomial}(T, q)} \left[ \E \left[ \standarddistance{\hat{\Pi}_T}{S} \;\middle|\; M = m \right] \right] + \bigO{\schange{{D(\pbf)}q}}\\
      &\le \E_{m \sim \text{Binomial}(T, q)} \left[ \bigO{\frac{1}{qT} \diameter{\hat{\pbf}}\log(d)^{1/2} m^{1/2}} \right] + \bigO{\schange{D(\pbf)q}} \\
      &\le \bigO{\frac{1}{qT} \diameter{\hat{\pbf}}\log(d)^{1/2} (Tq)^{1/2}} + \bigO{\schange{D(\pbf)q}} \\
      &= \bigO{\diameter{\hat{\pbf}}\log(d)^{1/2} (qT)^{-1/2}} + \bigO{\schange{D(\pbf)q}} \\
      &= \bigO{\schange{{D(\pbf)}^{1/3}D(\hat{\pbf})^{2/3}(\log d)^{1/3}T^{-1/3}}}\,.
  \end{align*}

The last inequality is the desired result.
\[\myqed\]
\endproof{}

\subsection{Proof of \texorpdfstring{\Cref{thm:banditILO}}{}}
\label{apx:banditmeta-proof}
\proof{\emph{Proof.}}
The function $\hat\pbf$ is an unbiased estimator for $\pbf$ (due to the bandit Blackwell reduciblity), so $(\algspaceB,\advspaceB,\pbf,\hat\pbf)$ is a valid instance of the bandit Blackwell sequential game. Moreover, our target set $S$ is the $\payoffdimension$-dimensional positive orthant. Therefore, there exists a polynomial-time separation oracle for set $S$. Set $S$ is also response-satisfiable (due to bandit Blackwell reduciblity). \schange{Thus, there exists a polynomial time online algorithm $\BanditBlackwellAlg{}$ that guarantees the bandit approachability upper-bound $\bigO{{D(\pbf)}^{1/3}{D(\hat{\pbf})}^{2/3}\left(\log(\payoffdimension)\right)^{1/3} T^{2/3}}$, established in \Cref{thm:bandit-blackwell-thm}, for each of the bandit Blackwell instances corresponding to the $N$ different subproblems.}

Consider a subproblem $i \in [N]$.  Note that $\BanditBlackwellAlg^{(i)}$ is not invoked in all rounds $[T]$, but rather a subset $\mathcal{T}_i \subseteq [T]$ depending on when its fellow Blackwell bandit algorithms, i.e., $\BanditBlackwellAlg^{(i')}$, $i'\in [i-1]$,  decide to explore. Note that $\mathcal{T}_i$ is a random set, and only depends on realizations of binary signals $\{\pi^{(i')}_t\}_{i'\in[i-1],t\in[T]}$. Fix a particular realization of set $\mathcal{T}_i$. By using the upper-bound in \Cref{thm:bandit-blackwell-thm} for each of the terms in the LHS of the bound (i.e., the distance of the average payoff vector from set $S$ and expected number of explorations) separately,  we have
  \begin{align*}
    \standarddistance{\frac{1}{|\mathcal{T}_i|} \sum_{t \in \mathcal{T}_i} \pbf\left(
      \param_t^{(i)}, \advfunB(\zbf_t^{(i-1)},f_t)
    \right)}{S} &\le
    \schange{\bigO{{D(\pbf)}^{1/3}{D(\hat{\pbf})}^{2/3}\left(\log(\payoffdimension)\right)^{1/3} \lvert \mathcal{T}_i\rvert^{-1/3}}}\,.
  \end{align*}
  Moreover, let ${\mathcal{T}}_i^+$, ${\mathcal{T}}_i^-$, $M_i$ be the rounds where $\BanditBlackwellAlg{}^{(i)}$ explores, the rounds where $\BanditBlackwellAlg{}^{(i)}$ exploits, and the number of rounds that $\BanditBlackwellAlg{}^{(i)}$ explores respectively, i.e. $M_i = |{\mathcal{T}}_i^+|$.  Then, by our choice of $q= {{D(\pbf)}^{-2/3}\diameter{\hat{\pbf}}^{2/3}(\log d)^{1/3}T^{-1/3}}$, we have 
  \begin{align*}
      \E \left[ \frac{1}{|\mathcal{T}_i|} M_i \bigg| \mathcal{T}_i \right]
     & = {D(\pbf)}^{-2/3}{D(\hat{\pbf})}^{2/3}\left(\log(\payoffdimension)\right)^{1/3}\lvert {T}\rvert^{-1/3}\\
      &\leq \schange{{{D(\pbf)}^{-2/3}{D(\hat{\pbf})}^{2/3}\left(\log(\payoffdimension)\right)^{1/3}\lvert \mathcal{T}_i\rvert^{-1/3}}}~,
  \end{align*} 
Let $\mathcal{T}^-$ be the set of rounds where no $\BanditBlackwellAlg{}^{(i)}$ explored and $\mathcal{T}^+$ be the set of rounds where some $\BanditBlackwellAlg{}^{(i)}$ explored. Note also that $\mathcal{T}^- \subseteq \mathcal{T}_i$, simply because if no algorithm explores, $\BanditBlackwellAlg^{(i)}$ will be invoked. Notice that for the rounds where $\BanditBlackwellAlg{}^{(i)}$ is invoked and not exploring, but there exists some subroutine $j>i$ that explores, no feedback was given to $\BanditBlackwellAlg{}^{(i)}$, so we can think of it as if $\BanditBlackwellAlg{}^{(i)}$ is not being invoked. Hence, combining this with the fact that the set $S$ is the positive orthant, we have:
  \begin{align*}
    \forall j\in [n]:~~ \left[ \sum_{t \in (\mathcal{T}_i\setminus\mathcal{T}^+)\cup\mathcal{T}_i^+} \pbf\left(
      \param_t^{(i)}, \advfunB(\zbf_t^{(i-1)},f_t)
    \right) \right]_j \ge -\schange{\bigO{{D(\pbf)}^{1/3}{D(\hat{\pbf})}^{2/3}\left(\log(\payoffdimension)\right)^{1/3} \lvert (\mathcal{T}_i\setminus\mathcal{T}^+)\cup\mathcal{T}_i^+\rvert^{2/3}}}\,.
  \end{align*}
Furthermore, due to Blackwell reduciblity, which is a precondition for bandit Blackwell reduciblity (see Definitions \ref{def:blackwell-reducible} and \ref{def:bandit-blackwell-reducible}), $
\pay\left(\param_t^{(i)},\zbf_t^{(i-1)},f_t\right)=\pbf\left(\param_t^{(i)},\advfunB(\zbf_t^{(i-1)},f_t)\right)$. Thus,
  \begin{align}
  \label{eq:all-terms}
    \forall j\in[n]:~~ \left[ \sum_{t \in (\mathcal{T}_i\setminus\mathcal{T}^+)\cup\mathcal{T}_i^+} \pay(\param_t^{(i)}, \zbf_t^{(i-1)},f_t) \right]_j &\ge -\schange{\bigO{{D(\pbf)}^{1/3}{D(\hat{\pbf})}^{2/3}\left(\log(\payoffdimension)\right)^{1/3} \lvert (\mathcal{T}_i\setminus\mathcal{T}^+)\cup\mathcal{T}_i^+\rvert^{2/3}}}\,.
  \end{align}
For the rounds where no $\BanditBlackwellAlg{}^{(i)}$ explore, for any $j\in [n]$, we have 
  \begin{align*}
       \E \left[ \sum_{t \in \mathcal{T}^-} \pay(\param_t^{(i)}, \zbf_t^{(i-1)},f_t)\bigg|\mathcal{T}_i \right]_j& =\E \left[ \sum_{t \in (\mathcal{T}_i\setminus\mathcal{T}^+)\cup\mathcal{T}_i^+} \pay(\param_t^{(i)}, \zbf_t^{(i-1)},f_t)\bigg|\mathcal{T}_i \right]_j \\&-
  \E \left[ \sum_{t \in \mathcal T_i^+} \pay(\param_t^{(i)}, \zbf_t^{(i-1)},f_t)\bigg|\mathcal{T}_i \right]_j\\
  &\ge  - \schange{\bigO{{D(\pbf)}^{1/3}{D(\hat{\pbf})}^{2/3}\left(\log(\payoffdimension)\right)^{1/3} \lvert (\mathcal{T}_i\setminus\mathcal{T}^+)\cup\mathcal{T}_i^+\rvert^{2/3}}}- \diameter{{\pbf}}\E\left[M_i|\mathcal{T}_i\right]\,,
  \end{align*}
  where the expectation is with respect to $\zbf_t^{(i-1)}$, $t\in \mathcal T_i$. The equality holds because $\mathcal{T}^-\cup\mathcal{T}_i^+ = (\mathcal{T}_i\setminus\mathcal{T}^+)\cup\mathcal{T}_i^+$ and $\mathcal{T}^-\cap\mathcal{T}_i^+ = \emptyset$. The inequality follows from Equation \eqref{eq:all-terms} and the fact that $M_i = |\mathcal{T}_i^+|$ and for any $i\in [N]$ and $t\in [T]$,  $\pay(\param_t^{(i)}, \zbf_t^{(i-1)},f_t) \le \diameter{{ \pbf}}$.
%  We now strip the extra terms $\mathcal{T}_i \setminus \mathcal{T}^-$ from summation in Equation \eqref{eq:all-terms}. Noting there are at most $M_i$ such terms and each one has a magnitude of at most $\diameter{{\pbf}}$, we have:
  %\begin{align*}
   % \forall j\in [n]:~~ \E \left[ \sum_{t \in \mathcal{T}^-} \pay(\param_t^{(i)}, \zbf_t^{(i-1)},f_t)\bigg|\mathcal{T}_i \right]_j &\ge - \bigO{\diameter{{\pbf}}\left(\log(\payoffdimension)\right)^{1/3}\lvert \mathcal{T}_i\rvert ^{2/3}}-\diameter{{\hat\pbf}}\E\left[M_i|\mathcal{T}_i\right]\,.
 % \end{align*}
  By considering the fact that \[\E\left[M_i|\mathcal{T}_i\right]~=~ {{D(\pbf)}^{-2/3}\diameter{\hat{\pbf}}^{2/3}(\log d)^{1/3}T^{-1/3}} |\mathcal T_i| ~\le~  \schange{{{D(\pbf)}^{-2/3}{D(\hat{\pbf})}^{2/3}\left(\log(\payoffdimension)\right)^{1/3}\lvert \mathcal{T}_i\rvert^{2/3}}}\]
  from our choice of  the probability of exploring, $q$, in \Cref{thm:bandit-blackwell-thm}, $\lvert \mathcal{T}_i\rvert\leq T$ and $\lvert(\mathcal{T}_i\setminus\mathcal{T}^+)\cup\mathcal{T}_i^+\rvert\leq T$, we have:
  \begin{align}
  \label{eq:bandit-final}
    \forall j\in[n]:~~ \E \left[ \sum_{t \in \mathcal{T}^-} \pay(\param_t^{(i)}, \zbf_t^{(i-1)},f_t) \right]_j &\ge - \schange{\bigO{{D(\pbf)}^{1/3}{D(\hat{\pbf})}^{2/3}\left(\log(\payoffdimension)\right)^{1/3} T^{2/3}}}\,.
  \end{align}
Because the offline algorithm $\offlinemeta$ (\Cref{alg:offline-meta}) is an extended $(\gamma, \delta)$-robust approximation, by focusing on rounds in $\mathcal{T}^{-}$ and applying Inequality~\eqref{eq:bandit-final}, together with linearity of expectation, we have:
  \begin{align*}
  \label{eq:type-one}
    \E\left[\sum_{t \in \mathcal{T}^-} {f_t(\zbf_t)}\right] &\geq \gamma \cdot \E\left[\sum_{t \in \mathcal{T}^-} f_t(\zbf^*)\right]-\schange{\bigO{{D(\pbf)}^{1/3}{D(\hat{\pbf})}^{2/3}N\delta\left(\log(\payoffdimension)\right)^{1/3} T^{2/3}}}\,,
  \end{align*}
  where $\zbf^*= \argmax_{\zbf \in \constraint} \sum_{t=1}^T f_t(\zbf)$ is the optimal in-hindsight solution. 
  
Finally, note that $\banditmeta$ does not explore too often in total among its subproblems. More precisely, 
  \begin{align*}
    \E \left[ |\mathcal{T}^+| \right] = \E \left[ \sum_{i=1}^N M_i \right]
                            \le \sum_{i=1}^N \bigO{\left(\log(\payoffdimension)\right)^{1/3}\E\left[\mathcal{T}_i^{2/3}\right]}
                            \le \bigO{N \left(\log(\payoffdimension)\right)^{1/3}T^{2/3}}\,.
  \end{align*} 
Noting the fact that the functions $f_t$ have output value at most $1$, for the remaining rounds $\mathcal{T}^+$ we have:
\begin{align}
 \E\left[\sum_{t \in \mathcal{T}^+} {f_t(\zbf_t)}\right] &\geq \gamma \cdot \E\left[\sum_{t \in \mathcal{T}^+} f_t(\zbf^*)\right]- \bigO{N\left(\log(\payoffdimension)\right)^{1/3}T^{2/3}}\,.
  \end{align}
  Combining the two types of bounds in rounds $\mathcal{T}^{-}$ and $\mathcal{T}^{+}$ yields the desired claim.
\[\myqed\]
\endproof{}

\subsection{Bandit Blackwell Regret Lowerbound}
\label{app:bandit-blackwell-lowerbound}

In this section, we show that in a Bandit Blackwell sequential game, $\banditblackwellsequentialgame,$ the distance from the time-averaged payoff to the target set $S$ plus the time-averaged exploring penalty of any prediction strategy must be at least $\bigOmega{\Dbottom T^{-1/3}}$, where $\Dbottom = \min\{D(\pbf),\schange{D(\pbf)}\}$. Put differently, we show that the performance bound proved in \Cref{thm:bandit-blackwell-thm} is unimprovable with respect to $T$ (the number of rounds), i.e., no other strategies can have a better performance for all problems.

\begin{theorem}
\label{thm:bandit-blackwell-lower-bound}
In a bandit Blackwell sequential game, $\banditblackwellsequentialgame,$  there exists an adversary's strategy such that for every player 1's strategy, the resulting sequence of actions satisfy:
  \begin{align*}
    \standarddistance{\frac1T \sum_{t=1}^T \pbf(\bm{x}_t, \bm{y}_t)}{S} + \E \left[ \frac1T \schange{D(\pbf)} \cdot (\text{\# explore}) \right] &\ge \bigOmega{\Dbottom T^{-1/3}}\,.
  \end{align*}
where (\# explore) is the number of exploration rounds and $\Dbottom = \min\{D(\pbf),\schange{D(\pbf)} \}.$
\end{theorem}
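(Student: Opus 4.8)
The plan is to prove this matching lower bound by exhibiting a single hard instance of the bandit Blackwell game together with a randomized oblivious adversary, and then running a two-point (Le Cam) information-theoretic argument that pits the exploration penalty against the approachability error. In fact I would prove the stronger bound $\bigOmega{\diameter{\pbf}^{1/3}\diameter{\hat\pbf}^{2/3}T^{-1/3}}$, which dominates $\bigOmega{\Dbottom T^{-1/3}}$ since $\Dbottom=\min\{\diameter{\pbf},\diameter{\hat\pbf}\}\le \diameter{\pbf}^{1/3}\diameter{\hat\pbf}^{2/3}$; this simultaneously shows that the upper bound of \Cref{thm:bandit-blackwell-thm} is tight in every parameter, not just in $T$.

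The instance I would use is: $d=2$, $\algspaceB=\advspaceB=[0,1]$, the two-dimensional affine (hence biaffine) payoff $\pbf(x,y)=\big(\diameter{\pbf}(y-x),\,\diameter{\pbf}(x-y)\big)$, and $S$ the positive orthant $\{w: w_1\ge 0,\ w_2\ge 0\}$. This $S$ is response-satisfiable (for each $y$, playing $x=y$ gives $\pbf=\mathbf 0\in S$), so by \Cref{def:blackwell-thm} it is approachable and the $T^{-1/3}$ bound is non-trivial; crucially it is \emph{not} satisfiable, i.e.\ no single $x$ is good against all $y$, so there is no universal hedge. By biaffinity the $\ell_\infty$ distance of the time-averaged payoff to $S$ equals $\diameter{\pbf}\,|\bar x-\bar y|$ with $\bar x=\frac1T\sum_t x_t$. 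For the estimator I let $\hat\pbf$ have independent coordinates taking values $\pm\diameter{\hat\pbf}$ with means equal to the corresponding coordinates of $\pbf$, so that $\hat\pbf$ is unbiased, has $\ell_\infty$-diameter $\diameter{\hat\pbf}$, and has per-sample variance $\Theta(\diameter{\hat\pbf}^2)$. The adversary draws a hidden type $\theta\sim\mathrm{Unif}\{+1,-1\}$ and plays the constant sequence $y_t=\tfrac12+\theta\delta$ for a parameter $\delta$ fixed at the end.

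For the core argument, fix any (possibly randomized) player-1 strategy and let $P_+,P_-$ be the laws of the full transcript under $\theta=+$ and $\theta=-$. The transcript depends on $\theta$ only through the feedback on exploration rounds, where the two estimator means differ by exactly $2\diameter{\pbf}\delta$ \emph{regardless} of the current action $x_t$ (the $x_t$-dependence cancels). Hence by the KL chain rule along the explored rounds, $\KL(P_+\Vert P_-)\le \bigO{m\,\diameter{\pbf}^2\delta^2/\diameter{\hat\pbf}^2}$ with $m=\E[\#\text{explore}]$, and Pinsker gives $\mathrm{TV}(P_+,P_-)\le \bigO{\sqrt m\,\diameter{\pbf}\delta/\diameter{\hat\pbf}}$. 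A standard two-point bound then yields $\E_+|\bar x-y_+|+\E_-|\bar x-y_-|\ge \bigOmega{\delta(1-\mathrm{TV})}$, since $\bar x$ is a single statistic whose two laws are TV-close while the targets are $2\delta$ apart. Averaging the objective $\diameter{\pbf}|\bar x-y_\theta|+\tfrac1T\diameter{\pbf}\cdot(\#\text{explore})$ over $\theta$, I split into two cases: if $\mathrm{TV}\le\tfrac12$ the distance term alone is $\bigOmega{\diameter{\pbf}\delta}$; otherwise $\sqrt m\,\diameter{\pbf}\delta/\diameter{\hat\pbf}=\bigOmega{1}$ forces $m=\bigOmega{\diameter{\hat\pbf}^2/(\diameter{\pbf}^2\delta^2)}$, so the penalty is $\bigOmega{\diameter{\hat\pbf}^2/(\diameter{\pbf}\delta^2 T)}$. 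Choosing $\delta=\diameter{\pbf}^{-2/3}\diameter{\hat\pbf}^{2/3}T^{-1/3}$ equalizes the two, and both evaluate to $\bigOmega{\diameter{\pbf}^{1/3}\diameter{\hat\pbf}^{2/3}T^{-1/3}}$. Because this randomized oblivious adversary forces the bound in expectation against every player strategy, it matches exactly the $\E[\cdot]$ quantity in the theorem statement.

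The main obstacle I anticipate is the adaptive information-theoretic bookkeeping in the core argument: since player 1 decides whether and when to explore based on past feedback, the number and timing of informative observations are themselves random and data-dependent, so the KL chain rule must be applied along the realized (stopping-time–like) exploration schedule rather than a fixed set of rounds, and $m=\E[\#\text{explore}]$ must be handled symmetrically under $P_+$ and $P_-$. Additional care is needed to convert the Le Cam bound on the scalar statistic $\bar x$ into a uniform lower bound on the $\ell_\infty$ distance to $S$ under the player's internal randomization, and to check that $\delta\le\tfrac12$ in the relevant regime so that $y_t\in[0,1]$ remains a legal adversary action.
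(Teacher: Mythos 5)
Your proposal is correct in substance but follows a genuinely different route from the paper's proof. The paper builds an $n$-armed-bandit-style instance ($\algspaceB=\Delta([n])$, $\advspaceB=\{0,1\}^n$, payoff $\xbf^T\ybf\mathbf{1}_n-\ybf$, target set the non-positive orthant), randomizes the identity of a single slightly-better arm uniformly over $[n]$, compares each hypothesis to a base measure via the KL chain rule over explored rounds plus Pinsker, and then conditions on the number of explorations $M$ to prove the conditional bound $\bigOmega{\Dbottom/\sqrt{M}}$ (Lemma~\ref{lemma:lower-bound}) before trading it off against the $\bigOmega{\Dbottom M/T}$ penalty at $M=T^{2/3}$; Yao's principle handles randomized players. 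You instead use a two-dimensional scalar instance with a two-point Le Cam argument and a case split on the total variation distance, which replaces the paper's conditioning on $M$ (a step that is somewhat delicate, since $M$ is itself strategy- and data-dependent) by a direct bound on $\E[\#\text{explore}]$ extracted from the KL. Your version is more elementary (no base measure, no averaging of square-rooted KLs over $n$ hypotheses) and buys something the paper's instance does not: tightness in both diameters, since the paper's construction fixes $\diameter{\pbf}=\diameter{\hat\pbf}=O(1)$ and only certifies the $T^{-1/3}$ rate, whereas your $\bigOmega{\diameter{\pbf}^{1/3}\diameter{\hat{\pbf}}^{2/3}T^{-1/3}}$ matches Theorem~\ref{thm:bandit-blackwell-thm} in every parameter except $\log d$. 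What it gives up is any hope of a dimension-dependent lower bound, for which the paper's $n$-armed structure is the natural vehicle. The obstacles you flag are the right ones and are all routine; the one additional point to make explicit is that the per-round bound $\KL=\bigO{\diameter{\pbf}^2\delta^2/\diameter{\hat{\pbf}}^2}$ requires the bias of the $\pm\diameter{\hat\pbf}$-valued estimator to stay uniformly bounded away from $\pm1$ over all player actions $x_t\in[0,1]$ — otherwise the likelihood ratio degenerates and Pinsker is vacuous — which your centering at $\tfrac12$ together with $\delta\le\tfrac14$ and $\diameter{\pbf}\le\diameter{\hat\pbf}$ does guarantee, and you should also dispose of the regime where the balanced $\delta$ exceeds $\tfrac14$ by capping $\delta$ and observing that then no feasible amount of exploration can make the total variation large.
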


\proof{\emph{Proof of \Cref{thm:bandit-blackwell-lower-bound}.}} Let $M$ be a random variable equal to the number of rounds the player explores. We first show in that if the number of rounds that the player explores at is at most $M,$ then there exists a Bandit Blackwell instance: an adversary's action $(\bm{y}_1,\ldots,\bm{y}_T),$ a convex closed set $S$, and an affine payoff $\pbf$ together with an unbiased estimator function $\hat{\pbf}$ such that:
  \begin{align*}
    \E\left[\standarddistance{\frac1T \sum_{t=1}^T \pbf(\bm{x}_t, \bm{y}_t)}{S}\bigg|M\right] &\ge \bigOmega{\dfrac{\Dbottom}{\sqrt{M}}}\,
  \end{align*}
for any player's strategies $(\bm{x}_1,\ldots,\bm{x}_T)$, where the expectation is taken over the randomness in the adversary's strategy. We later show this statement in Lemma \ref{lemma:lower-bound}. For now, we assume that the statement is true. Since the Bandit Blackwell total regret defined in Definition \ref{def:bandit-blackwell-approach} includes another term for the cost of exploring, the total regret conditioned on $M$ is
\begin{align*}
    \E\left[\standarddistance{\frac1T \sum_{t=1}^T \pbf(\bm{x}_t, \bm{y}_t)}{S}\bigg|M\right] + \E \left[ \frac1T\schange{D(\pbf)} \cdot (\# \text{explore}) \bigg|M\right] &{\ge} \bigOmega{\dfrac{\Dbottom}{\sqrt{M}}} + \bigOmega{\dfrac{\Dbottom M}{T}} \\
    &\overset{(1)}{\ge} \bigOmega{\Dbottom T^{-1/3}},
\end{align*}
where Inequality $(2)$ follows from setting $M = T^{2/3}$; notice that at $M = T^{2/3},$ $\frac{\underline{D}}{\sqrt{M}} = \frac{\underline{D} M}{T}$ and $\bigOmega{\frac{\underline{D}}{\sqrt{M}}} + \bigOmega{\frac{\underline{D} M}{T}}$ is minimized. Again, the expectation here is with respect to the adversary's strategy. Now, taking another expectation with respect to $M$, we have
\begin{align*}
    &\E\left[\standarddistance{\frac1T \sum_{t=1}^T \pbf(\bm{x}_t, \bm{y}_t)}{S}\right] + \E \left[ \frac1T \schange{D(\pbf)} \cdot (\# \text{explore})\right] \\&= \E\left[\E\left[\standarddistance{\frac1T \sum_{t=1}^T \pbf(\bm{x}_t, \bm{y}_t)}{S}\bigg|M\right]\right] + \E\left[\E \left[ \frac1T \schange{D(\pbf)} \cdot (\# \text{explore}) \bigg|M\right]\right]
    \ge \bigOmega{{\Dbottom}T^{-1/3}}.
\end{align*}
This completes the proof.
\[\myqed\]
\endproof{}

We now prove the lower bound on the distance from the average payoff to $S$ when the number of exploration rounds is $M$. As is common in proofs of lower bounds, we construct a random sequence of similar adversaries and show that with $M$ rounds of explorations, it is impossible to distinguish the different types of adversaries without suffering a regret less than $\tfrac{\underline{D}}{\sqrt{M}}.$

\begin{lemma}
\label{lemma:lower-bound}
In a Bandit Blackwell problem, if the number of exploration rounds is at most $M,$ there exists an adversary's strategy $(\bm{y}_1,\ldots,\bm{y}_T)$, a convex closed set $S$, and an affine payoff $\pbf$ together with an unbiased estimator function $\hat{\pbf}$ such that for any strategies $(\bm{x}_1,\ldots,\bm{x}_T),$
  \begin{align*}
    \E\left[\standarddistance{\frac1T \sum_{t=1}^T \pbf(\bm{x}_t, \bm{y}_t)}{S}\bigg | M\right] &\ge \bigOmega{\dfrac{{\Dbottom}}{\sqrt{M}}}\,,
  \end{align*}
where the expectation is taken with respect to the adversary's strategies.
\end{lemma}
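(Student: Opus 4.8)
The plan is to construct a randomized adversary that plays a fixed action throughout the game but whose payoff structure carries a hidden $\pm$ sign in one coordinate that the player can only learn by exploring. Because an unbiased estimate of the payoff has variance, after only $M$ exploration rounds the player cannot confidently determine the sign, and any choice of actions leaves the time-averaged payoff at $\ell_\infty$-distance $\bigOmega{\Dbottom/\sqrt{M}}$ from the target set in expectation. This is the classic ``hard instance'' / information-theoretic argument familiar from stochastic bandit lower bounds, adapted to the Blackwell approachability geometry.

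\textbf{Construction.} First I would work in dimension $d=1$ (or embed a one-dimensional gadget into $\mathbb{R}^d$), since a single confusable coordinate suffices for an $\ell_\infty$ bound. I would take the target set to be the nonnegative ray $S=\{u : u \ge 0\}$, so that $\standarddistance{w}{S}=\max\{0,-w\}$. The adversary secretly draws a sign $\epsilon\in\{-1,+1\}$ uniformly and fixes an action $\bm{y}_0$ so that the true scalar payoff is $\pbf(\bm{x}_t,\bm{y}_0)=c\,\epsilon$ for a suitable scale $c$ of order $\Dbottom$, with the estimator $\hat{\pbf}$ being an unbiased but \emph{noisy} observation (e.g.\ $\pm D(\pbf)$-valued Bernoulli-type random variable whose mean equals the true payoff). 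The key point is that the estimator range is governed by $D(\pbf)$ and the true payoff range by $\schange{D(\pbf)}$, so the confusability scale is exactly $\Dbottom=\min\{D(\pbf),\schange{D(\pbf)}\}$. The player, observing only $M$ samples, must decide whether the payoffs are consistently positive or consistently negative.

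\textbf{Information-theoretic step.} The heart of the argument is the two-point (Le Cam / Pinsker) method: let $\Pp_+$ and $\Pp_-$ be the laws of the player's observations under $\epsilon=+1$ and $\epsilon=-1$. With $M$ exploration rounds, the KL divergence $\KL(\Pp_+\,\|\,\Pp_-)$ scales like $M\cdot (c/D(\pbf))^2$, and by Pinsker's inequality the total-variation distance is $\bigO{\sqrt{M}\,c/D(\pbf)}$. Choosing $c\asymp \Dbottom/\sqrt{M}$ keeps the two instances statistically indistinguishable (total variation bounded away from $1$), yet forces any decision to be wrong with constant probability under at least one of the two hypotheses. Under the ``wrong'' hypothesis the time-averaged payoff lands on the negative side of $S$ with magnitude $\asymp c\asymp\Dbottom/\sqrt{M}$, giving the claimed $\bigOmega{\Dbottom/\sqrt{M}}$ lower bound on the conditional expected distance. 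I would then average over the uniform prior on $\epsilon$ to obtain the bound stated in the lemma.

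\textbf{Main obstacle.} The subtle part is handling the \emph{non-explored} rounds correctly: the player still accrues payoffs in rounds where $\pi_t=\textsc{No}$, and those payoffs also depend on the hidden sign $\epsilon$, so I must ensure the averaged-payoff distance is genuinely controlled by the player's \emph{ignorance} of $\epsilon$ rather than being trivially fixable by the non-explored actions. Because the adversary's action is constant and the payoff is affine, the per-round payoff is $c\epsilon$ regardless of $\bm{x}_t$ (the gadget makes the payoff sign independent of the player's move), so the time-average is simply $c\epsilon$; this collapses the dependence on the player's actions and is what makes the distance uncontrollable without learning $\epsilon$. Verifying that such an affine $\pbf$ together with a valid unbiased $\hat{\pbf}$ of the prescribed diameters exists, and that the conditioning on $M$ (a player-chosen stopping structure) does not leak extra information, is the delicate bookkeeping I expect to occupy most of the proof.
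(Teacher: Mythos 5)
Your information-theoretic skeleton (hypothesis testing over a hidden parameter, Pinsker's inequality, KL divergence accruing only in the $M$ exploration rounds, tuning the gap to $\asymp 1/\sqrt{M}$) is exactly the engine the paper uses. But your concrete construction breaks the proof. You make the payoff equal to $c\epsilon$ \emph{independently of the player's action}, so that under the $\epsilon=-1$ adversary \emph{no} action lands in $S=\{u\ge 0\}$. That set is then not response-satisfiable, so the lower bound is vacuous: it no longer witnesses any cost of bandit feedback, since the distance to $S$ is bounded away from zero even for a fully-informed player. The lemma is only meaningful (and is only used in Theorem \ref{thm:bandit-blackwell-lower-bound}) for instances where $S$ \emph{is} approachable, so that the $\Omega(\Dbottom/\sqrt{M})$ gap is attributable to the player's ignorance rather than to impossibility.

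The deeper problem is that no one-dimensional gadget can be repaired. For a scalar biaffine payoff and a convex one-dimensional target (a ray or halfspace), response-satisfiability implies, via the minimax theorem, that a single fixed action $\bm{x}$ satisfies $\pbf(\bm{x},\bm{y})\in S$ for \emph{every} $\bm{y}$; an oblivious player then achieves distance zero with no exploration at all. (You can see this in your own attempted fix: if you let the payoff depend on $x$, say $\pbf(x,y_\epsilon)=c\,\epsilon(2x-1)$, the player plays $x=\tfrac12$ and is always in $S$.) This is why the paper's construction is intrinsically $n$-dimensional: it takes $\algspaceB=\Delta([n])$, $\advspaceB=\{0,1\}^n$, $\pbf(\bm{x},\bm{y})=\bm{x}^T\bm{y}\mathbf{1}_n-\bm{y}$, $S$ the non-positive orthant, and hides a uniformly random "good" coordinate $\zeta$ with Bernoulli mean elevated by $\mu$. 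Each containing halfspace is satisfiable, yet no fixed mixed action guards all $n$ coordinates simultaneously, so the $\ell_\infty$ distance forces the player to \emph{identify} $\zeta$, which costs $\Omega(n/\mu^2)$ exploration samples; setting $\mu=\lambda\sqrt{n/M}$ then yields $\Omega(1/\sqrt{M})$. To complete your argument you would need to replace your two-point scalar gadget with such a multi-coordinate family of confusable adversaries and run your Pinsker/KL computation against a base measure, summing over the $n$ hypotheses — which is precisely the paper's proof.
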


\proof{\emph{Proof of Lemma \ref{lemma:lower-bound}}}
We only prove our lower bound for a deterministic player. Note that any randomized strategy can be expressed as a randomization of deterministic strategies, and based on Yao's minimax principle (\cite{yao1977probabilistic}), our lower bound still holds when we average them over several deterministic strategies according to some randomization. We refer to \cite{cesa2006prediction} for details on deriving an identical lower bound for a randomized adversary from a deterministic adversary.

Recall that $M$ is the random variable of the number of rounds that the player explores. Consider a fixed $M$. From this point onward (until the end of the proof), every probability and expectation are conditioned on $M$, we remove the dependence on $M$ on the notations for simplicity. Let $\algspaceB = \Delta([n])$, $\advspaceB = \{0,1\}^n$, the payoff function $\pbf(\bm{x},\bm{y}) = \bm{x}^T\bm{y}\mathbf{1}_n-\bm{y},$ and $S$ be the non-positive orthant, i.e. $S = \{\mathbf{s}\in\mathbb{R}^n|[\mathbf{s}]_j\leq 0~\forall j\in[n]\}.$  For deterministic strategies, $\bm{x}$ must be equal to $\mathbf{e}_z$ for some coordinate $z$, which happens when player 1 plays action $z\in[n].$ In that case, $\left[\pbf(\bm{x},\bm{y})\right]_j = [\bm{y}]_z-[\bm{y}]_j$ for all $j\in[n].$ 

We now define the adversary's strategy. For each round $t\in[T]$ and coordinate $j\in[n],$ let $[\bm{y}_t]_j$ be Bernoulli random variables whose joint distribution are defined as follows. We first pick a random variable $\zeta\sim \text{Uniform}\{1,2,\ldots,n\}.$ Then, given that $\zeta=i,$ $[\bm{y}_1]_j,[\bm{y}_2]_j,\ldots,[\bm{y}_T]_j$ are conditionally independent Bernoulli random variables with parameter $(1-\mu)/2$ if $j\neq i,$ and $(1+\mu)/2$ if $j=i,$ where $\mu<1/4$ (will be specified later). For analysis purposes, we define another move for the adversary, which we call the base move: all $[\bm{y}_1]_j,[\bm{y}_2]_j,\ldots,[\bm{y}_T]_j$ are conditionally independent Bernoulli variables with parameter $(1-\mu)/2.$ Suppose that this happens when $\zeta=0$ (just for ease of notations).

Let $I_t$ be the player's action (in $\{1,2,\ldots,n\}$) on round $t$, and $\pi_t$ be the exploring  indicator in round $t$:   $\pi_t = 1$ if we explore in round $t,$ and $0$ otherwise. Let $\bm{\eta}_{\tme} = (\pi_1,\ldots,\pi_t)$ be the history of exploration decisions  up to round $t.$ Since the player is deterministic, $I_t$ is determined by  $\left(\pbf(\bm{x}_1,\bm{y}_1),\pbf(\bm{x}_2,\bm{y}_2),\ldots,\pbf(\bm{x}_{t-1},\bm{y}_{t-1}),{\bm{\eta}_{t-1}}\right).$ Also, let $T_j = \sum_{t=1}^T\mathbbm{1}\left[{I_t=j}\right]$ be the number of times action $j$ is played in the first $T$ rounds. We further define  $\Pp_j$ and $\E_j$ as $\Pp(\cdot|\zeta=j)$  $\E(\cdot|\zeta=j)$, respectively. More rigorously, if $\mathcal{A}$ is a $\sigma$-algebra generated by all possible outcomes of the game, $\Pp_j$ is a measure on the $\sigma-$algebra $\mathcal{A}$ and $\E_j$ is an expectation taken with respect to the conditional probability $\Pp_j$, which solely depends on the adversary's move since we assume that player 1's strategy is deterministic.

Recall that for any $j\in [n]$, when $\zeta= j$, playing action $j$ has the highest average reward than any other actions. Then, we have
\begin{equation*}
\begin{aligned}
\E_j\left[\standarddistance{\frac1T \sum_{t=1}^T \pbf(\bm{x}_t, \bm{y}_t)}{S}  \right] &\overset{(1)}{\ge}
\underset{z\in[n]}{\textrm{max}}~ \E_j\left[\left[\frac1T \sum_{t=1}^T \pbf(\bm{x}_t, \bm{y}_t)\right]_z  \right]\\
&= \frac1T \underset{z\in[n]}{\max}~ \E_j\left[\sum_{t=1}^T\left([\bm{y}_t]_z - [\bm{y}_t]_{I_t}\right)  \right] \ge \frac1T \E_j\left[\sum_{t=1}^T\left([\bm{y}_t]_j - [\bm{y}_t]_{I_t}\right)  \right] 
\\&= \frac1T \sum_{t=1}^T\E_j[[\bm{y}_t]_j -[\bm{y}_t]_{I_t}]\\&\overset{(2)}{=} \frac1T \sum_{t=1}^T\mu\E_j[\mathbbm{1}\left({I_t\neq j}\right)]\\&= \frac1T \mu \underset{j'\neq j}\sum\E_j[T_{j'}] = \frac1T \mu (T - \E_j[T_j]) \\
&=  \left(\mu - \frac{\mu}{T}\E_j[T_j]\right).
\end{aligned}
\end{equation*}
Inequality $(1)$ follows because $S$ is the non-positive orthant. Equality $(2)$ follows because $\E_j[[\bm{y}_t]_{j'}]$ is $(1+\mu)/2$ when $j'=j$ and $(1-\mu)/2$ otherwise, so the difference between $\E_j[[\bm{y}_t]_{j}]$ and $\E_j[[\bm{y}_t]_{I_t}]$ is $\mu$ when $I_t\neq j$ and $0$ otherwise. 

As for each $j\in[n],$ since the event $\{\zeta=j\}$ happens with probability $\frac{1}{n},$ we have
\begin{equation}
\label{eq:supdist}
    \sup~\E\left[\standarddistance{\frac1T \sum_{t=1}^T \pbf(\bm{x}_t, \bm{y}_t)}{S}\right] \ge  \mu \left(1 - \frac{1}{nT}\sum_j\E_j[T_j]\right),
\end{equation}
where the expectation is taken with respect to the adversary's move and the supremum is taken over $\zeta\in\{1,2,\ldots,n\}$ since the $\zeta$ picked by the adversary in the beginning of the game determines his whole strategy. The proof now reduces to bounding $\E_j[T_j]$ from above. We do this by comparing $\E_j[T_j]$ with $\E_0[T_j].$ If player 1 chooses action $i$ at round $t$ and decides to explore, i.e., $I_t=i$ and $\pi_t=1,$ he then observes the payoff $[\bm{y}_t]_i.$ Recall that $\bm{y}_t$ is the random variable that represents adversary's move at round $t,$ where $\bm{y}_t\in \mathcal{Y}=\{0,1\}^n.$ For any sequence of history $(\mathbf{H}_t,\bm{\eta}_t)$ where $\mathbf{H}_t = ([\bm{y}_1]_{I_1},\ldots,[\bm{y}_t]_{I_t}) = (h_1,\ldots,h_t)\in\{0,1\}^t$ and $\bm{\eta}_t = (\pi_1,\ldots,\pi_t)\in\{0,1\}^t$, let
\begin{equation*}
    \chi_{t,j}(\mathbf{H}_t,\bm{\eta}_t) = \Pp_j\left([\bm{y}_1]_{I_1}=h_1,\ldots,[\bm{y}_t]_{I_t}=h_t,\bm{\eta}_t\right).
\end{equation*}
Note that $\Pp_j$ is a measure on the $\sigma-$algebra $\mathcal{A}$ as mentioned above, and the randomness comes from the adversary's moves (the adversary plays a randomized $\bm{y}_t$ at time $t,$ where the $j^{\text{th}}$ coordinate of $\bm{y}_t$ is a Bernoulli variable with mean either $(1+\mu)/2$ or $(1-\mu)/2$ depending on his choice of $\zeta$ at the beginning of the game). From our assumption that the player is deterministic, for any $\mathbf{H}_T\in\{0,1\}^T$ and history of exploration $\bm{\eta}_T$. Then,
\begin{equation}
\label{eq:lb-1}
    \E_i\Big[T_j\Big|[\bm{y}_1]_{I_1}=h_1,\ldots,[\bm{y}_T]_{I_T}=h_T,\bm{\eta}_T\Big] = \E_0\Big[T_j\Big|[\bm{y}_1]_{I_1}=h_1,\ldots,[\bm{y}_T]_{I_T}=h_T,\bm{\eta}_T\Big],~\forall 1\leq i\leq n,
\end{equation}
which means that no matter what the initial $\zeta$ decided by the adversary is, the player has the same sequence of moves given the same history. Therefore,
\begin{equation}
\begin{aligned}
\label{eq:lb-long}
\E_j[T_j]-\E_0[T_j] &= \underset{\mathbf{H}_T\in\{0,1\}^T,\bm{\eta}_T\in\{0,1\}^T}{\sum}\chi_{T,j}(\mathbf{H}_T,\bm{\eta}_T)\E_j\Big[T_j\Big|[\bm{y}_1]_{I_1}=h_1,\ldots,[\bm{y}_T]_{I_T}=h_T,\bm{\eta}_T\Big] \\&- \underset{\mathbf{H}_T\in\{0,1\}^T,\bm{\eta}_T\in\{0,1\}^T}{\sum}\chi_{T,0}(\mathbf{H}_T,\bm{\eta}_T)\E_0\Big[T_j\Big|[\bm{y}_1]_{I_1}=h_1,\ldots,[\bm{y}_T]_{I_T}=h_T,\bm{\eta}_T\Big]\\
&\overset{(1)}{=}\underset{\mathbf{H}_T\in\{0,1\}^T,\bm{\eta}_T\in\{0,1\}^T}{\sum} (\chi_{T,j}(\mathbf{H}_T,\bm{\eta}_T) - \chi_{T,0}(\mathbf{H}_T,\bm{\eta}_T))\E_j\Big[T_j\Big|[\bm{y}_1]_{I_1}=h_1,\ldots,[\bm{y}_T]_{I_T}=h_T,\bm{\eta}_T\Big]\\
&\leq \underset{(\mathbf{H}_T,\bm{\eta}_T):\chi_{T,j}(\mathbf{H}_T,\bm{\eta}_T)>\chi_{T,0}(\mathbf{H}_T,\bm{\eta}_T)}{\sum}(\chi_{T,j}(\mathbf{H}_T,\bm{\eta}_T) - \chi_{T,0}(\mathbf{H}_T,\bm{\eta}_T))\E_j\Big[T_j\Big|[\bm{y}_1]_{I_1}=h_1,\ldots,[\bm{y}_T]_{I_T}=h_T,\bm{\eta}_T\Big]\\ 
&\overset{(2)}{\leq} T\underset{(\mathbf{H}_T,\bm{\eta}_T):\chi_{T,j}(\mathbf{H}_T,\bm{\eta}_T)>\chi_{T,0}(\mathbf{H}_T,\bm{\eta}_T)}{\sum}(\chi_{T,j}(\mathbf{H}_T,\bm{\eta}_T) - \chi_{T,0}(\mathbf{H}_T,\bm{\eta}_T)),
\end{aligned}
\end{equation}
    where Equation $(1)$ follows from \Cref{eq:lb-1} and Inequality $(2)$ follows from $\E_j[T_j|[\bm{y}_1]_{I_1}=h_1,\ldots,[\bm{y}_T]_{I_T}=h_T,\bm{\eta}_T]\leq T.$ See that $\sum_{j=1}^n\E_0[T_j]=T$ since on each round, player 1's action is in $\{1,2,\ldots,n\}.$ We can bound the total variation using Pinsker's inequality:~\footnote{\emph{Pinsker's inequality bounds the total variation distance in terms of KL divergence. For two probability distributions $P$ and $Q$, Pinsker's inequality states that $||P-Q||_{\text{TV}}\leq\sqrt{\frac12\KL(P||Q)}$ where $||P-Q||_{\text{TV}}$ is the total variation distance $ \sup_{A}\{|P(A)-Q(A)|\}$ over measurable events $A$. Taking $A = \{x:P(x)>Q(x)\},$ we get $\sum_{x:P(x)>Q(x)}|P(x)-Q(x)|\leq\sqrt{\frac12\KL(P||Q)}.$ See Section A.2 \cite{cesa2006prediction} for details.}}
\begin{equation}
\label{eq:lb-2}
    \underset{(\mathbf{H}_T,\bm{\eta}_T):\chi_{T,j}(\mathbf{H}_T,\bm{\eta}_T)>\chi_{T,0}(\mathbf{H}_T,\bm{\eta}_T)}{\sum}(\chi_{T,j}(\mathbf{H}_T,\bm{\eta}_T) - \chi_{T,0}(\mathbf{H}_T,\bm{\eta}_T))\leq\sqrt{\dfrac{1}{2}\KL(\chi_{T,0}(\mathbf{H}_T,\bm{\eta}_T)||\chi_{T,j}(\mathbf{H}_T,\bm{\eta}_T))}~.
\end{equation} 
Putting Equations (\ref{eq:lb-long}) and (\ref{eq:lb-2}) together and applying Jensen's inequality to the concave the square root function, we get
\begin{equation}
\begin{aligned}
\label{eq:lb-7}
    \frac{1}{n}\sum_{j=1}^n\E_j[T_j]&\geq\frac1n\sum_{j=1}^n\left(\E_0[T_j]+T\underset{\mathbf{H}_T:\chi_{T,j}(\mathbf{H}_T,\bm{\eta}_T)>\chi_{T,0}(\mathbf{H}_T,\bm{\eta}_T)}{\sum}(\chi_{T,j}(\mathbf{H}_T,\bm{\eta}_T) - \chi_{T,0}(\mathbf{H}_T,\bm{\eta}_T))\right)\\
    &{\leq} T\left(\frac{1}{n}+\frac{1}{n}\sum_{j=1}^n\sqrt{\dfrac{1}{2}\KL(\chi_{T,0}(\mathbf{H}_T,\bm{\eta}_T)||\chi_{T,j}(\mathbf{H}_T,\bm{\eta}_T))}\right)\\
    &\leq T\left(\frac{1}{n}+\sqrt{\dfrac{1}{2n}\sum_{j=1}^n\KL(\chi_{T,0}(\mathbf{H}_T,\bm{\eta}_T)||\chi_{T,j}(\mathbf{H}_T,\bm{\eta}_T))}\right).
\end{aligned}
\end{equation}
Recall that from the definition of $\chi_{t,j},$ we have the following conditional distribution:
\begin{equation*}
    \chi_{t,j}(h_t|\mathbf{H}_{t-1},\bm{\eta}_{t-1}) = \Pp_j\left([\bm{y}_t]_{I_t}=h_t|[\bm{y}_1]_{I_1}=h_1,\ldots,[\bm{y}_{t-1}]_{I_{t-1}}=h_{t-1},\bm{\eta}_{t-1}\right).
\end{equation*}
Applying the chain rule, we have
\begin{equation*}
\begin{aligned}
\KL(\chi_{T,0}||\chi_{T,j}) 
&\overset{(1)}{=} \sum_{t=1}^T\underset{\mathbf{H}_{t-1},\bm{\eta}_{t-1}}{\sum}\chi_{t-1,0}(\mathbf{H}_{t-1},\bm{\eta}_{t-1})\KL\left(\chi_{t,0}(\cdot|\mathbf{H}_{t-1},\bm{\eta}_{t-1})\middle|\middle|\chi_{t,j}(\cdot|\mathbf{H}_{t-1},\bm{\eta}_{t-1})\right)\\
&\overset{(2)}{=} \sum_{t=1}^T\underset{\mathbf{H}_{t-1},\bm{\eta}_{t-1}}{\sum}\chi_{t-1,0}(\mathbf{H}_{t-1},\bm{\eta}_{t-1})\mathbbm{1}\left({\{I_t=j\text{ and } \pi_t=1|\mathbf{H}_{t-1},\bm{\eta}_{t-1}\}}\right)\KL\left(\frac{1-\mu}{2}\middle|\middle|\frac{1+\mu}{2}\right)\\
&\overset{(3)}{=} \KL\left(\frac{1-\mu}{2}\middle|\middle|\frac{1+\mu}{2}\right)\E_0\left[\sum_{t=1}^T\mathbbm{1}\left({\{I_t=j\text{ and } \pi_t=1\}}\right)\right].
\end{aligned}
\end{equation*}
Here, Equation $(1)$ follows from applying the chain rule to $\chi_{T,0}$ and $\chi_{T,j}.$ Equation $(2)$ holds because $\chi_{t,0}(\cdot|\mathbf{H}_{t-1},\bm{\eta}_{t-1}) = \Ber\left(\tfrac{1-\mu}{2}\right)$ and $\chi_{t,j}(\cdot|\mathbf{H}_{t-1},\bm{\eta}_{t-1}) = \Ber\left(\tfrac{1+\mu}{2}\right)$ when we play the arm $j$ on round $t,$ $I_t = j,$ and observe the payoff, $\pi_t = 1.$ Otherwise, they are identical and we have $\KL\left(\chi_{t,0}(\cdot|\mathbf{H}_{t-1},\bm{\eta}_{t-1})\middle|\middle|\chi_{t,j}(\cdot|\mathbf{H}_{t-1},\bm{\eta}_{t-1})\right) = 0.$ Lastly, we get Equation $(3)$ by factoring out $\KL\left(\tfrac{1-\mu}{2}||\tfrac{1+\mu}{2}\right)$, and collecting all the probability terms ($\chi_{t,0}(\mathbf{H}_t,\mu_t)$ for all $t$) to form the expectation of $\mathbbm{1}\left({\{I_t=j\text{ and } \pi_t=1\}}\right)$ with respect to $\Pp_0.$ 

Summing over $j$ and applying $\KL(p||q)\leq\frac{(p-q)^2}{q(1-q)}$:
\begin{equation}
\label{eq:lb-3}
\begin{aligned}
\sum_{j=1}^n \KL(\chi_{T,0}||\chi_{T,j}) &= \KL\left(\frac{1-\mu}{2}\middle|\middle|\frac{1+\mu}{2}\right)\sum_{j=1}^n \E_0\left[\sum_{t=1}^T\mathbbm{1}\left({\{I_t=j\text{ and } \pi_t=1\}}\right)\right]
\\&= \KL\left(\frac{1-\mu}{2}\middle|\middle|\frac{1+\mu}{2}\right)\E_0\left[\sum_{t=1}^T\sum_{j=1}^n \mathbbm{1}\left({\{I_t=j\text{ and } \pi_t=1\}}\right)\right] 
\\&= \KL\left(\frac{1-\mu}{2}\middle|\middle|\frac{1+\mu}{2}\right)\E_0\left[\sum_{t=1}^T\mathbbm{1}\left({\{\pi_t=1\}}\right)\right] \leq \dfrac{4\mu^2}{1-\mu^2}M,
\end{aligned}
\end{equation}
where the last line follows from the assumption that the number of rounds the player explores is $M$.

Putting Equation (\ref{eq:supdist}), (\ref{eq:lb-7}) and (\ref{eq:lb-3}) altogether we get:
\begin{equation}
    \E_j\left[\standarddistance{\frac1T \sum_{t=1}^T \pbf(\bm{x}_t, \bm{y}_t)}{S} \right] \ge  \mu \left(1-\frac{1}{n} - \sqrt{\frac{1}{2n} \dfrac{4\mu^2}{1-\mu^2}M}\right)\ge  \mu \left(1-\frac{1}{n} - 4\mu\sqrt{\dfrac{M}{6n}}\right),
\end{equation}
where the last inequality follows from $\mu\leq1/4.$ Taking $\mu = \lambda\sqrt{\frac{n}{M}},$ we have:
\begin{equation}
    \E_j\left[\standarddistance{\frac1T \sum_{t=1}^T \pbf(\bm{x}_t, \bm{y}_t)}{S} \right]\ge  \lambda\sqrt{\dfrac{n}{ M}}\left(\frac{1}{2} - \frac{4\lambda}{\sqrt{6}}\right) \geq \bigOmega{\frac{1}{\sqrt{ M}}} = \bigOmega{\frac{{\Dbottom}}{\sqrt{ M}}},
\end{equation}
where the last equality follows from ${\Dbottom} \leq D(\pbf) = 1$ in this case since the adversary's move is in $\{0,1\}^n.$
We finish the proof by choosing the constant $\lambda$ to be small enough to ensure that $\left( \tfrac12 - \tfrac{4\lambda}{\sqrt{6}} \right)$ is positive.
\[\myqed\]
\endproof

\section{Application to Non-monotone (Continuous) Submodular Maximization}
\label{subsec:usm}
% Consider an unconstrained submodular maximization problem, either discrete or continuous, as in \Cref{sec:prelim}. In the discrete case, we want to find $S\subseteq[n]$ that maximizes (possibly non-monotone) submodular function $f$, while in the continuous case, we want to find $x \in [0,1]^n$ that maximizes (possibly non-monotone) continuous submodular function $f$. As mentioned in \Cref{sec:prelim}, in the offline setting, the base double greedy algorithm \citep{buchbinder2015tight} and its continuous extension \citep{niazadeh2018optimal} algorithm achieve a tight $\tfrac12$-approximation for the discrete and continuous case respectively. We extend these results to the online full-information and bandit settings using our framework. In the online full-information setting, in every round $t \in [T]$, the algorithm chooses a point $\vec{z}_t$ and receives the adversarially-chosen function $f_t$. In the online bandit setting, the algorithm only receives $f_t(\vec{z}_t)$ instead. In both settings, the algorithm is evaluated against the best fixed point $\vec{z}^*$ in hindsight. 
\paragraph{Problem definition.} Consider the \textsc{Non-monotone submodular maximization (NSM)} problem, for both set  and continuous functions. For set functions, our goal is to maximize a non-monotone submodular set function without any constraints, and for continuous functions, our goal is to maximize a non-monotone continuous submodular function, either weak-DR  or strong-DR, over the unit hypercube $[0,1]^n$; see the definition of  weak-DR  and strong-DR continuous submodular functions below.

\textit{Continuous submordular functions.} We defined set submodular functions in  Example \ref{example:running-1}.  The concept of submodularity can be extended from subset lattice (above definition) to any discrete or continuous lattice. In particular, by considering the positive orthant cone lattice, we can define  the continuous variant of set submodularity.  A continuous multivariate function $f:[0,1]^n\rightarrow[0,1]$ is submodular if for all $\bm{x},\bm{y}\in[0,1]^n,$
\begin{equation*}
   f(\bm{x}\vee\bm{y})+f(\bm{x}\wedge\bm{y})\leq f(\bm{x}) + f(\bm{y})\,,
\end{equation*}
where $\vee$ and $\wedge$ are coordinate-wise max and min operations. As an equivalent definition~\citep{bian2016guaranteed}, $f$ is continuous submodular if for all $i\in[n]$, $z\in[0,1]$, $\xbf_{-i}\preceq \ybf_{-i}\in[0,1]^{n-1}$, and $\delta\geq 0,$
\begin{equation*}
  f(z+\delta,\bm{x}_{-i})-f(z,\bm{x}_{-i})\geq~ f(z+\delta,\bm{y}_{-i})-f(z,\bm{y}_{-i})\,.
\end{equation*}
\iffalse
\begin{definition}[Continuous submodularity]
\label{def:cont-sm}
A continuous multivariate function $f:[0,1]^n\rightarrow[0,1]$ is submodular if for all $\bm{x},\bm{y}\in[0,1]^n,$
\begin{equation*}
   f(\bm{x}\vee\bm{y})+f(\bm{x}\wedge\bm{y})\leq f(\bm{x}) + f(\bm{y})\,,
\end{equation*}
where $\vee$ and $\wedge$ are coordinate-wise max and min operations. As an equivalent definition~\citep{bian2016guaranteed}, $f$ is continuous submodular if for all $i\in[n]$, $z\in[0,1]$, $\xbf_{-i}\preceq \ybf_{-i}\in[0,1]^{n-1}$, and $\delta\geq 0,$
\begin{equation*}
  f(z+\delta,\bm{x}_{-i})-f(z,\bm{x}_{-i})\geq~ f(z+\delta,\bm{y}_{-i})-f(z,\bm{y}_{-i})\,.
\end{equation*}
\end{definition}
\fi
 
 {The above class of continuous functions is also referred to as the weak-Diminishing Return (weak-DR) submodular in the literature \cite{wolsey1982analysis, bach2019submodular}.} We further consider a special subclass of these functions satisfying concavity along each coordinate, referred to as the strong-Diminishing Return (strong-DR).
%\begin{definition}[Strong-DR continuous Submodularity]
 % \label{def:strong-dr-sm}
  A continuous multivariate function $f:[0,1]^n\rightarrow[0,1]$ is strong-DR submodular if for all $i\in[n], \bm{x}\preceq \bm{y}\in[0,1]^n$, and $\delta\geq 0,$
\begin{equation*}
  f(x_i+\delta,\bm{x}_{-i})-f(\bm{x})\geq~ f(y_i+\delta,\bm{y}_{-i})-f(\bm{y})\,,
\end{equation*}
where $\bm{x}_{-i}$ (resp. $\bm{y}_{-i}$) is an $(n-1)$-dimensional vector with all coordinate values of $\bm{x}$ (resp. $\bm{y}$) except $i$, and $\bm{x}\preceq\bm{y}$ if and only if $\forall j\in[n]: x_j\leq y_j$.
%\end{definition}

\textit{Offline problems for submodular functions.} For set functions, the offline algorithm of \citealp{buchbinder2015tight} gives a $1/2$-approximation factor, which is known to be the best possible approximation factor with polynomial query calls to the function~\citep{feige2011maximizing}. For the continuous case, under both weak-DR and strong-DR submodularity, the offline algorithm of \citealp{niazadeh2018optimal} gives a $1/2$-approximation factor for Lipschitz continuous functions, which again achieves the best possible approximation factor with polynomial query calls to the function. 

To have a unified offline problem and algorithm capturing both of the above variations, we first consider a slight reformulation where a continuous (weak-DR) submodular function is restricted to a discrete domain $\coordinatevalues^n$ instead of $[0,1]^n$. Here, $\coordinatevalues = \{\coordinatevalue_1, \coordinatevalue_2, \ldots, \coordinatevalue_m\}$ is the finite set of possible coordinate values, where  $|\coordinatevalues| = m$ and $\coordinatevalue_1 < \coordinatevalue_2 < \cdots < \coordinatevalue_m$ are real numbers. Note that $\coordinatevalues=\{0,1\} $ when we focus on set functions. For Lipschitz continuous functions, one should think of $\coordinatevalues^n$ as an $\epsilon$-net that discretizes the function with $O(\epsilon)$ additive error due to Lipschitzness.

Given this unified setting, we essentially consider discrete functions $f:\coordinatevalues^n\rightarrow[0,1]$ that satisfy a  discrete version of (weak-DR) submodularity. This property is exactly the same as continuous submodularity, with a slight modification that we only consider points $\xbf\in\coordinatevalues^n$. Given such a function, our goal in the offline problem is to solve the optimization problem $\max_{\zbf\in \coordinatevalues^n} f(\zbf)$. We refer to this problem as \emph{discretized submodular maximization}. Note that this problem is an instance of problem~\eqref{eq:offline-optimization}, where both $\domain$ and the feasible region $\constraint$ are $\coordinatevalues^n$, and our function class is the class of submodular functions $f$ described above.

Inspired by the algorithms in \cite{buchbinder2015tight} and \cite{niazadeh2018optimal}, we then present a unified offline algorithm (which essentially is an adaptation of the algorithm in \cite{niazadeh2018optimal} restricted to the discrete domain $\coordinatevalues^n$) with the same $1/2$-approximation factor for the proposed unified offline problem. This is presented in Algorithm \ref{alg:usm-single}.  We then transform this offline algorithm to online full-information and bandit learning algorithms using our framework. 

% algorithm in \citealp{niazadeh2018optimal} that is restricted to a discrete set, instead of $[0,1]^n$.  that easily extends to both set functions (by using a binary decision for every element) and the discretized version of the continuous submodular maximization problem that will be defined later. We then transform this offline algorithm to online full-information and bandit learning algorithms using our framework. 

% Our offline algorithm is presented in Algorithm \ref{alg:usm-single}. To have a unified offline algorithm, the input to this algorithm is a \emph{discrete submodular function} $f:\coordinatevalues\rightarrow [0,1]$, whose domain is restricted to finite set $\coordinatevalues = \{\coordinatevalue_1, \coordinatevalue_2, \ldots, \coordinatevalue_m\}$, where $|\coordinatevalues| = m$, {$\coordinatevalue_1 < \coordinatevalue_2 < \cdots < \coordinatevalue_m$.} (

% where $f$ can be either continuous or discrete. Recall that in the discrete case, 
%   $f:\{0,1\}^n\rightarrow[0,1]$, and in the continuous case,  $f:[0,1]^n\rightarrow[0,1]$. , we discretize the domain of the continuous submodular function and restrict its domain to 

  %Throughout this section we use the notation $(z',\zbf_{-i})$ to denote the point constructed by taking $\zbf$ and replacing its $i^{\text{th}}$ coordinate value with $z',$ and $f(z',\zbf_{-i})$ to denote the function evaluated at the corresponding point. Let $\zbf^* = (z^*_1,\ldots,z^*_n)$ denote the optimal point in $\coordinatevalues^n$ that maximizes $f.$ 
 \Cref{alg:usm-single} is a modified version of the continuous randomized bi-greedy algorithm by \cite{niazadeh2018optimal}. The difference between \Cref{alg:usm-single} and  the continuous randomized bi-greedy algorithm is discussed in \Cref{apx:NSM-discussion} in the appendix. Throughout this section, we use the notation $(z',\zbf_{-i})$ to denote the point constructed by taking $\zbf$ and replacing its $i^{\text{th}}$ coordinate value with $z',$ and $f(z',\zbf_{-i})$ to denote the function evaluated at the corresponding point. The algorithm keeps track of two points: \emph{lower bound} $\usmlower^{(i)}$ and \emph{upper bound} $\usmupper^{(i)}$, where initially, $\usmlower^{(0)}=(\rho_1, \ldots, \rho_1)$, and $\usmupper^{(0)}=(\rho_m, \ldots, \rho_m)$. The lower and upper bounds get updated as the algorithm goes through $n$ subproblems. In subproblem $i$, the algorithm decides about 
the   $i^{\text{th}}$ coordinate: it sets the $i^{\text{th}}$ coordinate to $z_i'$, where  $z_i'$ is drawn from      distribution $\param^{(i)} \in \setofdistributions(\coordinatevalues)$. Here, this distribution is chosen in a way to  satisfy the following condition $\E_{z' \sim \param^{(i)}} \left[ \frac12 \alpha^{(i)}(z') + \frac12 \beta^{(i)}(z') - \zeta^{(i)}({\hat{z}}, z') \right] \ge 0$ for all ${\hat{z}} \in \coordinatevalues$. %, where 
  %In this case, the proxy of optimality for coordinate $i$ is to satisfy  for any  %having a non-negative expected value of $\tfrac{1}{2}\alpha^{(i)}(z_i') + \tfrac{1}{2}\beta^{(i)}(z_i') - \zeta^{(i)}(z^*_i,z_i')$ with respect to the distribution $\param^{(i)}$,
% $\alpha^{(i)}, \beta^{(i)},$ and $\zeta^{(i)}$ are defined in \Cref{alg:usm-single}. %\negin{I added this footnote here; maybe if we have enough space, later we can bring it to the main body.}
 Note that $\alpha^{(i)}(z') = f(z', \usmlower^{(i-1)}_{-i}) - f(\usmlower^{(i-1)})$ is the marginal value of increasing the value of $i^{\text{th}}$-coordinate from $\rho_1$ to $z'$ when the rest of coordinates are $\usmlower^{(i-1)}_{-i}$, and similarly $\beta^{(i)}(z') = f(z', \usmupper^{(i-1)}_{-i}) - f(\usmupper^{(i-1)})$ is the marginal value of decreasing the $i^{\text{th}}$ coordinate  from $\rho_m$ to $z'$ when the rest of coordinates are $\usmupper^{(i-1)}_{-i}$. Moreover, $\zeta^{(i)}({{\hat{z}}}, z')$ is equal to $\alpha^{(i)}({\hat{z}}) - \alpha^{(i)}(z')$ if $\hat z\ge z'$ and $\beta^{(i)}({\hat{z}}) - \beta^{(i)}(z')$ otherwise.
 Roughly speaking, $\zeta^{(i)}({{\hat{z}}}, z')$ measures the extent to which setting the $i^{\text{th}}$ coordinate  to $z'$, rather than $\hat z$, is locally suboptimal. With this interpretation, 
 the aforementioned condition ensures that the algorithm's choice for the $i^{\text{th}}$ coordinate approximately compensates for the cost caused by the suboptimality of this choice.  
  We refer the readers to \cite{niazadeh2018optimal} for a more detailed discussion on the intuition behind this condition.

  We now show how to cast the above algorithm as an instance of $\offlinemetatext$ (\Cref{alg:offline-meta}). In the language of Algorithm \ref{alg:offline-meta}, the aforementioned condition can be presented using the following $\pay$ function:
  \begin{equation}\label{eq:usmpay}
   j\in[m]:~~ \left[\pay\left(\param^{(i)},\usmlower^{(i-1)},f\right)\right]_j = \E_{z' \sim \param^{(i)}} \left[ \frac12 \alpha^{(i)}(z') + \frac12 \beta^{(i)}(z') - \zeta^{(i)}({\rho_j}, z') \right]\ge 0.
\end{equation}
  Moreover, we have $\Theta = \Delta(\coordinatevalues), \paramdimension = |\coordinatevalues| = m,$ and $\zbf$ is the vector $\usmlower$ that starts as $(\rho_1,\ldots,\rho_1)^T$ then gets updated at each iteration.\footnote{For any $i\in [n]$, one can construct $\usmupper^{(i)}$ from $\usmlower^{(i)}$ by replacing its last $n-1$ coordinates with $\rho_m$. Thus, it suffices to define $\pay$ as a function of $\usmlower^{(i)}$. } 

\begin{algorithm}
    \caption{Greedy Algorithm for Discretized NSM \citep{niazadeh2018optimal}}
    \label{alg:usm-single}
    \textbf{Input:} Discrete submodular function $f$.\\
    \textbf{Output:} Point $\zbf \in \coordinatevalues^n$.\\
    Set initial \emph{lower bound} $\usmlower^{(0)} \leftarrow (\coordinatevalue_1, \coordinatevalue_1, \ldots, \coordinatevalue_1)^T$ and \emph{upper bound} $\usmupper^{(0)} \leftarrow (\coordinatevalue_m, \coordinatevalue_m, \ldots, \coordinatevalue_m)^T$. \\
    \For{coordinate $i = 1, 2, \ldots, n$}{\vspace{0.2em}
      %Compute current \emph{upper bound}, denoted by $\usmupper^{(i-1)}$: $[\usmupper^{(i-1)}]_j =[\zbf^{(i-1)}]_j$ for $j\in [i-1]$ and $[\usmupper^{(i-1)}]_j =\rho_m$ for $j=i, \ldots, n$. \negin{See if you like this.} \negin{Here $\zbf^{(i-1)}$ should be $\usmlower^{(n)}$ as we did not even define $\zbf^{(i-1)}$ in the algorithm.}\\ %$\usmupper^{(i-1)} \leftarrow \zbf^{(i-1)} \vee (\underbrace{\coordinatevalue_1, \ldots, \coordinatevalue_1}_{\text{first $(i-1)$ coordinates}},
                                                                            %\underbrace{\coordinatevalue_m, \ldots, \coordinatevalue_m}_{\text{last $(n-i+1)$ coordinates}})^T$. \negin{This equation looks weird.} \\
      Define the lower marginal function $\alpha^{(i)}: \coordinatevalues \to [-1, +1]$ as $\alpha^{(i)}(z') = f(z', \usmlower^{(i-1)}_{-i}) - f(\usmlower^{(i-1)})$.\\
      Define the upper marginal function $\beta^{(i)}: \coordinatevalues \to [-1, +1]$ as $\beta^{(i)}(z') = f(z', \usmupper^{(i-1)}_{-1}) - f(\usmupper^{(i-1)})$.\\
      Define comparison function $\zeta^{(i)}: \coordinatevalues \times \coordinatevalues \to [-1, +1]$ as
      \begin{align*}
        \zeta^{(i)}({{\hat{z}}}, z') = \left\{\begin{array}{lr}
          \alpha^{(i)}({\hat{z}}) - \alpha^{(i)}(z') & \text{if } {\hat{z}} \ge z' \\
          \beta^{(i)}({\hat{z}}) - \beta^{(i)}(z') & \text{if } {\hat{z}} \le z'
        \end{array}\right.\,.
      \end{align*}
      
      {\underline{Local Optimization Step}\\}
      Choose $\param^{(i)} \in \setofdistributions(\coordinatevalues)$ so that for all ${\hat{z}} \in \coordinatevalues$, 
      \begin{equation}
      \label{eq:payoff-usm}
      \begin{aligned}
        \E_{z' \sim \param^{(i)}} \left[ \frac12 \alpha^{(i)}(z') + \frac12 \beta^{(i)}(z') - \zeta^{(i)}({\hat{z}}, z') \right] \ge 0
      \end{aligned}
      \end{equation}
      (done in \cite{niazadeh2018optimal} via preprocessing and computing a 2D convex hull).\\
      {\underline{Local Update Step}\\}
      Sample ${z_i'} \sim \param^{(i)}$. Set $\usmlower^{(i)} \leftarrow \usmlower^{(i-1)}$ and $\usmupper^{(i)} \leftarrow \usmupper^{(i-1)}$ and then update their $i^{th}$ coordinate:  $\indexintovector{\usmlower^{(i)}}{i} \leftarrow {z_i'}$ and $\indexintovector{\usmupper^{(i)}}{i} \leftarrow {z_i'}$.
    
    }
    \Return{$\zbf \leftarrow \usmlower^{(n)}$}.
\end{algorithm}

\begin{theorem}[Online learning for discretized non-monotone submodular maximization]
\label{thm:usm}
  Let  $n$ be the number of dimensions and $\coordinatevalues = \{\coordinatevalue_1, \coordinatevalue_2, \ldots, \coordinatevalue_m\}$ be the set of potential values that each coordinate $i\in [n]$ can take. %$m$ be the number of feasible coordinate values \negin{We should discuss our discretization before presenting the theorem} 
   Assume that the maximum function value is normalized to one. Then, for the problem of maximizing a non-monotone submodular function in the online full-information setting, there exists a learning algorithm that obtains $\bigO{n T^{1/2} (\log m)^{1/2}}$ $\tfrac12$-regret, where $T$ is the number of rounds. Furthermore, in the online bandit setting, there exists an online learning algorithm that obtains $\bigO{n m\schange{^{2/3}} T^{2/3} (\log m)^{1/3}}$ $\tfrac12$-regret. Here, in both online algorithms, the benchmark in the regret bounds is $\frac{1}{2}\max_{\zbf\in \feasiblereserves^n}\sum_{t=1}^T f_t(\zbf)$.
\end{theorem}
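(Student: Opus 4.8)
The plan is to derive both bounds as direct instantiations of the two meta-theorems already proved: \Cref{thm:full-info-online-meta} for the full-information setting and \Cref{thm:banditILO} for the bandit setting. Both meta-theorems require verifying exactly two properties of the offline algorithm (\Cref{alg:usm-single}): that it is an extended $(\tfrac12,\delta)$-robust approximation (\Cref{def:extended-robust-approx}) for a constant $\delta$, and that it is (bandit) Blackwell reducible (\Cref{def:blackwell-reducible,def:bandit-blackwell-reducible}). Once these are in place, I would plug in the problem-specific parameters $N=n$ (one subproblem per coordinate), $\payoffdimension=m$ (the vector payoff in \eqref{eq:usmpay} is indexed by the $m$ candidate coordinate values $\coordinatevalue_j$), and $\gamma=\tfrac12$, together with bounds $\diameter{\pbf}=O(1)$ and $\diameter{\hat{\pbf}}=O(m)$, to obtain $O(\diameter{\pbf}N\delta\sqrt{\log(\payoffdimension)T})=O(n\sqrt{T\log m})$ in full information and $O(\diameter{\pbf}^{1/3}\diameter{\hat{\pbf}}^{2/3}N\delta(\log\payoffdimension)^{1/3}T^{2/3})=O(nm^{2/3}T^{2/3}(\log m)^{1/3})$ in the bandit setting.

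The extended robustness step is where I would lift the analysis of the continuous randomized bi-greedy algorithm of \cite{niazadeh2018optimal} to the discretized domain $\coordinatevalues^n$ and to the summed-over-runs form of \Cref{def:extended-robust-approx}. The standard bi-greedy proof tracks a potential argument along a hybrid trajectory that interpolates between the lower bound $\usmlower^{(i)}$, the upper bound $\usmupper^{(i)}$, and the optimal point, bounding the per-coordinate loss by the local suboptimality measured through $\zeta^{(i)}$. Here I would run this argument simultaneously for all functions $f_1,\dots,f_T$, summing the per-coordinate inequalities over $t$, so that whenever $\big[\sum_t \pay(\tilde\param_t^{(i)},\usmlower_t^{(i-1)},f_t)\big]_j+h(T)\ge 0$ for every $j$, the slack $h(T)$ enters additively and telescopes across the $n$ coordinates, yielding the $\tfrac12\sum_t f_t(\hat{\zbf})-\delta\, n\, h(T)$ guarantee with $\delta$ a universal constant.

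For Blackwell reducibility I would take player~1's action space to be $\algspaceB=\paramspace=\setofdistributions(\coordinatevalues)$, let the synthetic adversary function $\advfunB$ package the local marginal data (the values of $\alpha^{(i)},\beta^{(i)}$, equivalently the relevant evaluations of $f$ at $\usmlower^{(i-1)}$ and $\usmupper^{(i-1)}$) into $\advspaceB$, and define the biaffine payoff $\pbf$ so that $\pbf(\param,\advfunB(\usmlower^{(i-1)},f))$ reproduces the right-hand side of \eqref{eq:usmpay}; biaffineness holds because each coordinate is linear in $\param$ through the expectation $\E_{z'\sim\param^{(i)}}[\cdot]$ and affine in the marginal data. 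The target set $S$ is the positive orthant, and response-satisfiability is exactly the assertion that the local optimization step of \Cref{alg:usm-single} always succeeds in finding a distribution $\param^{(i)}$ satisfying \eqref{eq:payoff-usm}, which \cite{niazadeh2018optimal} guarantee via the 2D convex-hull computation. Since the payoff is a fixed affine combination of a bounded number of $f$-evaluations with $f\in[0,1]$, $\diameter{\pbf}=O(1)$.

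The main obstacle, as in the reserve-price application, is the bandit reduction: constructing an exploration sampling device $\unbiasedestimator$ producing $(\wbf_{\textrm{exp}},\zbf_{\textrm{exp}})$ such that a single evaluation $f(\zbf_{\textrm{exp}})$ yields an unbiased estimate $\hat\pbf=f(\zbf_{\textrm{exp}})\,\wbf_{\textrm{exp}}$ of the whole payoff vector \eqref{eq:usmpay}. Unlike the monotone running example, each coordinate $j$ of the payoff combines evaluations of $f$ at points obtained from \emph{both} $\usmlower^{(i-1)}$ and $\usmupper^{(i-1)}$, and the comparison term $\zeta^{(i)}(\coordinatevalue_j,z')$ carries a case split on whether $\coordinatevalue_j\ge z'$ or $\coordinatevalue_j\le z'$. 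My plan is to enumerate the $O(m)$ distinct evaluation points appearing across all coordinates (the points $(\coordinatevalue_j,\usmlower^{(i-1)}_{-i})$ and $(\coordinatevalue_j,\usmupper^{(i-1)}_{-i})$ for $j\in[m]$, together with the base points), sample one of them uniformly to define $\zbf_{\textrm{exp}}$, and set $\wbf_{\textrm{exp}}$ to the corresponding column of coefficients scaled by the number of points so that $\E[\hat\pbf]=\pbf$ holds coordinatewise after unfolding the case split—mirroring the marginal-sampling device used in \Cref{example:running-1} and the reserve-price proof. Because this scaling factor is $\Theta(m)$ and $f\in[0,1]$, the estimator's $\ell_\infty$-diameter is $\diameter{\hat{\pbf}}=O(m)$, which is precisely the source of the extra $m^{2/3}$ factor in the bandit bound. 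With the unbiasedness and polynomial-time sampling verified, all four conditions of \Cref{def:bandit-blackwell-reducible} hold, and the two regret bounds follow.
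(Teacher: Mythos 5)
Your proposal follows essentially the same route as the paper's proof: verify that \Cref{alg:usm-single} is an extended $(\tfrac12,\tfrac12)$-robust approximation via the three-trajectory (lower/upper/hybrid-optimal) telescoping argument summed over rounds, build the exploration sampling device by uniformly sampling among the $O(m)$ evaluation points $(\coordinatevalue_j,\usmlower^{(i-1)}_{-i})$, $(\coordinatevalue_j,\usmupper^{(i-1)}_{-i})$ and the two base points with coefficient columns rescaled by $\Theta(m)$, and then plug $N=n$, $\payoffdimension=m$, $\diameter{\pbf}=O(1)$, $\diameter{\hat{\pbf}}=O(m)$ into \Cref{thm:full-info-online-meta} and \Cref{thm:banditILO}. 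The only cosmetic difference is that you package the marginal data $(\alpha^{(i)},\beta^{(i)})$ directly into $\advspaceB$, whereas the paper takes $\advspaceB=\setofdistributions(\constraint\times\funcspace)$ with a point-mass adversary and defines $\pbf$ as an expectation of $\pay$; both yield the same biaffine payoff and the same bounds.
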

% \end{restatable}

The proof of Theorem \ref{thm:usm}, which is presented in \Cref{apx-proof-NSM}, has two main steps. In the first step, we show that the offline Algorithm \ref{alg:usm-single} is an extended  $(\tfrac{1}{2},\tfrac{1}{2})$- robust approximation algorithm and in the second step, we show that it is bandit Blackwell reducible. The challenging part of the proof is to construct an explore sampling device that leads to an unbiased estimator for the payoff function. We then  invoke Theorems \ref{thm:full-info-online-meta} and \ref{thm:banditILO} to get the final regret bounds.

The following is an immediate corollary of Theorem \ref{thm:usm}. 

% \begin{restatable}{corollary}{usmdiscretecor}
\begin{corollary}[Online learning for non-monotone set submodular maximization]
\label{cor:usm-discrete}
  Let $n$ be the number of items, and assume the maximum function value is normalized to one. Then for the problem of maximizing a nonmonotone (set) submodular function in the online full-information setting, there exists an online learning algorithm that obtains \schange{$\bigO{n \sqrt{T}}$} $\tfrac12$-regret, where $T$ is the number of rounds. Furthermore, in the online bandit setting, there exists a learning algorithm that obtains \schange{$\bigO{nT^{2/3}}$} $\tfrac12$-regret, where $T$ is the number of rounds.
\end{corollary}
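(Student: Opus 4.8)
The plan is to recognize that non-monotone set submodular maximization is precisely the special case of the discretized submodular maximization problem of Theorem~\ref{thm:usm} in which each coordinate takes only two values, and then to specialize the two regret guarantees of that theorem to this case.

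First I would fix the discretization parameters: take the value set $\coordinatevalues = \{0,1\}$, so that $m = |\coordinatevalues| = 2$ and the feasible region $\domain = \constraint = \{0,1\}^n$ is identified with the set lattice $2^{[n]}$ (a binary vector being read as the set it indicates). I would then verify that the submodularity hypothesis required by Theorem~\ref{thm:usm} reduces to ordinary set submodularity on this domain. Concretely, the lattice/continuous submodularity condition of Definition~\ref{def:cont-sm}, namely $f(\bm{x}\vee\bm{y}) + f(\bm{x}\wedge\bm{y}) \le f(\bm{x}) + f(\bm{y})$, becomes exactly $f(S\cup T) + f(S\cap T) \le f(S) + f(T)$ once the coordinate-wise max and min operators are interpreted as set union and intersection on $\{0,1\}^n$; this is exactly Definition~\ref{def:discrete-sm}. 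Hence any non-monotone set submodular function is a legitimate instance of the function class to which Theorem~\ref{thm:usm} applies, and the discretized bi-greedy of Algorithm~\ref{alg:usm-single} is a valid $\tfrac12$-approximation for it.

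Given this identification, I would simply substitute $m = 2$ into the two bounds of Theorem~\ref{thm:usm}. For the full-information guarantee $\bigO{nT^{1/2}(\log m)^{1/2}}$, the factor $(\log 2)^{1/2}$ is an absolute constant, yielding $\bigO{n\sqrt{T}}$. For the bandit guarantee $\bigO{nm^{2/3}T^{2/3}(\log m)^{1/3}}$, both $2^{2/3}$ and $(\log 2)^{1/3}$ are absolute constants, yielding $\bigO{nT^{2/3}}$. This establishes the corollary.

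There is essentially no hard step here; the only point needing a moment of care is confirming that the notion of submodularity used in Theorem~\ref{thm:usm} (the discrete restriction of weak-DR submodularity) genuinely specializes to the set-submodularity of Definition~\ref{def:discrete-sm} when $\coordinatevalues=\{0,1\}$, which the preliminaries already flag in noting that $\coordinatevalues=\{0,1\}$ recovers the set case. Once that equivalence is observed, the corollary is an immediate numerical specialization of Theorem~\ref{thm:usm}.
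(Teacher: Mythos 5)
Your proposal is correct and follows exactly the paper's own (one-line) proof: invoke \Cref{thm:usm} with the discretization $\coordinatevalues = \{0,1\}$, so that $m=2$ and the $\log^{1/2} m$, $m^{2/3}$, and $\log^{1/3} m$ factors become absolute constants. The extra care you take in checking that weak-DR submodularity restricted to $\{0,1\}^n$ coincides with Definition~\ref{def:discrete-sm} is a reasonable elaboration of a point the paper leaves implicit, but it does not change the argument.
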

% \end{restatable}

So far, we assumed that for the continuous submodular functions,  the set of potential value for each coordinate  is finite and belongs to set $\feasiblereserves = \{\feasiblereserve_1, \feasiblereserve_2, \ldots , \feasiblereserve_{m}\}$, rather than the interval $[0, 1]$, and we design learning algorithms with sublinear regret bounds where the regrets are computed with respect to $\frac{1}{2}\max_{\zbf\in \feasiblereserves}\sum_{t=1}^T f_t(\zbf)$. Now, one may wonder if one can design learning algorithms against the benchmark of  $\frac{1}{2}\max_{\zbf\in [0,1]^n}\sum_{t=1}^T f_t(\zbf)$ that allows the coordinates to be any number in $[0,1]^n$.
The following corollary answers this question for any $L$-Lipschitz non-monotone continuous submodular functions.

% \begin{restatable}{corollary}{usmcontinuouscor}
\begin{corollary}[Online learning for $L-$Lipschitz continuous submodular maximization]
\label{cor:usm-continuous}
  Let $n$ be the number of dimensions, and assume the maximum function value is normalized to one. Then for the problem of maximizing a coordinate-wise $L$-Lipschitz non-monotone (continuous) submodular function in the online full-information setting, there exists a learning algorithm that obtains $\bigO{n T^{1/2} \log^{1/2} (LT)}$ $\tfrac12$-regret, where $T$ is the number of rounds. Furthermore, in the online bandit setting, there exists a learning algorithm that obtains $\bigO{n \schange{L^{2/5}T^{4/5}}\log^{1/3} (LT)}$ $\tfrac12$-regret, where $T$ is the number of rounds. Here, in both online algorithms, the benchmark in the regret bounds is $\frac{1}{2}\max_{\zbf\in [0,1]^n}\sum_{t=1}^T f_t(\zbf)$.
\end{corollary}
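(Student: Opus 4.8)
The plan is to derive Corollary \ref{cor:usm-continuous} from Theorem \ref{thm:usm} by a discretization argument, trading off the granularity of the discretization against the regret attained on the discretized problem. First I would fix a resolution parameter $\epsilon > 0$ and take $\coordinatevalues$ to be the uniform $\epsilon$-net $\{0,\epsilon,2\epsilon,\ldots,1\}$ of $[0,1]$, so that $m = |\coordinatevalues| = O(1/\epsilon)$. The crucial observation is that restricting any $L$-Lipschitz continuous weak-DR submodular $f_t:[0,1]^n\to[0,1]$ to the grid $\coordinatevalues^n$ yields a function satisfying exactly the discretized submodularity property required by Theorem \ref{thm:usm}, since (by Definition \ref{def:cont-sm} and the surrounding discussion) that property is just the continuous-submodularity inequality evaluated at grid points. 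Hence I can run the learning algorithms of Theorem \ref{thm:usm} on the discretized instances $\{f_t|_{\coordinatevalues^n}\}_{t=1}^T$ and obtain, against the discrete benchmark $\tfrac12\max_{\zbf\in\coordinatevalues^n}\sum_t f_t(\zbf)$, a $\tfrac12$-regret of $O(nT^{1/2}(\log m)^{1/2})$ in full information and $O(nm^{2/3}T^{2/3}(\log m)^{1/3})$ in the bandit setting.

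The second step is to control the discretization loss, i.e.\ the gap between the continuous benchmark $\tfrac12\max_{\zbf\in[0,1]^n}\sum_t f_t(\zbf)$ and the discrete one. Let $\zbf^*$ be the continuous in-hindsight optimum and let $\tilde\zbf^*\in\coordinatevalues^n$ be its coordinate-wise rounding to the nearest grid point, so each coordinate moves by at most $\epsilon$. Using coordinate-wise $L$-Lipschitzness and telescoping one coordinate at a time, I get $|f_t(\zbf^*)-f_t(\tilde\zbf^*)|\le nL\epsilon$ for every $t$; summing over rounds shows the two benchmarks differ by at most $\tfrac12 TnL\epsilon$. Combining with the previous step, the $\tfrac12$-regret against the continuous benchmark is at most the discretized $\tfrac12$-regret plus $\tfrac12 TnL\epsilon$.

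It then remains to optimize $\epsilon$. In the full-information case the linear-in-$T$ discretization term $\tfrac12 TnL\epsilon$ is dominated as soon as $\epsilon=\Theta(1/(LT))$, giving $m=O(LT)$, $\log m=O(\log(LT))$, and a total $\tfrac12$-regret of $O(nT^{1/2}\log^{1/2}(LT))$. In the bandit case I would balance $nm^{2/3}T^{2/3}\asymp n\epsilon^{-2/3}T^{2/3}$ against $TnL\epsilon$, which yields $\epsilon=\Theta(L^{-3/5}T^{-1/5})$; substituting back makes both terms $\Theta(nL^{2/5}T^{4/5})$ up to the $(\log m)^{1/3}=O(\log^{1/3}(LT))$ factor, matching the claimed $O(nL^{2/5}T^{4/5}\log^{1/3}(LT))$ bound.

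I expect the only real subtlety — and thus the main step to get right — to be the preservation of the structural hypotheses of Theorem \ref{thm:usm} under restriction to the grid: one must verify that the discretized function still admits the extended $(\tfrac12,\tfrac12)$-robust, bandit-Blackwell-reducible offline procedure (Algorithm \ref{alg:usm-single}) that the theorem assumes. This holds essentially for free because Algorithm \ref{alg:usm-single} is already formulated over an arbitrary finite coordinate set $\coordinatevalues$, so no new approximation or reducibility analysis is needed. The remaining work — the telescoping Lipschitz bound and the two one-line optimizations of $\epsilon$ — is routine.
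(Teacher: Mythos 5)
Your proposal is correct and matches the paper's own proof: both discretize $[0,1]$ into a grid of size $m$, invoke Theorem \ref{thm:usm}, bound the per-round discretization error by $nL/m$ via coordinate-wise Lipschitzness, and balance terms (the paper takes $m=LT^{1/2}$ in the full-information case and $m=L^{3/5}T^{1/5}$ in the bandit case, which agrees with your choice of $\epsilon$ up to the inconsequential difference that your full-information $m=LT$ makes the discretization term even smaller). Your explicit check that grid restriction preserves the discretized submodularity and the hypotheses of Theorem \ref{thm:usm} is a point the paper leaves implicit, but it is the same argument.
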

% \end{restatable}

Proofs of the above corollaries are in \Cref{apx:corollary-NSM}.

\subsection{Proof of Theorem \ref{thm:usm}}
\label{apx-proof-NSM}
\proof{\emph{Proof}.} 
We will show that our meta Algorithms \ref{alg:full-info-backbone} and \ref{alg:bandit-meta} work by verifying the following conditions.

    \paragraph{(i) Algorithm~\ref{alg:mmr-single} is an extended $(\tfrac12, \tfrac12)$-robust approximation algorithm.}
Following the analysis of the bi-greedy algorithm in \cite{buchbinder2015tight}, we consider three sequences of points: the lower bound sequence $\usmlower^{(i)}$, the upper bound sequence $\usmupper^{(i)}$, and the hybrid-optimal sequence ${\usmoptimal}^{(i)}$. The key proof idea is to bound the decrease in the hybrid-optimal sequence value ${\zbf^*}^{(i)}$ with the total increase in the lower bound and upper bound sequence values. We define $\usmlower^{(i)}$ and $\usmupper^{(i)}$ to agree on the first $i$ coordinates, while the rest of the coordinates are $\rho_1$ for $\usmlower^{(i)}$ and $\rho_m$ for $\usmupper^{(i)}.$ The hybrid-optimal sequence starts from ${\usmoptimal}^{(0)} \triangleq {\usmoptimal}$, then ${\usmoptimal}^{(i)}$ is equal to ${\usmoptimal}^{(i-1)}$ but with the $i^{th}$ coordinate replaced with the sampled $z'_i \sim \param^{(i)}$.

Importantly, if the $i^{th}$-coordinate of the optimal vector $\usmoptimal,$ which is $z^*_i,$ is less than our sampled point $z'_i$ from the $i^{th}$ subproblem/iteration, then the loss in value of the hybrid-optimal sequence is bounded by a difference of two $\beta^{(i)}$ evaluations. In particular, the submodularity of $f$ implies:
\begin{align*}
    f(z^*_i, {\usmoptimal}^{(i-1)}_{-i}) + f(z'_i, \usmupper^{(i-1)}_{-i}) &\le f(z'_i, {\usmoptimal}^{(i-1)}_{-i}) + f(z^*_i, \usmupper^{(i-1)}_{-i}) \\
    f({\usmoptimal}^{(i-1)}) + \beta^{(i)}(z'_i) &\le f({\usmoptimal}^{(i)}) + \beta^{(i)}(z^*_i) \\
    f({\usmoptimal}^{(i-1)}) - f({\usmoptimal}^{(i)}) &\le \beta^{(i)}(z^*_i) - \beta^{(i)}(z'_i).
\end{align*}
  
  There is also the symmetric case where the $i^{th}$-coordinate of the optimal vector $z^*_i$ is greater than our sampled point $z'_i$ from the the $i^{th}$ subproblem:
  \begin{align*}
    f(z'_i, \usmlower^{(i-1)}_{-i}) + f(z^*_i, {\usmoptimal}^{(i-1)}_{-i}) &\le f(z^*_i, \usmlower^{(i-1)}_{-i}) + f(z'_i, {\usmoptimal}^{(i-1)}_{-i}) \\
    \alpha^{(i)}(z'_i) + f({\usmoptimal}^{(i-1)}) &\le \alpha^{(i)}(z^*_i) + f({\usmoptimal}^{(i)}) \\
    f({\usmoptimal}^{(i-1)}) - f({\usmoptimal}^{(i)}) &\le \alpha^{(i)}(z^*_i) - \alpha^{(i)}(z'_i).
  \end{align*}
  
  Combining the two cases yields (this inequality explains our definition of $\zeta^{(i)}$):
  \begin{equation}
    f({\usmoptimal}^{(i-1)}) - f({\usmoptimal}^{(i)}) \le \zeta^{(i)}(z^*_i, z'_i)
    \label{eqn:usmoptimal}.
  \end{equation}
  
  Also, just by the definition of $\alpha^{(i)}$ and $\beta^{(i)}$ we know that:
  \begin{align}
    f(\usmlower^{(i)}) - f(\usmlower^{(i-1)}) &= \alpha^{(i)}(z'_i)
    \label{eqn:usmlower} \\
    f(\usmupper^{(i)}) - f(\usmupper^{(i-1)}) &= \beta^{(i)}(z'_i).
    \label{eqn:usmupper}
  \end{align}
  
  We are now ready to consider the ${\param}^{(i)}_t$ which guarantees that for all $i \in [n]$ and $\hat z \in \coordinatevalues$, including $z_i^*$:
  \begin{align*}
     \sum_{t=1}^T \E_{z'_i \sim {\param}^{(i)}_t} \left[ \frac12 \alpha^{(i)}_t(z'_i) + \frac12 \beta^{(i)}_t(z'_i) - \zeta_t^{(i)}(z_i^*, z'_i) \right] &\ge -h(T).
  \end{align*}
  Note that $\alpha^{(i)}_t$ $\beta^{(i)}_t$, and $\zeta_t^{(i)}$ are respectively obtained by replacing $f$ with $f_t$ in the definition of  $\alpha^{(i)}$ $\beta^{(i)}$, and $\zeta^{(i)}$. We sum those inequalities together and then apply Equations \eqref{eqn:usmoptimal}, \eqref{eqn:usmlower}, and \eqref{eqn:usmupper}:
  \begin{align*}
    &-nh(T) \\
    &\le \sum_{t=1}^T \sum_{i=1}^n
      \E_{z'_i \sim {\param}^{(i)}_t} \left[
        \frac12 \alpha^{(i)}_t(z'_i) +
        \frac12 \beta^{(i)}_t(z'_i) -
        \zeta^{(i)}_t(z^*_i, z'_i)
      \right] \\
    &\le \sum_{t=1}^T \sum_{i=1}^n
      \E \left[
        \frac12 \left[ f_t(\usmlower^{(i)}) - f_t(\usmlower^{(i-1)}) \right] +
        \frac12 \left[ f_t(\usmupper^{(i)}) - f_t(\usmupper^{(i-1)}) \right] -
        \left[ f_t({\usmoptimal}^{(i-1)}) - f_t({\usmoptimal}^{(i)}) \right]
      \right] \\
    &= \sum_{t=1}^T
      \E \left[
        \frac12 \left[ f_t(\usmlower^{(n)}) - f_t(\usmlower^{(0)}) \right] +
        \frac12 \left[ f_t(\usmupper^{(n)}) - f_t(\usmupper^{(0)}) \right] -
        \left[ f_t({\usmoptimal}^{(0)}) - f_t({\usmoptimal}^{(n)}) \right]
      \right] \\
    &= \sum_{t=1}^T
      \E \left[
        \frac12 \left[ f_t(\zbf_t) - \underbrace{f_t(\usmlower^{(0)})}_{\ge 0} \right] +
        \frac12 \left[ f_t(\zbf_t) - \underbrace{f_t(\usmupper^{(0)})}_{\ge 0} \right] -
        \left[ f_t({\usmoptimal}) - f_t(\zbf_t) \right]
      \right] \\
    &\le \sum_{t=1}^T
      \E \left[
        2 f_t(\zbf_t) - f_t({\usmoptimal})
      \right]\,.
  \end{align*}
  See that the fourth equality is because the algorithm returns $\zbf_t = \usmlower^{(n)} = \usmupper^{(n)}$ at round $t.$ We finish by moving terms between sides and dividing by two:
  \begin{align*}
    \sum_{t=1}^T \E \left[2 f_t(\zbf_t) - f_t({\usmoptimal}) \right] &\ge -nh(T) \\
    \sum_{t=1}^T \E \left[f_t(\zbf_t)\right] &\ge \frac12 \sum_{t=1}^T f_t({\usmoptimal}) - \frac12 nh(T).
  \end{align*}
  Thus, our algorithm is an extended $(\tfrac12, \tfrac12)$-robust approximation.
  \medskip
  
  \paragraph{(ii) Algorithm~\ref{alg:usm-single} is bandit Blackwell reducible.}
  We first show that \Cref{alg:usm-single} is Blackwell reducible. Consider an instance $(\algspaceB,\advspaceB,\pbf)$ of Blackwell where $\algspaceB \triangleq \paramspace = \setofdistributions(\coordinatevalues)$ and $\advspaceB \triangleq \setofdistributions(\constraint \times \funcspace) = \setofdistributions(\coordinatevalues^{[n]} \times \funcspace)$. Our synthetic Blackwell adversary function is the deterministic distribution that has weight $1$ on its input (point, function) pair and $0$ anywhere else, i.e. $\advfunB(\zbf,f) = \kappa$ where $\kappa(\zbf,f) =1$. The (asymmetric) biaffine Blackwell payoff $\pbf$ is the expectation of the $\pay$ function from \Cref{eq:usmpay} over its second input:
  \begin{align*}
    \pbf(\param, \kappa) \triangleq \E_{(\zbf, f) \sim \kappa} \left[ \pay(\param, \zbf, f) \right].
  \end{align*}
  
The positive orthant $S$ is response-satisfiable since given any player 2 distribution $\kappa$ over (point, function) pairs, we can convert each pair into the marginal functions $\alpha^{(i)}$ and $\beta^{(i)}$. Averaging these marginal functions together according to their likelihood in $\kappa$ does not impact the submodularity fact we require for our proofs. We can think of $\pbf(\param,\kappa)$ as
\begin{equation*}
    \pbf(\param, \kappa) \triangleq \E_{(\zbf, f) \sim \kappa} \left[ \pay(\param, \zbf, f) \right] = \pay(\param,\zbf,f'),
\end{equation*}
for another submodular function $f'\in\funcspace$ because a weighted average of submodular function is submodular. Since for any submodular functions $f\in\funcspace$ and $\zbf\in\constraint,$ we show that we can find $\param$ such that $\pay\left(\param,\zbf,f\right)\geq0$, for any $\kappa,$ the algorithm can find $\param$ such that $\pbf(\param,\kappa)$ is in $S$. Therefore, \Cref{alg:usm-single} is Blackwell reducible.

To show that \Cref{alg:usm-single} is bandit Blackwell reducible, we need to construct an unbiased estimator for $\pbf$ and an explore sampling device $U.$ In subproblem $i,$ $U$ receives pairs of the form $(\param, \usmlower^{(i-1)})$ and returns $(\subpeek{\wbf},\subpeek{\zbf})$ such that (i) for all $f\in\funcspace,\param\in\Theta,\usmlower^{(i-1)}\in\domain,$ $\hat{\pbf}\left(\param,\advfunB(\usmlower^{(i-1)},f)\right) = f(\subpeek{\zbf})\subpeek{\wbf}$ where $(\subpeek{\wbf},\subpeek{\zbf})\sim U(\param,\usmlower^{(i-1)})$, and (ii) $\hat{\pbf}$ is an unbiased estimator for the actual payoff, i.e. $\forall\param\in\Theta,\kappa\in\advspaceB,$ we have $\E\left[\hat{\pbf}(\param,\kappa)\right] = \pbf(\param,\kappa).$

Because we would like to construct an unbiased estimator of the actual payoff $\pbf,$ which is an expectation (over $\kappa$) of the payoff function $\pay,$ which is further an affine combination of the functions $\alpha^{(i)},\beta^{(i)},$ and $\zeta^{(i)}$ on $\coordinatevalues,$ we construct unbiased estimators from function evaluations for these functions. Observe that given $\usmlower^{(i-1)},$ $U$ can immediately reconstruct the corresponding upper bound point:
\begin{align*}
\usmupper^{(i-1)} \leftarrow \usmlower^{(i-1)} \vee (\underbrace{\coordinatevalue_1, \ldots, \coordinatevalue_1}_{\text{first $(i-1)$ coordinates}}, \underbrace{\coordinatevalue_m, \ldots, \coordinatevalue_m}_{\text{last $(n-i+1)$ coordinates}})^T = (z'_1,\ldots,z_{i-1}',\underbrace{\coordinatevalue_m, \ldots, \coordinatevalue_m}_{\text{last $(n-i+1)$ coordinates}})^T.
\end{align*}

We can use $\usmlower^{(i-1)}$ and $\usmupper^{(i-1)}$ to express the marginal functions $\bm{\alpha}^{(i)}$ and $\bm{\beta}^{(i)},$
\begin{align*}
\bm{\alpha}^{(i)} \triangleq
\begin{bmatrix}
  \alpha^{(i)}(\coordinatevalue_1) \\
  \alpha^{(i)}(\coordinatevalue_2) \\
  \vdots \\
  \alpha^{(i)}(\coordinatevalue_m)
\end{bmatrix}
&=
\begin{bmatrix}
  f(\coordinatevalue_1, \usmlower^{(i-1)}_{-i}) - f(\usmlower^{(i-1)}) \\
  f(\coordinatevalue_2, \usmlower^{(i-1)}_{-i}) - f(\usmlower^{(i-1)}) \\
  \vdots \\
  f(\coordinatevalue_m, \usmlower^{(i-1)}_{-i}) - f(\usmlower^{(i-1)}) \\
\end{bmatrix} \\
\bm{\beta}^{(i)} \triangleq
\begin{bmatrix}
  \beta^{(i)}(\coordinatevalue_1) \\
  \beta^{(i)}(\coordinatevalue_2) \\
  \vdots \\
  \beta^{(i)}(\coordinatevalue_m)
\end{bmatrix}
&=
\begin{bmatrix}
  f(\coordinatevalue_1, \usmupper^{(i-1)}_{-i}) - f(\usmupper^{(i-1)}) \\
  f(\coordinatevalue_2, \usmupper^{(i-1)}_{-i}) - f(\usmupper^{(i-1)}) \\
  \vdots \\
  f(\coordinatevalue_m, \usmupper^{(i-1)}_{-i}) - f(\usmupper^{(i-1)})
\end{bmatrix}.
\end{align*}
  
  These can be used in turn to express our comparison function $\zeta^{(i)}$:  
  \begin{align*}
    \bm{\zeta}^{(i)} &\triangleq
    \begin{bmatrix}
      \zeta^{(i)}(\coordinatevalue_1, \coordinatevalue_1) &
      \zeta^{(i)}(\coordinatevalue_1, \coordinatevalue_2) &
      \cdots &
      \zeta^{(i)}(\coordinatevalue_1, \coordinatevalue_m) \\
      \zeta^{(i)}(\coordinatevalue_2, \coordinatevalue_1) &
      \zeta^{(i)}(\coordinatevalue_2, \coordinatevalue_2) &
      \cdots &
      \zeta^{(i)}(\coordinatevalue_2, \coordinatevalue_m) \\
      \vdots & \vdots & \ddots & \vdots \\
      \zeta^{(i)}(\coordinatevalue_m, \coordinatevalue_1) &
      \zeta^{(i)}(\coordinatevalue_m, \coordinatevalue_2) &
      \cdots &
      \zeta^{(i)}(\coordinatevalue_m, \coordinatevalue_m) \\
    \end{bmatrix} \\
    &= \diagonal{\bm{\alpha}^{(i)}} \mathbf{L}_{m, m} - \mathbf{L}_{m, m} \diagonal{\bm{\alpha}^{(i)}}
     + \diagonal{\bm{\beta}^{(i)}} \mathbf{U}_{m, m} - \mathbf{U}_{m, m} \diagonal{\bm{\beta}^{(i)}},
  \end{align*}
where $\mathbf{L}_{m, m}$ is the lower-triangular matrix defined by $\indexintovector{\mathbf{L}_{m, m}}{i, j} = \indicator{$i>j$}$ and $\mathbf{U}_{m, m}$ is the upper-triangular matrix defined by $\indexintovector{\mathbf{U}_{m, m}}{i, j} = \indicator{$i<j$}$. Our desired payoff function can be expressed using all three of these functions:
\begin{align*}
\pay(\param, \usmlower^{(i-1)}, f) &=
\left[
\frac12 \mathbf{1}_m \left(\bm{\alpha}^{(i)}\right)^T
+ \frac12 \mathbf{1}_m \left(\bm{\beta}^{(i)}\right)^T
+ \left(\bm{\zeta}^{(i)}\right)
\right] \param,
\end{align*}
where $\mathbf{1}_m$ is the $m$-dimensional all-ones vector. By using matrix notation, we have managed to clearly express our desired payoff function as the linear combination of many function evaluations. 

We now define the explore sampling distribution $U:\Theta\times\domain\rightarrow\Delta(\mathbb{R}^m\times\constraint)$ as follows. With $\tfrac14$ probability, we return the point $\subpeek{\zbf} = \usmlower^{(i-1)}$ and weight vector $\subpeek{\wbf} = (-2) \text{diag}(\mathbf{1}_m) \param = (-2) \mathbf{1}_m$, where $\text{diag}(\mathbf{1}_m)$ is the identity matrix with size $m\times m$. With $\tfrac14$ probability, we return the point $\subpeek{\zbf} = \usmupper^{(i-1)}$ and weight vector $\subpeek{\wbf} = (-2) \text{diag}(\mathbf{1}_m) \param = (-2) \mathbf{1}_m$. For $i=1,...,m$, with $\tfrac{1}{4m}$ probability we return $\subpeek{\zbf} = (\coordinatevalue_i, \usmlower^{(i-1)}_{-i})$ and $\subpeek{\wbf} = (4m) \left[ \frac12 \mathbf{1}_m \bm{e}_i^T + \diagonal{\bm{e}_i} \mathbf{L}_{m, m} - \mathbf{L}_{m, m} \diagonal{\bm{e}_i} \right] \param$. For $i = 1, ..., m$, with $\tfrac{1}{4m}$ probability we return the point $\subpeek{\zbf} = (\coordinatevalue_i, \usmupper^{(i-1)}_{-i})$ and weight vector $\subpeek{\wbf} = (4m) \left[ \frac12 \mathbf{1}_m \bm{e}_i^T + \diagonal{\bm{e}_i} \mathbf{U}_{m, m} - \mathbf{U}_{m, m} \diagonal{\bm{e}_i} \right] \param$. Observe that, at subproblem $i$ (essentially by construction):
\begin{align*}
    \E\left[\hat{\pbf}(\param,\kappa)\right] &= \E_{(\usmlower^{(i-1)},f)\sim\kappa}~\E\left[\hat{\pbf}\left(\param,\advfunB(\usmlower^{(i-1)},f)\right)\right] \\
    &= \E_{(\usmlower^{(i-1)},f)\sim\kappa}~\E_{(\subpeek{\wbf}, \subpeek{\zbf}) \sim \unbiasedestimator(\param, \usmlower^{(i-1)})} \left[ f(\subpeek{\zbf}) \subpeek{\wbf} \right],
\end{align*}
where
  \begin{align*}
    &\E_{(\subpeek{\wbf}, \subpeek{\zbf}) \sim \unbiasedestimator(\param, \usmlower^{(i-1)})} \left[ f(\subpeek{\zbf}) \subpeek{\wbf} \right] \\
    &= \frac14 f(\usmlower^{(i-1)}) \left[(-2) \text{diag}(\mathbf{1}_m) \param\right]
    + \frac14 f(\usmupper^{(i-1)}) \left[(-2) \text{diag}(\mathbf{1}_m) \param\right] \\
    &\quad+ \sum_{i=1}^m \frac{1}{4m} f(\coordinatevalue_i, \usmlower^{(i-1)}_{-i})
       \left[ (4m) \left[ \frac12 \mathbf{1}_m \bm{e}_i^T + \diagonal{\bm{e}_i} L_{m, m} - L_{m, m} \diagonal{\bm{e}_i} \right] \param \right] \\
    &\quad+ \sum_{i=1}^m \frac{1}{4m} f(\coordinatevalue_i, \usmupper^{(i-1)}_{-i})
       \left[ (4m) \left[ \frac12 \mathbf{1}_m \bm{e}_i^T + \diagonal{\bm{e}_i} \mathbf{U}_{m, m} - \mathbf{U}_{m, m} \diagonal{\bm{e}_i} \right] \param \right] \\
    &= f(\usmlower^{(i-1)})
       \left[-\frac12 \mathbf{1}_m \mathbf{1}_m^T - \diagonal{\mathbf{1}_m} \mathbf{L}_{m, m} + \mathbf{L}_{m, m} \diagonal{\mathbf{1}_m} \right] \param \\
    &\quad+ f(\usmupper^{(i-1)})
       \left[-\frac12 \mathbf{1}_m \mathbf{1}_m^T - \diagonal{\mathbf{1}_m} \mathbf{U}_{m, m} + \mathbf{U}_{m, m} \diagonal{\mathbf{1}_m} \right] \param \\
    &\quad+ \sum_{i=1}^m f(\coordinatevalue_i, \usmlower^{(i-1)}_{-i})
       \left[ \left[ \frac12 \mathbf{1}_m \bm{e}_i^T + \diagonal{\bm{e}_i} \mathbf{L}_{m, m} - \mathbf{L}_{m, m} \diagonal{\bm{e}_i} \right] \param \right] \\
    &\quad+ \sum_{i=1}^m f(\coordinatevalue_i, \usmupper^{(i-1)}_{-i})
       \left[ \left[ \frac12 \mathbf{1}_m \bm{e}_i^T + \diagonal{\bm{e}_i} \mathbf{U}_{m, m} - \mathbf{U}_{m, m} \diagonal{\bm{e}_i} \right] \param \right] \\
    &= \left[ \frac12 \mathbf{1}_m ({\bm{\alpha}^{(i)}})^T
             + \frac12 \mathbf{1}_m ({\bm{\beta}^{(i)}})^T
             + {\bm{\zeta}^{(i)}} \right] \param \\
    &= \pay(\param, \usmlower^{(i-1)}, f).
  \end{align*}
  
  This explore sampling device also clearly runs in polynomial-time. Finally, we have
  \begin{align*}
    \E\left[\hat{\pbf}(\param,\kappa)\right] &= \E_{(\usmlower^{(i-1)},f)\sim\kappa}~\E_{(\subpeek{\wbf}, \subpeek{\zbf}) \sim \unbiasedestimator(\param, \usmlower^{(i-1)})} \left[ f(\subpeek{\zbf}) \subpeek{\wbf} \right] \\
    &= \E_{(\usmlower^{(i-1)},f)\sim\kappa}\left[\pay(\param,\usmlower^{(i-1)},f)\right] = \pbf(\param,\kappa).
  \end{align*}
  This completes the proof of bandit Blackwell reducibility.
  
  For our bounds, we care about both the $\ell_\infty$ diameter of the payoff $\diameter{\pbf}$ and the $\ell_\infty$ diameter of the payoff estimator $\diameter{\hat{\pbf}}$. The former is bounded by $O(1)$, since for any $\param,$ the payoff function is a linear combinatrion of $O(1)$ function evaluations with $O(1)$ coefficients. The latter is bounded by $O(m)$ since aside from the $O(4m)$-scaling, the function evaluation yields a result in the range $[0, 1]$ and the remaining terms have $O(1)$ norms:
  \begin{align*}
    &\standardnorm{\mathbf{1}_m} = 1 
    \quad 
    &\standardnorm{\frac12 \mathbf{1}_m \bm{e}_i^T \param} = \frac12 \indexintovector{\param}{i} \le 1 
    \\
    & \standardnorm{\diagonal{\bm{e}_i} \mathbf{L}_{m, m} \param} = \sum_{j < i} \indexintovector{\param}{j} \le 1 \quad 
   & \standardnorm{\mathbf{L}_{m, m} \diagonal{\bm{e}_i}} = \indexintovector{\param}{i} \le 1 
   \\
   & \standardnorm{\diagonal{\bm{e}_i} \mathbf{U}_{m, m} \param} = \sum_{j > i} \indexintovector{\param}{j} \le 1\quad 
    &\standardnorm{\mathbf{U}_{m, m} \diagonal{\bm{e}_i}} = \indexintovector{\param}{i} \le 1.
  \end{align*}
  
  We complete the proof by applying Theorem \ref{thm:full-info-online-meta} and \Cref{thm:banditILO}, noting that our payoff dimension $d$ equals the number of potential values that a coordinate can take, $m$:
  \begin{align*}
    \frac12\text{-regret(\Cref{alg:full-info-backbone} applied on \Cref{alg:usm-single})} &\le O(n T^{1/2} \log^{1/2} m) \\
    \frac12\text{-regret(\Cref{alg:bandit-meta} applied on \Cref{alg:usm-single})} &\le O(nm\schange{^{2/3}} T^{2/3} \log^{1/3} m).
  \end{align*}
\[\myqed\]
\endproof{}

\subsection{Proof of Corollaries \ref{cor:usm-discrete} and \ref{cor:usm-continuous}}
\proof{\emph{Proof of \Cref{cor:usm-discrete}}.}
\label{apx:corollary-NSM}
We invoke \Cref{thm:usm} with the discretization space $\coordinatevalues = \{0,1\}.$
\[\myqed\]
\endproof{}

\proof{\emph{Proof of \Cref{cor:usm-continuous}}.}
Let $m \in \mathbb{Z}_+$ be a discretization parameter that we choose later to balance terms. We invoke \Cref{thm:usm} with discretization $\coordinatevalues = \{0, \frac1m, \frac2m, \ldots, 1\}$. Because our functions are coordinate-wise $L$-Lipschitz, the (summed) discretization error is bounded by $T L \frac1m n$ (note that the error from each subproblem is upper-bounded by $L\frac1m$ and the error from each round is upper-bounded by $L\frac1m n$). We choose $m = L T^{1/2}$ for the full-information case and $m = \schange{L^{3/5}T^{1/5}}$ for the bandit case such that:
\begin{align*}
\bigO{n T^{1/2} \log^{1/2} m} + T L \frac1m n &= \bigO{n T^{1/2} \log^{1/2} (LT)} \\
\bigO{nm\schange{^{2/3}} T^{2/3} \log^{1/3} m} + T L \frac1m n &= \bigO{n \schange{L^{2/5}T^{4/5}} \log^{1/3} (LT)}. 
\end{align*}
This completes the proof.
\[\myqed\]
\endproof

% \begin{theorem}
% For the continuous case, assuming that $f$ is coordinate-wise Lipschitz, applying $\onlinemetatext$ to \texttt{ContinuousBi-greedy} as the $\offlinemetatext$ gives us a $1/2$-regret of $O(n\cdot(\sqrt{T/\epsilon}+T\epsilon))$ and $\banditmetatext$ gets a $1/2$-regret of $O\left(n\cdot\left(\frac{T^{2/3}}{\epsilon^{1/3}}+T\epsilon\right)\right)$ in the bandit setting, where $\epsilon$ in our discretization parameter (we're considering discrete points with interval $\epsilon$). If we assume that $f$ is Lipschitz (as opposed to coordinate-wise Lipschitz), $\onlinemetatext$ gets a $1/2-$regret of $O(n\cdot\sqrt{T/\epsilon}+\sqrt{n}T\epsilon)$ and $\banditmetatext$ gets a $1/2-$regret of $O\left(n\frac{T^{2/3}}{\epsilon^{1/3}}+\sqrt{n}T\epsilon\right)$ in the bandit setting.
% \end{theorem}

{\section{Application to Strong-DR Monotone Submodular Maximization over Downward Closed Convex Sets}
\label{subsec:DR-SM}
\paragraph{Preliminaries.} Consider the \textsc{strong-DR monotone submodular maximization} problem. Recall that a continuous multivariate function $f:\mathbb{R}^n\rightarrow[0,1]$ is strong-DR submodular if for all $i\in[n],~\bm{x}\preceq \bm{y}\in[0,1]^n$, and $\delta\geq 0,$ we have
\begin{equation*}
 f(x_i+\delta,\bm{x}_{-i})-f(\bm{x})\geq~ f(y_i+\delta,\bm{y}_{-i})-f(\bm{y})\,.
\end{equation*}
Here, $\bm{x}_{-i}$ (resp. $\bm{y}_{-i}$) is an $(n-1)$-dimensional vector with all coordinate values of $\bm{x}$ (resp. $\bm{y}$) except $i$, and $\bm{x}\preceq\bm{y}$ if and only if $\forall j\in[n]: x_j\leq y_j$. % and in the literature \citep[cf.][]{bian2016guaranteed,niazadeh2018optimal,soma2018maximizing} 
Alternatively, a strong-DR continuous submodular function is a weak-DR continuous submodular function that is also concave along each coordinate. We also assume that function $f:\mathbb{R}^n\rightarrow[0,1]$ is $L$-Lipschitz smooth for some constant $L>0$; that is, for all $\bm{x},\bm{v}\in\mathbb{R}^n$, 
\begin{align*}
  f(\bm{x}+\bm{v})-f(\bm{x})\geq\langle\nabla f(\bm{x}),\bm{v}\rangle -\frac{L}{2}||\bm{v}||_{2}^2,
\end{align*}
where $\nabla f(\xbf) = \left[\frac{\partial f(\xbf)}{\partial x_1},\ldots,\frac{\partial f(\xbf)}{\partial x_n}\right]^T$ for $\xbf = [x_1,\ldots,x_n]^T$. Furthermore, we assume that the gradient of $f$ has bounded $\ell_\infty$ norm in some convex set $\Pcal\subseteq \mathbb{R}^n$, that is, there is a constant $U>0$ such that $||\nabla f(\bm{x})||_\infty\leq U$ for all $\bm{x}\in\mathcal{P}$. Note that as a simple corollary of this assumption, our functions are $U-$Lipschitz continuous in $\ell_2$ norm, that is, for all $\mathbf{x},\mathbf{y}\in \mathbb{R}^n$, 
$$
\lvert f(\mathbf{x})-f(\ybf)\rvert\leq U\lVert\xbf-\ybf\rVert_2
$$

\subsection{Offline Algorithm} \label{sec:new:offline:strong_DR} Let $\Pcal$ be a downward closed convex polytope with $\ell_2$ diameter $\diampolytope$, i.e., $\diampolytope = \max_{\bm{v}_1,\bm{v}_2\in\Pcal}||\bm{v}_1-\bm{v}_2||_{2}$. We further assume this convex set is polynomial-time separable; that is, we assume we have access to a polynomial-time algorithm for exactly solving any linear optimization over this convex set. More specifically, we assume that  for any $\xbf\in \mathcal P$, we can solve the following optimization problem in polynomial time;  $\underset{\direction\in\mathcal{P}}{\max}~\direction\cdot\nabla f(\xbf)$.  
The goal of the offline problem is to maximize an 
 $L$-Lipschitz smooth monotone strong-DR submodular functions $f$ on $\mathcal{P}$, where $||\nabla f(\zbf)||_\infty\leq U$ for all $\zbf\in\mathcal{P}$. 
%We want to maximize such function subject to the matroid polytope $\mathcal{P}$.
For the offline problem, the Frank-Wolfe variant for monotone strong-DR submodular function gives a $(1-1/e)$-approximation, which is known to be a tight approximation factor. This algorithm is first introduced in \cite{feldman2011unified} and \cite{calinescu2011maximizing} for the special case of multi-linear extension of set submodular functions, and is later extended to general strong-DR continuous submodular functions in \cite{bian2016guaranteed} and \cite{mokhtari2018stochastic}. We present this offline algorithm in Algorithm \ref{alg:DR-SM}. We then transform it into an online adversarial learning algorithm under the full-information (Section \ref{sec:full_strong_DR}) and bandit settings (Section \ref{sec:bandit_strong_DR}).

\begin{algorithm}
  \caption{Frank-Wolfe variant for maximizing monotone strong-DR submodular function in a downward closed convex set. (c.f.,~\cite{bian2016guaranteed})}
  \label{alg:DR-SM}
  \textbf{Input:} Strong-DR monotone submodular function $f:\mathcal{P}\rightarrow[0,1]$ that is $L$-Lipschitz smooth with $||\nabla f(\xbf)||_\infty\leq U$ for all $\xbf\in\mathcal{P}$, where  $\mathcal{P}$ is a downward closed and convex polytope  with $\ell_2$ diameter $\diampolytope$. The number of iterations $N$.\\
  \textbf{Output:} Point $\zbf \in \mathcal{P}$.\\
  Initialize $\zbf^{(0)} \leftarrow \mathbf{0}$.\\
  \For{iteration $i = 1, 2, \ldots, N$}{\vspace{0.2em}
   
   {\underline{Local Optimization Step}\\}
   Choose $\direction^{(i)}\in\mathcal{P}$ such that
   \begin{equation*}
   \begin{aligned}
    \direction^{(i)}\in\underset{\direction\in\mathcal{P}}{\argmax}~\direction\cdot\nabla f(\zbf^{(i-1)})
   \end{aligned}
   \end{equation*}
   (done in \cite{bian2016guaranteed} via maximizing a linear objective).\\
   {\underline{Local Update Step}\\}
   Set $\zbf^{(i)}\leftarrow\zbf^{(i-1)}+\frac{1}{N}\direction^{(i)}$.
  
  }
  \Return{$\zbf\leftarrow\zbf^{(N)}$}.\\
\end{algorithm}

The Frank-Wolfe variant algorithm takes the number of iterations $N$ as input, which affects the approximation factor and also an additive error in its performance. The approximation factor of the algorithm for a finite $N$ is $\gamma_N = 1- \left((1-1/N)\right)^N$, which goes to $1-1/e$ from above as $N$ goes to infinity, and the additive error is in the order of $O(\frac{LR_{\Pcal}}{N})$, which goes to zero as $N$ goes to infinity (c.f., \cite{bian2016guaranteed}. That is, 
$f(\zbf^{(N)})\geq\gamma_N \max_{\zbf\in \Pcal} f(\zbf)-\frac{LR_{\Pcal}}{N}
$, where $\zbf^{(N)}$ is the output of the Frank-Wolfe  algorithm after $N$ iterations. 
(We will also see the same bound later in the proof of \Cref{thm:DR-SM}).)  In the language of $\offlinemetatext$, the parameter space is $\paramspace = \mathcal{P}\subset \mathbb{R}^n$ and $\paramdimension = n$. In each subproblem $i\in [N]$, the algorithm picks the direction $\direction^{(i)}$ in the polytope $\mathcal{P}$ that maximizes  function $\langle\direction,\nabla f(\zbf^{(i-1)})\rangle$,  (i.e., $\direction^{(i)}\in\underset{\direction\in\mathcal{P}}{\argmax}~\direction\cdot\nabla f(\zbf^{(i-1)})$.)  Intuitively, $\direction^{(i)}$ is the direction along which we can maximize the improvement in the function value while still remaining feasible. Picking the direction inside the polytope eliminates the need for projecting the obtained point back to $\mathcal{P}$ at each iteration, which is usually an essential step in the Frank-Wolfe algorithm. We define the vector payoff function to be
\begin{equation*}
  \pay(\direction^{(i)},\zbf^{(i-1)},f)=
  \begin{bmatrix}
  -\direction^{(i)}\cdot\ybf^{(i)}\\
  \ybf^{(i)}
  \end{bmatrix}
\end{equation*}
where 
\begin{align*}
  \ybf^{(i)}&\triangleq\nabla f(\zbf^{(i-1)}),
\end{align*}
is the gradient of the function on the point from the previous iteration. The target set $S$ is the polar cone of the $\textrm{Cone}(1\oplus\mathcal{P})$, denoted by $ \textrm{Cone}(1\oplus\mathcal{P})^\circ$. Note that for $\bm{x}\in\mathbb{R}^{d_1}$ and $\bm{y}\in\mathbb{R}^{d_2}$,  $\bm{x}\oplus\bm{y} = \begin{bmatrix}
\bm{x}\\
\bm{y}
\end{bmatrix}\in\mathbb{R}^{d_1+d_2}$.
Moreover, for a cone $C\subseteq\mathbb{R}^d$, the polar cone of $C$, denoted by $C^\circ$, is defined as $C^\circ = \{\bm{\theta}\in\mathbb{R}^d:\bm{\theta}\cdot\bm{x}\leq 0,\forall \bm{x}\in C\}$. See that when $\pay(\direction^{(i)},\zbf^{(i-1)},f)\in S$, we have
\begin{align*}
  -\direction^{(i)}\cdot\ybf^{(i)} + \ybf^{(i)}\cdot\direction\leq 0,~~\forall\direction\in\mathcal{P},
\end{align*}
which implies
\begin{align*}
  \direction^{(i)}\in\argmax_{\direction\in\mathcal{P}}\direction\cdot\ybf^{(i)} = \argmax_{\direction\in\mathcal{P}}\direction\cdot\nabla f(\zbf^{(i-1)}). 
\end{align*}
So, picking $\direction^{(i)}$ that maximizes $\direction\cdot\nabla f(\zbf^{(i-1)})$ over $\direction\in\mathcal{P}$ is equivalent to picking $\direction^{(i)}$ such that the payoff $\pay(\direction^{(i)},\zbf^{(i-1)},f)$ is in $S$.

Algorithm \ref{alg:DR-SM} is a variant of $\offlinemetatext$ (\Cref{alg:offline-meta}) where the local optimization step is replaced with picking $\direction^{(i)}\in\mathcal{P}$ such that the payoff function evaluated on $\direction^{(i)}$ falls in the target set $S$. Recall that in $\offlinemetatext$, $\direction^{(i)}$ is picked in the local optimization step such that the payoff function on $\direction^{(i)}$ is non-negative, i.e., $\pay(\direction^{(i)},\zbf^{(i-1)},f)\geq\mathbf{0}$. In this variant, we want to pick $\direction^{(i)}$ such that $\pay(\direction^{(i)},\zbf^{(i-1)},f)\in S$ for some convex set $S$, which is not a positive orthant. 

Now, consider a Blackwell sequential game, as defined in Section \ref{sec:blackwell}, where $\algspaceB = \paramspace = \mathcal{P}$, $\advspaceB = [-U,U]^n$, $\pbf(\bm{x},\bm{y}) = \begin{bmatrix}
-\bm{x}\cdot\bm{y}\\
\bm{y}
\end{bmatrix}\in\mathbb{R}^{n+1}$, and $S= \textrm{Cone}(1\oplus\mathcal{P})^\circ\subset{\mathbb{R}^{n+1}}$. Note that the target set and payoff are $(n+1)$-dimensional. The set $S$ is response-satisfiable, since for every player 2's action $\bm{y}\in\advspaceB$, there exists a player 1's action, $\bm{x}^*\in\argmax_{\bm{x}\in\mathcal{P}}\bm{x}\cdot\bm{y}$, such that $\pbf(\bm{x}^*,\bm{y})\in S$. Specifically, when $\bm{x}^*\in\argmax_{\bm{x}\in\mathcal{P}}\bm{x}\cdot\bm{y}$, then $\bm{x}^*\cdot\bm{y}\geq\bm{x}\cdot\bm{y}$ for all $\bm{x}\in\mathcal{P}$, which implies $\pbf(\bm{x}^*,\bm{y})\cdot[1,\bm{x}]^T\leq 0$ for all $\bm{x}\in\mathcal{P}$, further implying $\pbf(\bm{x}^*,\bm{y})\in S$. Note that since $S$ is response-satisfiable and $S$ is polynomial-time separable, there exists a polynomial-time algorithm $\BlackwellAlg$, based on Theorem \ref{def:blackwell-thm}, such that for any sequence of actions $\ybf_1,\ldots,\ybf_T\in\advspaceB$ generated by an adaptive adversary, $\BlackwellAlg$ can generate a sequence of actions $\xbf_1,\ldots,\xbf_T\in\mathcal{P}$ such that
\smallskip
\begin{align}
\label{eq:guarantee_matroid}
d_\infty\left(\frac{1}{T}\sum_{t=1}^T\pbf(\xbf_t,\ybf_t),S\right)\leq\bigO{D_\infty(\pbf)\sqrt{\frac{\log{d_{\pbf}}}{T}}} = \bigO{U\sqrt{\frac{\log{n}}{T}}}.
\end{align}
\vspace{2mm}
%Since we are not using the positive orthant as the target set, we are not directly applying \Cref{thm:full-info-online-meta} to transform the Frank-Wolfe algorithm to its online counterpart for this specific problem. We will instead directly show the result for this problem using \Cref{def:blackwell-thm} as follows.

\subsection{Online Learning Algorithm for Full-Information Setting} \label{sec:full_strong_DR}
In the online problem, we consider an adversarial sequence of strong-DR submodular functions $f_1,\ldots,f_T$, where for each $1\leq t\leq T$, $f_t:\mathcal{P}\rightarrow[0,1]$ for some downward closed polynomial-time separable convex polytope $\Pcal\subseteq \mathbb{R}^n$ with $\ell_2$ diameter $\diampolytope$. Moreover, for each $t\in[T]$, $f_t$ is $L$-Lipschitz smooth and $||\nabla f_t(\zbf)||_\infty\leq U$ for all $\zbf\in\mathcal{P}$. Then, for each subproblem $i\in[N]$ in Algorithm~\ref{alg:DR-SM}, we can replace the local optimization step with a polynomial-time online algorithm $\BlackwellAlg^{(i)}$ where $\xbf_t=\direction_t^{(i)}$ and $\ybf_t = \textrm{AdvB}(\zbf^{(i-1)}_t,f_t) = \nabla f_t(\zbf_t^{(i-1)})$. For subproblem $i$, the guarantee in \Cref{eq:guarantee_matroid} becomes
\begin{align}
\label{eq:guarantee_matroid_payoff}
d_\infty\left(\frac{1}{T}\sum_{t=1}^T\pay(\direction_t^{(i)},\zbf_t^{(i-1)},f_t),S\right)\leq\bigO{U\sqrt{\frac{\log{n}}{T}}}.
\end{align}
We utilize $N$ parallel runs of $\BlackwellAlg$, one for each subproblem as shown in Algorithm~\ref{alg:online-DR-SM}. Note that we do not directly apply our meta \Cref{alg:full-info-backbone} in this problem because the target set $S$ is not the positive orthant. But due to this subtle difference, we derive our results with similar  approach and our algorithm follows the same framework.

\begin{algorithm}
  \caption{Full-information online learning algorithm for Algorithm \ref{alg:DR-SM}}
  \label{alg:online-DR-SM}
  \textbf{Input:} A sequence of $L$-Lipschitz smooth strong-DR monotone submodular functions $f_1,\ldots,f_T:\mathcal{P}\rightarrow[0,1]$ such that $||\nabla f_t(\zbf)||_\infty\leq U$ for all $\zbf\in\mathcal{P}, t\in[T]$, where $\Pcal$ is a downward closed and convex polytope   with $\ell_2$ diameter $\diampolytope$. $N$ algorithms $\{\BlackwellAlg^{(i)}\}_{i=1,\ldots,N}$, where each $\BlackwellAlg^{(i)}$ is an online learning algorithm for a Blackwell sequential game with $\algspaceB=\mathcal{P},\advspaceB=[-U,U]^n,\pbf(\xbf,\ybf)=\begin{bmatrix}
  -\bm{x}\cdot\bm{y}\\
  \bm{y}
  \end{bmatrix}$, and $S = \textrm{Cone}(1\oplus\mathcal{P})^\circ$.\\
  \textbf{Output:} $\zbf_1,\ldots,\zbf_T\in\mathcal{P}$.\\
  \For{$t=1,\ldots,T$}{\vspace{0.2em}
  Initialize $\zbf^{(0)}_t \leftarrow \mathbf{0}$.\\
  \For{iteration $i = 1, 2, \ldots, N$}{\vspace{0.2em}
   
   Choose $\direction^{(i)}_t\in\mathcal{P}$ by querying online algorithm $\BlackwellAlg^{(i)}$ given the update parameters and vector payoffs prior to round $t$ in the Blackwell sequential game of subproblem $i$, i.e.,  $\direction_t^{(i)}\leftarrow\BlackwellAlg^{(i)}\left(\direction_1^{(i)},\ldots,\direction_{t-1}^{(i)},\left\{\pbf\left(\direction_\tau^{(i)},\ybf_{\tau}^{(i)})\right)\right\}_{\tau\in[1:t-1]}\right)$.\\
   Set $\zbf_t^{(i)}\leftarrow \zbf_t^{(i-1)} + \frac{1}{N}\direction_t^{(i)}$.
  
  }
  \Return{$\zbf_t\leftarrow\zbf_t^{(N)}$}.\\
  \emph{Adversary reveals function $f_t$.}\\
  \For{iteration $i = 1, 2, \ldots, N$}{\vspace{0.2em}
    Compute $\nabla f_t(\zbf_t^{(i-1)})$.\\
    Give feedback $\pbf(\direction_t^{(i)},\ybf_t^{(i)})\leftarrow\begin{bmatrix}
-\bm{\direction_t^{(i)}}\cdot\nabla f_t(\zbf_t^{(i-1)})\\
\nabla f_t(\zbf_t^{(i-1)})
\end{bmatrix}$ to the Blackwell algorithm $\BlackwellAlg^{(i)}$. Note that $\ybf_t^{(i)} = \textrm{AdvB}(\zbf_t^{(i-1)},f_t) = \nabla f_t(\zbf_t^{(i-1)})$.
  }
  }
\end{algorithm}

\begin{theorem}\normalfont{\textbf{(Online learning for strong-DR monotone SM maximization over downward closed convex sets)}}
\label{thm:DR-SM}
Let $\Pcal$ be a downward closed convex polytope with $\ell_2$ diameter $\diampolytope$. Consider the online learning problem of maximizing strong-DR monotone submodular functions on $\mathcal{P}$, which are $L$-Lipschitz smooth and have bounded gradient of $U$ for any point $\zbf\in\mathcal{P}$. Then, under full-information setting, \Cref{alg:online-DR-SM} with $N$ iterations
 obtains $\bigO{{\diampolytope}U\sqrt{Tn\log{n}} + \frac{TL{\diampolytope}^2}{N}}$ $\gamma_N$-regret, where $T$ is the number of rounds and $\gamma_N = 1-(1-1/N)^{N}\geq \left(1-1/e\right)$. The benchmark in the regret bounds is $\gamma_N\max_{\zbf\in \mathcal{P}}\sum_{t=1}^T f_t(\zbf)$. By setting $N= \frac{\sqrt{T}LR_{\mathcal{P}}}{U\sqrt{n\log{n}}}$, we get $\bigO{{\diampolytope}U\sqrt{Tn\log{n}}}$ $\gamma_N$-regret. 
\end{theorem}

Before proving the above theorem, we first prove the following technical lemma using strong-DR submodularity property of our functions. The proof of Lemma \ref{lemma:iterative_z_matroid} is presented at the end of this section.
\begin{lemma}
\label{lemma:iterative_z_matroid}
Consider any iteration $i\in [N]$ of the Frank-Wolfe algorithm (Algorithm \ref{alg:DR-SM}). For all $\zbf\in\mathcal{P}$, we have
\begin{equation*}
  \direction^{(i)}\cdot\nabla f(\zbf^{(i-1)})\geq f(\zbf) - f(\zbf^{(i-1)}).
\end{equation*}
\end{lemma}

\proof{\emph{Proof of Theorem~\ref{thm:DR-SM}.}}
We show that \Cref{alg:online-DR-SM} works in two steps: (i) proving a variant of the extended robustness property and (ii) proving a variant of the implication of Blackwell reducibility.

We start with a variant of the extended robustness property. Suppose that $\left\{\direction_t^{(i)},\zbf_t^{(i)}\right\}_{t\in[T],0\leq i\leq N}$ are the values from \Cref{alg:online-DR-SM} with $N$ iterations. We show that if the following equation holds for some function $h$:
\begin{equation}
\label{eq:condition_h_matroid}
  \forall i \in [N], \quad \sum_{t=1}^T{\direction}^{(i)}_t\cdot\nabla f_t(\zbf_t^{(i-1)})\geq\max_{\direction\in\mathcal{P}}\sum_{t=1}^T\direction\cdot\nabla f_t(\zbf_t^{(i-1)}) - h(T),
\end{equation}
then we have
\begin{equation}
\label{eq:extend-Rank-1}
  \forall\zbf^*\in\mathcal{P},\quad \sum_{t=1}^Tf_t(\zbf_t)\geq\gamma_N\sum_{t=1}^Tf_t(\zbf^*)-h(T)-\frac{TL{\diampolytope}^2}{2N}\,,
\end{equation}
where $\gamma_N = 1- \left(1-\frac{1}{N}\right)^N$. 
To show this, we use Lemma~\ref{lemma:iterative_z_matroid}. Note that for each round $t$ and subproblem $i\in[N]$, we have
\begin{equation*}
\begin{aligned}
  f_t(\zbf_t^{(i)}) - f_t(\zbf_t^{(i-1)}) &= f_t(\zbf_t^{(i-1)} +\frac{1}{N}\direction_t^{(i)}) - f_t(\zbf_t^{(i-1)})\\
  &\overset{(1)}{\geq} \frac{1}{N}{\nabla f_t(\zbf_t^{(i-1)})\cdot\direction_t^{(i)}} -\frac{1}{2}\frac{L||\direction_t^{(i)}||_{2}^2}{N^2}\\
  &\overset{(2)}{\geq}\frac{1}{N}{\nabla f_t(\zbf_t^{(i-1)})\cdot\direction_t^{(i)}} - \frac{1}{2}\frac{L{\diampolytope}^2}{N^2}.
\end{aligned}
\end{equation*}
Inequality~(1) holds because $f_t$ is $L$-Lipschitz smooth and Inequality~(2) holds since $\direction^{(i)}_t\in\mathcal{P}$ and $\mathcal{P}$ is a downward closed convex polytope with $\ell_2$ diameter $\diampolytope$. This implies that 
$
  ||\direction_t^{(i)}||_{2}^2\leq{\diampolytope}^2
$.

% Inequality~(1) holds because $f_t$ is $L$-Lipschitz smooth and Inequality~(2) holds since $\direction^{(i)}_t\in\mathcal{P}\subseteq[0,1]^n$ where $\mathcal{P}$ is a downward closed convex polytope with $\ell_2$ diameter $\diampolytope$, so
% \begin{equation*}
%   ||\direction_t^{(i)}||_{2}^2\leq n\left(\frac{\text{Rank}(M)}{n}\right)^2 = \frac{R^2}{n}.
% \end{equation*}

Summing this for all $t\in[T]$, for any $\zbf^*\in\Pcal$, we have
\begin{equation*}
\begin{aligned}
  \sum_{t=1}^T\left(f_t(\zbf_t^{(i)}) - f_t(\zbf_t^{(i-1)})\right) 
  &{\geq}\frac{1}{N}\sum_{t=1}^T\nabla f_t(\zbf_t^{(i-1)})\cdot\direction_t^{(i)} - \frac{1}{2}\frac{TL{\diampolytope}^2}{N^2} \\
  &\overset{(1)}{\geq}\frac{1}{N}\max_{\direction\in\mathcal{P}}\sum_{t=1}^T\direction\cdot\nabla f_t(\zbf_t^{(i-1)}) - \frac{1}{N}h(T) - \frac{1}{2}\frac{TL{\diampolytope}^2}{N^2}\\
  &\overset{(2)}{\geq}\frac{1}{N}\left(\sum_{t=1}^Tf_t(\zbf^*)-\sum_{t=1}^Tf_t(\zbf_t^{(i-1)})\right) - \frac{1}{N}h(T) - \frac{1}{2}\frac{TL{\diampolytope}^2}{N^2},
\end{aligned}
\end{equation*}
where Inequality~(1) holds because of the assumption in \Cref{eq:condition_h_matroid}, and Inequality~(2) is reached by applying \Cref{lemma:iterative_z_matroid}. Rearranging the terms, for any $\zbf^*\in\Pcal$, we have
\begin{equation*}
\sum_{t=1}^T\left(f_t(\zbf_t^{(i)}) - f_t(\zbf^{*})\right) \geq\left(1-\frac{1}{N}\right)\left(\sum_{t=1}^T\left(f_t(\zbf_t^{(i-1)}) - f_t(\zbf^{*})\right) \right) - \frac{1}{N}h(T) - \frac{1}{2}\frac{TL{\diampolytope}^2}{N^2}, 
\end{equation*}
and iterating from $i=1,2,\ldots,N$, we get %{\color{blue}I know that we were going to add some steps here, but after some thought, I think without is better? It's neater.}
\begin{equation*}
\sum_{t=1}^T\left(f_t(\zbf_t^{(N)}) - f_t(\zbf^{*})\right) \geq\left(1-\frac{1}{N}\right)^N\left(\sum_{t=1}^T\left(f_t(\zbf_t^{(0)}) - f_t(\zbf^{*})\right) \right) - h(T) - \frac{1}{2}\frac{TL{\diampolytope}^2}{N}.
\end{equation*}
Since $f_t(\zbf_t^{(0)}) = f_t(\mathbf{0})\geq 0$, for any $\zbf^*\in\Pcal$, we have
\begin{equation*}
\begin{aligned}
  \sum_{t=1}^Tf_t(\zbf_t^{(N)})\geq \left(1-\left(1-\frac{1}{N}\right)^N\right)
  \sum_{t=1}^Tf_t(\zbf^*) - h(T) - \frac{TL{\diampolytope}^2}{2N} = \gamma_N\sum_{t=1}^Tf_t(\zbf^*)-h(T)-\frac{TL{\diampolytope}^2}{2N},
\end{aligned}
\end{equation*} which is the desired inequality in Equation \eqref{eq:extend-Rank-1}.
%where the approximation factor $\gamma_N$ goes to $\left(1-\frac{1}{e}\right)$ as $N$ goes to infinity.

We now prove that the regret guarantee in \Cref{eq:guarantee_matroid_payoff}, which is obtained from \Cref{def:blackwell-thm},  implies \Cref{eq:condition_h_matroid} for $h(T) = \bigO{R_\Pcal U\sqrt{T{n\log{n}}}}$. From the Blackwell guarantee (i.e., Equation \eqref{eq:guarantee_matroid_payoff}), we have
% \begin{equation*}
\begin{align}
\label{eq:first_line_matroid}
\bigO{U\sqrt{\frac{\log{n}}{T}}} 
&\geq d_\infty\left(\frac{1}{T}\sum_{t=1}^T\pay(\direction_t^{(i)},\zbf_t^{(i-1)},f_t),S\right)\\
&\overset{(1)}{\geq}\frac{1}{\sqrt{n+1}}d_2\left(\frac{1}{T}\sum_{t=1}^T\pay(\direction_t^{(i)},\zbf_t^{(i-1)},f_t),S\right)\nonumber\\
&\overset{(2)}{=}\frac{1}{\sqrt{n+1}}\quad\underset{\bm{w}\in \textrm{Cone}(1\oplus\mathcal{P})\cap B_2^{n+1}(1)}{\max}~\bm{w}\cdot{\frac{1}{T}\sum_{t=1}^T\pay(\direction_t^{(i)},\zbf_t^{(i-1)},f_t)}\nonumber\\
&\overset{(3)}{=}\frac{1}{\sqrt{n+1}}\underset{\direction\in\mathcal{P}}{\max}~\frac{(1\oplus\direction)}{||1\oplus\direction||_{2}}\cdot{\frac{1}{T}\sum_{t=1}^T\pay(\direction_t^{(i)},\zbf_t^{(i-1)},f_t)}\nonumber\\
&\overset{(4)}{=}\frac{1}{\sqrt{n+1}}\underset{\direction\in\mathcal{P}}{\max}~\dfrac{\left(\frac{1}{T}\sum_{t=1}^T\ybf_t^{(i)}\cdot\direction - \frac{1}{T}\sum_{t=1}^T\ybf_t^{(i)}\cdot\direction_t^{(i)}\right)}{||1\oplus\direction||_{2}}\nonumber\\
\label{eq:last_line_matroid}
&\overset{(5)}{\geq}\frac{1}{\sqrt{n+1}\sqrt{1+{\diampolytope}^2}}\frac{1}{T}~{\left(\underset{\direction\in\mathcal{P}}{\max}\sum_{t=1}^T\ybf_t^{(i)}\cdot\direction - \sum_{t=1}^T\ybf_t^{(i)}\cdot\direction_t^{(i)}\right)}
% \\
% \label{eq:last_line_matroid}
% &\overset{(6)}{\geq}\frac{1}{\sqrt{n+1}\max\{1,\sqrt{2}\frac{R}{\sqrt{n}}\}}\frac{1}{T}~{\left(\underset{\direction\in\mathcal{P}}{\max}\sum_{t=1}^T\ybf_t^{(i)}\cdot\direction - \sum_{t=1}^T\ybf_t^{(i)}\cdot\direction_t^{(i)}\right)}
\end{align}
% \end{equation*}
Here, $B_2^{n+1}(1)$ is a $(n+1)$-dimensional $\ell_2$ unit ball. Inequality~(1) holds since the dimension of the payoff vector is $(n+1)$, Inequality~(2) follows from Lemma 13 in \cite{abernethy2011blackwell}. (The lemma says that for every convex cone $C$ in $\mathbb{R}^d$, $d_2(\xbf, C) = \max_{\bm{\theta}\in C^{\circ}\cap B_2^d(1)}\langle\bm{\theta},\xbf\rangle$.) Inequality~(3) follows from rewriting $\bm{w}$ as a function of $\direction\in\mathcal{P}$, Inequality~(4) holds because the definition of the payoff vector where $\ybf_t^{(i)} = \nabla f_t(\zbf_t^{(i-1)})$, and Inequality~(5) holds because $|| 1\oplus\direction||_{2}\leq\sqrt{1+{\diampolytope}^2}$ for any $\direction\in \Pcal$.
% $|| 1\oplus\direction||_{2}\leq\max\{1,\sqrt{2}\max_{\direction\in\mathcal{P}}||\direction||_{2}\}$. Lastly, since the polytope $\Pcal$ has Rank $R$ and the ground size is of size $n$, we have $||\direction||_{2}\leq\frac{R}{\sqrt{n}}$ for $\direction\in\mathcal{P}$, which implies Inequality~(6). 
Putting things together and rearranging, we get
\begin{equation*}
\begin{aligned}
\sum_{t=1}^T\direction_t^{(i)}\cdot\ybf_t^{(i)}\geq\max_{\direction\in\mathcal{P}}\direction\cdot\sum_{t=1}^T\ybf_t^{(i)} - \bigO{U\sqrt{\frac{\log{n}}{T}}\cdot {\diampolytope}\sqrt{n}T} = \max_{\direction\in\mathcal{P}}\direction\cdot\sum_{t=1}^T\ybf_t^{(i)} - \bigO{{\diampolytope}U\sqrt{T{n\log{n}}}},
\end{aligned}
\end{equation*}
which is \Cref{eq:condition_h_matroid} with $h(T) = \bigO{{\diampolytope}U\sqrt{T{n\log{n}}}}$.

In conclusion, by invoking Equation \eqref{eq:extend-Rank-1} with $h(T) = \bigO{{\diampolytope}U\sqrt{T{n\log{n}}}}$, the total regret of \Cref{alg:online-DR-SM} is $\bigO{{\diampolytope}U\sqrt{Tn\log{n}} + \frac{TL{\diampolytope}^2}{N}}$. By setting $N= \frac{\sqrt{T}L{\diampolytope}}{U\sqrt{n\log{n}}}$, we get $\bigO{{\diampolytope}U\sqrt{Tn\log{n}}}$ $\gamma_N$-regret, as desired. 

%We finish the proof of the theorem by providing the proof of Lemma~\ref{lemma:iterative_z_matroid}, which is the one place in the proof that we use strong-DR submodularity.
%If we increase the number of subproblems/iterations $N$, then the approximation factor $\gamma_N$ converges to $(1-1/e)$ and the regret bound converges to $\bigO{RU\sqrt{Tn\log{n}}}$. The trade-off for increasing $N$ is a bigger computational time for \Cref{alg:online-DR-SM} because there are more subproblems.
\[\myqed\]
\endproof{}
For completeness, we provide the proof of the technical lemma used in the above proof.
\begin{proof}{\emph{Proof of Lemma~\ref{lemma:iterative_z_matroid}.}}
Let $\direction^* = \zbf\vee\zbf^{(i-1)} - \zbf^{(i-1)}$. Observe that
\begin{equation*}
\begin{aligned}
  f(\zbf)-f(\zbf^{(i-1)}) &\overset{(1)}{\leq} f(\zbf\vee\zbf^{(i-1)})-f(\zbf^{(i-1)}) = f(\zbf^{(i-1)}+\direction^*) - f(\zbf^{(i-1)}) \\
  &\overset{(2)}{\leq}\direction^*\cdot\nabla f(\zbf^{(i-1)}) \overset{(3)}{\leq}\direction^{(i)}\cdot\nabla f(\zbf^{(i-1)}),
\end{aligned}
\end{equation*}
where Inequality~(1) holds because $f$ is a monotone function, Inequality~(2) holds because of the concavity (along each direction) property of strong-DR submodular functions, and Inequality~(3) holds because $\direction^{(i)}$ maximizes $\direction\cdot\nabla f(\zbf^{(i-1)})$ over $\mathcal{P}$ and $\direction^*\in\mathcal{P}$ since $\mathcal{P}$ is downward closed.
\[\myqed\]
\end{proof}

\subsubsection{Application to Set Submodular Maximization Subject to Matroids}
As an important application of \Cref{thm:DR-SM}, we consider the online learning version of maximizing monotone set submodular functions subject to a matroid constraint. We start by stating the following simple corollary of \Cref{thm:DR-SM} for maximizing strong-DR monotone submodular maximization over matroid polytopes, which is the key to obtain similar results for maximizing monotone set submodular maximization subject to matroids constraints.
\begin{corollary}
\label{corr:strong-DR-SM-matroid}
Let $M = (V, \mathcal{I})$ be a matroid with $|V| = n$ and  $\textrm{Rank}(M) = R$. Define  $\Pcal_{M} = \textrm{Conv}(\{\mathbf{1}_I:I\in\mathcal{I}\})$ as a matroid polytope associated with matroid $M = (V, \mathcal{I})$.
When we restrict our domain to $\Pcal_{M}$ (i.e., if we set $\Pcal$ to $\Pcal_{M}$), we obtain a total regret of $\bigO{RU\sqrt{Tn\log{n}} + \frac{TL{R}^2}{Nn}}$ for \Cref{alg:online-DR-SM}. Setting $N= \frac{\sqrt{T}LR}{Un\sqrt{n\log{n}}}$, we get $\bigO{RU\sqrt{Tn\log{n}}}$ $\gamma_N$-regret.

%the matroid polytope as a special case of a downward closed convex polytope, where we consider a matroid $M = (V, \mathcal{I})$ with $|V| = n$, $\textrm{Rank}(M) = R$, and its associated matroid polytope $\Pcal = \textrm{Conv}(\{\mathbf{1}_I:I\in\mathcal{I}\})$, the $\ell_2$ diameter of the polytope becomes $\frac{R}{\sqrt{n}}$ since for any $\direction$ in matroid polytope $\Pcal$,  we have $$||\direction||_{2}^2\leq n\left(\frac{\textrm{Rank}(M)}{n}\right)^2 = \frac{R^2}{n}.$$ Moreover, we have $|| 1\oplus\direction||_{2}\leq\max\{1,\sqrt{2}\max_{\direction\in\mathcal{P}}||\direction||_{2}\}\leq\max\{1,\sqrt{2}\frac{R}{\sqrt{n}}\}$, which results in $h(T) = \bigO{RU\sqrt{T{n\log{n}}}}$ in \Cref{eq:condition_h_matroid}. This implies a total regret of $\bigO{RU\sqrt{Tn\log{n}} + \frac{TL{R}^2}{Nn}}$ for \Cref{alg:online-DR-SM}. Setting $N= \frac{\sqrt{T}LR}{Un\sqrt{n\log{n}}}$, we get $\bigO{RU\sqrt{Tn\log{n}}}$ $\gamma_N$-regret. 
\end{corollary}
\begin{proof}{Proof of Corollary \ref{corr:strong-DR-SM-matroid}.}
First observe that the $\ell_2$ diameter of the polytope $\Pcal_{M}$  becomes $\frac{R}{\sqrt{n}}$ since for any $\direction$ in matroid polytope $\Pcal_M$,  we have $$||\direction||_{2}^2\leq n\left(\frac{\textrm{Rank}(M)}{n}\right)^2 = \frac{R^2}{n}.$$ Moreover, we have $|| 1\oplus\direction||_{2}\leq\max\{1,\sqrt{2}\max_{\direction\in\mathcal{P}}||\direction||_{2}\}\leq\max\{1,\sqrt{2}\frac{R}{\sqrt{n}}\}$, which results in $h(T) = \bigO{RU\sqrt{T{n\log{n}}}}$ in \Cref{eq:condition_h_matroid}. This implies a total regret of $\bigO{RU\sqrt{Tn\log{n}} + \frac{TL{R}^2}{Nn}}$ for \Cref{alg:online-DR-SM}. Setting $N= \frac{\sqrt{T}LR}{Un\sqrt{n\log{n}}}$, we get $\bigO{RU\sqrt{Tn\log{n}}}$ $\gamma_N$-regret. 
\end{proof}

%, where $\ell_2$ diameter of $\Pcal_{M}$ becomes $\frac{R}{\sqrt{n}}$ since for any $\direction$ in matroid polytope $\Pcal$,  we have $$||\direction||_{2}^2\leq n\left(\frac{\textrm{Rank}(M)}{n}\right)^2 = \frac{R^2}{n}.$$

Next, we seek to understand if we can obtain similar results as in  Corollary \ref{corr:strong-DR-SM-matroid} for maximizing monotone set submodular functions with a matroid constraint.  To do so, we rely on the notion of \emph{multi-linear extension} of set   submodular functions and \emph{pipage rounding algorithm} proposed in  \cite{calinescu2011maximizing}. Using these two notions, \cite{calinescu2011maximizing} further propose an optimal approximation algorithm for maximizing monotone set submodular functions with a matroid constraint. We briefly describe this approach below and then show how to combine it with \Cref{alg:DR-SM} to extend our result to the full information online learning version of this problem.\footnote{Because of a subtle technical reason, this result does not immediately extend to the bandit setting; see \Cref{rem:bandit-DR-SM}.}
 
Given a set submodular function $f:2^{[n]}\rightarrow[0,1]$, the following continuous extension of this function, known as the {multi-linear extension}, is a strong-DR submodular continuous function \citep{calinescu2011maximizing}:
\begin{equation}
\label{eq:MLE-SM}
\forall \mathbf{x}\in[0,1]^n:~~~f^{\textsc{MLE}}(\mathbf{x})=\sum_{S\subseteq [n]}f(S)\prod_{i\in S}x_i\prod_{i\notin S}(1-x_i)\,.
\end{equation}
Notably, $f^{\textsc{MLE}}(\mathbf{x})$ is essentially the expected value of $f(S)$ at a randomized set $S$, where each element $i\in[n]$ is placed in $S$ independently with probability $x_i$. As a result, (i) the two functions have the same value when $\mathbf{x}\in\{0,1\}^n$, and (ii) if $\mathbf{x}^*$ maximizes $f(\mathbf{x})$ over a matroid polytope, $f^{\textsc{MLE}}(\mathbf{x}^*)$ should be equal to the maximum value of $f(S)$ over independent sets $S$ of its associated matroid. This last property is a simple consequence of the existence of polynomial-time loss-less randomized rounding algorithms in the matroid polytope for monotone submodular functions, such as {the pipage rounding} introduced in \cite{calinescu2011maximizing}. Given a point $\mathbf{x}$ in the matroid polytope, the pipage randomized rounding algorithm returns a randomized set $\tilde{S}$ such that $\tilde{S}$ is always an independent set of the matroid and  $\expect{f(\tilde{S})}\geq f^{\textsc{MLE}}(\mathbf{x})$. Applying such a loss-less rounding algorithm to the maximizer point $\mathbf{x}^*$ will result in a distribution over independent sets of the matroid, where every point in the support of this distribution should be a maximizer of $f(S)$ over independent sets of the polytope (otherwise, $\mathbf{x}^*$ could not be the maximizer). We should also highlight that one can aim to approximately maximize the multi-linear extension of the monotone submodular function subject to a matroid constraint as it is strong-DR, Lipschitz continuous and smooth, e.g., see \cite{bian2016guaranteed}. This can be done using a first-order method such as \Cref{alg:DR-SM}. Then, we can use the pipage rounding algorithm to obtain a randomized approximation algorithm with \emph{exactly} the same approximation guarantee in-expectation for monotone set submodular maximization subject to a matroid constraint. 

We now sketch how to use the multi-linear extension and the pipage rounding algorithm in a full-information online learning setting, and how to obtain a similar result as in \Cref{corr:strong-DR-SM-matroid} but for monotone set submodular functions $f_1,\ldots,f_T$. The idea is running the online learning algorithm of the previous section (\Cref{alg:online-DR-SM}) on the sequence of functions $f^{\textsc{MLE}}_1,\ldots,f^{\textsc{MLE}}_T$ (which are strong-DR continuous submodular functions) given the full information feedback $f_t(\cdot)$ at the end of each round $t$ (will be detailed later). Running \Cref{alg:online-DR-SM} generates points $\zbf_1,\ldots,\zbf_T$ in the matroid polytope. We then sample randomized sets $\tilde{S}_1,\ldots,\tilde{S}_T$ at the end of each round $t$ using the pipage randomized rounding algorithms, so that these sets are independent sets in the matroid and also $\expect{f_t(\tilde{S}_t)}\geq f^{\textsc{MLE}}(\zbf_t)$.
% {\color{red}@Rad, what do you think about talking about the feedback structure, 
% i.e., explaining what info we have access to after each decision? I am assuming we still get $f_t(\cdot)$ and then use it to estimate $f^{\textsc{MLE}}(\cdot)$ and its gradient.  Also, we need to clarify what advB is in this setting. The reason I am suggesting is the following sentence  ``The idea is running the online learning algorithm of the previous section (\Cref{alg:online-DR-SM}) on the sequence of functions $f^{\textsc{MLE}}_1,\ldots,f^{\textsc{MLE}}_T$ (which are strong-DR continuous submodular functions) to generate points $\zbf_1,\ldots,\zbf_T$ in the matroid polytope." This sentence which is correct can cause confusion about the nature of the feedback in Algorithm 9. } 
In this way, we have:
$$
\gamma_N\cdot\underset{S\in\mathcal{I}}{\max}{\sum_{t\in[T]}f_t(S)}-\sum_{t\in[T]}\expect{f_t(\tilde{S}_t)}\leq \gamma_N\cdot\underset{\mathbf{z}\in\textrm{Conv}(\mathcal{I})}{\max}{\sum_{t\in[T]}f^{\textsc{MLE}}_t(\zbf)}-\sum_{t\in[T]}f^{\textsc{MLE}}_t(\zbf_t)~,
$$
and therefore the regret bound of \Cref{alg:online-DR-SM} carries over to this setting. 

To run the full-information online learning algorithm \Cref{alg:online-DR-SM} as described above, we need to be able to construct the full information feedback $f_t^{\textsc{MLS}}(\mathbf{z})$ (for all $\mathbf{z}\in [0,1]^n$) for the underlying strong-DR continuous submodular function $f_t^{\textrm{MLE}}$ given the set function full-information feedback $f_t$ that we receive in each round. Given the full information feedback $f_t$, $f_t^{\textsc{MLS}}(\mathbf{\zbf})$ can be estimated at any point $\mathbf{z}$ through sampling and taking average. The sampling procedure evaluates $f_t$ at a randomized set $S$, where every element $i$ is placed in $S$ independently with probability $z_i$. By repeating the same process and averaging one can obtain an estimation of $f_t^{\textsc{MLS}}(\mathbf{\zbf})$ at any accuracy level.  (For example, with samples complexity of $\Omega(T^4)$, the accuracy level in estimating multi-linear extension (and hence its gradient) at any point will be of $O(\frac{1}{T^2})$ by applying a concentration bound such as the Chernoff-Hoeffding bound. Note that such an additive error is negligible in our regret bounds as it causes an additional regret that is in the order of $o(1)$ by following similar lines as in the proof of \Cref{thm:DR-SM}. We omit the details for brevity.)

\subsection{Online Learning Algorithm  for Bandit Setting} \label{sec:bandit_strong_DR} In the bandit setting, we observe the function value evaluated on the chosen point at each round, instead of getting the full function $f_t$. That prevents us from providing the right feedback for each of the Blackwell algorithms. Recall that in \Cref{alg:DR-SM}, the algorithm gives the feedback \[\pbf(\direction_t^{(i)},\ybf_t^{(i)})\leftarrow\begin{bmatrix}
-\bm{\direction_t^{(i)}}\cdot\nabla f_t(\zbf_t^{(i-1)})\\
\nabla f_t(\zbf_t^{(i-1)})
\end{bmatrix}\] to $\BlackwellAlg^{(i)}$ in each round.
Given this, to design an online learning algorithm for the bandit setting, we would like 
 to provide  an unbiased estimator of the gradient $\nabla f_t(\zbf_t^{(i-1)})$ in terms of the function value $f_t(\zbf_t^{(i-1)})$ every time the algorithm explores. To do so, following \cite{zhang2019online},
 we use an intermediate function $\hatf_{\delta,t}(\zbf) = \mathbb{E}_{\bm{v}\sim\textrm{Unif}\left( B_2^n\right)}[f(\zbf + \delta\bm{v})]$ for some small value of $\delta>0$ that we will determine later, where $B_2^n$ denotes the $\ell_2$ unit ball in $\mathbb{R}^n$. Note that  when $f_t$ is monotone, continuous strong-DR submodular, $L$-Lipschitz smooth with a bounded gradient of $U$ for any point $\zbf\in\mathcal{P}$, so is $\hatf_{\delta,t}$. We further have $|f_t(\zbf)-\hatf_{\delta,t}(\zbf)|\leq U\delta$ for all $\zbf\in\Pcal$. Moreover, let  $S_2^{(n-1)}$ denote the $\ell_2$ unit sphere in $\mathbb{R}^n$ (i.e., $S_2^{(n-1)} = \{\bm{v}\in\mathbb{R}^n: \lVert\bm{v}\rVert_2 = 1\}$). Then,  when $\bm{u}$ is uniformly drawn at random from $S_2^{(n-1)}$, $\dfrac{n}{\delta}f_t(\zbf+\delta\bm{u})\bm{u}$ is an unbiased estimator of $\nabla\hatf_{\delta,t}(\zbf)$, i.e., $\nabla\hatf_{\delta,t}(\zbf) = \mathbb{E}_{\bm{u}\sim \textrm{Unif}\left(S_2^{(n-1)}\right)}\left[\dfrac{n}{\delta}f_t(\zbf+\delta\bm{u})\bm{u}\right]$ (\cite{flaxman2005online}, \cite{zhang2019online}). We further use an intermediate convex downward closed feasible region $\Pcal' = (1-\alpha)\Pcal+\delta\mathbf{1}$, where  $\alpha = \frac{\sqrt{n}+1}{r}\delta$, and $r$ is the largest positive real number such that $r\cdot \left(B_2^n\cap \mathbb{R}^n_{\geq 0}\right)\subseteq\Pcal$. The new feasible region  $\Pcal'$ is such that for any $\zbf\in\Pcal'$ and $\bm{u}\in S^{(n-1)}$, the point $\zbf+\delta\bm{u}$ is in $\Pcal$, keeping the chosen point of $\zbf+\delta\bm{u}$ feasible in the original problem.

In the bandit algorithm, after showing bandit Blackwell reducability as in \Cref{thm:bandit-blackwell-thm}, we replace the local optimization step with a polynomial-time online algorithm $\BanditBlackwellAlg^{(i)}$ for each subproblem $i\in[N]$. We utilize $N$ parallel runs of $\BanditBlackwellAlg$, one for each subproblem as shown in Algorithm~\ref{alg:bandit-DR-SM}. Each $\BanditBlackwellAlg^{(i)}$ is an algorithm for a particular bandit Blackwell sequential game (see its general definition in \Cref{sec:bandit-blackwell}). In order to be able to obtain the feedback required in this bandit Blackwell sequential game, we develop our reduction as if we wanted to maximize  $\hatf_{\delta,t}$ instead of $f_t$ over feasible region $\mathcal P'$ (not $\mathcal P$). Since $\delta$ is very small, this does not impact the regret of the designed algorithm. Specifically, we consider a bandit Blackwell sequential game with $\algspaceB = \paramspace = \mathcal{P}'$, $\advspaceB = [-U_{\delta,n},U_{\delta,n}]^n$, where $U_{\delta,n}=\frac{n}{\delta}$ (i.e., an upper-bound on the absolute value of each coordinate of the vector $\nabla\hatf_{\delta}(\zbf^{(i-1)})$). We further define the vector payoff function $\pbf(\bm{x},\bm{y}) = \begin{bmatrix}
-\bm{x}\cdot\bm{y}\\
\bm{y}
\end{bmatrix}$ and  the target set $S= \textrm{Cone}(1\oplus\mathcal{P})^\circ$. In this reduction, we use a different $\advfunB$ (compared to the one used for the full-information setting). We define $\advfunB(\zbf^{(i-1)},f)$ to be $\nabla\hatf_{\delta}(\zbf^{(i-1)})$, and hence we have $\bm{y}_t^{(i)} = \advfunB(\zbf^{(i-1)}_t,f_t) = \nabla\hatf_{\delta, t}(\zbf^{(i-1)})$. Moreover, in each round, we have $\xbf_t^{(i)} = \direction_t^{(i)}$.

As stated earlier, at the core of our bandit algorithm, we need to provide the right feedback  when subproblem $i$ explores in round $t$. Such a feedback should be an unbiased estimator of 
$$
{\pbf}\left(\direction_t^{(i)},\advfunB(\zbf_t^{(i-1)}, f_t)\right)=\begin{bmatrix}
-\direction_t^{(i)}\cdot \nabla\hatf_{\delta,t}(\zbf_t^{(i-1)})\\
\nabla\hatf_{\delta,t}(\zbf_t^{(i-1)})
\end{bmatrix}\,.
$$
Here, $\direction^{(i)}_t$ is the output of $\BanditBlackwellAlg^{(i)}$.
Denoting the desired unbiased estimator for subproblem $i$ in round $t$ by $\hat\pbf_t^{(i)}$, we construct $\hat\pbf_t^{(i)}$ with the help of our exploration sampling device, denoted by $\unbiasedestimator$. Given the current point $\zbf_t^{(i-1)}\in \Pcal'$ received from the previous sub-problem and $\direction_t^{(i)}$ as the current decision of $\BanditBlackwellAlg$ for exploration, $\unbiasedestimator$ returns $(\wbf_{\textrm{exp}},\zbf_{\textrm{exp}})$ such that (i)~$\zbf_{\textrm{exp}}\in\Pcal$, i.e., it is feasible, and (ii)~$\hat\pbf_t^{(i)}=f_t(\zbf_{\textrm{exp}})\bm{w}_{\textrm{exp}}$ is a desired unbiased estimator, i.e.,  we have 
$$
\expect{\hat\pbf_t^{(i)}}=\expect{f_t(\zbf_{\textrm{exp}})\bm{w}_{\textrm{exp}}}={\pbf}\left(\direction_t^{(i)},\advfunB(\zbf_t^{(i-1)}, f_t)\right)\,.
$$
Note that the above unbiased estimator can be constructed by only evaluating the function $f_t$ at a feasible point $\zbf_{\textrm{exp}}$ in $\Pcal$. It only remains to show how to construct such an exploration sampling device.

In order to satisfy the above properties (i) and (ii), the exploration sampling device $\unbiasedestimator$ works as follows: It first uniformly at random picks a point $\bm{u}$ from the sphere $S_2^{(n-1)}$. Then it sets $$\wbf_{\textrm{exp}}= \dfrac{n}{\delta}\begin{bmatrix}
-{\direction_t^{(i)}}\cdot \bm{u}\\
\bm{u}
\end{bmatrix}~~~~,~~~~\zbf_{\textrm{exp}}=\zbf_t^{(i-1)}+\delta\bm{u}\,.$$
The point $\zbf_{\textrm{exp}}$ is a feasible point in $\Pcal$, because $\zbf_t^{(i-1)}\in\Pcal'$. Moreover, we have:
$$
\expect{f_t(\zbf_{\textrm{exp}})\bm{w}_{\textrm{exp}}}=\expect{\begin{bmatrix}
-\bm{\direction_t^{(i)}}\cdot\dfrac{n}{\delta}f_t(\zbf_t^{(i-1)}+\delta\bm{u})\bm{u}\\
\dfrac{n}{\delta}f_t(\zbf_t^{(i-1)}+\delta\bm{u})\bm{u}
\end{bmatrix}}= \begin{bmatrix}
-\direction_t^{(i)}\cdot \nabla\hatf_{\delta,t}(\zbf_t^{(i-1)})\\
\nabla\hatf_{\delta,t}(\zbf_t^{(i-1)})
\end{bmatrix}={\pbf}\left(\direction_t^{(i)},\advfunB(\zbf_t^{(i-1)}, f_t)\right)~,
$$
which shows $\hat\pbf_t^{(i)}=f_t(\zbf_{\textrm{exp}})\bm{w}_{\textrm{exp}}$ is our desired unbiased estimator. 
Given the above bandit Blackwell sequential game, since we have shown that $S$ is response-satisfiable, there exists a polynomial-time algorithm $\BanditBlackwellAlg$, based on \Cref{thm:bandit-blackwell-thm} and its proof in \Cref{sec:appendix-bandit}, such that for any $\ybf_1,\ldots,\ybf_T\in\advspaceB$ and exploration probability $q\in[0,1]$, $\BanditBlackwellAlg$ can generate a sequence of decisions $\xbf_1,\ldots,\xbf_T\in\algspaceB$ such that
\begin{equation}
\begin{aligned}
\label{eq:guarantee_online_BW_matroid}
d_\infty\left(\frac{1}{T}\sum_{t=1}^T\pbf(\xbf_t,\ybf_t),S\right) + \mathbb{E}\left[\frac{1}{T}D(\pbf)\cdot(\text{\# explore})\right]
&\leq\bigO{\frac{1}{qT}D(\hat{\pbf})(\log {d_{\pbf}})^{1/2}(Tq)^{1/2}} + \bigO{D(\pbf)q}~,
\end{aligned}
\end{equation}
as shown in the proof of \Cref{thm:bandit-blackwell-thm} in Section \ref{sec:appendix-bandit}. Here, the diameter of $\hat{\pbf}$, i.e., $D(\hat{\pbf})$, is $\dfrac{n}{\delta}\max\{1,\diampolytope\}$. This is because for any $\zbf\in\Pcal, \bm{u}\in S_2^{(n-1)}$, $\lVert\left(\dfrac{n}{\delta}f(\zbf)\bm{u}\right)\rVert_{+\infty}\leq\dfrac{n}{\delta}\cdot1\cdot1 = \dfrac{n}{\delta}$ and $\lvert-\direction\cdot\dfrac{n}{\delta}f(\zbf+\delta\bm{u})\bm{u}\rvert\leq\lVert-\direction\rVert_{2}\lVert\dfrac{n}{\delta}f(\zbf+\delta\bm{u})\bm{u}\rVert_{2}\leq \diampolytope\dfrac{n}{\delta}$, and hence $D(\hat{\pbf}) = \dfrac{n}{\delta}\max\{1,\diampolytope\}$. We would like to highlight that based on our definition of the actual vector payoff, here, we similarly have $D({\pbf}) = O\left(\diampolytope\dfrac{n}{\delta}\right)$.

We summarize the result for the bandit setting in the following theorem.

% In the online algorithm for the bandit setting, we replace the local optimization step with a polynomial-time online algorithm $\BanditBlackwellAlg^{(i)}$ for each subproblem $i\in[N]$. We utilize $N$ parallel runs of $\BanditBlackwellAlg$, one for each subproblem as shown in Algorithm~\ref{alg:bandit-DR-SM}.

% where $\xbf_t^{(i)} = \direction_t^{(i)}$ and $\ybf_t^{(i)} = \advfunB(\zbf^{(i-1)}_t,f_t) = \nabla \hatf_{\delta,t}(\zbf_t^{(i-1)})$. We utilize $N$ parallel runs of $\BanditBlackwellAlg$, one for each subproblem as shown in Algorithm~\ref{alg:bandit-DR-SM}. We do not directly apply our meta \Cref{alg:bandit-meta} in this problem because the target set $S$ is not the positive orthant, but we derive our result with a similar approach. {\color{red} There are some redundancies here. Try to merge it with the previous two paragraphs.}

\begin{algorithm}
  \caption{Bandit online learning algorithm for Algorithm \ref{alg:DR-SM}}
  \label{alg:bandit-DR-SM}
  \textbf{Input:} A sequence of $L$-Lipschitz smooth strong-DR monotone submodular functions $f_1,\ldots,f_T:\mathcal{P}\rightarrow[0,1]$ such that $||\nabla f_t(\zbf)||_\infty\leq U$ for all $\zbf\in\mathcal{P}, t\in[T]$; parameter $\delta$; $N$ bandit Blackwell algorithms $\{\BanditBlackwellAlg^{(i)}\}_{i=1,\ldots,N}$, where each $\BanditBlackwellAlg^{(i)}$ is an online algorithm for a bandit Blackwell sequential game with $\algspaceB=\mathcal{P}'$ (where $\mathcal{P}' = (1-\alpha)\Pcal + \delta\mathbf{1}$, $\alpha = \frac{\sqrt{n}+1}{r}\delta$, and $r$ is the largest positive real number such that $r\left(B_2^n\cap \mathbb{R}^n_{\geq 0}\right)\subseteq\Pcal$), $\advspaceB=[-U_{\delta,n},U_{\delta,n}]^n$ with $U_{\delta,n}=\frac{n}{\delta}$), $\pbf(\xbf,\ybf)=\begin{bmatrix}
  -\bm{x}\cdot\bm{y}\\
  \bm{y}
  \end{bmatrix}$, and $S = \textrm{Cone}(1\oplus\mathcal{P})^\circ$. Whenever called, $\BanditBlackwellAlg^{(i)}$ explores with probability $q$.\\
\textbf{Output:}   $\zbf_1,\ldots,\zbf_T\in\mathcal{P}$\\
  Initialize $N$ parallel instances $\{\BanditBlackwellAlg^{(i)}\}_{i=1}^N$.\\
  \For{$t=1,\ldots,T$}{\vspace{0.2em}
  Initialize $\zbf^{(0)}_t \leftarrow \mathbf{0}$.\\
  \For{iteration $i = 1, 2, \ldots, N$}{\vspace{0.2em}
   Choose the direction $\direction^{(i)}_t\in\mathcal{P}'$ and the exploration signal $\pi_t^{(i)} \in \{ \textsc{Yes},\textsc{No}\}$ by querying $\BanditBlackwellAlg^{(i)}$ given the update parameters and vector payoffs $\hat{\pbf}_t^{(i)}$ of exploration rounds prior to round $t$  in the bandit Blackwell sequential game of subproblem $i$, i.e. $\left(\direction^{(i)}_t,\pi_t^{(i)}\right)\leftarrow\BanditBlackwellAlg^{(i)}\left(\direction^{(i)}_1,\ldots,\direction^{(i)}_{t-1}, \{\hat\pbf_\tau^{(i)}\}_{\tau\leq t-1:\pi_\tau^{(i)}=\textsc{Yes}}\right)$.\\
   \If{$\pi^{(i)}_t = \textsc{Yes}$,} {
    Play the exploration point $\zbf_t \leftarrow \zbf^{(i)}_{t}+\delta\bm{u}$ where $\bm{u}$ is drawn uniformly at random from the unit $(n-1)$ dimensional sphere $S^{(n-1)}$.\\
    $\quad\quad$\emph{//Bandit information feedback: observe  $f_t(\zbf_t+\delta\bm{u})$.}\\
    Give payoff vector feedback $\hat{\pbf}_t^{(i)} = \begin{bmatrix}
    -\direction_t^{(i)}\cdot \frac{n}{\delta}f_t(\zbf_t+\delta\bm{u})\bm{u}\\
    \frac{n}{\delta}f_t(\zbf_t+\delta\bm{u})\bm{u}
    \end{bmatrix}$ to $\BanditBlackwellAlg^{(i)}$. Skip immediately to the beginning of the next round $t+1$.\\
    $\quad\quad$\emph{//Note that $\expect{\hat\pbf^{(i)}_t}=\pbf(\xbf_t^{(i)},\ybf_t^{(i)})$, where $\xbf_t^{(i)}=\direction_t^{(i)}$ and $\ybf_t^{(i)} = \advfunB(\zbf_t^{(i-1)},f_t) =  {\nabla}\hatf_{\delta,t}(\zbf_t^{(i)})$. Here,  $\hatf_{\delta,t}(\zbf_t^{(i)}) = \mathbb{E}_{\bm{v}\sim \textrm{Unif}( B_2^n)}[f_t(\zbf_t^{(i)}+\delta\bm{v})]$.}
    
    }
    Set $\zi{i}_t\leftarrow\zi{i-1}_t + \frac{1}{N}\direction_t^{(i)}.$\\
  }
  Play the final point $\zbf_t\leftarrow\zbf_t^{(N)}$, receive the bandit feedback $f_t(\zbf_t)$ and ignore them.
  }
\end{algorithm}

%$q=\bigO{T^{-1/3}\left(\log(n)\right)^{1/3}}$, $\delta=\bigO{T^{-1/6}\diampolytope^{1/2}n^{1/2}\left(\log(n)^{1/6}\right)U^{-1/2}}$ and $N=\bigO{T^{1/6}\diampolytope L^{1/2}\left(\log(n)\right)^{-1/6}}$,

\begin{theorem}\normalfont{\textbf{(Bandit learning for strong-DR monotone SM maximization in a downward closed convex set)}}
\label{thm:bandit-DR-SM}
Let $\Pcal$ be a downward closed convex polytope with $\ell_2$ diameter $\diampolytope$. Consider the online learning problem of maximizing strong-DR monotone submodular functions on $\mathcal{P}$, which are $L$-Lipschitz smooth and have bounded gradient of $U$ for any point $\zbf\in\mathcal{P}$. Then, under bandit setting, with the total number of iterations $N =\bigO{T^{1/6}\diampolytope L^{1/2}\left(\log(n)\right)^{-1/6}}$, the explore probability $q = \bigO{T^{-1/3}\left(\log(n)\right)^{1/3}}$, and  $\delta=\bigO{T^{-1/6}\diampolytope^{1/2}n^{1/2}\left(\log(n)^{1/6}\right)U^{-1/2}}$, \Cref{alg:bandit-DR-SM} with $N$ iterations obtains $\bigO{T^{5/6}\left(\log(n)\right)^{1/6}\diampolytope\left(n\diampolytope^{1/2}U^{1/2}+L^{1/2}\right)}$ $\gamma_N$-regret, where $\gamma_N = 1-(1-1/N)^N$.
\end{theorem}

% \begin{theorem}\normalfont{\textbf{(Bandit learning for strong-DR monotone SM maximization in a downward closed convex set)}}
% \label{thm:bandit-DR-SM}
% Let $\Pcal$ be a downward closed convex polytope with $\ell_2$ diameter $\diampolytope$. Consider the online learning problem of maximizing strong-DR monotone submodular functions on $\mathcal{P}$, which are $L$-Lipschitz smooth and have bounded gradient of $U$ for any point $\zbf\in\mathcal{P}$. Then, under bandit setting, with the total number of iterations $N =\bigO{T^{1/6}\diampolytope^{4/5}n^{-1/5}U^{2/5}(\log n)^{-1/10}r^{1/5}L^{3/10}}$, the explore probability $q = \bigO{T^{-1/3}\diampolytope^{2/5}n^{2/5}U^{1/5}(\log n)^{1/5}r^{-2/5}L^{2/5}}$, and the translation factor $\delta = \bigO{T^{-1/6}\diampolytope^{2/5}n^{2/5}U^{1/5}(\log n)^{1/5}r^{3/5}L^{-3/5}}$ \Cref{alg:bandit-DR-SM} with $N$ iterations obtains $\bigO{T^{5/6}\diampolytope^{7/5}n^{9/10}U^{1/5}(\log n)^{1/5}r^{-2/5}L^{2/5}}$ $\gamma_N$-regret, where $\gamma_N = (1-1/N)^N$ and $r$ is the largest positive real number such that $rB^n_{\geq 0}\subseteq\Pcal$.
% \end{theorem}

Notice that our bandit regret is $\bigO{T^{5/6}}$ in terms of $T$, which improves the previous regret bound in \cite{zhang2019one}, $\bigO{T^{8/9}}$, in terms of $T$. We now present the proof of the theorem.
\proof{\emph{Proof of Theorem \ref{thm:bandit-DR-SM}}.} 
Define  $\xbf^* = \argmax_{\xbf\in\Pcal}\sum_{t=1}^Tf_t(\xbf)$
% , $\xbf_{\delta}^* = \argmax_{\xbf\in\Pcal'}\sum_{t=1}^Tf_t(\xbf)$, 
and $\xbf_{\delta}^* = \argmax_{\xbf\in\Pcal'}\sum_{t=1}^T\hatf_{\delta,t}(\xbf)$. Since we are running $\BanditBlackwellAlg$ as if it were for the function $\hatf_{\delta,t}$ over the feasible region $\mathcal P'$,
%if $\xbf^* = \argmax_{\xbf\in\Pcal}\sum_{t=1}^Tf_t(\xbf)$, $\xbf_{\delta}^* = \argmax_{\xbf\in\Pcal'}\sum_{t=1}^Tf_t(\xbf)$, and $\zbf^* = \argmax_{\xbf\in\Pcal'}\sum_{t=1}^T\hatf_{\delta,t}(\xbf)$,
our total $\gamma_N-$regret can be written as
\begin{equation}
\label{eq:final_total_regret_DR_SM}
\begin{aligned}
\mathbb{E}\left[ \sum_{t=1}^T\gamma_Nf_t(\xbf^*)-f_t(\zbf_t)\right] &=\mathbb{E}\left[ \sum_{t=1}^T\gamma_N(f_t(\xbf^*)-f_t(\xbf_\delta^*)+f_t(\xbf_\delta^*)) - \sum_{t=1}^Tf_t(\zbf_t)\right] \\
&= \mathbb{E}\Bigg[\gamma_N\sum_{t=1}^T\left(f_t(\xbf^*) - f_t(\xbf_{\delta}^*)\right) + \gamma_N\sum_{t=1}^T\left(f_t(\xbf_{\delta}^*)-\hatf_{\delta,t}(\xbf_\delta^*)\right) + \sum_{t=1}^T\left(\gamma_N\hatf_{\delta,t}(\xbf_{\delta}^*) - \hatf_{\delta,t}(\zbf_t)\right)\\ &+\sum_{t=1}^T\left(\hatf_{\delta,t}(\zbf_t)-f_t(\zbf_t)\right)\Bigg]\\
&\overset{(1)}{\leq}\mathbb{E}\left[\gamma_N\sum_{t=1}^T\left(f_t(\xbf^*)-f_t(\xbf_{\delta}^*)\right) + \gamma_NTU\delta + \sum_{t=1}^T\left(\gamma_N\hatf_{\delta,t}(\xbf_{\delta}^*)-\hatf_{\delta,t}(\zbf_t)\right) + TU\delta\right],
\end{aligned}
\end{equation}
where Inequality $(1)$ holds because we have $|f_t(\zbf)-\hatf_{\delta,t}(\zbf)|\leq U\delta$ for any $\zbf\in\Pcal$. We are now left with bounding the first and third terms in \Cref{eq:final_total_regret_DR_SM}.

To bound the first term in Equation \eqref{eq:final_total_regret_DR_SM}, let $\xbf'\in\Pcal'$ be the point in $\Pcal'$ that is closest to $\xbf^*$, i.e., $||\xbf'-\xbf^*||_{2} = d(\xbf',\xbf^*) = d(\xbf^*,\Pcal')\leq d(\Pcal,\Pcal')$, where $d(\xbf,\Pcal') = \min_{\bm{v}\in\Pcal'}||\xbf-\bm{v}||_2$ and
$d(\Pcal,\Pcal') = \max_{\bm{v}\in\Pcal}d(\xbf,\Pcal').$
% $d(\Pcal,\Pcal') = \min_{\bm{v}\in\Pcal,\bm{v}'\in\Pcal'}||\bm{v}-\bm{v}'||_2$.
We then have 
%\rad{In what follows, I replaced $L$ with $U$, as we need to use the Lipschitz continuity here, not the smoothness.}
\begin{equation}
\label{eq:third_DW}
\begin{aligned}
\sum_{t=1}^T\left(f_t(\xbf^*)-f_t(\xbf_{\delta}^*)\right) &= \sum_{t=1}^T\left(f_t(\xbf^*)-f_t(\xbf')\right) + \sum_{t=1}^T\left(f_t(\xbf')-f_t(\xbf^*_{\delta})\right)\\
&\overset{(1)}{\leq} U\sum_{t=1}^T\lVert\xbf^*-\xbf'\rVert_{2} + 0\\
&\leq UTd(\Pcal,\Pcal')\\
&\overset{(2)}{\leq} UT\left(\sqrt{n}\left(\frac{\diampolytope}{r}+1\right)+\frac{\diampolytope}{r}\right)\delta,
\end{aligned}
\end{equation}
where Inequality $(1)$ holds because $f_t$ is $U$-Lipschitz continuous and $\xbf_\delta^*$ maximizes $f_t$ in $\Pcal'$, while Inequality $(2)$ holds because
\[
d(\Pcal,\Pcal')\leq\left(\sqrt{n}\left(\frac{\diampolytope}{r}+1\right)+\frac{\diampolytope}{r}\right)\delta
\]
from Lemma 1 in \cite{zhang2019one}.

To bound the third term of Equation \eqref{eq:final_total_regret_DR_SM}, i.e., $\sum_{t=1}^T\left(\gamma_N\hatf_{\delta,t}(\xbf_{\delta}^*)-\hatf_{\delta,t}(\zbf_t)\right)$, we follow the same lines as in the proof for \Cref{thm:banditILO}, by adapting it as if the underlying functions are $\hatf_{\delta,t}$. We then employ the result in \Cref{eq:extend-Rank-1} for a new value of $h(T)$ to show that \Cref{alg:bandit-DR-SM} works. Let $\mathcal{T}_i\subseteq[T]$ be the set of rounds where $\BanditBlackwellAlg^{(i)}$ is invoked. Note that $\mathcal{T}_i$ is a random set since it depends on the realization of binary signals $\{\pi_t^{(i')}\}_{i'\in[i-1],t\in[T]}$. Consider a subproblem $i \in [N]$. Fix a particular realization of set $\mathcal{T}_i$. By using the upper-bound in \Cref{eq:guarantee_online_BW_matroid} and the fact that $D(\hat{\pbf}) = \bigO{\dfrac{n}{\delta}\diampolytope}$ and $D({\pbf}) = \bigO{\dfrac{n}{\delta}\diampolytope}$, we have 
  \begin{align*}
    \standarddistance{\frac{1}{|\mathcal{T}_i|} \sum_{t \in \mathcal{T}_i} \pbf\left(
      \direction_t^{(i)}, \advfunB(\zbf_t^{(i-1)},f_t)
    \right)}{S} &\le
    \bigO{\frac{1}{q|\mathcal{T}_i|}D(\hat{\pbf})\log{(d_{\mathbf p})^{1/2}(|\mathcal{T}_i|q)^{1/2}}} + \bigO{D(\pbf)q}\\
    &= \bigO{\frac{1}{q|\mathcal{T}_i|}\dfrac{n}{\delta}\diampolytope(\log n)^{1/2}(|\mathcal{T}_i|q)^{1/2}} + \bigO{\frac{n}{\delta}\diampolytope q}\,.
  \end{align*}
Since $\pbf\left(\direction_t^{(i)},\advfunB(\zbf_t^{(i-1)}, f_t)\right) = \pay\left(\direction_t^{(i)},\zbf_t^{(i-1)}, f_t\right)$ by definition, we have
\begin{align}
\label{eq:bandit_matroid}
\bigO{\frac{1}{q|\mathcal{T}_i|}\dfrac{n}{\delta}\diampolytope(\log{n})^{1/2}(|\mathcal{T}_i|q)^{1/2}} + \bigO{\frac{n}{\delta}q} &\geq \standarddistance{\frac{1}{|\mathcal{T}_i|} \sum_{t \in \mathcal{T}_i} \pbf\left(\direction_t^{(i)}, \advfunB(\zbf_t^{(i-1)},f_t)\right)}{S}\nonumber\\
&= \standarddistance{\frac{1}{|\mathcal{T}_i|} \sum_{t \in \mathcal{T}_i} \pay\left(\direction_t^{(i)}, \zbf_t^{(i-1)},f_t\right)}{S}\nonumber\\
&\overset{}{\geq}\frac{1}{\sqrt{n+1}\sqrt{1+{\diampolytope}^2}}\frac{1}{|\mathcal{T}_i|}~{\left(\underset{\direction\in\Pcal'}{\max}\sum_{t\in\mathcal{T}_i}\ybf_t^{(i)}\cdot\direction - \sum_{t\in\mathcal{T}_i}\ybf_t^{(i)}\cdot\direction_t^{(i)}\right)},
\end{align} 
where the last line follows from the equivalent of \Cref{eq:first_line_matroid} and \Cref{eq:last_line_matroid}, which is still true because by shrinking the domain for $\{\zbf_t^{(i)}\}_{1\leq i\leq N, 1\leq t\leq T}$ to be $\Pcal'$, all of the chosen points are guaranteed to be in $\Pcal$, so $\Pcal'\subseteq\Pcal$ and the diameter of $\Pcal'$ is upper-bounded by the diameter of $\Pcal$, $\diampolytope$ as $\Pcal'\subseteq\Pcal$.

Let $\mathcal{T}^-$ be the set of rounds where no $\BanditBlackwellAlg{}^{(i)}$  explored and $\mathcal{T}^+$ be the set of rounds where some $\BanditBlackwellAlg{}^{(i)}$ explored. See that $\mathcal{T}^- \subseteq \mathcal{T}_i$ because if no algorithm explores, $\BanditBlackwellAlg^{(i)}$ will be invoked. For a set of rounds $\mathcal{T}$, let $\direction^*(\mathcal{T}) = \underset{\direction\in\Pcal'}{\argmax}\sum_{t\in \mathcal{T}}\direction\cdot\nabla \hatf_{\delta,t}(\zbf_t^{(i-1)})$. Also, let $M_i$ be the number of rounds that $\BanditBlackwellAlg^{(i)}$ explores out of the $|\mathcal{T}_i|$ rounds it is invoked. For subproblem $i$, we have
\begin{align*}
&\mathbb{E}\left[\left(\underset{\direction\in\Pcal'}{\max}\sum_{t\in\mathcal{T}^{-}}\ybf_t^{(i)}\cdot\direction - \sum_{t\in\mathcal{T}^{-}}\ybf_t^{(i)}\cdot\direction_t^{(i)}\right)\bigg|\mathcal{T}_i\right]\\
&= \mathbb{E}\left[\left(\sum_{t\in\mathcal{T}^{-}}\ybf_t^{(i)}\cdot\direction^*(\mathcal{T}^-) - \sum_{t\in\mathcal{T}^{-}}\ybf_t^{(i)}\cdot\direction_t^{(i)}\right)\bigg|\mathcal{T}_i\right]\\
&= \mathbb{E}\left[\left(\sum_{t\in\mathcal{T}_i}\ybf_t^{(i)}\cdot\direction^*(\mathcal{T}^-) - \sum_{t\in\mathcal{T}_i}\ybf_t^{(i)}\cdot\direction_t^{(i)}\right)\bigg|\mathcal{T}_i\right] - \mathbb{E}\left[\left(\sum_{t\in\mathcal{T}_i\setminus\mathcal{T}^{-}}\ybf_t^{(i)}\cdot\direction^*(\mathcal{T}^-) - \sum_{t\in\mathcal{T}_i\setminus\mathcal{T}^{-}}\ybf_t^{(i)}\cdot\direction_t^{(i)}\right)\bigg|\mathcal{T}_i\right]\\
&\leq \mathbb{E}\left[\left(\sum_{t\in\mathcal{T}_i}\ybf_t^{(i)}\cdot\direction^*(\mathcal{T}_i) - \sum_{t\in\mathcal{T}_i}\ybf_t^{(i)}\cdot\direction_t^{(i)}\right)\bigg|\mathcal{T}_i\right] - \mathbb{E}\left[\left(\sum_{t\in\mathcal{T}_i\setminus\mathcal{T}^{-}}\ybf_t^{(i)}\cdot\direction^*(\mathcal{T}^-) - \sum_{t\in\mathcal{T}_i\setminus\mathcal{T}^{-}}\ybf_t^{(i)}\cdot\direction_t^{(i)}\right)\bigg|\mathcal{T}_i\right]\\
&\overset{(1)}{\leq} \bigO{\frac{{\diampolytope}\sqrt{n}}{q}\dfrac{n}{\delta}\diampolytope(\log n)^{1/2}(|\mathcal{T}_i|q)^{1/2}} + \bigO{{\diampolytope}\sqrt{n}|\mathcal{T}_i|\frac{n}{\delta}{\diampolytope}q} + D(\pbf)\mathbb{E}[M_i|\mathcal{T}_i]\\
&\overset{(2)}{\leq} \bigO{\frac{{\diampolytope}^2n^{3/2}}{q^{1/2}\delta}(\log n)^{1/2}|\mathcal{T}_i|^{1/2}} + \bigO{\frac{{\diampolytope}^2n^{3/2}}{\delta}|\mathcal{T}_i|q} + \bigO{\frac{n}{\delta}\diampolytope q\lvert\mathcal{T}_i\rvert}\\
&\overset{(3)}{\leq} \bigO{\frac{{\diampolytope}^2n^{3/2}}{q^{1/2}\delta}(\log n)^{1/2}T^{1/2}} +  \bigO{\frac{{\diampolytope}^2n^{3/2}}{\delta}{T}q} + \bigO{\frac{n}{\delta}\diampolytope qT}\,,
\end{align*}
where all of the expectations are with respect to $\zbf_t^{(i-1)}, t\in\mathcal{T}_i$. Here, Inequality~(1) follows from \Cref{eq:bandit_matroid} and the fact that $|\mathcal{T}_i\setminus\mathcal{T}^-|\leq M_i$. Moreover, for any $t\in[T], \direction_1,\direction_2\in\Pcal\subseteq[0,1]^n$, $\ybf_t^{(i)}\cdot(\direction_1-\direction_2)\leq D_\infty(\ybf_t^{(i)})\leq U$ since $\Pcal$ is closed-down. Inequality (2) holds because $\E [M_i \big| \mathcal{T}_i] = q|\mathcal{T}_i|$. Inequality (3) holds because $|\mathcal{T}_i|\leq T$. Fixing a function $h(\cdot)$, see that given a fixed $\mathcal{T}_i$,
\begin{equation*}
\mathbb{E}\left[\left(\underset{\direction\in\Pcal'}{\max}\sum_{t\in\mathcal{T}^{-}}\ybf_t^{(i)}\cdot\direction - \sum_{t\in\mathcal{T}^{-}}\ybf_t^{(i)}\cdot\direction_t^{(i)}\right)\bigg|\mathcal{T}_i\right]\leq h(|\mathcal{T}^-|)
\end{equation*}
is equivalent to
\begin{equation*}
\begin{aligned}
\mathbb{E}\left[\left(\underset{\direction\in\Pcal'}{\max}\sum_{t\in\mathcal{T}^{-}}\nabla \hatf_{\delta,t}(\zbf_t^{(i-1)})\cdot\direction - \sum_{t\in\mathcal{T}^{-}}\nabla \hatf_{\delta,t}(\zbf_t^{(i-1)})\cdot\direction_t^{(i)}\right)\right] \leq h(|\mathcal{T}^-|)\\
\Leftrightarrow\mathbb{E}\left[\sum_{t\in\mathcal{T}^{-}}\nabla \hatf_{\delta,t}(\zbf_t^{(i-1)})\cdot\direction_t^{(i)}\right]&\geq\mathbb{E}\left[\underset{\direction\in\Pcal'}{\max}\sum_{t\in\mathcal{T}^{-}}\nabla \hatf_{\delta,t}(\zbf_t^{(i-1)})\cdot\direction\right] - h(|\mathcal{T}^-|).
\end{aligned}
\end{equation*}
We can then directly invoke the result in \Cref{eq:extend-Rank-1} for a fixed $\mathcal{T}^-$, with $h(|\mathcal{T}^-|)$ being replaced with its upper-bound above, i.e., $\bigO{\frac{{\diampolytope}^2n^{3/2}}{q^{1/2}\delta}(\log n)^{1/2}T^{1/2}+\frac{{\diampolytope}^2n^{3/2}}{\delta}{T}q + \frac{n}{\delta}\diampolytope qT}$. Then for any $\zbf_{\delta}^*\in\Pcal'$, we have
\begin{equation}
\label{eq:first_eq_DW} 
\begin{aligned}
  \mathbb{E}\left[\sum_{t\in\mathcal{T}^-}\hatf_{\delta,t}(\zbf_t^{(N)})\right]&\geq \gamma_N
  \mathbb{E}\left[\sum_{t\in\mathcal{T}^-}\hatf_{\delta,t}(\zbf_{\delta}^*)\right]- \bigO{\frac{{\diampolytope}^2n^{3/2}}{q^{1/2}\delta}(\log n)^{1/2}T^{1/2}} - \bigO{\frac{{\diampolytope}^2n^{3/2}}{\delta}{T}q} \\&- \bigO{\frac{n}{\delta}\diampolytope qT} - \frac{|\mathcal{T}^-|L{\diampolytope}^2}{2N}\\
  &\geq \gamma_N
  \mathbb{E}\left[\sum_{t\in\mathcal{T}^-}\hatf_{\delta,t}(\zbf_{\delta}^*)\right]- \bigO{\frac{{\diampolytope}^2n^{3/2}}{q^{1/2}\delta}(\log n)^{1/2}T^{1/2}} - \bigO{\frac{{\diampolytope}^2n^{3/2}}{\delta}{T}q} \\&- \bigO{\frac{n}{\delta}\diampolytope qT} - \bigO{\frac{TL{\diampolytope}^2}{N}},
\end{aligned}
\end{equation}
% where we always upper-bound the $\ell_2$ diameter of $\Pcal'$ with the $\ell_2$ diameter of $\Pcal$, $\diampolytope$.

We further show that \Cref{alg:bandit-DR-SM} does not explore too often among its subproblems, specifically
\begin{align*}
\E \left[ |\mathcal{T}^+| \right] = \E \left[ \sum_{i=1}^N M_i \right]
    \le \sum_{i=1}^N \bigO{q\E\left[\mathcal{T}_i\right]}\leq\sum_{i=1}^N \bigO{qT},
\end{align*}
and since  functions $\hatf_{\delta,t}$ output values that are in $[0,1]$, for the remaining rounds $\mathcal{T}^+$ we have:
\begin{align}
\label{eq:second_eq_DW} 
\E\left[\sum_{t \in \mathcal{T}^+} {\hatf_{\delta,t}(\zbf_t)}\right] &\geq \gamma_N \cdot \E\left[\sum_{t \in \mathcal{T}^+} \hatf_{\delta,t}(\zbf_{\delta}^*)\right]- \bigO{NqT}.
\end{align}

Summing \Cref{eq:first_eq_DW} and \Cref{eq:second_eq_DW}, we have
\begin{equation}
\label{eq:final_fhat_regret}
\begin{aligned}
\mathbb{E}\left[\gamma_N\sum_{t\in[T]}\hatf_{\delta,t}(\zbf_{\delta}^*)-\sum_{t\in[T]}\hatf_{\delta,t}(\zbf_t)\right] 
&\leq  \bigO{\frac{{\diampolytope}^2n^{3/2}}{q^{1/2}\delta}(\log n)^{1/2}T^{1/2}} + \bigO{\frac{{\diampolytope}^2n^{3/2}}{\delta}{T}q} \\&+ \bigO{\frac{n}{\delta}\diampolytope qT} + \bigO{\frac{TL{\diampolytope}^2}{N}}+ \bigO{NqT},
\end{aligned}
\end{equation}
which we can use to bound the third term in \Cref{eq:final_total_regret_DR_SM}. Revisiting the total regret, and substituting \Cref{eq:third_DW} and \Cref{eq:final_fhat_regret} to our total $\gamma_N$-regret in \Cref{eq:final_total_regret_DR_SM}, we then have
\begin{equation*}
\begin{aligned}
\mathbb{E}\left[ \sum_{t=1}^T\gamma_Nf_t(\xbf^*)-f_t(\zbf_t)\right]~\leq &O\Bigg(TU\delta\sqrt{n}{\diampolytope}  + \frac{{\diampolytope}^2n^{3/2}}{q^{1/2}\delta}(\log n)^{1/2}T^{1/2} + \frac{{\diampolytope}^2n^{3/2}}{\delta}{T}q  \\&+ \frac{TL{\diampolytope}^2}{N}+ NqT \Bigg).
\end{aligned}
\end{equation*}

We finally tune parameters $q$, $\delta$, and $N$ for the best regret bound. Specifically, by substituting $q=\bigO{T^{-1/3}\left(\log(n)\right)^{1/3}}$, $\delta=\bigO{T^{-1/6}\diampolytope^{1/2}n^{1/2}\left(\log(n)^{1/6}\right)U^{-1/2}}$, and $N=\bigO{T^{1/6}\diampolytope L^{1/2}\left(\log(n)\right)^{-1/6}}$, we get the total regret of
$$
\bigO{T^{5/6}\left(\log(n)\right)^{1/6}\diampolytope\left(n\diampolytope^{1/2}U^{1/2}+L^{1/2}\right)}~.
$$
This completes the proof.
% Substituting $N =\bigO{T^{1/6}\diampolytope^{4/5}n^{-1/5}U^{2/5}(\log n)^{-1/10}r^{1/5}L^{3/10}}$, $q = \bigO{T^{-1/3}\diampolytope^{2/5}n^{2/5}U^{1/5}(\log n)^{1/5}r^{-2/5}L^{2/5}}$, and $\delta = \bigO{T^{-1/6}\diampolytope^{2/5}n^{2/5}U^{1/5}(\log n)^{1/5}r^{3/5}L^{-3/5}}$, we minimize the total regret and get the total regret of
% \begin{align*}
%     \bigO{T^{5/6}\diampolytope^{7/5}n^{9/10}U^{1/5}(\log n)^{1/5}r^{-2/5}L^{2/5}}.
% \end{align*}
% This completes the proof.
\[\myqed\]
\endproof{}

\begin{remark}
\label{rem:bandit-DR-SM}
As we showed in \Cref{thm:bandit-DR-SM}, we can obtain sub-linear approximate regret with an optimal approximation factor for the general strong-DR Lipschitz continuous smooth submodular functions under the bandit feedback. As multi-linear extension of a set submodular function satisfies these conditions, it seems possible to extend our result in \Cref{sec:full_strong_DR} to this setting. However, there is a technical caveat and that is generating an unbiased estimator for $f_t^{\textrm{MLE}}(\mathbf{x})$ given query access at \emph{feasible} sets to the set function $f_t$, which is needed to obtain the required bandit feedback for \Cref{alg:bandit-DR-SM} when it is run on the sequence of functions $f_1^{\textrm{MLE}},\ldots,f_T^{\textrm{MLE}}$. Given a point $\mathbf{x}\in \Pcal$, where $\Pcal$ is the matroid polytope, the pipage rounding algorithm will sample a randomized set $S$ such that $\expect{f_t(S)} \geq  f_t^{\textrm{MLE}}(\mathbf{x})$. To have the equality, and hence an unbiased estimator for $f_t^{\textrm{MLE}}(\mathbf{x})$, one possible way is placing each element independently with probability $x_i$ in random set $S$; but then this set is not necessarily  an independent set and hence we are not allowed to return it as the algorithm in our problem.  We pose whether our approach can be extended to the bandit setting as an open problem.
\end{remark}}
\section{Proofs and Remarks of Section~\ref{subsec:ranking} -- Product Ranking and Sequential Submodular Maximization}

% \section{Missing Proofs for Product Ranking and Sequential Submodular Maximization}
\label{apx:ranking}

In this appendix, we give the missing proofs of the results from \Cref{subsec:ranking}. 

\subsection{Proof of \texorpdfstring{\Cref{thm:seqsub}}{}}
%\subseqthm*
\proof{\emph{Proof}.}
We show that our meta \Cref{alg:bandit-meta} works by verifying the following conditions.

\paragraph{(i) Algorithm~\ref{alg:rank-off} is an extended $(\tfrac12, \tfrac12)$-robust approximation algorithm.}
We need to show that if the following equation holds for some function $h$:
\begin{equation*}
    \forall j \in [n], \quad \left[\sum_{t=1}^T\pay(\tilde\param^{(i)}_t, \pibf_t^{(i-1)}, f_t)\right]_j\geq -h(T),
\end{equation*}
then we must have
\begin{equation}
\label{eqn:extend-rank}
    \forall\pibf^*\in\Pi,\quad \sum_{t=1}^T\E\left[f_t(\pibf_t)\right]\geq\frac{1}{2}\sum_{t=1}^Tf_t(\pibf^*)-\frac{1}{2}nh(T).
\end{equation}
Recall that for any $j\in[n],$ we have $\left[\pay(\param^{(i)},\pibf^{(i-1)},f)\right]_j=\param^T\ybf^{(i)}-[\ybf^{(i)}]_j,$ where $\ybf^{(i)}\triangleq\big[f(\pibf^{(i-1)}+j\mathbf{e}_i)$\\$-f(\pibf^{(i-1)})\big]_{j\in[n]}$. First, we prove several inequalities that will later be used to prove inequality~\eqref{eqn:extend-rank}. Since each function $\seqsubf{{t,i}}$ is monotone submodular, we have:
\begin{align*}
\lambda_i\sum_{j=1}^i\bigg(\seqsubf{{t,i}}&(\{[\pibf_t]_1,\ldots,[\pibf_t]_j\}) - \seqsubf{{t,i}}\left(\{[\pibf_t]_1,\ldots,[\pibf_t]_{j-1}\}\right)\bigg)\\ 
&+\lambda_i\sum_{j=1}^i\bigg(\seqsubf{{t,i}}\left(\{[\pibf_t]_1,\ldots,[\pibf_t]_{j-1},[\pibf^*]_{j}\}\right) - \seqsubf{{t,i}}\left(\{[\pibf_t]_1,\ldots,[\pibf_t]_{j-1}\}\right)\bigg)\\
&\overset{(1)}{\geq} \lambda_i\sum_{j=1}^i\bigg(\seqsubf{{t,i}}\left(\{[\pibf_t]_1,\ldots,[\pibf_t]_j,[\pibf^*]_1,\ldots,[\pibf^*]_j\}\right) - \seqsubf{{t,i}}\left(\{[\pibf_t]_1,\ldots,[\pibf_t]_{j-1},[\pibf^*]_1,\ldots,[\pibf^*]_{j-1}\}\right)\bigg)
\\
&=\lambda_i\seqsubf{{t,i}}\left(\{[\pibf_t]_1,\ldots,[\pibf_t]_{i},[\pibf^*]_{1},\ldots,[\pibf^*]_{i}\}\right)-\lambda_i\seqsubf{{t,i}}\left(\emptyset\right)\\
&\overset{(2)}{\geq} \lambda_i\seqsubf{{t,i}}\left(\{[\pibf^*]_{1},\ldots,[\pibf^*]_{i}\}\right),
\end{align*}
where inequality~$(1)$ follows from submodularity and inequality~$(2)$ follows from monotonicity and non-negativity of $\seqsubf{{t,i}}$. To be more clear, inequality~$(1)$ holds because for each $j = 1,2,\ldots,i,$ the sum of the  marginal values of adding $[\pibf_t]_j$ to $\pibf^{(j-1)}$ and adding $[\pibf^*]_j$ to $\pibf^{(j-1)}$ is greater than equal to the marginal value of adding $\{[\pibf_t]_j, [\pibf^*]_j\}$ to $\{[\pibf_t]_1,\ldots,[\pibf_t]_{i},[\pibf^*]_{1},\ldots,[\pibf^*]_{i}\}$ as $f_{t,i}$ is submodular.
Recall that 
\begin{equation*}
    f_t(\pibf) \triangleq \lambda_1\seqsubf{{t,1}}\left(\{[\pibf]_1\}\right)+\lambda_2\seqsubf{{t,2}}\left(\{[\pibf]_1,[\pibf]_2\}\right)+\ldots+\lambda_n\seqsubf{{t,n}}\left(\{[\pibf]_1,\ldots,[\pibf]_n\}\right)\qquad\forall t\in[T],
\end{equation*}
so summing the inequalities above for $i=1,2,\ldots,n,$ we get:
\begin{align}
&\sum_{i=1}^n\lambda_i\sum_{j=1}^i\bigg(\seqsubf{{t,i}}(\{[\pibf_t]_1,\ldots,[\pibf_t]_j\}) - \seqsubf{{t,i}}\left(\{[\pibf_t]_1,\ldots,[\pibf_t]_{j-1}\}\right)\bigg)\nonumber\\ 
&+\sum_{i=1}^n\lambda_i\sum_{j=1}^i\bigg(\seqsubf{{t,i}}\left(\{[\pibf_t]_1,\ldots,[\pibf_t]_{j-1},[\pibf^*]_{j}\}\right) - \seqsubf{{t,i}}\left(\{[\pibf_t]_1,\ldots,[\pibf_t]_{j-1}\}\right)\bigg)\qquad\geq \sum_{i=1}^n \lambda_i\seqsubf{{t,i}}\left(\{[\pibf^*]_{1},\ldots,[\pibf^*]_{i}\}\right)\nonumber\\
&{\Leftrightarrow}\sum_{j=1}^n\sum_{i=j}^n\left(\lambda_i\seqsubf{{t,i}}\left(\{[\pibf_t]_1,\ldots,[\pibf_t]_j\}\right) - \lambda_i\seqsubf{{t,i}}\left(\{[\pibf_t]_1,\ldots,[\pibf_t]_{j-1}\}\right)\right)\nonumber\\
&+ \sum_{j=1}^n\sum_{i=j}^n\left(\lambda_i\seqsubf{{t,i}}\left(\{[\pibf_t]_1,\ldots,[\pibf_t]_{j-1},[\pibf^*]_j\}\right) - \lambda_i\seqsubf{{t,i}}\left(\{[\pibf_t]_1,\ldots,[\pibf_t]_{j-1}\}\right)\right) \qquad\geq f_t(\pibf^*)\nonumber\\
&\label{eq:final-ineq}\Leftrightarrow\sum_{j=1}^n\left(f_t(\pibf_t^{(j)}) - f_t(\pibf_t^{(j-1)})\right) + \sum_{j=1}^n \left(f_t(\pibf_t^{(j-1)}+[\pibf^*]_{j}\mathbf{e}_j) - f_t(\pibf_t^{(j-1)})\right) \qquad\geq f_t(\pibf^*).
\end{align}
We get the first equivalence by switching the summations. We now use the inequality~\eqref{eq:final-ineq} to prove the final claim, i.e., the desired inequality~\eqref{eqn:extend-rank}. We have:

\begin{align*}
    \sum_{t=1}^T\E\left[f_t(\pibf_t)\right]
    &= \dfrac12\sum_{t=1}^T\sum_{i=1}^n2~\E\left[f_t(\pibf_t^{(i)})-f_t(\pibf_t^{(i-1)})\right]\\
    &= \dfrac12\sum_{t=1}^T\sum_{i=1}^n\E\left[f_t(\pibf_t^{(i)})-f_t(\pibf_t^{(i-1)})\right]+\dfrac12\sum_{i=1}^n\sum_{t=1}^T\E\left[f_t(\pibf_t^{(i)})-f_t(\pibf_t^{(i-1)})\right]\\
    &\qquad-\dfrac12\sum_{i=1}^n\sum_{t=1}^T\E\left[f_t(\pibf_t^{(i)}+[\pibf^*]_{i}\mathbf{e}_i)-f_t(\pibf_t^{(i-1)})\right]+\dfrac12\sum_{i=1}^n\sum_{t=1}^T\E\left[f_t(\pibf_t^{(i)}+[\pibf^*]_{i}\mathbf{e}_i)-f_t(\pibf_t^{(i-1)})\right]\\
    &\overset{(1)}{=} \dfrac12\sum_{t=1}^T\sum_{i=1}^n\E\left[f_t(\pibf_t^{(i)})-f_t(\pibf_t^{(i-1)})\right] + \dfrac12\sum_{i=1}^n\sum_{t=1}^T\left[\pay(\tilde{\param}_t^{(i)}, \pibf_t^{(i-1)},f_t)\right]_{[\pibf^*]_{i}}\\ &\qquad+\dfrac12\sum_{i=1}^n\sum_{t=1}^T\E\left[f_t(\pibf_t^{(i-1)}+[\pibf^*]_{i}\mathbf{e}_i)-f_t(\pibf_t^{(i-1)})\right]\\
    &\overset{(2)}{\geq} \dfrac12\sum_{t=1}^T\sum_{i=1}^n\E\left[f_t(\pibf_t^{(i)})-f_t(\pibf_t^{(i-1)})\right] - \dfrac12 nh(T)+\dfrac12\sum_{t=1}^T\sum_{i=1}^n\E\left[f_t(\pibf_t^{(i)}+[\pibf^*]_{i}\mathbf{e}_i)-f_t(\pibf_t^{(i-1)})\right]\\
    &\overset{(3)}{\geq} \dfrac12\sum_{t=1}^Tf_t(\pibf^*) - \dfrac12 nh(T)~~.
\end{align*}
In the above chain of inequalities,  equality~$(1)$ follows from the definition of $\pay$, inequality~$(2)$ follows from our assumption, and inequality~$(3)$ follows from inequality~\eqref{eq:final-ineq}. Rearranging the terms will finish the proof.

\paragraph{(ii) Algorithm~\ref{alg:rank-off} is bandit Blackwell reducible.}
We verify the following conditions based on \Cref{def:bandit-blackwell-reducible} to show bandit Blackwell reducibility:
\begin{itemize}
    \item \textit{Algorithm \ref{alg:rank-off} is Blackwell reducible.} For each subproblem, consider an instance $(\algspaceB,\advspaceB,\pbf)$ of Blackwell where $\algspaceB = \Theta = \Delta([n])$ and $\advspaceB  = [-1,1]^n.$ Our Blackwell adversary function is the marginal increase in the objective function of placing item on position $i,$ $\advfunB(\pibf^{(i-1)},f) = \left[f(\pibf_t^{(i-1)}+j\mathbf{e}_i) - f(\pibf_t^{(i-1)})\right]_{j\in[n]}$. The biaffine payoff is $\pbf(\param,\ybf) = \param^T\ybf\mathbf{1}_n - \ybf$, where $\mathbf{1}_n$ is an $n$-dimensional all ones vector. The target set $S$ is the non-negative orthant, and it is response-satisfiable since for every adversary's action $\ybf\in\advspaceB,$ the strategy $\param = \mathbf{e}_{j^*}$ where $j^* = \text{argmax}_{j\in[n]}[\ybf]_j$ results in $\pbf(\param,\ybf)\geq0.$
  
    \item \textit{An unbiased estimator for the Blackwell payoff function $\mathbf{p}$ can be constructed.} Specifically, we need to construct an exploration sampling device $U$ that receives $(\param,\pibf^{(i-1)})$ in subproblem $i$ and returns $(\subpeek{\mathbf{w}}, \subpeek{\pibf})$ such that (i) for all $f \in \funcspace, \param \in \paramspace, \pibf^{(i-1)} \in \domain, i \in [n]: ~~ \hat{\pbf}\left(\param, \advfunB(\pibf^{(i-1)}, f) \right) = f(\subpeek{\pibf})\wbf_{\textrm{exp}}$, where $\left(\wbf_{\textrm{exp}},\subpeek{\pibf}\right)\sim \unbiasedestimator(\param,\pibf^{(i-1)})$, and (ii)
    $\hat\pbf$ is an unbiased estimator for the actual payoff, i.e, $\forall \param\in\paramspace,\ybf\in\advspaceB: \E[\hat{\pbf}(\param,\ybf)] = \pbf(\param,\ybf)$. The explore sampling device $U$ works as follows. Given a point $\pibf^{(i-1)}\in\Pi$ and a parameter $\param\in\paramspace,$ it draws $j\sim\text{Uniform}\{1,2,\ldots,n\}$ and returns
    \begin{equation} \label{eq:sampling_device_product_ranking}
        (\subpeek{\wbf},\subpeek{\pibf}) = \left(n(\param_j\mathbf{1}_n-\mathbf{e}_j), \pibf^{(i-1)}+j\mathbf{e}_i\right).
    \end{equation}
    Now, $\hat{\pbf}$ is an unbiased estimator of $\pbf$ because 
    
    \begin{equation*}
    \begin{aligned}
    \expect{\hat{\pbf}(\param,\ybf)}
      &= \expect{\hat{\pbf}(\param,\advfunB(\pibf^{(i-1)}, f))}
      = \E[f({\pibf}_{\text{exp}})\wbf_\text{exp}]\\
    &= \E\bigg[n\param_jf(\pibf^{(i-1)}+j\mathbf{e}_i)\mathbf{1}_n - f(\pibf^{(i-1)}+j\mathbf{e}_i)\mathbf{e}_j\bigg]\\
    &\overset{(1)}{=}\mathbf{1}_n\sum_{j=1}^n\param_jf(\pibf^{(i-1)}+j\mathbf{e}_i)-\bigg[f(\pibf^{(i-1)}+1\mathbf{e}_i),f(\pibf^{(i-1)}+2\mathbf{e}_i),\ldots,f(\pibf^{(i-1)}+n\mathbf{e}_i)\bigg]^T\\
    &\overset{(2)}{=}\mathbf{1}_n\sum_{j=1}^n\param_j\left(f(\pibf^{(i-1)}+j\mathbf{e}_i) - f(\pibf^{(i-1)})\right)\\
    &\qquad\qquad-\bigg[f(\pibf^{(i-1)}+1\mathbf{e}_i),f(\pibf^{(i-1)}+2\mathbf{e}_i),\ldots,f(\pibf^{(i-1)}+n\mathbf{e}_i)\bigg]^T+f(\pibf^{(i-1)})\mathbf{1}_n\\
    &=\mathbf{1}_n\sum_{j=1}^n\param_j\left(f(\pibf^{(i-1)}+j\mathbf{e}_i) - f(\pibf^{(i-1)})\right)\\
    &\qquad\qquad-\bigg[f(\pibf^{(i-1)}+1\mathbf{e}_i)-f(\pibf^{(i-1)}),\ldots,f(\pibf^{(i-1)}+n\mathbf{e}_i)-f(\pibf^{(i-1)})\bigg]^T\\
    &=\param^T\ybf\mathbf{1}_n - \ybf = \pbf(\param,\advfunB(\pibf^{(i-1)},f)),
    \end{aligned}
    \end{equation*}
    where $\ybf\triangleq[f(\pi^{(i-1)}(1),\ldots,\pi^{(i-1)}(i-1),j)-f(\pibf^{(i-1)})]_{j\in[n]}$. 
\end{itemize}
Here, equation~$(1)$ holds because we take $j\sim\text{Uniform}\{1,2,\ldots,n\},$ and equation~$(2)$ holds because $\sum_{j=1}^n\param_j = 1.$ Intuitively, at every round, $U$ randomly picks one of the items $j\in[n],$ and evaluate the marginal benefit of putting element $j$ on the $i^{\text{th}}$ position of $\pibf^{(i-1)}$.

Putting (i) and (ii) altogether, \Cref{alg:rank-off} is a $(\tfrac12,\tfrac12)-$extended robust approximation algorithm with $n$ subproblems. Its payoff diameter $D(\pbf)$ is $\bigO{1}$ and its payoff estimator diameter $D(\hat{\pbf})$ is $O(n)$. The dimension of vector payoffs is also $\payoffdimension=n$. It is also bandit Blackwell reducible, hence from Theorems \ref{thm:full-info-online-meta} and \ref{thm:banditILO}:
\begin{equation*}
\begin{aligned}
    \frac12\text{-regret(\Cref{alg:full-info-backbone} applied on \Cref{alg:rank-off})}&\le O(n\sqrt{T\log{n}}).\\
    \frac12\text{-regret(\Cref{alg:bandit-meta} applied on \Cref{alg:rank-off})}&\le O(n^{\schange{5/3}}\left(\log{n}\right)^{1/3}T^{2/3}).
\end{aligned}
\end{equation*}
This completes the proof.
\[\myqed\]
\endproof{}

\subsection{Proof of \texorpdfstring{\Cref{cor:rank-general}}{}}
%\rankgencor*
\proof{\emph{Proof.}}
The proof for the model from \cite{asadpour2020ranking} is a direct application of \Cref{thm:seqsub} by taking $\lambda_i\triangleq\Pp_{u\sim\mathcal{G}}\left(\theta_u=i\right),$ the probability that a consumer has patience level $i,$ and $f_i(S) \triangleq \E_{u\sim\mathcal{G}}\left[\kappa_u(S)|\theta_u =i\right],$ the expected probability that a consumer with patience level $i$ clicks on any of the top $i$ products in $S$, as mentioned in \Cref{subsec:ranking}. Thus, the sequential submodular function of interest is the expected probability that a consumer clicks on at least one product when offered an ordering $\pibf:$
\begin{equation*}
    f(\pibf) = \sum_{i=1}^n\lambda_if_i\left(\{\pi(1),\ldots,\pi(i)\}\right) = \sum_{i=1}^n\Pp_{u\sim\mathcal{G}}(\theta_u=i)\E_{e\sim\mathcal{G}}\left[\kappa_u(\pibf)|\theta_u=i\right].
\end{equation*}
By invoking \Cref{thm:seqsub}, we get the desired $\bigO{n\sqrt{T\log{n}}}$ $\tfrac12$-regret in the full-information setting and $\bigO{n^{\schange{5/3}}\left(\log{n}\right)^{1/3}T^{2/3}}$ $\tfrac12-$regret in the bandit setting.

For the special consumer choice model in \cite{ferreira2019learning}, a consumer is characterized by two parameters: distribution of clicks for each item $\qbf_u = (q_{u,1},\ldots,q_{u,n})$ and attention window size $k_u.$ A consumer $u$ examines the items in the top $k_u$ positions, and an examined item $i$ is clicked with probability $q_{u,i}$ while unexamined items are never clicked. The events of clicking on two different items $i$ or $j$ in the event window are assumed to be independent. Notice that this is a special case of the choice model by \cite{asadpour2020ranking}, where $\theta_u = k_u$ and
\begin{align*}
    \kappa_u(\{\pi(1),\ldots,\pi(\theta_u)\} = \kappa_u(\{\pi(1),\ldots,\pi(k_u)\} = 1- \prod_{i=1}^{k_u}\left(1-q_{u,i}\right).
\end{align*}
The probability of click function $\kappa_u$ is monotone since when $X\subseteq Y\subseteq [n],$ we have $\prod_{i\in X}\left(1-q_{u,i}\right)\geq\prod_{i\in Y}\left(1-q_{u,i}\right)$ (as $0\leq q_{u,i}\leq 1$ for all $u$ and $i$), which implies $\kappa_u(X)\leq\kappa_u(Y).$ It is also submodular, as for all $X\subset Y\subseteq [n]$ and any item $j\notin Y, j\in[n],$ we have
\begin{align*}
    1 - \prod_{i\in Y\setminus X}\left(1-q_{u,i}\right) &\geq 0\\
    \Leftrightarrow (1-(1-q_j))\left(\prod_{i\in X}\left(1-q_{u,i}\right)\right)\left(1 - \prod_{i\in Y\setminus X}\left(1-q_{u,i}\right)\right) &\geq 0\\
    \Leftrightarrow \prod_{i\in X}(1-q_{u,i}) - (1-q_{u,j})\prod_{i\in X}(1-q_{u,i}) &\geq \prod_{i\in Y}(1-q_{u,i}) - (1-q_{u,j})\prod_{i\in Y}(1-q_{u,i})\\
    \Leftrightarrow \prod_{i\in X}(1-q_{u,i}) - \prod_{i\in X\cup\{j\}}(1-q_{u,i}) &\geq \prod_{i\in Y}(1-q_{u,i}) - \prod_{i\in Y\cup\{j\}}(1-q_{u,i})\\
    \Leftrightarrow \kappa_u(X\cup\{j\}) - \kappa_u(X) &\geq \kappa_u(Y\cup\{j\}) - \kappa_u(Y).
\end{align*}
Since this choice model is a special case of the choice model in \Cref{cor:rank-general}, we can invoke \Cref{cor:rank-general} to get the desired $\bigO{n\sqrt{T\log{n}}}$ $\tfrac12$-regret in the full-information setting and $\bigO{n^{\schange{5/3}}\left(\log{n}\right)^{1/3}T^{2/3}}$ $\tfrac12-$regret in the bandit setting.
\[\myqed\]
\endproof{}

\section{Proofs and Remarks of Section~\ref{subsec:mmr} -- Maximizing Multiple Reserves}

% \section{Missing Proofs for Maximizing Multiple Reserves}
\label{apx:mmr}

In this appendix, first provide a discussion on major difference between \Cref{alg:mmr-single} and the algorithm in \cite{roughgarden2019minimizing}. We then give the missing proofs of the results from \Cref{subsec:mmr}. These results are restated for convenience.
\subsection{Discussion on \texorpdfstring{\Cref{alg:mmr-single}}{}}
\label{apx:mmr-discussion}
The main difference between our algorithm and the algorithm in \cite{roughgarden2019minimizing} is the choice of revenue-from-reserves function $q$. Their revenue-from-reserves function $q$ is different (coordinate-wise less) than ours. As it becomes more clear later in the proof, the need to design a new revenue-from-reserves function stems from our requirement to construct an explore sampling device for the online bandit learning algorithm.
\subsection{Proof of \texorpdfstring{\Cref{thm:mmr}}{}}
\label{apx:mmr-main-proof}
    \proof{\emph{Proof}.} We will show that our meta Algorithms \ref{alg:full-info-backbone} and \ref{alg:bandit-meta} work by verifying the following conditions.

    \paragraph{(i) Algorithm~\ref{alg:mmr-single} is an extended $(\tfrac12, \tfrac12)$-robust approximation algorithm.}
    By Definition \ref{def:extended-robust-approx}, we need to show that if each coordinate of our vector payoffs is bounded by some function $h$:
    \begin{equation*}
        \forall j \in [m], \quad \left[\sum_{t=1}^T\pay^{(i)}(\tilde\param^{(i)}_t, \rbf_t^{(i-1)}, \vibf_t)\right]_j\geq -h(T),
    \end{equation*}
    then we must have that our overall solution's error is bounded by:
    \begin{equation*}
      \forall \rbf^* \in \constraint, \quad \sum_{t=1}^T\expect{f(\rbf_t, \vibf_t)}\geq \frac{1}{2}\cdot\sum_{t=1}^T f(\rbf^*, \vibf_t)-\frac{1}{2} nh(T).
    \end{equation*}
    Recall from Section~\ref{subsec:mmr}, that we defined the $j^{th}$ coordinate of this vector payoff to be ($j \in [m]$), $
    \indexintovector{\pay^{(i)}(\param^{(i)}, \rbf^{(i-1)}, \vibf)}{j} \triangleq
    \E_{z' \sim \param^{(i)}} \left[ q^{(i)}(z') - q^{(i)}(\feasiblereserve_j) \right]
    $, and that $\rbf_t^{(i)}$ is the reserve vector after subproblem $i$. Let's now define $S_i$ to be the set of rounds where bidder $i$ has the highest bid. We now carry out the standard offline analysis, but summed over all rounds $t \in [T]$.
    \begin{align*}
      \sum_{t=1}^{T} \expect{f(\rbf_t, \vibf_t)}
        &\overset{(1)}{=} \frac{1}{2}\sum_{t=1}^T \indexintovector{\vibf_t}{\hat{j}_t} +
          \frac12 \expect{\sum_{i=1}^n \sum_{t\in S_i} \indexintovector{\rbf_t}{i} \indicator{$\indexintovector{\rbf_t}{i} \in [\indexintovector{\vibf_t}{\hat{j}_t}, \indexintovector{\vibf_t}{j^*_t}$}} \\
        &\overset{(2)}= \frac12 \sum_{t=1}^T \indexintovector{\vibf_t}{\hat{j}_t}
         + \frac12 \expect{\sum_{i=1}^n \sum_{t=1}^T q^{(i)}_t([\rbf_t]_i)}\\
        &= \frac12 \sum_{t=1}^T \indexintovector{\vibf_t}{\hat{j}_t} + \frac12 \sum_{i=1}^n \sum_{t \in S_i} q^{(i)}_t(\indexintovector{\rbf^*}{i})
         + \frac12 \expect{\sum_{i=1}^n \sum_{t=1}^T q^{(i)}_t([\rbf_t]_i)} - \frac12 \sum_{i=1}^n \sum_{t \in S_i} q^{(i)}_t(\indexintovector{\rbf^*}{i}) \\
        &\overset{(3)}{\geq} \frac12 \sum_{t=1}^T f(\rbf^*, \vibf_t)
         + \frac12 \sum_{i=1}^n \sum_{t=1}^T \expect{q^{(i)}_t(\indexintovector{\rbf_t}{i})} - \frac12 \sum_{i=1}^n \sum_{t \in S_i} q^{(i)}_t(\indexintovector{\rbf^*}{i}) \\
        &\overset{(4)}= \frac12 \sum_{t=1}^T f(\rbf^*, \vibf_t)
         + \frac12 \sum_{i=1}^n \sum_{t=1}^T \indexintovector{\pay^{(i)}(\tilde\param^{(i)}_t, \zbf_t^{(i-1)}, f_t)}{b_i}\\
        &\overset{(5)}{\geq}\frac{1}{2}\sum_{t=1}^Tf_t(\zbf^*)-\frac{1}{2}nh(T)\,,
    \end{align*}
    where $j^*_t$ and $\hat{j}_t$ are respectively the highest and second highest bidders in the valuation profile $\vibf_t$. We also defined a function $q_t^{(i)}$ for each round, which is the same as the original $q^{(i)}$ except with $\vibf$ replaced by $\vibf_t$. Note that Inequality~$(1)$ holds because in each round $t$, with probability $1/2$, the algorithm returns $\zbf_t = \mathbf{0}_n$, which implies $f(\rbf_t, \vibf_t) = \indexintovector{\vibf_t}{\hat{j}_t}$ and with the same probability, it returns $\rbf_t= \rbf_t^{(n)}$ which implies $f(\rbf_t, \vibf_t)$ is at least equal to the reserve of the buyer with the highest bid when his reserve is less than his bid. Inequality~$(2)$ follows from the definition of $q_t^{(i)}$. Inequality~$(3)$ holds because under the optimal reserve price $\rbf^*$, $f(\rbf^*, \vibf_t)$ is less than or equal to the second highest bid $\indexintovector{\vibf_t}{\hat{j}_t}$ when the bidder with the highest bid does not win or they win and their reserve is less than or equal to $\indexintovector{\vibf_t}{\hat{j}_t}$; otherwise, $f(\rbf^*, \vibf_t)$ is equal to $q^{(i)}_t\left(\indexintovector{\rbf_t}{j^*_t}\right) \ge \indexintovector{\vibf_t}{\hat{j}_t}$. Equality~$(4)$ follows from the definition of $\pay^{(i)}.$ In this inequality $b_i$ is the index of element $\indexintovector{\rbf^*}{i}$; that is, $\indexintovector{\rbf^*}{i} =\feasiblereserve_{b_i}$. Recall that $\tilde\param^{(i)}_t$ is the (approximately-locally-optimal) distribution from which we are drawing $\indexintovector{\rbf_t}{i}$. Finally, inequality~$(5)$ follows from the assumption. This inequality is the desired result.

    \paragraph{(ii) Algorithm~\ref{alg:mmr-single} is bandit Blackwell reducible.}
    Per \Cref{def:bandit-blackwell-reducible}, to show this statement, we will verify the following conditions:
    \begin{itemize}
    \item \textit{Algorithm \ref{alg:mmr-single} is Blackwell reducible.} For every subproblem $i \in [n]$, consider an instance $\left(\algspaceB,\advspaceB,\mathbf{p}^{(i)}\right)$ of Blackwell where $\algspaceB = \paramspace = \Delta(\rcal)$ and $\advspaceB=[0,1]^{d_{\text{param}}}$, where $d_{\text{param}} = |\rcal| =m$.  We can use the Blackwell adversary function (note that we identify adversary functions with valuation vector) $\advfunB^{(i)}(\rbf, \vibf)=\left[q^{(i)}(\rho_j)\right]_{j=1,2,\ldots,m}$.
     
    The biaffine Blackwell payoff is $\mathbf{p}^{(i)}(\param,\ybf)=\param^T\ybf\mathbf{1}_n-\ybf$ where $\mathbf{1}_n$ is an $n$-dimensional all ones vector. Notice that the target set $S,$ the non-negative orthant, is response-satisfiable because if player 1 plays $\param=\mathbf{e}_{j^*}$ where $j^* = \argmax_{j \in [m]} \indexintovector{\ybf}{j}$ then for every adversary's action $\ybf \in \advspaceB$, $\pbf^{(i)}(\param, \ybf)\geq 0$.
  
    \item \textit{An unbiased estimator for the Blackwell payoff function $\mathbf{p}^{(i)}$ can be constructed.} We will show that for every subproblem $i \in [n]$, there exists an explore sampling device $U^{(i)}$ that returns $(\subpeek{\mathbf{w}^{(i)}}, \subpeek{\rbf^{(i)}})$ such that
    (i) for all $f \in \funcspace, \param \in \paramspace, \rbf \in \domain, i \in [n]: ~~ \hat{\pbf}^{(i)}\left(\param, \advfunB^{(i)}(\rbf, \vibf) \right) = f(\subpeek{\rbf^{(i)}},\vibf)\wbf^{(i)}_{\textrm{exp}}$, where $\left(\wbf^{(i)}_{\textrm{exp}},\subpeek{\rbf^{(i)}}\right)\sim \unbiasedestimator^{(i)}(\param,\rbf)$, and (ii)
    $\hat\pbf$ is an unbiased estimator for the actual payoff, i.e, $\forall \param\in\paramspace,\ybf\in\advspaceB: \E[\hat{\pbf}^{(i)}(\param,\ybf)] = \pbf^{(i)}(\param,\ybf)$.
    More specifically, we will construct  a exploring distribution $U^{(i)}$ such that 
    if $\ybf=\advfunB(\rbf,f)$ for some  $f\in\funcspace,\rbf\in\domain$, then $\E[\hat\pbf(\param,\ybf)]=\E[f(\subpeek{\rbf})\wbf_{\textrm{exp}}] = \pbf(\param,\ybf)$,
    where the expectation is taken with respect to $U^{(i)}.$ Notice that in Definition \ref{def:bandit-blackwell-reducible}, $U$ is not indexed by subproblems, but since the  $\advfunB$ for this particular problem is subproblem specific, the distribution $U$ should also depend on the subproblem. Because  we would like to construct   an unbiased estimator of the actual payoff $\pbf^{(i)},$ which is an affine function of $\ybf=q^{(i)},$ we focus on constructing an unbiased estimator for the function $q^{(i)}$. To do so, we make use of the following representation of $q^{(i)}$:
    \begin{equation} \label{eq:sample_device_reserve}
      q^{(i)}(r) = f(r\mathbf{1}_n, \vibf) - f(r(\mathbf{1}_n-\mathbf{e}_i), \vibf).
    \end{equation}
    To see what the above equation holds note that when bidder $i$ does not have the highest bid in an auction, both $q^{(i)}(r)$ and $f(r\mathbf{1}_n, \vibf) - f(r(\mathbf{1}_n-\mathbf{e}_i), \vibf)$, which is the revenue gain of increasing bidder $i$'s reserve price from zero to $r$, are both zero. When bidder $i$ has the highest bid in the auction, $q^{(i)}(r) = r$ if $r \in [\indexintovector{\vibf}{\hat{j}}, \indexintovector{\vibf}{j^*}]$ and zero otherwise. Furthermore, the revenue from the reserve price $r\mathbf{1}_n$, i.e., $f(r\mathbf{1}_n, \vibf)$, is $\indexintovector{\vibf}{\hat{j}}$ if $r < \indexintovector{\vibf}{\hat{j}}$; $r$ if $r \in [\indexintovector{\vibf}{\hat{j}}, \indexintovector{\vibf}{j^*}]$ and zero otherwise (the case $r > \indexintovector{\vibf}{j^*}$). The revenue from the reserve price $r(\mathbf{1}_n-\mathbf{e}_i)$, i.e., $f(r(\mathbf{1}_n - \mathbf{e}_i), \vibf)$, is $\indexintovector{\vibf}{\hat{j}}$ if $r < \indexintovector{\vibf}{\hat{j}}$ and zero otherwise. Thus, $f(r\mathbf{1}_n, \vibf) - f(r(\mathbf{1}_n - \mathbf{e}_i), \vibf)$ is $r$ if $r \in [\indexintovector{\vibf}{\hat{j}}, \indexintovector{\vibf}{j^*}]$ and zero otherwise, which is exactly $q^{(i)}(r)$. This interesting relationship is depicted in \Cref{fig:MMR-peek}.
  
    \begin{figure}[htb]
\centering
\begin{tikzpicture}[%
  auto,
  scale=0.85,
  ]
  
  \def\V{2}
  \def\v{1}
  \def\eps{0.1}
  \def\sep{6}
  
  % Left Function
  \draw[<->] (-0.5, 0) -- (3.5, 0); \node at (4, 0) {$r$};
  \draw[<->] (0, -0.5) -- (0, 3.5); \node at (0, 4) {$f(r\mathbf{1}_m, \vbf)$};
  
  \draw (\v, -\eps) -- (\v, \eps); \node at (\v, -0.5) {$\indexintovector{\vbf}{\hat{j}}$};
  \draw (\V, -\eps) -- (\V, \eps); \node at (\V, -0.5) {$\indexintovector{\vbf}{j^*}$};
  
  \draw[red, thick] (0, \v) -- (\v, \v) -- (\V, \V);
  \draw[red, thick] (\V, 0) -- (3, 0);
  
  \draw[blue, dashed, thick] (0, \v) -- (\v, \v) -- (\V, \V);
  \draw[blue, dashed, thick] (\V, 0) -- (3, 0);
  
  \node at (0.5, 1.25) {$\indexintovector{\vbf}{\hat{j}}$};
  \node at (1.5, 1.8) {$r$};
  \node at (2.5, 0.25) {$0$};
  
  % Middle Function
  \draw[<->] (\sep-0.5, 0) -- (\sep+3.5, 0); \node at (\sep+4, 0) {$r$};
  \draw[<->] (\sep, -0.5) -- (\sep, 3.5); \node at (\sep, 4) {$f(r(\mathbf{1}_m - \mathbf{e}_i), \vbf)$};
  
  \draw (\sep+\v, -\eps) -- (\sep+\v, \eps); \node at (\sep+\v, -0.5) {$\indexintovector{\vbf}{\hat{j}}$};
  \draw (\sep+\V, -\eps) -- (\sep+\V, \eps); \node at (\sep+\V, -0.5) {$\indexintovector{\vbf}{j^*}$};
  
  \draw[red, thick] (\sep, \v) -- (\sep+\v, \v);
  \draw[red, thick] (\sep+\v, 0) -- (\sep+3, 0);
  
  \draw[blue, dashed, thick] (\sep, \v) -- (\sep+\v, \v) -- (\sep+\V, \V);
  \draw[blue, dashed, thick] (\sep+\V, 0) -- (\sep+3, 0);
  
  \node at (\sep+0.5, 1.25) {$\indexintovector{\vbf}{\hat{j}}$};
  \node at (\sep+1.5, 1.8) {$r$};
  \node at (\sep+1.5, 0.25) {$0$};
  \node at (\sep+2.5, 0.25) {$0$};
  
  % Right Function
  \draw[<->] (2*\sep-0.5, 0) -- (2*\sep+3.5, 0); \node at (2*\sep+4, 0) {$r$};
  \draw[<->] (2*\sep, -0.5) -- (2*\sep, 3.5); \node at (2*\sep, 4) {$q^{(i)}(r)$};
  
  \draw (2*\sep+\v, -\eps) -- (2*\sep+\v, \eps); \node at (2*\sep+\v, -0.5) {$\indexintovector{\vbf}{\hat{j}}$};
  \draw (2*\sep+\V, -\eps) -- (2*\sep+\V, \eps); \node at (2*\sep+\V, -0.5) {$\indexintovector{\vbf}{j^*}$};
  
  \draw[red, thick] (2*\sep, 0) -- (2*\sep+\v, 0);
  \draw[red, thick] (2*\sep+\v, \v) -- (2*\sep+\V, \V);
  \draw[red, thick] (2*\sep+\V, 0) -- (2*\sep+3, 0);
  
  \draw[blue, dashed, thick] (2*\sep, 0) -- (2*\sep+3, 0);
  
  \node at (2*\sep+0.5, 0.25) {$0$};
  \node at (2*\sep+1.5, 1.8) {$r$};
  \node at (2*\sep+1.5, 0.25) {$0$};
  \node at (2*\sep+2.5, 0.25) {$0$};
\end{tikzpicture}
\caption{The function $q^{(i)}$ (right) and the two functions we combine to get it (left, center). The solid red line denotes the function value when $i$ is the highest bidder, and the dashed blue line denotes the function value when $i$ is not the highest bidder.}
\label{fig:MMR-peek}
\end{figure}
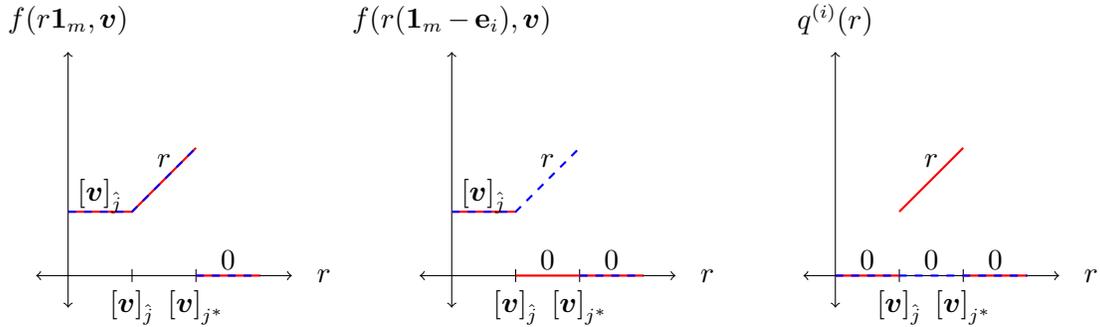

    We now define the sampling distribution $U^{(i)}: \domain \times \paramspace \to \setofdistributions([0, 1]^m \times \constraint).$ For each $j \in [m],$ we pick:
    \begin{align*}
    (\subpeek{\wbf^{(i)}}, \subpeek{\rbf^{(i)}}) &= (2m(\param_j\mathbf{1}_{n}-~\mathbf{e}_j),\rho_j\mathbf{1}_n),~~\text{or}\\
    (\subpeek{\wbf^{(i)}}, \subpeek{\rbf^{(i)}}) &= (-2m(\param_j\mathbf{1}_{n}-~\mathbf{e}_j),\rho_j(\mathbf{1}_n-\mathbf{e}_i)),
    \end{align*}
    with probability $\frac{1}{2m}$ each. Recall that $\domain = \constraint =\rcal^n$, where $\rcal =\{\rho_1, \rho_2, \ldots, \rho_m\}$ is the set of possible reserve prices and $\rho_j$ is the $j$-th largest reserve price in set $\rcal$. We then have
    \begin{align*}
    \expect{\hat{\pbf}^{(i)}(\param,\ybf)}
      &= \expect{\hat{\pbf}^{(i)}(\param,\advfunB^{(i)}(\rbf, \vibf))}
      = \E[f({\rbf}^{(i)}_{\text{exp}}, \vibf)\wbf^{(i)}_\text{exp}]\\
    &= \param^T 
    \begin{bmatrix}
        f(\feasiblereserve_1 \mathbf{1}_n, \vibf) - f(\feasiblereserve_1(\mathbf{1}_n-\mathbf{e}_i), \vibf)\\
        \vdots\\
        f(\feasiblereserve_m \mathbf{1}_n, \vibf) - f(\feasiblereserve_m(\mathbf{1}_n-\mathbf{e}_i), \vibf)
      \end{bmatrix} - \begin{bmatrix}
        f(\feasiblereserve_1 \mathbf{1}_n, \vibf) - f(\feasiblereserve_1(\mathbf{1}_n-\mathbf{e}_i), \vibf)\\
        \vdots\\
        f(\feasiblereserve_m \mathbf{1}_n, \vibf) - f(\feasiblereserve_m(\mathbf{1}_n-\mathbf{e}_i), \vibf)
      \end{bmatrix}\\
    &= \param^T
    \begin{bmatrix}
        q^{(i)}(\rho_1)\\
        \vdots\\
        q^{(i)}(\rho_m)
      \end{bmatrix} - \begin{bmatrix}
        q^{(i)}(\rho_1)\\
        \vdots\\
        q^{(i)}(\rho_m)
      \end{bmatrix} = \param^T\ybf\mathbf{1}_n-\ybf = \pbf^{(i)}(\param,\ybf).
    \end{align*}
\end{itemize}
    Wrapping up, \Cref{alg:mmr-single} is an extended $\left(\frac{1}{2},\frac{1}{2}\right)$-robust approximation algorithm with $n$ subproblems and with a payoff diameter $\diameter{\pbf}$ of $O(1)$ and a payoff estimator diameter $\diameter{\hat{\pbf}}$ of $O(m)$. It is also bandit Blackwell reducible. Therefore, from Theorems \ref{thm:full-info-online-meta} and \ref{thm:banditILO}:
    \begin{align*}
      \frac12\text{-regret(\Cref{alg:full-info-backbone} applied on \Cref{alg:mmr-single})} &\le O(n T^{1/2} \log^{1/2} m) \\
      \frac12\text{-regret(\Cref{alg:bandit-meta} applied on \Cref{alg:mmr-single})} &\le O(\schange{nm^{2/3}} T^{2/3} \log^{1/3} m).
    \end{align*}
    This completes the proof.
\[\myqed\]
\endproof{}

\subsection{Proof of \texorpdfstring{\Cref{cor:mmr}}{}}
\label{apx:mmr-cor}
\proof{\emph{Proof}.}
  Let $m \in \mathbb{Z}_+$ be a parameter we choose later to balance terms. We invoke Theorem~\ref{thm:mmr} with the discretization $\rcal = \{0, \frac1m, \frac2m, \ldots, 1\}$. Given any reserves $\bm{r}^* \in [0, 1]^n$, we can produce rounded reserves $\tilde{\bm{r}}^*$ defined by rounding every reserve down to the nearest multiple of $\frac1m$: $\indexintovector{\tilde{\bm{r}}}{i} = \frac1m \left\lfloor m \indexintovector{\bm{r}}{i} \right\rfloor$. Importantly, this never causes any bidder to fail to clear their reserve price (this is why we must round down and cannot round up). Hence this can only grow the set of bidders that clear their reserve and hence the maximum bid from this set can only increase. If a bidder that was already in this set proceeds to win the auction, then they are only competing with more bidders and their reserve price drops by at most $1/m$, so their payment can only drop by at most $1/m$. If a bidder not previously in this set proceeds to win the auction, then their reserve price used to be higher than their valuation, but their valuation must be higher than the previous winner's valuation. They pay at least their reserve less $\frac1m$, so the revenue of the auction drops by at most $1/m$ in this case as well. Hence the (summed) discretization error is $T \frac1m$, and we choose either $m = \frac1n T^{1/2}$ (full-information) or $m = \schange{n^{-3/5}T^{1/5}}$ (bandit) to obtain:
  \begin{align*}
    \bigO{n T^{1/2} \log^{1/2} m} + T \frac1m &= \bigO{n T^{1/2} \log^{1/2} T} \\
    \bigO{nm\schange{^{2/3}} T^{2/3} \log^{1/3} m} + T \frac1m &= \bigO{\schange{n^{3/5}T^{4/5}} \log^{1/3} (nT)}
  \end{align*}
  This completes the proof.
\[\myqed\]
\endproof{}

\section{Proofs and Remarks of Section~\ref{subsec:usm} -- Non-monotone Submodular Maximization}

% \section{Missing Proofs for Non-monotone Submodular Maximization}
\label{apx:usm}

In this appendix, we first discuss the differences between \Cref{alg:usm-single} and the bi-greedy algorithm by \cite{niazadeh2018optimal}, and show that 
despite these differences \Cref{alg:usm-single} obtains the same approximation factor as that of the bi-greedy algorithm.
We then present the proof of Theorem \ref{thm:usm}. 

\subsection{Discussion on \texorpdfstring{\Cref{alg:usm-single}}{}}
\label{apx:NSM-discussion}
\Cref{alg:usm-single} is a modification of the bi-greedy algorithm by \cite{niazadeh2018optimal}. But, as we show in this section,  these modifications do not change the $1/2$ approximation factor of the bi-greedy algorithm. We modify the bi-greedy algorithm to better satisfy the form of \Cref{alg:offline-meta}, ease our construction of the sampling device and unbiased estimators in the bandit case, and provide a unified framework for submodular functions with a more general domain. The major differences and their corresponding reasons are as follows:
\begin{itemize}
   % \item The continuous bi-greedy of \cite{niazadeh2018optimal} maintains two points, $\usmlower$ and $\usmupper$ with $\rho_1 = 0$ and $\rho_m=1.$ Since we update a single point at each iteration in our framework, we only keep track of the $\usmlower$ and deduce the $\usmupper$ from $\usmlower$ and the iteration number when we need it.
    \item To cover a more general discrete function domain, we optimize over points in the discrete set $\coordinatevalues$ while their algorithm optimizes over $[0,1]^n$ implemented by casting an $\epsilon-$net.
    \item To help us construct the sampling device and unbiased estimators for the bandit case, in our local optimization step, we use $\zeta^{(i)}(\hat z,z')$,
    %$\E_{z' \sim \param^{(i)}} \left[ \frac12 \alpha^{(i)}(z') + \frac12 \beta^{(i)}(z') - \zeta^{(i)}(\schange{\hat{z}}, z') \right]
    which is a linear combination of marginal functions  $\alpha^{(i)}$ and $\beta^{(i)}$, rather than $\max\left\{\alpha^{(i)}(\hat z) - \alpha^{(i)}(z'), \beta^{(i)}(\hat z) - \beta^{(i)}(z')\right\}$, in quantifying the value decrease of $\hat z$. Recall that in this step, we choose  $\param^{(i)} \in \setofdistributions(\coordinatevalues)$ so that for all $\schange{\hat{z}} \in \coordinatevalues$, 
     $
        \E_{z' \sim \param^{(i)}} \left[ \frac12 \alpha^{(i)}(z') + \frac12 \beta^{(i)}(z') - \zeta^{(i)}(\schange{\hat{z}}, z) \right] \ge 0
    $. 
\end{itemize}

\medskip

Using the technique in \cite{niazadeh2018optimal}, as we argue next, 
 we can still find  $\param^{(i)}$  that satisfies the condition in the local optimization step. Note that the bi-greedy analysis in \cite{niazadeh2018optimal} proves that satisfying this condition implies that \Cref{alg:usm-single} is a $\frac12$-approximation algorithm for the discretized submodular maximization problem.\medskip

\textbf{Satisfying the Local Optimization Step.} Here, we show how to choose $\param^{(i)} \in \setofdistributions(\coordinatevalues)$ that satisfies the condition in the local optimization step of \Cref{alg:usm-single}. To do so, 
First, we choose $z_\ell \in \argmax_{z \in \coordinatevalues} f(z, \usmupper^{(i-1)})$ and $z_u \in \argmax_{z \in \coordinatevalues} f(z, \usmlower^{(i-1)})$. Then, we look at these two cases.
  
  \paragraph{Case (i): $z_u \le z_\ell$.} We want to prove that deterministically returning $z_\ell$ ($\param^{(i)}$ puts all its weight on $z_{\ell}$) suffices. The key realization is that in this case, $z_u$ and $z_\ell$ maximize the functions $f(\cdot,\usmupper^{(i-1)})$ and $f(\cdot,\usmlower^{(i-1)}))$ respectively:
  \begin{align}
    f(z_\ell, \usmupper^{(i-1)}) \ge f(z_u, \usmupper^{(i-1)}),\qquad 
    f(z_u, \usmlower^{(i-1)}) \ge f(z_\ell, \usmlower^{(i-1)}). \label{eq:first_two}
  \end{align}
  We know by submodularity that two points are better than their coordinate-wise max and min:
  \begin{align}\label{eq:third}
    f(z_u, \usmlower^{(i-1)}) + f(z_\ell, \usmupper^{(i-1)}) &\le f(z_\ell, \usmlower^{(i-1)}) + f(z_u, \usmupper^{(i-1)}).
  \end{align}
  Since adding up the first two inequalities in Equation \eqref{eq:first_two} yields the third inequality in Equation \eqref{eq:third}, but with the direction reversed, we know all three must hold with equality. We conclude by noting that since $z_\ell$ maximizes both functions, it also maximizes both $\alpha^{(i)}$ and $\beta^{(i)}$ at some nonnegative value and hence satisfies the desired condition in the local step optimization.

  \paragraph{Case (ii): $z_\ell < z_u$.}
  Suppose that the algorithm is able to find a $\param^{(i)}$ such that for any $\hat{z} \in [z_\ell, z_u]$, we have
  \begin{align}
      \E_{z'\sim\param^{(i)}}\left[\frac12 \alpha^{(i)}(z') + \frac12 \beta^{(i)}(z') - \zeta^{(i)}({\hat{z}}, z)\right] = \frac12 \alpha^{(i)}(z') + \frac12 \beta^{(i)}(z') - \zeta^{(i)}({\hat{z}}, z')\geq 0.\label{eq:condition}
  \end{align}
  We claim that this equation is still true for $\hat{z}$ outside of the interval $[z_\ell, z_u]$.
  
  Suppose that $\hat{z} < z_\ell$. By the choice of $z_\ell$, we know that $\beta^{(i)}(z_\ell) \ge \beta^{(i)}(\hat{z})$. By submodularity, we know that:
  \begin{align*}
    f(\hat{z}, \usmlower^{(i-1)}) + f(z_\ell, \usmupper^{(i-1)}) &\le f(z_\ell, \usmlower^{(i-1)}) + f(\hat{z}, \usmupper^{(i-1)}) \\
    \alpha^{(i)}(\hat{z}) + \beta^{(i)}(z_\ell) &\le \alpha^{(i)}(z_\ell) + \beta^{(i)}(\hat{z}) \\
    \beta^{(i)}(z_\ell) - \beta^{(i)}(\hat{z}) &\le \alpha^{(i)}(z_\ell) - \alpha^{(i)}(\hat{z}).
  \end{align*}
  Since the LHS is nonnegative by the choice of $z_{\ell},$ so is the RHS. We have shown that $z_\ell$ has strictly larger $\alpha^{(i)}$ and $\beta^{(i)}$ values (than $\hat{z}$) and hence inequality in Equation \eqref{eq:condition} must be valid for $\hat{z} < z_\ell$ as well. Analogous reasoning shows the same for the $z_r < \hat{z}$ case. Notice that the method in \cite{niazadeh2018optimal} is able to compute a $\param^{(i)}$ that guarantees
$
      \E_{z'\sim\param^{(i)}}\left[\frac12 \alpha^{(i)}(z') + \frac12 \beta^{(i)}(z') - \zeta^{(i)}({\hat{z}}, z)\right]\geq 0
$
  for any $\hat{z}\in[z_\ell,z_u],$ which means this is also true for any $\hat{z}\in\coordinatevalues.$ Recall that the payoff function is 
    \begin{equation*}
    \left[\pay\left(\param,\usmlower^{(i-1)},f\right)\right]_j = \E_{z' \sim \param} \left[ \frac12 \alpha^{(i)}(z') + \frac12 \beta^{(i)}(z') - \zeta^{(i)}({\rho_j}, z) \right],
    \end{equation*}
  so such $\param^{(i)}$ also guarantees that $\pay\left(\param^{(i)},\usmlower^{(i-1)},f\right)$ is in the positive orthant.
  
 % \textbf{\Cref{alg:usm-single} is $1/2$ Approximation.} %Thus, for every subproblem $i\in[n]$, \Cref{alg:usm-single} can always find $\param^{(i)}$ that satisfies \Cref{eq:payoff-usm}.

%\paragraph{\Cref{alg:usm-single} is $1/2$ approximation:}

\end{APPENDICES}

\end{document}